\documentclass[letterpaper]{article}
\newif\ifSIICML\SIICMLtrue
\usepackage[T1]{fontenc}
\usepackage{microtype,graphicx,booktabs}
\usepackage[hyperfootnotes=false]{hyperref}
\usepackage{xurl}
\usepackage[preprint]{icml2026}
\date{October 27, 2026}
\renewcommand{\today}{October 27, 2026}
\usepackage{amsmath,amssymb,amsthm,mathrsfs}
\usepackage{placeins,array}
\allowdisplaybreaks[1]
\theoremstyle{plain}
\newtheorem{sicoretheorem}{Theorem}[section]
\newtheorem{sicorelemma}[sicoretheorem]{Lemma}
\newtheorem{sicoreproposition}[sicoretheorem]{Proposition}
\newtheorem{sicorecorollary}[sicoretheorem]{Corollary}
\theoremstyle{remark}
\newtheorem{sicoreremark}[sicoretheorem]{Remark}
\theoremstyle{plain}

\theoremstyle{plain}

\theoremstyle{remark}

\theoremstyle{plain}

\icmltitlerunning{Displacement Moments for Brownian Interpolant Sampling}
\hypersetup{pdftitle={Displacement Moments and Path--Terminal Separation in Brownian Interpolant Sampling},pdfauthor={Zhong Han},pdfsubject={Preprint},pdfkeywords={stochastic interpolants, sampling, weak moments, relative entropy}}
\begin{document}
\twocolumn[
\icmltitle{Displacement Moments and Path--Terminal Separation in Brownian Interpolant Sampling}
\begin{icmlauthorlist}
\icmlauthor{Zhong Han}{nus}
\end{icmlauthorlist}
\icmlaffiliation{nus}{Department of Biomedical Engineering, College of Design and Engineering, National University of Singapore}
\icmlcorrespondingauthor{Zhong Han}{zhong.han.hz@u.nus.edu}
\icmlkeywords{stochastic interpolants, sampling, weak moments, relative entropy}
\vskip 0.3in
]
\printAffiliationsAndNotice{}
\begin{abstract}
Brownian stochastic interpolants connect paired endpoint samples, yet
their sampling error can depend on displacement moments even when neither
endpoint has a first moment. For arbitrary Borel couplings, we prove
early-stopped, target-to-output KL guarantees for standard Euler sampling
from the true source. A fourth displacement moment gives an explicit
$O(N^{-1})$ bound; a second moment suffices for fixed-law convergence
as drift error vanishes.
For one fixed coupling with no displacement moment above two, original
bridge-path KL stays of order one, comparison-path KL is of order
$1/\log N$, and terminal KL is at most $C/N^2$ on the same grid.
The path decomposition identifies the information about the source
discarded by Markov projection. When the original bridge is Markov,
current-state Euler sampling instead has a second-moment
$O((M_2+d)/N)$ bound; finite-entropy Schr\"odinger bridges for the
probability Brownian reference are included.
We also construct a current-state learner with expected output KL
consistency from iid endpoint pairs. Numerical diagnostics illustrate
the distinct errors. Conditional diffusion identities specify a
restoration interface, while image controls find no iterative advantage
over cheap probabilistic predictors. Uniform terminal rates for general
second-moment couplings remain open.

\end{abstract}
\section{Introduction}
\label{si-paper-introduction}
Brownian stochastic interpolants construct sampling dynamics from paired
endpoint observations. Their marginal drift is a conditional mean, but
its estimation and discretization need not require moments of either
endpoint. If $X_0=H$ and $X_1=H+u$, displacement is bounded even when
$H$ has no first moment. This distinction matters for singular sources,
dependent endpoint pairs, and heavy-tailed corruptions in restoration
\citep{si-paper-i2sb2023,si-paper-ddbm2024}.

We ask which displacement moments guarantee finite-step accuracy, and
when a path comparison obscures the accuracy of the final samples.
For arbitrary Borel couplings, standard Euler--Maruyama (EM) initialized
at the true source has an explicit stopped-time $O(N^{-1})$ KL bound
under a fourth displacement moment. Conditional residuals yield slower
bounds for $2<p<4$ and fixed-law convergence under a second moment,
without endpoint densities or score regularity. Learning error is
measured under the true intermediate laws.

For one fixed coupling with no displacement moment above two, the
original bridge-path KL remains of order one, its Markov projection
has KL of order $1/\log N$, and terminal KL is at most $C/N^2$ on the
same geometric grid. These quantities observe different objects.
Sampling requires the terminal marginal; the original bridge also
carries the source and its pairing with the target. Markov projection
keeps every grid marginal but forgets what the source reveals beyond
the current state. Euler then replaces each projected kernel by a
Gaussian. An entropy identity separates the resulting two path losses;
signed cancellations give the much smaller terminal bound.

This decomposition also identifies a tractable second-moment subclass.
Retaining the source gives a joint-law $O((M_2+d)/N)$ bound by an
established Gaussian-channel argument. For Markov bridge mixtures,
current-state EM inherits this bound without a source-defect term. Equality of the
two residual drifts at one interior time characterizes that structure,
including singular endpoints. Finite-entropy Schr\"odinger bridges for
the probability Brownian reference started from the prescribed source
belong to this subclass. The general second-moment rate question
therefore remains for non-Markov couplings, where a source-prediction
gap separates the two error contracts.

Clipped partition regression from iid endpoint pairs attains expected
terminal KL consistency with fresh true-source initialization. A lower
bound rules out a uniform training-sample rate without source structure.

Numerics illustrate source-information loss and compare general bounds
with terminal diagnostics. Conditional diffusion identities specify a
restoration interface, but image controls find no iterative advantage
over cheap probabilistic predictors and do not test the strict-$M_2$
obstruction.

\subsection{Related work}
Transport and regression representations for dependent interpolants
\citep{si-paper-framework2025,si-paper-couplings2024} and source
augmentation \citep{si-paper-abm2023} are established. We analyze their
finite-step sampling error.
\citet[Theorems~4.1--4.4]{si-paper-kang2026} analyze ReLU drift learning
and EM sampling for bounded endpoint pairs under drift Lipschitz
assumptions. We claim no new drift or augmentation algorithm, or priority
for general regression-to-generation guarantees.

\citet{si-paper-sde2025} already analyze standard EM for general couplings
and broader nonlinear interpolants. Their finite bound assumes an eighth
displacement moment and starts inside the time interval. Brownian DFM
bounds use eighth endpoint moments and score integrability, with explicit
$d^4$ terms in \citet{si-paper-dfm2024} and $d^3$ terms in
\citet{si-paper-dfm2026}. Their full-horizon conclusions require stronger
conditions and are not implied by our early-stopped theorem.

Weak-moment sampling is not new to Gaussian diffusion models.
\citet[Theorem~2.2]{si-paper-chen2023} prove a second-moment early-stopping
bound with a $d^2$ discretization term; \citet[Theorem~1]{si-paper-benton2024}
obtain nearly linear dimension dependence. Those results concern
single-signal Gaussian diffusions. Equal intermediate marginals alone do
not identify a two-endpoint Brownian drift, diffusion coefficient, or
its EM discretization (Appendix~\ref{si-new-comparison}).
For a point source, however, \citet[Theorem~3.1]{si-paper-koike2026}
directly prove a second-moment bound for standard Brownian-time EM,
with a positive-time Gaussian initialization. This overlaps the
source-conditioned branch after disintegration, subject to its
initialization and error contract; it does not directly cover our
arbitrarily coupled current-state drift.

Second-order marginal KL can already improve on path-based bounds
under smoothness assumptions \citep{si-paper-mou2022,si-paper-li2026numericaldensity}.
Our separation concerns a fixed strict-$M_2$ coupling: the projected
path decays only logarithmically, and the original non-Markov bridge
path does not converge in KL to EM. The marginal projection and signed
increment cancellation used in the proof are established techniques.

The comparison-process and I--MMSE architecture is a direct antecedent
\citep{si-mech-rp2025}, building on Gaussian information--estimation
identities \citep{si-core-gsv2004,si-core-pp2009}; innovations sampling
also admits certified channel-time schedules \citep{si-paper-wainwright2026}.
For arbitrary couplings we control conditional residuals without a
single-signal Markov channel; the source-conditioned corollary identifies
when that reduction applies.

\begin{table*}[t]
\centering\footnotesize
\setlength{\tabcolsep}{3pt}
\renewcommand{\arraystretch}{1.12}
\caption{Selected early-stopped KL guarantees. Rates describe discretization
at fixed stopping parameters and noise level; learned-drift/score error and
the initialization contribution in column four are omitted from the rate.
All results require a weighted $L^2$ approximation bound under the true
intermediate laws. ``No score condition'' means no endpoint/coupling score
regularity assumption. The DFM dimension terms are not bounds uniform over
unnormalized moments and scores. $K_{\rm init}$ denotes the KL from the true
starting law to the chosen starting law; $m_2$ below is a data second moment,
whereas $M_p$ is our displacement moment. EI denotes an exponential integrator.}
\label{si-main-comparison}
\begin{tabular}{@{}>{\raggedright\arraybackslash}p{\dimexpr.15\textwidth-1.6\tabcolsep\relax}>{\raggedright\arraybackslash}p{\dimexpr.27\textwidth-1.6\tabcolsep\relax}>{\raggedright\arraybackslash}p{\dimexpr.10\textwidth-1.6\tabcolsep\relax}>{\raggedright\arraybackslash}p{\dimexpr.24\textwidth-1.6\tabcolsep\relax}>{\raggedright\arraybackslash}p{\dimexpr.24\textwidth-1.6\tabcolsep\relax}@{}}
\toprule
Result & Moments and score/density conditions & Drift input & Initialization & Discretization rate \\
\midrule
Liu et al.\ (2025), Thm.\ 4.3
& Eighth displacement moment; interpolant regularity; no endpoint/joint score condition
& $(t,x)$
& Interior start; $K_{\rm init}$ retained
& EM; $O(N^{-1})$ \\
Koike (2026), Thm.\ 3.1
& Point source; second target moment $m_2$; no target smoothness assumption
& $(t,x)$
& Gaussian at $t_0>0$; explicit $t_0(d+m_2)/2$ term
& Brownian EM; $O((d+m_2)/N)$ at fixed stopping parameters \\
Gentiloni Silveri et al.\ (2026), Thms.\ 2--3
& Eighth endpoint moments; conditional-source density and score condition H4
& $(t,x)$
& True source at $t=0$; $K_{\rm init}=0$
& EM; $O(N^{-1})$, with explicit $d^3$ term \\
Benton et al.\ (2024), App.\ C, Thm.\ 2
& Data second moment $m_2$; no target density or score-smoothness condition
& $(t,x)$
& Standard Gaussian; $K_{\rm init}\lesssim(d+m_2)e^{-2T}$
& EI; $O((d+m_2)/N)$ at fixed $T,\delta$ \\
Theorem~\ref{si-core-theorem-T1}, $p=4$
& Arbitrary Borel coupling; $M_4<\infty$; no endpoint/joint score condition
& $(t,x)$
& True source at $t=0$; $K_{\rm init}=0$
& EM; $O((M_4+d^2)/N)$ \\
Corollary~\ref{si-sc-main}
& Arbitrary Borel coupling; $M_2<\infty$; no endpoint/joint score condition
& $(t,x,x_0)$
& True source at $t=0$; $K_{\rm init}=0$
& EM; joint KL $O((M_2+d)/N)$ \\
Corollary~\ref{si-markov-main-rate}
& Markov bridge mixture; $M_2<\infty$; singular endpoints allowed
& $(t,x)$
& True source at $t=0$; $K_{\rm init}=0$
& EM; $O((M_2+d)/N)$ \\
\bottomrule
\end{tabular}
\end{table*}
In the 2026 DFM result, the conditional-source assumption concerns the
density and score of $X_0\mid X_1$; the drift still observes only $(t,x)$.
Corollary~\ref{si-sc-main} instead retains the realized source $x_0$.
Liu et al.'s initialization term can be zero when the exact interior
starting law is available. Appendix~\ref{si-new-comparison} gives the
theorem-specific conditions and additional diffusion comparisons.

\section{Setting and Euler sampling}
\label{si-core-setting}
Let $d\ge1$, $\varepsilon>0$, and let $(X_0,X_1)\sim\pi$ be an arbitrary
Borel probability coupling on $\mathbb R^d\times\mathbb R^d$. Write
$U=X_1-X_0$ and $M_p=\mathbb E_\pi\|U\|^p$. We assume $M_2<\infty$;
additional displacement moments are imposed only where stated. The endpoints
need not have finite first moments, and either marginal or the coupling may
be singular. All norms are Euclidean, matrix norms with subscript $F$ are
Frobenius norms, and logarithms are natural.

For each single time $0\le t<1$, let $Z\sim N(0,I_d)$ be independent of the
endpoints and set
\begin{equation}\label{si-core-K1}
\begin{gathered}
\sigma_t^2=2\varepsilon t(1-t),\qquad X_t=X_0+tU+\sigma_t Z,\\
\rho_t=\operatorname{Law}(X_t),\qquad
c_t=\sqrt{\frac{2\varepsilon t}{1-t}},\qquad c_0=0,\\
D_t=U-c_tZ,\qquad b_t(X_t)=\mathbb E[D_t\mid X_t].
\end{gathered}
\end{equation}
Write $V_t=D_t-b_t(X_t)$ for the centered residual.
Only the integrable residual $D_t$ is conditioned; $\mathbb E[X_1\mid X_t]$
is not needed as an ordinary integrable conditional expectation. We choose a
finite Borel version of $b_t$ at each grid time. In particular,
$b_0(X_0)=\mathbb E[U\mid X_0]$. The construction and its version invariance
under true-source initialization are proved in Section~\ref{si-core-kernels}.

The single-time representation does not prescribe correlations between noises
at different times. When joint bridge observations are needed, their law is
specified instead by
\begin{equation}\label{si-core-original-bridge}
\mathcal X_t=(1-t)X_0+tX_1+\sqrt{2\varepsilon}(W_t-tW_1),
\end{equation}
where $W$ is a Brownian motion independent of the endpoints. For $0<t<1$,
$Z_t=(W_t-tW_1)/\sqrt{t(1-t)}$ gives the corresponding residual
$D_t=U-c_tZ_t$ and centered residual $V_t=D_t-b_t(\mathcal X_t)$.
At $t=0$ the product $c_0Z_0$ means zero. Write $R_N$ for the original
bridge law on $(\mathcal X_{t_0},\ldots,\mathcal X_{t_N})$.
Its Markov projection $P_N^\star$ uses the same initial law and adjacent
conditional kernels; $Q_N^{\mathrm{path}}$ is the actual Euler path law.
All three path laws include the initial coordinate.

Fix a deterministic grid $0=t_0<\cdots<t_N=T<1$, let
$h_k=t_{k+1}-t_k$, $\eta_k=h_k/(1-t_k)$, and choose finite Borel functions
$\widehat b_k:\mathbb R^d\to\mathbb R^d$. The actual sampler is
\begin{equation}\label{si-core-sampler}
\begin{gathered}
Y_0\sim\rho_0,\qquad Q_N=\operatorname{Law}(Y_N),\\
Y_{k+1}=Y_k+h_k\widehat b_k(Y_k)+\sqrt{2\varepsilon h_k}G_k.
\end{gathered}
\end{equation}
Here the $G_k$ are independent standard Gaussian vectors and are independent
of $Y_0$. Its prescribed learning error is
\begin{equation}\label{si-core-T0}
\mathcal E=\sum_{k=0}^{N-1}h_k
\int\|\widehat b_k(x)-b_{t_k}(x)\|^2\rho_{t_k}(dx)<\infty.
\end{equation}
The error is measured under the true grid marginals. All terminal guarantees
below have the direction $D_{\mathrm{KL}}(\rho_T\|Q_N)$.

\section{Finite-step guarantees}
\label{si-core-main-T1}
Put $\delta=1-T$ and $L_\delta=\log(1/\delta)$. For an integer $N\ge1$,
use the geometric grid
\begin{equation}\label{si-core-TG}
\begin{gathered}
a_{\mathrm{geo}}=1-\delta^{1/N},\qquad t_k=1-(1-a_{\mathrm{geo}})^k,\\
h_k=a_{\mathrm{geo}}(1-a_{\mathrm{geo}})^k.
\end{gathered}
\end{equation}
Thus $\eta_k=a_{\mathrm{geo}}$. The entropy modulus used below is
$\Phi_{\mathrm{ent}}(z)=\|z\|^4/(1+\|z\|^2)$.

\begin{sicoretheorem}[Weak displacement moments]\label{si-core-theorem-T1}
Under the setting and sampling protocol above, if
$M_4<\infty$ and $N\ge L_\delta/\log 2$, then
\begin{equation}\label{si-main-p4}
\begin{aligned}
D_{\mathrm{KL}}(\rho_T\|Q_N)
&\le\frac{\mathcal E}{4\varepsilon}
 +\frac{2M_4L_\delta}{\varepsilon^2N}\\
&\quad+\left(8d(d+2)+\frac d2\right)\frac{L_\delta^2}{N}.
\end{aligned}
\end{equation}
More generally, if $2<p\le4$ and $M_p<\infty$, then, for $N\ge L_\delta/\log2$,
\begin{equation}\label{si-core-T1-p-rate}
\begin{aligned}
D_{\mathrm{KL}}(\rho_T\|Q_N)
&\le\frac{\mathcal E}{4\varepsilon}
+32\frac{M_p}{\varepsilon^{p/2}}
 \left(\frac{L_\delta}{N}\right)^{p/2-1}\\
&\quad+129d(d+2)\frac{L_\delta^2}{N}.
\end{aligned}
\end{equation}
The constants do not depend on the coupling, higher moments, or endpoint
moments. Under only $M_2<\infty$, every geometric grid satisfies
\begin{equation}\label{si-core-T1-M2}
\begin{aligned}
&D_{\mathrm{KL}}(\rho_T\|Q_N)\le\frac{\mathcal E}{4\varepsilon}\\
&\quad+\frac{16}{\varepsilon(1-a_{\mathrm{geo}})}
 \mathbb E\left[\|U\|^2
 \frac{a_{\mathrm{geo}}\|U\|^2}
 {2\varepsilon(1-a_{\mathrm{geo}})+a_{\mathrm{geo}}\|U\|^2}\right]\\
&\quad+\left[\frac{32d(d+2)}{(1-a_{\mathrm{geo}})^2}
+\frac d{4(1-a_{\mathrm{geo}})}\right]a_{\mathrm{geo}}L_\delta.
\end{aligned}
\end{equation}
For a fixed coupling, a sequence with $a_{\mathrm{geo},N}\to0$,
$a_{\mathrm{geo},N}L_{\delta_N}\to0$, and $\mathcal E_N\to0$ therefore
satisfies $D_{\mathrm{KL}}(\rho_{T_N}\|Q_N)\to0$.
If in addition $\delta_N\to0$, then
\begin{equation}\label{si-core-TBL}
\begin{aligned}
d_{\mathrm{BL}}(Q_N,\rho_1)
&\le\sqrt{2D_{\mathrm{KL}}(\rho_{T_N}\|Q_N)}\\
&\quad+\sqrt{\delta_N^2M_2+2\varepsilon dT_N\delta_N}\longrightarrow0.
\end{aligned}
\end{equation}
Here $d_{\mathrm{BL}}$ uses test functions with both supremum norm and
Lipschitz constant at most one. For example, one may take
$L_{\delta_N}=N^\kappa$ with $0<\kappa<1/2$ and vanishing learning error.
\end{sicoretheorem}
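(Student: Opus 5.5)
We reduce the terminal KL to a sum of one-step kernel comparisons along the Markov projection $P_N^\star$, and not along the original bridge $R_N$. By data processing and the chain rule, $D_{\mathrm{KL}}(\rho_T\|Q_N)\le D_{\mathrm{KL}}(P_N^\star\|Q_N^{\mathrm{path}})$. Since both laws start at $\rho_0$, this path KL equals
\[
\sum_{k<N}\int D_{\mathrm{KL}}\bigl(P^\star_k(x,\cdot)\,\|\,N(x+h_k\widehat b_k(x),2\varepsilon h_kI_d)\bigr)\,\rho_{t_k}(dx),
\]
where $P^\star_k(x,\cdot)$ is the true conditional law of $\mathcal X_{t_{k+1}}$ given $\mathcal X_{t_k}=x$. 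Each summand splits as a Gaussian-shift term plus a non-Gaussianity term. The shift term, which compares the mean $h_k\widehat b_k$ with $h_kb_{t_k}$, contributes $h_k\|\widehat b_k-b_{t_k}\|^2/(4\varepsilon)$ after a Young-type splitting, and summing these gives $\mathcal E/(4\varepsilon)$. The remaining term is the KL from $P^\star_k(x,\cdot)$ to $N(x+h_kb_{t_k}(x),2\varepsilon h_k I)$, which compares the projected kernel with the exact-drift Euler Gaussian.

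To control that remaining term, disintegrate the one-step transition over $(X_0,U)$ together with the bridge noise. Conditionally on $(\mathcal X_{t_k},X_0,X_1)$, the bridge step is exactly Gaussian with mean $\mathcal X_{t_k}+\eta_k(X_1-\mathcal X_{t_k})=\mathcal X_{t_k}+h_kD_{t_k}$ and covariance $2\varepsilon h_k(1-\eta_k)I$. So $P^\star_k(x,\cdot)$ is a Gaussian mixture, with means $x+h_kD_{t_k}$ mixed over the conditional law of $D_{t_k}$ given $X_{t_k}=x$ and covariance slightly shrunk. KL is convex in the first argument. A cruder Jensen bound would give KL of order $h_k\mathbb E\|V_{t_k}\|^2/\varepsilon$ per step, which sums to $O(1)$ and not $o(1)$; this is precisely the path/terminal gap the paper emphasizes. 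To avoid it, I would use a second-order (chi-square/cumulant) expansion of the mixture density against the Gaussian centred at the mixture mean. The first-order term vanishes because $V_{t_k}$ is centered. The covariance mismatch $h_k\mathbb E[V V^\top\mid x]-2\varepsilon h_k\eta_k I$ contributes at second order, giving a per-step bound of the form $\mathbb E\bigl[\Phi_{\mathrm{ent}}\bigl(\sqrt{h_k/(2\varepsilon)}\,V_{t_k}\bigr)\bigr]+d$-terms$\cdot\eta_k^2$. The entropy modulus $\Phi_{\mathrm{ent}}(z)=\|z\|^4/(1+\|z\|^2)$ is exactly what this log-ratio bound produces: the fourth-order behaviour for small shifts is capped at quadratic growth for large ones.

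Next, bound the residual moments. We have $V_{t_k}=D_{t_k}-b_{t_k}$, and $D_{t_k}=U-c_{t_k}Z$ with $c_{t_k}^2=2\varepsilon t_k/(1-t_k)$. Centering costs at most a factor of $2^p$. Using $h_k c_{t_k}^2/(2\varepsilon)\le\eta_k$, the Gaussian part gives terms of size $d(d+2)\eta_k^2$. The displacement part gives $\mathbb E\,\Phi_{\mathrm{ent}}(\sqrt{h_k/2\varepsilon}\,U)$, which is at most $(h_k/2\varepsilon)^2M_4$ or, for $2<p\le4$, at most $(h_k/2\varepsilon)^{p/2}M_p$, by interpolating $\min(s^2,s)\le s^{p/2}$. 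On the geometric grid we have $\eta_k=a$, $h_k\le a$, and $\sum_k h_k\le1$. We also have $Na\le L_\delta\cdot\frac{a}{-\log(1-a)}\cdot\ldots$, and the condition $N\ge L_\delta/\log2$ ensures $a\le1/2$, hence $a\le 2L_\delta/N$ roughly. Summing $N$ terms of size $d(d+2)a^2$ then gives the $d(d+2)L_\delta^2/N$ term. The $M_4$ term gives $\sum_k h_k^2M_4/\varepsilon^2\le aM_4/\varepsilon^2\lesssim M_4L_\delta/(\varepsilon^2N)$, and the $M_p$ term gives $(a/\varepsilon)^{p/2-1}M_p$. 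Keeping the $M_2$ case in non-expanded form produces the displayed expectation with $a\|U\|^2/(2\varepsilon(1-a)+a\|U\|^2)$. This expectation tends to $0$ by dominated convergence as $a\to0$, which gives the convergence statement.

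Finally, the bounded-Lipschitz statement follows from Pinsker, which gives $d_{\mathrm{BL}}(Q_N,\rho_{T})\le\sqrt{2\mathrm{KL}}$, together with a coupling bound $\mathbb E\|X_1-X_{T}\|\le\sqrt{\mathbb E\|\delta U-\sigma_TZ\|^2}=\sqrt{\delta^2M_2+2\varepsilon dT\delta}$.

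The main obstacle is the per-step mixture-versus-Gaussian KL bound with the capped quartic modulus. It must be uniform in $x$ without densities, so I would prove it for an arbitrary probability law of the shift vector (via $\log\mathbb E e^{\langle\cdot\rangle}$ and the inequality $\log(1+y)\le y$), and then integrate over $\rho_{t_k}$.
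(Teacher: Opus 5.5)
The outer structure of your plan is the paper's: data processing down to the Markov projection $P_N^\star$, the exact path chain rule under the true grid marginals, the Gaussian-mixture form of each projected kernel with variance $q_k=2\varepsilon h_k(1-\eta_k)$, the split $D_t=U-c_tZ$, geometric sums, and Pinsker plus the direct coupling for \eqref{si-core-TBL}.

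\textbf{The gap: the per-step mixture lemma.} The lemma you flag as the main obstacle is where the plan fails, because the route you propose for it diverges. Bound the log-likelihood ratio $\log\mathbb E_m\exp(\langle m,y\rangle/q-\|m\|^2/(2q))$ of $\mathbb E_m N(m,qI_d)$ by $\log(1+y)\le y$ and integrate. The result is exactly the chi-square divergence $\mathbb E\exp(\langle m,m'\rangle/q)-1$, where $m,m'$ are independent copies of the centered shift.
\begin{itemize}
\item This is $+\infty$ whenever the mixing law has a polynomial tail, which is typical under the theorem's hypotheses.
\item At $t_0=0$ the mixing law is $\mathcal L(h_0(U-b_0(X_0))\mid X_0)$, which may have any polynomial tail compatible with $M_4$, $M_p$ or $M_2$ finite.
\item In the example of Theorem~\ref{si-fixed-separation}, the posterior of $U$ given $X_t$ has polynomial tails at every $t$.
\item Using the Euler variance $2\varepsilon h_k$ as the reference does not help. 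The corresponding exponent $\{2\langle m,m'\rangle-\eta_k(\|m\|^2+\|m'\|^2)\}/\{2q_k(1+\eta_k)\}$ is still of order $\|m\|^2/q_k$ when $m\approx m'$.
\end{itemize}
So this expansion gives no finite per-step bound, and it cannot produce the capped modulus $\Phi_{\mathrm{ent}}$.

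Two ingredients are missing:
\begin{itemize}
\item[(i)] A genuine fourth-moment bound $D_{\mathrm{KL}}(\mathcal L(V+\sqrt qG)\|N(0,qI_d))\le\mathbb E\|V\|^4/(4q^2)$ for centered $V\in L^4$. The paper proves it for bounded $V$ from the I--MMSE identity $F''(s)=\tfrac12\mathbb E\|\operatorname{Cov}(V\mid\sqrt sV+G)\|_F^2\le\tfrac12\mathbb E\|V\|^4$ with $F(0)=F'(0+)=0$. It then removes boundedness by truncation and lower semicontinuity.
\item[(ii)] A head/tail split at $\|W\|=\sqrt q$, using convexity of KL and the exact mean-translation identity. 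Apply (i) to the recentered head and the Jensen bound $\mathbb E\|B\|^2/(2q)$ to the tail. This split is what produces $\Psi(w)=\|w\|^2\min(\|w\|^2,1)\le2\Phi_{\mathrm{ent}}(w)$.
\end{itemize}
A chi-square estimate can replace (i) only after truncating to a bounded centered head, and then with a constant of order $e^{K^2}$ instead of $1/4$.

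\textbf{Explicit constants.} The theorem states explicit constants, and your bookkeeping loses them even granting the lemma.
\begin{itemize}
\item The learning term needs no Young splitting, which would inflate its coefficient. Since $K_k(x,\cdot)$ has mean exactly $x+h_kb_{t_k}(x)$ and finite second moment, moving the reference mean to $x+h_k\widehat b_k(x)$ adds exactly $h_k\|b_{t_k}(x)-\widehat b_k(x)\|^2/(4\varepsilon)$.
\item Moving the reference variance from $q_k$ to $2\varepsilon h_k$ adds exactly $c_d(\eta_k)$ plus the nonpositive term $-h_k^2\eta_k\mathbb E[\|V_{t_k}\|^2\mid X_{t_k}=x]/(2q_k)$. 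Together these two steps are \eqref{si-core-K5}.
\item Paying a factor $2^4$ for conditional centering would turn the coefficients $2$ and $8d(d+2)$ in \eqref{si-main-p4} into $32$ and $128d(d+2)$. The paper avoids any centering loss with an uncentered-input form of (i), based on $\operatorname{tr}(C^2)\le\operatorname{tr}(M^2)$ for $0\preceq C\preceq M=\mathbb E[WW^\top\mid\cdot\,]$. Then $\mathbb E\|h_kD_{t_k}\|^4\le8h_k^4\{M_4+c_{t_k}^4d(d+2)\}$ enters with factor $1/(4q_k^2)$. Likewise the factor in front of $\mathbb E\Phi_{\mathrm{ent}}(h_kD_{t_k}/\sqrt{q_k})$ is $4$ rather than $128$.
\item The bound $a_{\mathrm{geo}}\le-\log(1-a_{\mathrm{geo}})=L_\delta/N$ holds exactly, not roughly with a factor $2$. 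The threshold $N\ge L_\delta/\log2$ only serves to give $a_{\mathrm{geo}}\le1/2$, hence $(1-a_{\mathrm{geo}})^{-2}\le4$.
\end{itemize}
As written, your plan would at best give $O(\cdot)$ versions of \eqref{si-main-p4}--\eqref{si-core-T1-M2}.
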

The proof and all intermediate constants appear
in Section~\ref{si-core-T1-proof}. The second-moment assertion is for each
fixed coupling; it does not exchange a supremum over couplings with the limit.
At $p=4$, \eqref{si-main-p4} is sharper than the general
$p$-moment bound \eqref{si-core-T1-p-rate}.

At fixed $T$ and $\varepsilon$, \eqref{si-main-p4} has order
$N^{-1}$ with only a fourth displacement moment. If $M_4=O(d^2)$,
its dimension dependence is $O(d^2)$; without that normalization the
moment must remain explicit. The theorem does not cover arbitrary
nonlinear interpolants, and its constants deteriorate as
$\varepsilon\downarrow0$. These are conservative upper bounds:
Section~\ref{si-paper-validation} evaluates their actual values.

\subsection{Conditional residuals and the proof}
\label{si-core-mechanism}
The displacement bound follows from a more informative estimate. Let
$q_k=2\varepsilon h_k(1-\eta_k)$ and
$c_d(\eta)=d[-\eta-\log(1-\eta)]/2$. For any deterministic grid,
\begin{equation}\label{si-main-residual}
\begin{aligned}
D_{\mathrm{KL}}(\rho_T\|Q_N)
&\le \frac{\mathcal E}{4\varepsilon}+\sum_k c_d(\eta_k)\\
&\quad+8\sum_k\mathbb E\Phi_{\mathrm{ent}}
       \left(\frac{h_kV_{t_k}}{\sqrt{q_k}}\right).
\end{aligned}
\end{equation}
This bound retains the uncertainty left after observing the current state.
It can be sharper than replacing that uncertainty by an unconditional
displacement moment. A further valid refinement subtracts
$\sum_k h_k^2\eta_k\mathbb E\|V_{t_k}\|^2/(2q_k)$ from the right side.
Proposition~\ref{si-if-prop:residual} proves both forms.

The proof of Theorem~\ref{si-core-theorem-T1} has three steps.
First, disintegrate the law of the integrable $D_t$ given $X_t=x$ and set
\[
 K_{t,h}(x,dy)=\int N(x+hz,qI_d)(dy)\,\lambda_t(x,dz),
\]
where $\lambda_t(x,\cdot)=\mathcal L(D_t\mid X_t=x)$ and
$q=2\varepsilon h(1-\eta)$. The Borel kernel has $\rho_tK_{t,h}=\rho_{t+h}$, including the initial
step. Repeated conditioning defines a Markov comparison chain $P_N^\star$.
It need not be the original bridge path law.

Second, for $P_x=\mathcal L(hV_t+\sqrt qG\mid X_t=x)$,
$v=2\varepsilon h$, and $Q_{t,h}(x)=N(x+h\widehat b(x),vI_d)$,
the exact one-step identity is
\begin{align*}
D_{\mathrm{KL}}(K_{t,h}(x)\|Q_{t,h}(x))
&=D_{\mathrm{KL}}(P_x\|N(0,qI_d))+c_d(\eta)\\
&\quad-\frac{h^2\eta}{2q}\mathbb E[\|V_t\|^2\mid X_t=x]\\
&\quad+\frac h{4\varepsilon}\|b_t(x)-\widehat b(x)\|^2.
\end{align*}
Finite-moment Gaussian-mixture entropy bounds control the first term.
The fourth-moment estimate gives \eqref{si-main-p4}, while the
fourth-order-at-zero, second-order-in-the-tails function
$\Phi_{\mathrm{ent}}$ gives \eqref{si-main-residual}.

Third, the path chain rule sums these quantities under the true marginals;
data processing gives the terminal bound. The uncentered entropy estimate in Appendix~\ref{si-uncentered-proof} and
geometric sums yield the tail modulus in \eqref{si-core-T1-M2}.
The inequality $x/(1+x)\le x^{p/2-1}$ for $x\ge0$ and $2<p\le4$
then yields the weaker-moment rate from the same modulus. No Markov
property of the original bridge or continuous-time change of measure
is required. Dropping the negative term and projecting paths to the
terminal state are two distinct sources of slack.

For conditionally independent residual coordinates, entropy tensorizes.
With finite coordinate fourth moments on the geometric grid and
$N\ge L_\delta/\log2$, the bound becomes
$\mathcal E/(4\varepsilon)+2\sum_j\mathbb E|U_j|^4L_\delta/(\varepsilon^2N)
+(49/2)dL_\delta^2/N$ (Proposition~\ref{si-if-prop:tensor}). A product endpoint
coupling suffices. This restricted result does not give a dimension-linear
bound for arbitrary couplings.

\begin{figure*}[t]
\centering
\includegraphics[width=\textwidth]{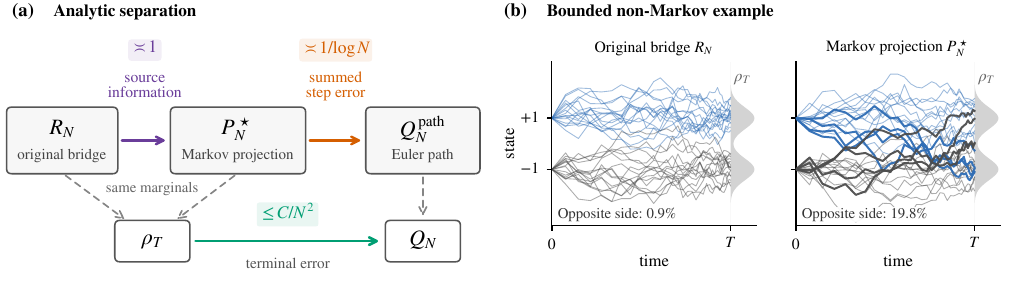}
\caption{Source information and terminal accuracy.
(a) Three path laws on one grid: Markov projection preserves every grid
marginal but discards the source information in
\eqref{si-main-projection-information}; Euler replaces each projected
kernel by a Gaussian. The colored orders are analytic results for the
fixed strict-$M_2$ coupling of Theorem~\ref{si-fixed-separation}, and
the terminal order is an upper bound only.
(b) A different, bounded coupling, $X_0=X_1\in\{-1,1\}$, illustrates
how projection changes source--terminal pairing while preserving the
marginals. Color encodes the source; thick paths end across zero from
their source. Percentages use all 6000 paths per law.
This illustration does not test Theorem~\ref{si-fixed-separation};
the simulation protocol is given in Section~\ref{si-r015-toy}.}
\label{si-fig-source-information}
\end{figure*}

\section{Three entropy scales for one fixed coupling}
\label{si-separation-main}
Figure~\ref{si-fig-source-information} distinguishes the path laws and
their common marginal target. Data processing gives
$D_{\mathrm{KL}}(\rho_T\|Q_N)\le
D_{\mathrm{KL}}(P_N^\star\|Q_N^{\mathrm{path}})$.
The original bridge has an additional cost. Since $P_N^\star$ is its
Markov projection, the chain rule gives
\begin{equation}\label{si-main-path-pythagoras}
\begin{aligned}
D_{\mathrm{KL}}(R_N\|Q_N^{\mathrm{path}})
&=D_{\mathrm{KL}}(R_N\|P_N^\star)\\
&\quad+D_{\mathrm{KL}}(P_N^\star\|Q_N^{\mathrm{path}}).
\end{aligned}
\end{equation}
All terms are finite under the setting of Section~\ref{si-core-setting}
(Proposition~\ref{si-core-markov-projection}). To see the decomposition,
let $A_k$ be the next-step law conditional on the entire observed past
and $K_k$ its conditional average given only $\mathcal X_{t_k}$.
The conditional entropy chain rule splits the cost from $A_k$ to the
Euler kernel into the cost from $A_k$ to $K_k$ and that from $K_k$ to
the Euler kernel. Summing gives \eqref{si-main-path-pythagoras}.
Given the source, the bridge likelihood makes the latest state
sufficient for predicting the future. Hence
\begin{equation}\label{si-main-projection-information}
\begin{aligned}
&D_{\mathrm{KL}}(R_N\|P_N^\star)\\
&\quad=\sum_{k=1}^{N-1} I_{R_N}\!\left(
\mathcal X_{t_{k+1}};X_0\mid\mathcal X_{t_k}\right).
\end{aligned}
\end{equation}
Here $I_{R_N}$ is conditional mutual information under the original
bridge. Each term equals the information in the entire past beyond
the current state (Lemma~\ref{si-r015-source-information}).
The loss depends on the bridge and grid, not the fitted Euler drift;
the $k=0$ term is zero. For $N\ge2$ it vanishes exactly when the
original bridge is Markov. The following example separates this
source-information loss from projected-path and terminal error.

\begin{sicoretheorem}[Three entropy scales at strict second moment]
Fix $\varepsilon>0$ and $0<T<1$. Let $H$ be standard Cauchy and let
$U$, independently, have density
\label{si-fixed-separation}
\begin{equation}\label{si-strict-density}
f(u)=\frac{C_f}{(1+u^2)^{3/2}\{\log(e+u^2)\}^2},
\end{equation}
where $C_f$ normalizes the density. Set $X_0=H$, $X_1=H+U$.
For exact current-state drift, true-source initialization, and standard EM
on the geometric grid, as $N\to\infty$,
\begin{equation}\label{si-fixed-rates}
\begin{aligned}
D_{\mathrm{KL}}(R_N\|Q_N^{\mathrm{path}})&\asymp1,\\
D_{\mathrm{KL}}(P_N^\star\|Q_N^{\mathrm{path}})
    &\asymp \frac1{\log N},\\
D_{\mathrm{KL}}(\rho_T\|Q_N)&\le \frac{C}{N^2}.
\end{aligned}
\end{equation}
The comparison constants depend on the fixed law, $\varepsilon$, and
$T$, but not on $N$. Also $D_{\mathrm{KL}}(R_N\|P_N^\star)\asymp1$.
The coupling is independent of $N$ and has $M_2<\infty$ but
$M_p=\infty$ for every $p>2$. Neither endpoint has a finite first
absolute moment.
\end{sicoretheorem}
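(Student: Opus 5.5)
The proof works in dimension one and uses the product structure $X_0=H$, $U\perp H$, with $H$ standard Cauchy. The key point is that $H$ enters every marginal by convolution: $\rho_t=g_H*\nu_t$, where $g_H$ is the Cauchy density and $\nu_t=\operatorname{Law}(tU+\sigma_tZ)$. Since $\sup_y|g_H^{(j)}/g_H|(y)<\infty$ for every $j$, all $x$-derivatives of $\log\rho_t$ are bounded uniformly in $t\le T$ and in $x$. Writing $b_t(x)=\int m_t(y)\,g_H(x-y)\,\nu_t(dy)/\rho_t(x)$ with $m_t(y)=\mathbb E[D_t\mid tU+\sigma_tZ=y]$ gives the same conclusion for the drift: its derivatives are controlled by conditional moments of order at most two of $D_t$. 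These are finite because $M_2<\infty$.

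\emph{Moment claims.} From $f(u)\sim C_f|u|^{-3}(\log u^2)^{-2}$ we get $M_2=\int u^2 f<\infty$, since $\int^\infty du/(u\log^2u)<\infty$. The same tail gives $M_p=\infty$ for every $p>2$. Both endpoints have Cauchy tails, so neither has a first absolute moment.

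\emph{Terminal bound $C/N^2$.} I would not pass through the path KL here; by the separation this route can only give $1/\log N$. Instead I would estimate the terminal density error directly by a weak-error expansion.
\begin{itemize}
\item Let $u_k$ solve the backward Kolmogorov equation for the exact marginal dynamics, with terminal datum a bounded test function $\phi$.
\item Write $Q_N-\rho_T$ as a telescoping sum of one-step defects $(K_{t_k,h_k}-Q_{t_k,h_k})u_{k+1}$, integrated against the true marginals and the Euler marginals respectively.
\item Since $b_t$ is exact, each one-step defect is $O(h_k^2)$ times derivatives of $u_{k+1}$ and $b$. By the smoothing just described these derivatives are bounded without moments of $U$ beyond two; the signed cancellation of first-order terms is what gives $O(h_k^2)$ rather than $O(h_k)$.
\item Summing, $\|Q_N/\rho_T-1\|$ in a weighted sup or $L^2(\rho_T)$ norm is $O(\sum_k h_k^2)=O(aL_\delta)=O(1/N)$ at fixed $T$.
\item Finally use $D_{\mathrm{KL}}(\rho_T\|Q_N)\le\chi^2(\rho_T\|Q_N)\lesssim\|Q_N/\rho_T-1\|^2=O(N^{-2})$.
\end{itemize}
To close this argument one needs a lower bound $Q_N\ge c\,\rho_T$. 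It follows from the same expansion once $N$ is large.

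\emph{Projected path $\asymp1/\log N$.} On the geometric grid $h_k/\sqrt{q_k}\asymp\sqrt{a/\varepsilon}$ uniformly in $k$, where $a=a_{\mathrm{geo}}\asymp L_\delta/N$. For the upper bound I would use \eqref{si-main-residual} with $\mathcal E=0$.
\begin{itemize}
\item The term $\sum_k c_d(\eta_k)$ is $O(Na^2)=O(1/N)$.
\item Given $X_t=x$, the Cauchy smoothing gives $\mathcal L(U\mid X_t=x)(du)\le C(1+x^2)f(u)\,du$. Integrating against $\rho_t$ then yields a conditional tail bound for $V$ comparable to $f$.
\item Hence $\mathbb E\Phi_{\mathrm{ent}}(\sqrt a V)\lesssim a^2\mathbb E[V^4\wedge V^2/a]$.
\item The main part is $a\int_{|u|>a^{-1/2}}u^2f\asymp a/\log(1/a)$, because the remaining $a^2\int_{|u|\le a^{-1/2}}u^4f\asymp a/\log^2(1/a)$ is smaller.
\item Summing over $N$ steps gives $\asymp L_\delta/\log N$.
\end{itemize}
For the lower bound I would start from the exact one-step identity and the decomposition $D_{\mathrm{KL}}(P_x\|N(0,q))=D_{\mathrm{KL}}(P_x\|N_{P_x})+D_{\mathrm{KL}}(N_{P_x}\|N(0,q))$, where $N_{P_x}$ is the Gaussian with matched variance. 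The Gaussian part, minus the negative term $h^2\eta\mathbb E[V^2\mid x]/(2q)$, is nonnegative up to $O(a^2)$. For the non-Gaussian part, on the event $|V|\gg a^{-1/2}$ the mixture $P_x$ places mass at distances far exceeding $\sqrt q$. A test-function (Donsker--Varadhan) bound, using $g(y)=\min(y^2/q,\,\cdot)$ truncated at level $1/a$, should give
\[
D_{\mathrm{KL}}(P_x\|N_{P_x})\gtrsim a\,\mathbb E\bigl[V^2\mathbf 1\{|V|>a^{-1/2}\}\mid x\bigr].
\]
The conditional lower tail of $U$ given $x$ is comparable to $f$ on a compact $x$-set of positive $\rho_t$-mass. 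I expect this lower bound to be the main obstacle; it is where the $\log$ in $f$ must be tracked carefully.

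\emph{Source information and original path.} I would use \eqref{si-main-projection-information} and Lemma~\ref{si-r015-source-information}. Each term $I(\mathcal X_{t_{k+1}};X_0\mid\mathcal X_{t_k})$ is a Gaussian-channel mutual information with step $h_k$. By the I--MMSE relation it equals
\[
\frac{h_k}{4\varepsilon}\,\mathbb E\bigl|\mathbb E[D_{t_k}\mid X_0,\mathcal X_{t_k}]-b_{t_k}(\mathcal X_{t_k})\bigr|^2
\]
up to a relative error $o(1)$, controlled by $M_2$ and uniform in $k$. The limit of the sum is therefore $\frac1{4\varepsilon}\int_0^T\Gamma_t\,dt$, where $\Gamma_t$ is this source-prediction gap. $\Gamma_t$ is finite because $\Gamma_t\le\mathbb E\|D_t\|^2<\infty$. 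It is strictly positive for $0<t<T$, since $H$ is informative about $U$ given $\mathcal X_t=H+tU+\sigma_tZ$ and the conditional mean of $U$ is nonconstant in $H$. Riemann-sum convergence along $a\to0$ then gives $D_{\mathrm{KL}}(R_N\|P_N^\star)\asymp1$. Adding the projected-path rate via \eqref{si-main-path-pythagoras} gives $D_{\mathrm{KL}}(R_N\|Q_N^{\mathrm{path}})\asymp1$.
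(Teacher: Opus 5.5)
\textbf{Projected-path lower bound.} Your upper bound for the projected path is essentially the paper's tail-modulus argument, but the matching lower bound fails as planned.

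You localize to a compact $x$-set and assert that the conditional tail of $U$ there is comparable to $f$. It is not. The conditional density is $f(u)g_t(x-tu)/\rho_t(x)$, where $g_t$ is the Cauchy density smoothed by the bridge Gaussian. For fixed $x$ and $t>0$, $g_t(x-tu)\asymp (tu)^{-2}$ as $|u|\to\infty$. A large displacement at a bounded state needs a compensating Cauchy source, so the conditional tail decays like $|u|^{-5}(\log|u|)^{-2}$. Summed over the grid, your localized bound is therefore at most of order $\sqrt a$ up to logarithms, far below $1/\log N$. The initial layer $t_k\lesssim\sqrt a$, where the tail stays $f$-like up to $|u|\approx 1/t_k$, has total length only $O(\sqrt a)$.

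The mass $\mathbb E[U^2\mathbf 1\{|U|>R\}]\asymp 1/\log R$ that drives the slow rate lives at states $|x|\approx t|U|$. There, for fixed $t>0$, $\mathbb E[U^2\mid X_t=x]$ grows like $|x|/(\log|x|)^2$. For $|x|\gtrsim 1/a$ (up to logarithms) this gives $h^2\operatorname{Var}(D_t\mid X_t=x)\gtrsim q$, so $N_{P_x}$ is much wider than $N(0,q)$. Your claimed inequality $D_{\mathrm{KL}}(P_x\|N_{P_x})\gtrsim a\,\mathbb E[V^2\mathbf 1\{|V|>a^{-1/2}\}\mid x]$ then fails: the left side grows only logarithmically in $|x|$, while the right side grows essentially linearly. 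In that regime the entropy sits in the Gaussian part you discarded.

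The paper avoids this conditional bookkeeping. By convexity it drops the conditioning, so the averaged one-step entropy satisfies $R(t,h)\ge D_{\mathrm{KL}}(\mathcal L(hU+J-hb_t(X_t))\|N(0,q))$, with $J=-hc_tZ+\sqrt q\,G$ independent of $U$. Since $b$ is bounded, it cannot cancel the unconditional heavy tail of $U$. A Donsker--Varadhan test $z^2\mathbf 1\{z\ge r_h\}/(4q)$ with $r_h\asymp\sqrt{h\log(1/h)}$ then gives $R(t,h)\ge\frac{h}{64\varepsilon}\mathbb E[U^2\mathbf 1\{U\ge L_h\}]-\sqrt2h^4$, where $L_h\asymp\sqrt{\log(1/h)/h}$. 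The negative variance correction in the exact signed path identity is kept and totals only $O(a)$.

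\textbf{Terminal bound.} Your terminal argument leaves the key regularity unproved. With the diffusion of drift $b$ as reference, an $O(h_k^2)$ local weak error needs $b$ Lipschitz in time, because EM freezes the drift at $t_k$. You justify regularity by saying conditional moments of order at most two are finite because $M_2<\infty$. That gives no uniform bound, since the conditional second moment is unbounded, as noted above.

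Concretely, $\partial_tq_1=-\partial_xq_2+\kappa_t\partial_x^2q_1$, with $\kappa_t=\varepsilon(1-2t)$ and $q_2=\int u^2f(u)g_t(x-tu)\,du$. Bounding $|g_t'|$ gives only $|\partial_xq_2|\lesssim 1/(|x|\log^2|x|)$, against $\rho_t\asymp x^{-2}$. The paper integrates by parts in $u$ on the tail $|u|>|x|/(2t)$, using $|(u^2f)'|\lesssim u^{-2}$, to get $|\partial_xq_2|\le C\rho_t$. This signed flux cancellation is the missing idea.

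If your reference is instead the comparison chain, the defect contains $h^2\operatorname{Var}(D_t\mid X_t=x)$, which is unbounded in $x$. Either way, you do not supply the density expansion that turns a weak-error bound into the pointwise relative density bound needed for $\chi^2$ and for $Q_N\ge c\rho_T$.

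The paper never needs density ratios. It propagates $|q_t'|\le R_1q_t$ and $|q_t''|\le R_2q_t$ for the Euler densities. It represents the Euler marginals by the projected drift $\widetilde b_t(x)=\mathbb E[b_{t_k}(Y_{t_k})\mid Y_t=x]$ and shows $|b_t-\widetilde b_t|\le K(t-t_k)$ from signed conditional increments. Girsanov then gives $C\sum_kh_k^3$ directly.

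\textbf{Source information and original path.} Your I--MMSE Riemann-sum route for $D_{\mathrm{KL}}(R_N\|P_N^\star)$ is a genuinely different idea, and it would even identify a limit the paper does not assert. It needs, however, a uniform-in-$k$ small-signal expansion that you only sketch. The paper gets the lower bound more cheaply from the single event $\{x_N-x_0>r\}$. Its tail is polynomial under $R_N$ and Gaussian under EM, because $|b|\le B$. The paper then uses the Pythagorean identity to conclude $D_{\mathrm{KL}}(R_N\|P_N^\star)\asymp1$.
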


Here $\asymp$ means upper and lower bounds by positive constant multiples
for all sufficiently large $N$. No matching terminal lower bound or
positive limit of the original-path KL is asserted.

\paragraph{Heavy-tail contribution to path error.}
The current drift is uniformly bounded on $[0,T]\times\mathbb R$.
A rare large displacement can nevertheless produce a comparison
increment far into a Gaussian tail. An entropy test that weights that event
by its squared increment gives
\[
D_{\mathrm{KL}}(P_N^\star\|Q_N^{\mathrm{path}})
\gtrsim
\mathbb E[U^2\mathbf1_{\{U\ge L_a\}}]-O(a),
\]
where $a=1-(1-T)^{1/N}$ and
$L_a\asymp\sqrt{\log(1/a)/a}$. For \eqref{si-strict-density}, the
tail second moment is of order $1/\log L_a$. Conversely, the
second-moment modulus in \eqref{si-core-T1-M2}, applied before terminal
projection, is $O(1/\log(1/a))$. This proves the matching path order.
The weak moment persists
in the averaged conditional residual: $\mathbb E|V_t|^p=\infty$ for
all $p>2$, including positive times.

\paragraph{Signed cancellation in terminal error.}
Write $g_t$ for a Cauchy density convolved with the bridge Gaussian and
$q_j(t,x)=\int u^j f(u)g_t(x-tu)\,du$. Integration by parts in the
heavy tail shows $|\partial_xq_2|\le C\rho_t$; bounding the absolute
posterior moment first would lose this cancellation.
Consequently the true drift has a bounded time derivative and bounded
spatial derivatives.

Established marginal-drift projection and signed-increment arguments
\citep{si-paper-mou2022,si-paper-li2026numericaldensity} give the Euler interpolation drift
$\widetilde b_t(x)=\mathbb E[b_{t_k}(Y_{t_k})\mid Y_t=x]$.
Its signed conditional increment is $O(t-t_k)$, despite an absolute
increment typically of order $\sqrt{t-t_k}$. Keeping this sign gives
$|b_t-\widetilde b_t|\le C(t-t_k)$ and hence a terminal entropy cost
$C\sum_kh_k^3=O(a^2)$. Here the projected path converges only logarithmically under strict $M_2$,
while the same fixed coupling has the quadratic terminal upper bound.
Appendix~\ref{si-fixed-proof} proves the claims without a moment
assumption on the Cauchy source.
Appendix~\ref{si-terminal-antecedent-r009} compares the assumptions and KL directions.

\paragraph{Why the original path does not converge.}
The nonvanishing projection loss is a non-Markov effect; heavy tails
are needed for the slow comparison path. In this coupling a fixed
tail event proves the original-path lower bound. Under $R_N$, the increment $\mathcal X_T-\mathcal X_0$ equals
$TU+\sqrt{2\varepsilon T(1-T)}Z$ and has a polynomial tail.
Under exact-drift EM, $Y_N-Y_0$ is a Gaussian sum plus a drift accumulation
bounded by $T\|b\|_\infty$. A fixed tail event of this increment therefore
separates the two path laws by a positive KL amount, uniformly in $N$.
The upper bound in Appendix~\ref{si-core-kernels},
\eqref{si-core-original-upper}, is uniform on these grids.
This argument observes both the initial and final coordinates;
it gives no terminal-marginal lower bound. The non-Markov projection loss,
slow discretization of the comparison path, and terminal cancellation
are distinct effects.

\section{Source information and the Markov subclass}
\label{si-source-main}
The projection loss in \eqref{si-main-projection-information} measures
source information beyond the current state. Retaining that information
in the drift gives a second-moment bound for the joint output law.
\begin{sicorecorollary}[Source-conditioned joint-law guarantee]
\label{si-sc-main}
Let $b_t^{\mathrm{sc}}(x,x_0)=\mathbb E[D_t\mid X_t=x,X_0=x_0]$.
Draw $S\sim\rho_0$, set $Y_0=S$, and use
\[
Y_{k+1}=Y_k+h\widehat b_k^{\mathrm{sc}}(Y_k,S)
                 +\sqrt{2\varepsilon h}G_k,\qquad h=T/N.
\]
For jointly Borel finite drifts and
\[
\mathcal E_{\mathrm{sc}}=\sum_k h\mathbb E
\|\widehat b_k^{\mathrm{sc}}(X_{t_k},X_0)
                  -b_{t_k}^{\mathrm{sc}}(X_{t_k},X_0)\|^2<\infty,
\]
any Borel endpoint coupling with $M_2<\infty$ satisfies
\begin{equation}\label{si-main-source-bound}
\begin{aligned}
&D_{\mathrm{KL}}(\mathcal L(X_0,X_T)\|\mathcal L(S,Y_N))\\
&\quad\le\frac{\mathcal E_{\mathrm{sc}}}{4\varepsilon}
 +h\left[\frac{dT}{4\delta}
            +\frac{M_2}{4\varepsilon\delta(\delta+h)}\right].
\end{aligned}
\end{equation}
\end{sicorecorollary}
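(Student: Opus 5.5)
Conditioning on the source turns each branch into a Brownian bridge from a point, so the plan is a per-branch Girsanov/chain-rule argument. Conditionally on $X_0=x_0$, the bridge $\mathcal X_t$ is a mixture over $X_1$ of Brownian bridges from $x_0$. Each bridge is Markov with drift $(X_1-\mathcal X_t)/(1-t)$. Hence, given $X_0$, the process is a diffusion with drift $b^{\mathrm{sc}}_t(x,x_0)$ and diffusion $\sqrt{2\varepsilon}$. This is the Doob $h$-transform of $\sqrt{2\varepsilon}W$ started at $x_0$, with $h_t(x)=\int p_{1-t}(x,y)\,\pi(dy\mid x_0)$. Indeed, $(X_1-x)/(1-t)$ averaged over the posterior equals $\mathbb E[D_t\mid X_t=x,X_0=x_0]$, because $D_t=U-c_tZ_t$ and $X_1-\mathcal X_t=(1-t)U-\sqrt{2\varepsilon}(W_t-tW_1)$.

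I then work with the joint law $\mathcal L(X_0,X_T)$ and the sampler $\mathcal L(S,Y_N)$. Both share the $S$-marginal $\rho_0$. By the chain rule, the KL equals $\int\rho_0(dx_0)\,D_{\mathrm{KL}}(\mathcal L(X_T\mid x_0)\|\mathcal L(Y_N\mid x_0))$. Data processing bounds this by the path-space KL between the continuous conditioned diffusion on $[0,T]$ and the piecewise-constant-drift interpolation of the Euler scheme. That interpolation is
\[
dY_t=\widehat b^{\mathrm{sc}}_k(Y_{t_k},S)\,dt+\sqrt{2\varepsilon}\,dB_t,\qquad t\in[t_k,t_{k+1}).
\]
Girsanov, justified by the standard approximation or truncation argument as in \citet{si-paper-chen2023}, gives
\[
\frac1{4\varepsilon}\sum_k\int_{t_k}^{t_{k+1}}\mathbb E\|b^{\mathrm{sc}}_t(\mathcal X_t,X_0)-\widehat b^{\mathrm{sc}}_k(\mathcal X_{t_k},X_0)\|^2dt .
\]
Using $\|a+b\|^2\le(1+\theta)\|a\|^2+(1+1/\theta)\|b\|^2$ with suitable constants absorbed, the learning part produces $\mathcal E_{\mathrm{sc}}/(4\varepsilon)$. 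This term is measured under the true law of $(X_{t_k},X_0)$, which coincides with the bridge marginal. The remaining discretization part is $\frac1{4\varepsilon}\sum_k\int\mathbb E\|b^{\mathrm{sc}}_t-b^{\mathrm{sc}}_{t_k}\|^2dt$, following the Benton et al.\ style so that the constant stays exactly $1/(4\varepsilon)$.

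The heart of the proof is the discretization term. Here I would use the martingale structure in the style of \citet{si-paper-benton2024}. Define $m_t=\mathbb E[X_1\mid\mathcal F_t,X_0]$ under the conditioned bridge, so that $b^{\mathrm{sc}}_t=(m_t-\mathcal X_t)/(1-t)$. Itô's formula gives $db^{\mathrm{sc}}_t=dm_t/(1-t)+\text{(terms from }d\mathcal X_t)$. Computing,
\[
d\Big(\frac{m_t-\mathcal X_t}{1-t}\Big)=\frac{dm_t-\sqrt{2\varepsilon}\,dB_t}{1-t},
\]
since the drift of $\mathcal X$ cancels against the $(1-t)^{-2}$ term. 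Consequently,
\[
\mathbb E\|b_t-b_s\|^2\le 2\int_s^t\frac{d\,\mathbb E\operatorname{tr}\langle m\rangle_r}{(1-r)^2}+\frac{4\varepsilon d(t-s)}{(1-T)^2}.
\]
More carefully, I would use orthogonality of increments and avoid the factor $2$, so that the result is $\mathbb E\|b_t-b_s\|^2=\int_s^t(1-r)^{-2}d\,\mathbb E\|m_r-\sqrt{2\varepsilon}B_r\|^2$. Next I bound the quadratic-variation integral by $\operatorname{Var}(X_1\mid X_0)$-type quantities. Centering by $X_0$ keeps this to $M_2$ rather than endpoint moments: $\mathbb E\|m_T-m_0\|^2\le\mathbb E\|U-\mathbb E[U\mid X_0]\|^2\le M_2$.

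Summing over steps gives, for each $k$, a factor $h$ times an increment over $[t_k,t_{k+1}]$. Telescoping then requires $h\,(1-t_{k+1})^{-2}$-type weights. Here I expect the main obstacle: obtaining exactly $M_2/(4\varepsilon\delta(\delta+h))$ and $dT/(4\delta)$. My first attempt would write the weight as $h/((1-r)(1-r+h))$, with $1/(1-r)\le1/\delta$ and $1/(1-r+h)\le1/(\delta+h)$ on the relevant range. The Brownian part then integrates $2\varepsilon d\,h\int dt/(1-t)^2$, which I would bound by $h\,dT/\delta$, giving $dT/(4\delta)$ after the $1/(4\varepsilon)$. I would also need to check the extra factor conventions in the stated bound.
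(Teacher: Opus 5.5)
Your route (fiberwise Girsanov against the continuous Euler interpolation, plus the martingale structure of the source-conditioned drift) is genuinely different from the paper's, and it can be completed to give exactly \eqref{si-main-source-bound}. But the constants are the content of the statement, and several bookkeeping steps fail as written.

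\textbf{The learning/discretization split.} Young's inequality $\|a+b\|^2\le(1+\theta)\|a\|^2+(1+1/\theta)\|b\|^2$ can never put coefficient one on both parts, so no choice of $\theta$ yields $\mathcal E_{\mathrm{sc}}/(4\varepsilon)$ plus a $1/(4\varepsilon)$ discretization term. What you need is exact orthogonality. Conditioning on $X_1$ and using the bridge mean gives, for $t\le u\le T$,
\[
\mathbb E[(X_1-\mathcal X_u)/(1-u)\mid X_1,\mathcal F_t]=(X_1-\mathcal X_t)/(1-t).
\]
Hence $b_t:=b^{\mathrm{sc}}_t(\mathcal X_t,X_0)$ is an $L^2$ martingale in $\mathcal F_t=\sigma(\mathcal X_r:r\le t)$; your It\^o computation says the same thing, since $b$ has no drift. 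So for $t\in[t_k,t_{k+1})$ and $\widehat b_k:=\widehat b^{\mathrm{sc}}_k(\mathcal X_{t_k},X_0)$,
\[
\mathbb E\|b_t-\widehat b_k\|^2=\mathbb E\|b_t-b_{t_k}\|^2+\mathbb E\|b_{t_k}-\widehat b_k\|^2 .
\]

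\textbf{The cross term.} The increments of $m$ and of the innovation $B$ are \emph{not} orthogonal. Compute $\mathbb E[X_1\cdot(B_u-B_{t_k})]$ from the bridge drift, working fiberwise in $X_0$ because $X_1$ itself need not be integrable. This gives
\[
\mathbb E[(m_u-m_{t_k})\cdot(B_u-B_{t_k})]=\frac1{\sqrt{2\varepsilon}}\int_{t_k}^{u}\frac{\mathbb E\|X_1-m_v\|^2}{1-v}\,\mathrm dv\ge0,
\]
equivalently $\mathrm dm_u=\Sigma_u\,\mathrm dB_u/(\sqrt{2\varepsilon}(1-u))$, where $\Sigma_u$ is the posterior covariance. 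It is this favorable sign, not orthogonality, that lets you drop the cross term in It\^o's isometry and obtain
\[
\mathbb E\|b_t-b_{t_k}\|^2\le\int_{t_k}^{t}\frac{\mathrm d\,\mathbb E\|m_u\|^2+2\varepsilon d\,\mathrm du}{(1-u)^2}.
\]
Your stated equality is false as written.

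\textbf{The weights.} Fubini turns the discretization term into
\[
\frac1{4\varepsilon}\sum_k\int_{t_k}^{t_{k+1}}\frac{t_{k+1}-u}{(1-u)^2}\bigl(\mathrm d\,\mathbb E\|m_u\|^2+2\varepsilon d\,\mathrm du\bigr).
\]
Your weight $h/((1-u)(1-u+h))$ is not an upper bound for $(t_{k+1}-u)/(1-u)^2$; at $u=t_k$ it is strictly smaller than $h/(1-t_k)^2$. The valid bound is
\[
\frac{t_{k+1}-u}{(1-u)^2}\le\frac1{1-t_{k+1}}-\frac1{1-u}\le\Delta s_k=\frac{h}{(1-t_k)(1-t_{k+1})}\le\frac{h}{\delta(\delta+h)}.
\]
Telescoping the nondecreasing $\mathbb E\|m_u\|^2$ then gives at most
\[
\frac{h\,\mathbb E\|m_T-m_0\|^2}{4\varepsilon\delta(\delta+h)}\le\frac{hM_2}{4\varepsilon\delta(\delta+h)}.
\]

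\textbf{The Brownian part.} Replacing $t_{k+1}-u$ by $h$ gives $\frac1{4\varepsilon}\cdot2\varepsilon d\cdot hT/\delta=dhT/(2\delta)$, which is twice the target, so your final $dT/(4\delta)$ does not follow. Integrate the weight exactly instead:
\[
\int_{t_k}^{t_{k+1}}\frac{t_{k+1}-u}{(1-u)^2}\,\mathrm du=-\eta_k-\log(1-\eta_k).
\]
So the Brownian part is exactly $\sum_kc_d(\eta_k)$. With $r_k=1-t_k$, the bound $c_d(\eta)\le d\eta^2/(4(1-\eta))$ and the identity $\eta_k^2/(1-\eta_k)=h(1/r_{k+1}-1/r_k)$ give $dTh/(4\delta)$. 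Repaired this way, your argument reproduces the paper's arbitrary-grid bound \eqref{si-if-eq:src-arbitrary} term for term.

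\textbf{Comparison with the paper.} The paper never leaves discrete time. After the time change $s=t/(1-t)$, it compares each EM Gaussian with the source-fixed Gaussian-mixture kernel of variance $q_k$, which transports $\nu_{t_k}$ to $\nu_{t_{k+1}}$. It writes an exact one-step identity: $c_d(\eta_k)$, plus $h_km(s_k)/(4\varepsilon r_k^2)$, minus the I--MMSE increment $J(s_{k+1})-J(s_k)\ge\Delta s_k m(s_{k+1})/(4\varepsilon)$, plus the learning term. (There $m(s)$ is the MMSE, not your posterior mean.) The monotone MMSE then telescopes.

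This discrete route avoids finite-energy Girsanov, the construction of the fiberwise $h$-transform SDE from a point start, and any filtering identity. It also carries over unchanged to the bridge-variance sampler. Your route controls the larger continuous-path KL and shows that $c_d(\eta_k)$ is exactly the Brownian contribution to the drift's martingale increments, at the price of that continuous-time machinery.
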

Conditioning on the source gives the channel
$sU+\sqrt{2\varepsilon}W_s$, $s=t/(1-t)$; I--MMSE and the joint KL
chain rule give the bound (Theorem~\ref{si-if-thm:src}).
This established source-augmentation reduction \citep{si-paper-abm2023}
specifies the joint-law error contract and true-source time-zero start;
Table~\ref{si-main-comparison} compares the point-source antecedent.

\paragraph{When the current state is sufficient.}
For a current-state predictor used in the source-conditioned sampler,
orthogonality gives
\begin{equation}\label{si-main-source-defect}
\begin{aligned}
 \mathcal E_{\rm sc}&=\mathcal E+\mathfrak D_{\pi,\mathsf t},\\
 \mathfrak D_{\pi,\mathsf t}
 &=\sum_kh_k\mathbb E\|b_{t_k}^{\rm sc}(X_{t_k},X_0)-b_{t_k}(X_{t_k})\|^2.
\end{aligned}
\end{equation}
The same hidden-source dependence appears as a conditional-information
loss in \eqref{si-main-projection-information} and a squared prediction
gap here; the two quantities are not numerically identical.
Proposition~\ref{si-c036-markov-criterion} shows that drift equality at
one deterministic $t\in(0,1)$ is equivalent to the original bridge
being Markov with respect to its natural filtration on $[0,1]$.
Equivalently, its conditional endpoint law has the heat-kernel
factorization \eqref{si-c036-factorization}, allowing a singular
terminal measure. On grids with $N\ge2$, both losses vanish exactly
in this Markov subclass (Appendix~\ref{si-c036-markov}).

\begin{sicorecorollary}[Second-moment rate for Markov mixtures]
\label{si-markov-main-rate}
Under the setting of Theorem~\ref{si-core-theorem-T1}, assume that the
original bridge mixture is Markov. For $M_2<\infty$, current-state EM
on the geometric grid, $L=\log(1/\delta)$, satisfies
\begin{equation}\label{si-main-markov-rate}
 D_{\rm KL}(\rho_T\|Q_N)
 \le \frac{\mathcal E}{4\varepsilon}
 +\frac{dL^2+LM_2/(\varepsilon\delta)}{4N}.
\end{equation}
\end{sicorecorollary}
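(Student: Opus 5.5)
Under the Markov hypothesis, Proposition~\ref{si-c036-markov-criterion} gives $b^{\rm sc}_{t}(X_t,X_0)=b_t(X_t)$ almost surely at every grid time. The source-defect term $\mathfrak D_{\pi,\mathsf t}$ in \eqref{si-main-source-defect} therefore vanishes. Take the current-state learned drifts $\widehat b_k$ and feed them into the source-conditioned sampler of Corollary~\ref{si-sc-main}, ignoring the second argument. This sampler is then exactly current-state EM, and $\mathcal E_{\rm sc}=\mathcal E$. Corollary~\ref{si-sc-main} is stated for the uniform grid $h=T/N$, however, so I would not invoke it verbatim. Instead I would rerun its proof (Theorem~\ref{si-if-thm:src}) on the geometric grid, and only then marginalize by data processing: $D_{\rm KL}(\rho_T\|Q_N)\le D_{\rm KL}(\mathcal L(X_0,X_T)\|\mathcal L(S,Y_N))$.

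The steps are as follows.
\begin{enumerate}
\item Disintegrate on $X_0=x_0$. Conditionally, the bridge is the Brownian channel $sU+\sqrt{2\varepsilon}W_s$ with $s=t/(1-t)$. The source-conditioned kernel from $t_k$ to $t_{k+1}$ is the Gaussian mixture $K_{t,h}$ with $\lambda_t$ replaced by the law of $D_t$ given $(X_t,X_0)$.
\item Apply the one-step identity of Section~\ref{si-core-mechanism} in its source-conditioned form. Sum with the joint chain rule. This gives the learning term $\mathcal E/(4\varepsilon)$, the Gaussian term $\sum_k c_d(\eta_k)$, and a conditional-residual entropy term.
\item Control the residual term by I--MMSE instead of the fourth-moment mixture bound. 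The expected one-step mixture entropy, summed over the step, is at most the increment $h_k\,\mathrm{mmse}$-type integral $\int_{t_k}^{t_{k+1}}(\cdots)$ of the conditional MMSE. Its bound by $\mathbb E\|U-\mathbb E[U\mid X_t,X_0]\|^2$ involves only second moments, and this is what replaces $M_4$ by $M_2$.
\item Sum over the geometric grid, where $\eta_k=a$ and $h_k=a(1-t_k)$. Use $c_d(a)\le d a^2/(4(1-a))$ and $aN\le$ roughly $L$, valid when $N\ge L/\log 2$. The Gaussian terms contribute about $dL^2/(4N)$. Each residual term contributes roughly $(h_k/(1-t_k))\cdot h_k M_2/(4\varepsilon(1-t_k)^2)$. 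The ratio $h_k/(1-t_k)^2$ is at most $a/\delta$, and summing $a^2$ over $N$ steps gives $LM_2/(4\varepsilon\delta N)$.
\end{enumerate}

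The main obstacle is the third step on a nonuniform grid. The original corollary's bound $M_2/(\delta(\delta+h))$ came from a telescoping MMSE integral in the channel time $s$. On the geometric grid I would redo that telescoping directly: bound the per-step entropy by $\tfrac{1}{4\varepsilon}\int_{s_k}^{s_{k+1}}(s-s_k)\,\tfrac{d}{ds}(-\mathrm{mmse})\,ds$-type expressions. The mmse decreases in $s$, each interval in $s$ has length $\asymp a/(1-t_k)$, and the total mmse is at most $M_2$. This yields the $LM_2/(\varepsilon\delta N)$ term up to the stated constant $1/4$. Getting the constant exactly $1/4$ may need care, and I would accept slightly different constants first and then tighten.
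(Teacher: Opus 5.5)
Your route is the paper's. Under the Markov hypothesis the source defect $\mathfrak D_{\pi,\mathsf t}$ vanishes; at $t_0=0$ this is automatic, since both drifts equal $\mathbb E[U\mid S]$. Running the current-state drift in the source-conditioned sampler is then exactly current-state EM with $\mathcal E_{\rm sc}=\mathcal E$, and data processing projects to $\rho_T$. The paper reruns nothing. Its appendix version of the source-conditioned theorem already holds on every grid, with residual term $\max_k\Delta s_k\,m_0^{\rm sc}/(4\varepsilon)$, where $m_0^{\rm sc}=\mathbb E\|U-\mathbb E[U\mid S]\|^2\le M_2$. One only substitutes $\max_k\Delta s_k=a/\delta$, since $\Delta s_k=a/r_{k+1}$ and $r_N=\delta$. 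The one-step information equals $J(s_{k+1})-J(s_k)$ because, given $S$, the latest time-changed observation is sufficient for $U$; your step 3 should state this. Two parts of your bookkeeping, however, do not give the stated bound as written.

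\textbf{Step 4's per-step accounting.} Using $h_k/r_k^2=(1-\eta_k)\Delta s_k$, the exact per-step residual term is
\[
\frac{1}{4\varepsilon}\Bigl[\int_{s_k}^{s_{k+1}}\bigl(m(s_k)-m(u)\bigr)\,du-\eta_k\Delta s_k\,m(s_k)\Bigr],
\]
and $\int_{s_k}^{s_{k+1}}(m(s_k)-m(u))\,du=\int_{s_k}^{s_{k+1}}(s_{k+1}-v)\,d(-m)(v)$. So the weight is $s_{k+1}-v$, not $v-s_k$ as in your last paragraph.
\begin{itemize}
\item If the MMSE drops inside one step, that step contributes order $\Delta s_kM_2$, not $\eta_k\Delta s_kM_2$.
\item Even granting your per-step figure $a^2M_2/(4\varepsilon\delta)$, summing over $N$ steps gives $Na^2\le L^2/N$, a factor $L$ too large.
\item Only the telescoping works. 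Bound the weight by $\max_k\Delta s_k=a/\delta$ and sum the monotone drops of $m$. This gives $a\,m_0^{\rm sc}/(4\varepsilon\delta)\le LM_2/(4\varepsilon\delta N)$, already with constant $1/4$, because $a\le-\log(1-a)=L/N$ for every $N$.
\end{itemize}

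\textbf{The Gaussian term.} Your bound $c_d(a)\le da^2/[4(1-a)]$ leaves an extra factor $(1-a)^{-1}$: with $\ell=L/N$ one has $a^2/(1-a)=4\sinh^2(\ell/2)\ge\ell^2$. Use instead $-a-\log(1-a)=\ell-1+e^{-\ell}\le\ell^2/2$, which gives $\sum_kc_d(a)=\tfrac d2(L-Na)\le dL^2/(4N)$.

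Neither step needs $N\ge L/\log2$; the corollary holds on every geometric grid.
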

This is a direct specialization of the source-conditioned bound;
Corollary~\ref{si-c036-source-defect} gives the sharper initial
conditional-variance constant and the non-Markov defect term.

\paragraph{Schr\"odinger bridges.}
Let $R_\mu$ be probability Brownian motion with covariance
$2\varepsilon tI$, started from the prescribed source $\mu$.
If the endpoint constraints admit a finite-entropy feasible law,
the entropy minimizer is a Markov mixture of reference bridges
\citep[Lemma~2.4 with $B=0$ and Propositions~2.3,~2.10]{si-paper-leonard2013survey}. If its relative entropy is $K$,
then $M_2\le8\varepsilon(K+\tfrac d2\log2)$, so
\eqref{si-main-markov-rate} applies. Neither endpoint needs a first
moment. Appendix~\ref{si-c036-markov} checks existence, the entropy
inequality, and an explicit example with Cauchy source.
The reference measure and $T<1$ matter: a sigma-finite reference,
a finite learned bridge-matching iterate, or a DDPM label alone does
not establish these conditions.

\section{Learning the current-state drift from endpoint pairs}
\label{si-learning-main}
To attain the drift error in Theorem~\ref{si-core-theorem-T1}, observe
$n$ iid endpoint pairs. At each grid time, augment each pair with an independent
standard Gaussian $Z_{i,t}$ and regress the clipped residual
$\operatorname{clip}_B(U_i-c_tZ_{i,t})$ on
$X_{0,i}+tU_i+\sigma_tZ_{i,t}$, using cell averages on a deterministic
finite partition of $\mathbb R^d$ and zero in empty cells.
Here $\operatorname{clip}_B(v)=v\min\{1,B/\|v\|\}$, with value zero at
$v=0$. At $t=0$ the input and label are simply $X_{0,i}$ and
$\operatorname{clip}_B(U_i)$; no source density is required.

\begin{sicoretheorem}[Learning under a displacement second moment]
\label{si-learning-thm}
Fix $d$, $\varepsilon>0$, $0<T<1$, and a Borel coupling $\pi$ with
$M_2<\infty$. Let $\mathcal P_j$ be nested finite Borel partitions
generating the Borel sigma-field, with $K_j$ cells. Use the estimator above
with deterministic choices satisfying
\[
 B_n\to\infty,\qquad j_n\to\infty,\qquad
 \frac{K_{j_n}B_n^2}{n}\to0.
\]
Let $\mathcal D_n$ contain the endpoint sample and its Gaussian
augmentations. On any deterministic geometric grids
\eqref{si-core-TG} with $N_n\to\infty$, run the current-state sampler
\eqref{si-core-sampler} from a fresh exact draw $Y_0\sim\rho_0$.
The source draw, Euler innovations, and evaluation variables are
independent of $\mathcal D_n$. For its conditional output law
$Q_{n,\mathcal D_n}=\mathcal L(Y_{N_n}\mid\mathcal D_n)$,
\begin{equation}\label{si-learning-bound}
\begin{gathered}
 \mathbb E_{\mathcal D_n}\mathcal E_n\longrightarrow0,\\
 \mathbb E_{\mathcal D_n}
 D_{\mathrm{KL}}(\rho_T\|Q_{n,\mathcal D_n})\longrightarrow0,
\end{gathered}
\end{equation}
where $\mathcal E_n$ is \eqref{si-core-T0} for the frozen fitted drifts.
\end{sicoretheorem}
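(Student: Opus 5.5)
The plan is to reduce the KL statement to the drift-error statement through the $M_2$ bound \eqref{si-core-T1-M2} of Theorem~\ref{si-core-theorem-T1}. That bound is applied conditionally on $\mathcal D_n$. Freeze the fitted drifts; the fresh source draw and the Euler innovations are independent of $\mathcal D_n$, so $Q_{n,\mathcal D_n}$ is exactly the output law of \eqref{si-core-sampler} with deterministic drifts $\widehat b_k$.

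Conditionally on $\mathcal D_n$, \eqref{si-core-T1-M2} then gives
\[
D_{\mathrm{KL}}(\rho_{T}\|Q_{n,\mathcal D_n})\le \mathcal E_n/(4\varepsilon)+r(N_n),
\]
where the remainder $r(N_n)$ is deterministic. Since $T$ is fixed, $\delta$ and $L_\delta$ are fixed, so $a_{\mathrm{geo},N_n}\to0$ and $a_{\mathrm{geo}}L_\delta\to0$. The dimension term in $r$ therefore vanishes. The middle term also vanishes by dominated convergence, because the integrand is bounded by $\|U\|^2$ and tends to zero pointwise. Taking $\mathbb E_{\mathcal D_n}$ reduces the second limit in \eqref{si-learning-bound} to the first. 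Measurability of $\mathcal D_n\mapsto\mathcal E_n$ is routine because the estimator is a finite sum of cell indicators.

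The core step is $\mathbb E\mathcal E_n\to0$, which is a uniform-over-grid consistency statement. Write
\[
\mathcal E_n=\sum_k h_k\,\mathbb E_{\rho_{t_k}}\|\widehat b_k-b_{t_k}\|^2 .
\]
Since $\sum_k h_k=T$, it suffices to show
\[
\sup_{t\in[0,T]}\mathbb E\int\|\widehat b_{t}-b_t\|^2\,d\rho_t\to0,
\]
where $\widehat b_t$ is the estimator trained at time $t$. That supremum is over all times, not only grid times. The data at each grid time come from independent augmentations, but only the per-time expectation enters, so dependence across times is irrelevant.

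For fixed $t$, decompose the error into three parts. \emph{Clipping bias:} since $\|b_t-b_t^B\|^2\le\mathbb E[\|D_t-\mathrm{clip}_B D_t\|^2\mid X_t]$ by Jensen, and $D_t=U-c_tZ$ with $c_t\le c_T$, this is at most $\mathbb E[\|D_t\|^2\mathbf 1_{\|D_t\|>B}]$. That quantity is uniform in $t\le T$, because $\|D_t\|^2\le 2\|U\|^2+2c_T^2\|Z\|^2$ is dominated by a fixed integrable variable. \emph{Estimation variance and empty-cell error:} the standard partitioning-estimate bound gives $O(K_jB^2/n)$ for the variance and the probability mass of low-count cells. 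The empty-cell contribution is $\le B^2\sum_A\rho_t(A)(1-\rho_t(A))^n\le B^2K_j/(en)$, uniformly in $t$. \emph{Approximation bias:} this is $\int\|b_t^B-\mathbb E_{\rho_t}[b_t^B\mid\mathcal P_j]\|^2d\rho_t$.

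The main obstacle is making the approximation bias vanish uniformly in $t\in[0,T]$. It tends to zero for each fixed $t$ by martingale convergence, since the nested partitions generate the Borel sets and $b_t^B\in L^2(\rho_t)$. I will first bound it by $\int\|b_t-\mathbb E[b_t\mid\mathcal P_j]\|^2d\rho_t$ plus a term of the size of the clipping tail. I will then try to show that $t\mapsto(\rho_t,\ b_t\rho_t)$ is continuous on $[0,T]$ in a sense strong enough to make the function $t\mapsto\|b_t-\mathbb E[b_t\mid\mathcal P_j]\|_{L^2(\rho_t)}^2$ continuous for each $j$. This function is monotone decreasing in $j$ by nestedness and tends to zero pointwise, so Dini's theorem gives uniform convergence on the compact interval $[0,T]$.

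For $t>0$, Gaussian smoothing makes $\rho_t$ and $\mathbb E[D_t\mathbf 1_{X_t\in\cdot}]$ continuous in total variation and in $t$. The delicate endpoint is $t=0$, where $\rho_0$ may be singular. There I would use the identity $\|b_t\|^2_{L^2(\rho_t)}=\|\mathbb E[D_t\mid X_t]\|^2$, which is continuous in $t$ via the $L^2$ continuity of $D_t$ together with the monotone structure of conditioning. If continuity at $0$ fails to go through, the fallback is a sub-sequence argument on grid times near $0$: use the fact that the initial step carries weight $h_0=a_{\mathrm{geo}}\to0$, and treat a fixed neighborhood $[0,\tau]$ separately, bounding its contribution by $\tau$ times the uniform bound $2B_n^2+2\mathbb E\|D_t\|^2$. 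That contribution must then be balanced against $B_n$ growing, which is a real weakness of the fallback, so I would try hard for genuine uniform continuity first.
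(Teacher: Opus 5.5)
Your transfer step (applying \eqref{si-core-T1-M2} conditionally on $\mathcal D_n$, with dominated convergence for the tail modulus) matches the paper. Your clipping and cell-variance bounds are also correct and uniform on all of $[0,T]$. The gap is the approximation term at the singular time $t=0$.

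You reduce to $\sup_{t\in[0,T]}\mathbb E\|\widehat b_t-b_t\|^2_{L^2(\rho_t)}\to0$ and plan to obtain it by Dini from continuity of $A_j(t)=\|(I-P_j^t)b_t\|^2_{L^2(\rho_t)}$ on $[0,T]$. Both fail in general. The cell projection depends on $\rho_t(A)$ and $\mathbb E[D_t;X_t\in A]$, which jump at $t=0$ when $\rho_0$ charges $\partial A$. Continuity of $\|\mathbb E[D_t\mid X_t]\|_{L^2}$ does not control these quantities.

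Here is a concrete case. Take $d=1$, $X_0$ uniform on $\{0,1\}$, and $U=1-2X_0$. Let $\mathcal P_j$ consist of $\{0\}$, $\{1\}$, $E_j=(B_j(0)\cup B_j(1))\setminus\{0,1\}$ with $B_j(s)=[s-2^{-j},s+2^{-j})$, and the level-$j$ dyadic intervals (plus two tail cells) off $B_j(0)\cup B_j(1)$. These partitions are nested and generate the Borel sets, and $A_j(0)=0$. As $t\downarrow0$, almost all of $\rho_t$ enters $E_j$: half near $0$, where $b_t\approx1$, and half near $1$, where $b_t\approx-1$. Hence $A_j(t)\to1$ for every $j$. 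Since the histogram estimate is constant on cells, its risk at time $t$ is at least $A_{j_n}(t)$, so your supremum is at least $1$ for every $n$. The statement you reduce to is therefore false for this admissible partition sequence, although the theorem itself holds.

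The repair is your fallback, applied to the right quantity. Only the approximation term needs separate treatment near zero, and its bound involves no $B_n$:
\[
A_j(t)\le\|b_t\|^2_{L^2(\rho_t)}\le\mathbb E\|D_t\|^2\le C_*:=M_2+c_T^2d
\]
for all $t\le T$, all $j$, and all $B$. Let $\Delta_n=\max_kh_k\to0$. Grid steps whose left endpoints lie below $a$ have total length at most $a+\Delta_n$, so
\[
\sum_kh_kA_{j_n}(t_k)\le C_*(a+\Delta_n)+T\sup_{a\le t\le T}A_{j_n}(t).
\]
Dini is then needed only on $[a,T]$; let $n\to\infty$ and then $a\downarrow0$. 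This is the paper's argument.

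For continuity on $[a,T]$ there is a second point to fix. Total-variation continuity of $(\rho_t,\mathbb E[D_t;X_t\in\cdot\,])$ gives continuity of $\|P_j^tb_t\|^2$, but not of $\|b_t\|^2_{L^2(\rho_t)}$. The paper handles this by writing the clipped analogue $A_j^B(t)$ as a difference of two Bayes risks: one over all predictors in the radius-$B$ ball and one over cell-constant predictors in that ball. Each is continuous because every such predictor's risk moves by at most $4B^2$ times the total-variation change of the law of $(X_0,U,X_t)$, plus $4B^2|t-s|/(1-T)$. It then lets $B\to\infty$; this limit is uniform in $t$ and $j$ because the clipping error is dominated by the fixed square-integrable variable $\|U\|+c_T\|Z\|$.
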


No absolute endpoint moment or regularity of $\rho_0$ is assumed.
The partitions and tuning can be chosen without knowing $\pi$ or $M_2$;
for example, take $B_n=n^{1/8}$ and the largest $j$ with
$K_j\le n^{1/4}$. Appendix~\ref{si-general-learning-proof} proves
uniform approximation away from zero and controls the short initial
interval by its total grid length. Endpoint data may be reused across
all fitted times. Exact source access remains part of the protocol;
an empirical training-source initialization is not covered.

\paragraph{Statistical boundary.}
Fixed-law consistency does not give a uniform sample rate. Write
$\mathcal R_n(\widehat Q,\pi)
=\mathbb E_\pi D_{\mathrm{KL}}
(\rho_T^\pi\|\widehat Q_{\mathcal D_n})$.
Even in $d=1$ with $|U|=r>0$ known, every $n$ satisfies
\begin{equation}\label{si-general-learning-minimax-main}
 \inf_{\widehat Q}\sup_{\pi:\,|U|=r\ \text{a.s.}}
 \mathcal R_n(\widehat Q,\pi)
 \ge c_{r,T,\varepsilon}>0.
\end{equation}
The infimum allows any output-law estimator, knowledge of the source
law, and unlimited source-only draws, but only $n$ endpoint-pair
observations. The construction in
Proposition~\ref{si-general-learning-minimax} uses finite source supports
whose spacing may grow with $n$. Thus the adverse law may change with
$n$; the bound concerns training samples, not Euler steps.
Neither statement supplies a joint $T\to1$ or $\varepsilon\to0$ limit,
or a guarantee for a particular random-feature or neural estimator.

\section{Conditional diffusion and restoration}
\label{si-interfaces-main}
For restoration, the observed source must remain paired with its target.
Corollary~\ref{si-sc-main} controls this joint law; the following
identities specify its relation to conditional diffusion and clean images.
\paragraph{A conditional diffusion dictionary.}
Write $C=X_0$ and $X=X_1$. For $0<t<1$, the affine change
\begin{equation}\label{si-diffusion-dictionary-main}
R_t=\frac{X_t-(1-t)C}{t},\qquad
v_t=\frac{2\varepsilon(1-t)}{t}
\end{equation}
gives $\mathcal L(R_t\mid C)=\mathcal L(X\mid C)*N(0,v_tI_d)$.
For the joint bridge this is a conditional variance-exploding (VE)
diffusion run toward decreasing noise. Normalizing by $\sqrt{1+v_t}$
gives the variance-preserving (VP) marginals used in score-based diffusion
\citep{si-paper-song2021,si-paper-ho2020}.
The joint KL bound is unchanged by this invertible positive-time map.
Appendix~\ref{si-diffusion-dictionary} gives the path and discrete-update
identities and the time change for I$^2$SB.

\paragraph{Discrete variances and the stopped target.}
The Gaussian update BP replaces EM variance $2\varepsilon h_k$ by
$q_k=2\varepsilon h_k(1-\eta_k)$. It removes both the variance-mismatch
cost and a favorable correction, so need not improve EM
(Proposition~\ref{si-if-prop:bp}). This endpoint-conditioned variance
does not integrate uncertainty in a predicted clean endpoint.

For an integrable clean endpoint, let $g_k$ predict its conditional mean, so that
$\widehat b_k(y,c)=(g_k(y,c)-y)/(1-t_k)$. At positive grid times set
$r_k=(Y_k-(1-t_k)S)/t_k$, $v_k=v_{t_k}$,
$\Delta v_k=v_k-v_{k+1}$, and
$\widehat s_k=(g_k(Y_k,S)-r_k)/v_k$. Both updates have mean
$r_k+\Delta v_k\widehat s_k$, but their noise variances are
\begin{equation}\label{si-diffusion-variances-main}
\begin{aligned}
 \text{transformed EM:}\quad &\Delta v_k\,t_k/t_{k+1},\\
 \text{transformed BP:}\quad &\Delta v_k\,v_{k+1}/v_k,\\
 \text{reverse VE Euler:}\quad &\Delta v_k.
\end{aligned}
\end{equation}
Even for the exact conditional score, these finite-step variances differ;
matching marginals does not identify EM or BP with a standard DDPM.

The map is singular at $t=0$ and does not justify finite-noise Gaussian
initialization. At $T<1$ it targets
$\mathcal L(X\mid C)*N(0,v_TI_d)$, rather than the clean conditional
law. Bounded clean images have bounded conditional residuals on
$[0,T]$, even when corruption makes $M_4=\infty$; they do not
instantiate the strict-$M_2$ separation.

Write $K_{\mathrm{sc}}$ for the joint KL on the left of
\eqref{si-main-source-bound}, and let
$\widehat R_T=(Y_N-(1-T)S)/T$. For every jointly Borel test
$\varphi(c,x)$ with $|\varphi|\le1$ and $1$-Lipschitz in $x$,
\begin{equation}\label{si-diffusion-target-main}
\begin{aligned}
&|\mathbb E\varphi(C,X)-\mathbb E\varphi(S,\widehat R_T)|\\
&\hspace{1cm}\le \sqrt{2K_{\mathrm{sc}}}+\sqrt{dv_T}.
\end{aligned}
\end{equation}
KL invariance and Pinsker give the first term; coupling $X$ to
$X+\sqrt{v_T}G$ gives the second. Clipping preserves this bound for
clean images in the cube. Proper-score transfer is addressed in
Appendix~\ref{si-image-transfer-r007}.

\section{Numerical evidence and application limits}
\label{si-nonlinear-main}
\label{si-paper-validation}
The bounded example in Figure~\ref{si-fig-source-information}(b) isolates
source-information loss. A different scalar model compares bounds and
terminal diagnostics; neither tests the analytic strict-$M_2$ separation.

\subsection{Nonlinear weak moments and certified error}
Under a base law let $X_0=H$ and
$X_1=cH+\sqrt{2\varepsilon c}\,G$, then tilt the coupling by
\begin{equation}\label{si-tilt-main}
 \frac{d\mathbb P}{d\mathbb P_0}
 =\frac{1+\alpha u(H)\sin(\lambda^\top X_1)}{Z_\alpha},
 \qquad |\alpha|<1,\quad |u|\le 1.
\end{equation}
Gaussian conditioning gives explicit marginals and drift
(Appendix~\ref{si-nonlinear-proof}). A shared-scale Student
$H$, $\nu=2.5$, $c=2$, and $u(h)=\sin(c\lambda^\top h)$ yield finite
$M_2$ and infinite $M_3,M_4$. The drift is nonlinear and the tilted
bridge non-Markov, but its conditional endpoint-residual tails are
Gaussian, unlike Theorem~\ref{si-fixed-separation}.

\begin{figure}[t]
\centering
\includegraphics[width=\columnwidth]{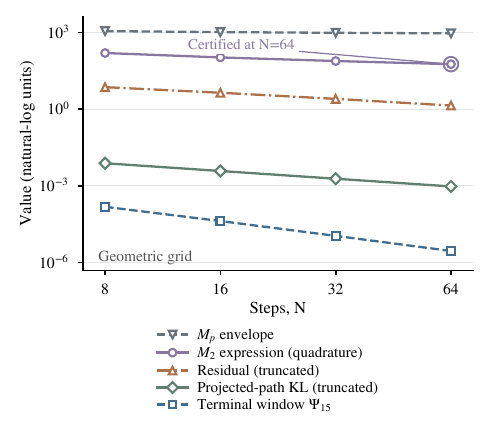}
\caption{General bounds can greatly exceed terminal error in the
nonlinear Student model. All curves use the geometric grid; symbols
are numerical evaluations and segments guide the eye. Only the
exact-law $M_2$ expression at $N=64$ is interval-certified
(Appendix~\ref{si-m2-certificate-r013}). Truncated path integrals
and $\Psi_{15}$ are diagnostics, not certified KL values.
The full-space certificate in the text uses a different, uniform grid.
Profiles and refinement appear in the appendix,
Figure~\ref{si-fig-nonlinear}. This model differs from
Theorem~\ref{si-fixed-separation}.}
\label{si-fig-separation}
\end{figure}

\label{si-numerical-main}
For $d=1$, $\varepsilon=1$, $\alpha=0.65$, $\lambda=0.6$,
$T=0.8$, and true tilted-source initialization, define
\begin{equation}\label{si-window-psi}
 \Psi_{15}=\int_{-15}^{15}
 [\rho_T\log(\rho_T/q_N)-\rho_T+q_N]dx.
\end{equation}
At uniform $N=64$ its finest numerical value is $1.5039\times 10^{-6}$.
A separate positive-quadrature certificate controls the full-space chain:
\[
 D_{\mathrm{KL}}(\rho_{0.8}\|Q_{64})\le 2.372756\times 10^{-6}
 \qquad\text{(uniform grid).}
\]
It includes tail, exit-and-return, quadrature, and arithmetic errors
(Appendix~\ref{si-direct-certificate-r007}); it does not certify
$\Psi_{15}$. At \emph{geometric} $N=64$, the exact-law second-moment expression
\eqref{si-core-T1-M2} is enclosed by
$[57.80900768880,57.80900768885]$, versus the numerical $p=2.1$
moment envelope $928.20$ and window diagnostic $2.8402\times 10^{-6}$.
The enclosure certifies a sufficient-bound expression, not terminal KL;
only the $N=64$ point of that curve is certified
(Appendix~\ref{si-m2-certificate-r013}). Figure~\ref{si-fig-separation}
therefore demonstrates slack in the displacement-only bound for this
model, rather than a sharp rate. Appendix~\ref{si-experiment-details}
gives quadrature, density-refinement, and evaluation protocols.

\subsection{Fitted drifts and image controls}
\label{si-learned-experiment}
For six clipped ridge fits on 216 features, independent tests estimate
drift energies $1.46\times 10^{-4}$ and $1.57\times 10^{-4}$ at $d=2,8$.
Separately, 100000 interval-enclosed validation losses per dimension give
upper bounds $0.003632/0.002316$ with at least 95\% joint confidence under
ideal iid validation independent of the fixed fits. These predictors
differ from the partition learner; the certificate treats stored
coefficients as exact and excludes deployment and random-generation
errors (Appendix~\ref{si-rff-certificate-r007}).

\label{si-image-main-r007}
Student-3 corruption of bounded clean images gives finite displacement
$M_2$ and infinite $M_4$, but bounded conditional endpoint residuals.
It does not test the strict-$M_2$ obstruction. At the image budget
$d=64$, $\varepsilon=0.02$, $T=0.9$, $N=4$, source-conditioned
sufficient bounds are $253.94/157.71/140.05$ nats for uniform EM,
geometric EM and geometric BP, even with zero learning error.
Source-slot gains occur in some settings. In a separate $d=1024$
spatial-network control, EM/BP have higher (worse) fair energy scores
than direct prediction with resampled residuals, despite using 64
predictor calls (16 members, four steps each) rather than one.
Appendix~\ref{si-application-controls-r009}
retains the complete primary comparisons and their limitations.

\section{Discussion}
\label{si-core-limitations}
Displacement moments control sampling without integrable endpoints,
and source information explains why path KL can obscure terminal
accuracy. The Markov subclass admits a current-state $O((M_2+d)/N)$
bound; uniform terminal rates and linear dimension dependence for
general couplings remain open. Learning consistency supplies neither
uniform sample complexity nor a guarantee for arbitrary neural models.

These distinctions matter when choosing a bridge model. If the goal is
the terminal marginal, forgetting source information can be compatible
with accurate sampling. If the goal includes preserving a particular
source--target pairing, the marginal alone does not describe success.
The shared marginals in Figure~\ref{si-fig-source-information} therefore
do not make the two path laws interchangeable. The Markov result
identifies a setting in which a current-state drift suffices; it does
not justify replacing an arbitrary prescribed coupling by a
Schr\"odinger coupling when that pairing is part of the task.

For restoration, the guarantee controls a stopped conditional law,
with a bounded-Lipschitz correction to the clean target. The image
controls show no iterative advantage and do not isolate dependence
recovery. Clean-target KL, zero-noise limits, and practical neural
recovery guarantees remain outside the result.

\FloatBarrier
\label{si-paper-main-end}
\clearpage
\section*{Impact Statement}
This work develops mathematical guarantees for stochastic sampling and
limited numerical diagnostics. Its potential downstream uses depend on the
application in which a sampler is deployed. The results do not certify a
complete learned generative system or its safety; applications involving
sensitive data or consequential decisions require their own assessment.

\section*{Use of Generative AI}
OpenAI ChatGPT and Codex assisted with literature organization,
mathematical derivation and proof checking, code development and
numerical diagnostics, manuscript drafting and revision, and figure
preparation. These tools are not authors. Responsibility for the
claims, proofs, computations, references, and final text rests with
the sole named author. Internal AI-assisted checks do not constitute
external peer review.

\label{si-paper-references-start}
\bibliographystyle{icml2026}
\bibliography{references}
\clearpage
\appendix
\ifSIICML\onecolumn\fi
\section{Notation and proof guide}
\begin{center}\begin{tabular}{ll}\toprule
Symbol & Meaning\\\midrule
$\pi$, $U$, $M_p$ & Endpoint coupling, displacement $X_1-X_0$, its $p$th moment\\
$\rho_t$, $Q_N$ & True interpolant marginal and actual EM output law\\
$P_N^\star$ & Markov comparison path with exact grid marginals\\
$R_N$, $I_{R_N}$ & Original bridge grid law and conditional mutual information under it\\
$D_t$, $V_t$ & Integrable drift residual and its centered conditional residual\\
$h_k$, $\eta_k$, $q_k$ & Step, relative bridge step, and bridge innovation variance\\
$\mathcal E$, $\mathcal E_{\rm sc}$ & True-law drift errors for current-state and source inputs\\
$S$, $b_t^{\rm sc}$ & Retained source and source-conditioned residual drift\\
$\mathfrak D_{\pi,\mathsf t}$ & Sum of squared source-prediction gaps, weighted by grid steps\\
$C$, $m(C)$, $R$ & Supplied context, point source, and conditional endpoint residual\\
$A_t$, $B_t$, $C_t$ & Posterior and endpoint functions for the interaction model\\
$\Psi_{15}$ & Finite-window nonnegative relative-entropy integrand\\
\bottomrule\end{tabular}\end{center}

Symbols in separate examples are local: $C$ denotes the observed context
in conditional models and a finite constant in estimates; $B$ is a
clipping radius, whereas $B_t$ is the interaction model's sine mean.
The conditional diffusion coordinate $R_t$ differs from the grid-path
law $R_N$, and each is defined where used.

The entropy tools, Borel comparison kernels, and general sampling theorem
are proved in Appendices~\ref{si-core-entropy}--\ref{si-core-T1-proof}.
Appendix~\ref{si-if-section-source} proves the source-conditioned result,
including its first step. The fixed-law separation is proved in
Appendix~\ref{si-fixed-proof}, conditional learning in
Appendices~\ref{si-learning-proof}--\ref{si-general-learning-proof},
and the nonlinear interaction model in
Appendix~\ref{si-nonlinear-proof}. Experimental protocols are in Appendix~\ref{si-experiment-details}.
Appendix~\ref{si-c036-markov} gives the Markov criterion, source-information identity,
and the probability-reference Schr\"odinger bridge scope.
Historical studies and their full protocols remain in the reproducibility archive.

\section{Finite-moment Gaussian mixture entropy}
\label{si-core-entropy}
For $q>0$, write $\gamma_q=N(0,qI_d)$, $\gamma=\gamma_1$, and let
$\varphi_q^{\mathrm{var}}$ denote the density of $\gamma_q$; thus the
subscript is a variance. The notation $D_{\mathrm{KL}}(W+G\|G)$ abbreviates
relative entropy between the corresponding laws. Every new Gaussian in this
section is independent of the variables to which it is added.

\begin{sicorelemma}[Elementary mixture and translation identities]
\label{si-core-entropy-basic}
For any $B\in L^2$,
\begin{equation}\label{si-core-E1}
D_{\mathrm{KL}}(\operatorname{Law}(B+\sqrt qG)\|\gamma_q)
\le\int D_{\mathrm{KL}}(N(z,qI_d)\|\gamma_q)\operatorname{Law}(B)(dz)
=\frac{\mathbb E\|B\|^2}{2q}.
\end{equation}
If also $\mathbb EB=0$, then for deterministic $m\in\mathbb R^d$,
\begin{equation}\label{si-core-E2}
D_{\mathrm{KL}}(\operatorname{Law}(B+m+\sqrt qG)\|\gamma_q)
=D_{\mathrm{KL}}(\operatorname{Law}(B+\sqrt qG)\|\gamma_q)
+\frac{\|m\|^2}{2q}.
\end{equation}
\end{sicorelemma}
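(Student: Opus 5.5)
The inequality in \eqref{si-core-E1} will follow from joint convexity of relative entropy; the two identities are then explicit Gaussian computations. Write $\mu=\operatorname{Law}(B)$. Since $G$ is independent of $B$, the law of $B+\sqrt qG$ is the mixture $\int N(z,qI_d)\,\mu(dz)$, while $\gamma_q=\int\gamma_q\,\mu(dz)$ is the trivial mixture with the same weights. Joint convexity of $D_{\mathrm{KL}}$ under mixtures gives the inequality. For a general (possibly non-discrete) mixing law, I would justify this through the chain rule: the joint law of $(B,B+\sqrt qG)$ relative to $\mu\otimes\gamma_q$ has relative entropy $\int D_{\mathrm{KL}}(N(z,qI_d)\|\gamma_q)\,\mu(dz)$, and data processing under projection to the second coordinate yields the bound. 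The equality in \eqref{si-core-E1} is then the standard formula $D_{\mathrm{KL}}(N(z,qI_d)\|N(0,qI_d))=\|z\|^2/(2q)$, integrated against $\mu$, which uses $B\in L^2$.

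For \eqref{si-core-E2}, let $p$ be the density of $\operatorname{Law}(B+\sqrt qG)$, which is smooth and strictly positive, and let $p(\cdot-m)$ be the density of the shifted law. I would write
\[
D_{\mathrm{KL}}(p(\cdot-m)\|\varphi_q^{\mathrm{var}})=\int p(y)\log\frac{p(y)}{\varphi_q^{\mathrm{var}}(y+m)}\,dy,
\]
and use
\[
\log\frac{\varphi_q^{\mathrm{var}}(y)}{\varphi_q^{\mathrm{var}}(y+m)}=\frac{2\langle y,m\rangle+\|m\|^2}{2q}.
\]
This gives
\[
D_{\mathrm{KL}}(p(\cdot-m)\|\varphi_q^{\mathrm{var}})=D_{\mathrm{KL}}(p\|\varphi_q^{\mathrm{var}})+\frac{\langle\mathbb E[B+\sqrt qG],m\rangle}{q}+\frac{\|m\|^2}{2q}.
\]
The cross term vanishes because $\mathbb EB=0$ and $\mathbb EG=0$.

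The only delicate point is whether this splitting of the logarithm is legitimate. Splitting is allowed because the linear term $\langle y,m\rangle$ is integrable under $p$: the law of $B+\sqrt qG$ has a finite first moment since $B\in L^2$. Also, $D_{\mathrm{KL}}(p\|\varphi_q^{\mathrm{var}})$ is finite by the first part. So no $\infty-\infty$ ambiguity arises, and the identity holds as an equality of finite numbers.
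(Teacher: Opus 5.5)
Your proposal is correct and follows the paper's own proof. For \eqref{si-core-E1} you use convexity of relative entropy in its first argument; your joint-law chain-rule and data-processing argument for a general mixing law fills in a detail the paper leaves implicit. For \eqref{si-core-E2} you use the same change of Gaussian reference as the paper: the linear term $\langle y,m\rangle/q$ is integrable, it averages to zero because $\mathbb E B=0$, and finiteness of both entropies rules out any $\infty-\infty$ ambiguity.
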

\begin{proof}
Convexity of relative entropy in its first argument proves
\eqref{si-core-E1}. The logarithm of the ratio of the two translated Gaussian
reference densities is $(2y\cdot m+\|m\|^2)/(2q)$.
Its integral under $B+\sqrt qG$ is $\|m\|^2/(2q)$.
Both entropies are finite by \eqref{si-core-E1}, and the linear term is
integrable, so this change of reference proves \eqref{si-core-E2}.
\end{proof}

\begin{sicorelemma}[Fourth-moment entropy bound]\label{si-core-ENT4}
If $\mathbb EV=0$ and $\mathbb E\|V\|^4<\infty$, then
\begin{equation}\label{si-core-E3}
D_{\mathrm{KL}}(\operatorname{Law}(V+\sqrt qG)\|\gamma_q)
\le\frac{\mathbb E\|V\|^4}{4q^2}.
\end{equation}
\end{sicorelemma}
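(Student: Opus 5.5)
The plan is to use the Gaussian information--estimation (I--MMSE) identities along the signal-to-noise path, rather than a $\chi^2$ bound. The $\chi^2$ route fails here: $\chi^2=\mathbb E e^{V\cdot V'}-1$ for an independent copy $V'$ of $V$, and this can be infinite under only a fourth moment.

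\emph{Reduction.} By scale invariance of relative entropy, $D_{\mathrm{KL}}(\operatorname{Law}(V+\sqrt qG)\|\gamma_q)=D_{\mathrm{KL}}(\operatorname{Law}(\sqrt s V+G)\|\gamma)$ with $s=1/q$. So it suffices to show $D(s):=D_{\mathrm{KL}}(\operatorname{Law}(Y_s)\|\gamma)\le s^2\mathbb E\|V\|^4/4$, where $Y_u=\sqrt uV+G$.

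\emph{Step 1: entropy as an integral of the squared posterior mean.} The chain rule for the joint law of $(V,Y_u)$ against $\operatorname{Law}(V)\otimes\gamma$ gives
\[
D(u)=\mathbb E\,D_{\mathrm{KL}}(N(\sqrt uV,I_d)\|\gamma)-I(V;Y_u)=\tfrac u2\mathbb E\|V\|^2-I(V;Y_u).
\]
This is the equality case behind \eqref{si-core-E1}. The I--MMSE identity \citep{si-core-gsv2004} gives $I(V;Y_u)=\tfrac12\int_0^u\mathrm{mmse}(r)\,dr$, where $\mathrm{mmse}(r)=\mathbb E\|V\|^2-\mathbb E\|\mathbb E[V\mid Y_r]\|^2$. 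Combining the two,
\[
D(s)=\tfrac12\int_0^s m(u)\,du,\qquad m(u):=\mathbb E\|\mathbb E[V\mid Y_u]\|^2 .
\]

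\emph{Step 2: derivative bound on $m$.} Since $\mathbb EV=0$, we have $m(0)=\|\mathbb EV\|^2=0$. The second-order identity of \citet{si-core-pp2009} states $\frac{d}{du}\mathrm{mmse}(u)=-\mathbb E\|\operatorname{Cov}(V\mid Y_u)\|_F^2$, hence
\[
m'(u)=\mathbb E\|\operatorname{Cov}(V\mid Y_u)\|_F^2 .
\]
The matrices satisfy $0\preceq\operatorname{Cov}(V\mid Y_u)\preceq\mathbb E[VV^\top\mid Y_u]$, and for PSD matrices $0\preceq A\preceq B$ one has $\|A\|_F\le\|B\|_F$. Jensen's inequality for the convex map $M\mapsto\|M\|_F^2$ then gives
\[
\|\operatorname{Cov}(V\mid Y_u)\|_F^2\le\mathbb E[\|VV^\top\|_F^2\mid Y_u]=\mathbb E[\|V\|^4\mid Y_u].
\]
Therefore $m'(u)\le\mathbb E\|V\|^4$ and $m(u)\le u\,\mathbb E\|V\|^4$. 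Integrating, $D(s)\le\tfrac12\cdot\tfrac{s^2}{2}\mathbb E\|V\|^4=\mathbb E\|V\|^4/(4q^2)$, which is \eqref{si-core-E3}.

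\emph{Main obstacle: regularity.} The I--MMSE identities, and especially the differentiability of $\mathrm{mmse}$ with the covariance formula, are classically stated under conditions that may not obviously hold for a general law with only four moments. I would first prove the bound for bounded centered $V$, where all derivatives can be computed directly from the posterior density $\propto e^{\sqrt u\,v\cdot y-u\|v\|^2/2}$ by dominated convergence. For general $V$, take truncations $V_n=V\mathbf 1_{\{\|V\|\le n\}}-\mathbb E[V\mathbf 1_{\{\|V\|\le n\}}]$. Then $\mathbb E\|V_n\|^4\to\mathbb E\|V\|^4$, and $\operatorname{Law}(V_n+\sqrt qG)\to\operatorname{Law}(V+\sqrt qG)$ weakly. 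Lower semicontinuity of relative entropy under weak convergence transfers the bound to $V$.

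If the derivative formula proves awkward even in the bounded case, I would instead work with the absolutely continuous representation $m(u)=\int_0^u\mathbb E\|\operatorname{Cov}(V\mid Y_r)\|_F^2\,dr$. This can be verified from $\frac{d}{du}\mathbb E[V\mid Y_u]$ computed via Gaussian integration by parts, and Step 2 only needs this integrated form.
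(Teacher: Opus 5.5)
Your proposal is correct and follows essentially the paper's route: your $m(u)=\mathbb E\|\mathbb E[V\mid Y_u]\|^2$ is the paper's $g(\sqrt s)$, with $F'(s)=\tfrac12 g(\sqrt s)$, $F''(s)=\tfrac12\mathbb E\|C\|_F^2$, $F(0)=F'(0+)=0$ by centering, a bounded-input case, and the same truncate-and-recenter extension with lower semicontinuity. The only differences are that the paper derives the two derivative identities for bounded $V$ by direct integration by parts instead of citing I--MMSE, and it bounds $\|C\|_F^2$ via $(\operatorname{tr}C)^2\le(\mathbb E[\|V\|^2\mid Y])^2$ rather than your PSD-ordering argument, which the paper itself uses in the uncentered Lemma~\ref{si-bnd-lem:raw-four}.
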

\begin{proof}
First assume $\|V\|\le K$. For $\lambda\ge0$ put $Y_\lambda=\lambda V+G$
and define by Bayes' formula
\[
f_\lambda(y)=\mathbb E\varphi_1^{\mathrm{var}}(y-\lambda V),\quad
\mu_\lambda(y)=\mathbb E[V\mid Y_\lambda=y],\quad
C_\lambda(y)=\operatorname{Cov}(V\mid Y_\lambda=y),
\]
\[
\kappa_\lambda(y)=\mathbb E[(V-\mu_\lambda(y))
 \|V-\mu_\lambda(y)\|^2\mid Y_\lambda=y].
\]
The density is positive. On any compact parameter interval $0\le\lambda\le R$,
these conditional moments are uniformly bounded, and
\[
(2\pi)^{-d/2}e^{-(\|y\|+RK)^2/2}\le f_\lambda(y)
\le(2\pi)^{-d/2}e^{-(\|y\|-RK)_+^2/2}.
\]
In particular $|\log f_\lambda(y)|\le C_{R,K,d}(1+\|y\|^2)$ and
\[
|\partial_\lambda f_\lambda(y)|
\le(2\pi)^{-d/2}K(\|y\|+RK)e^{-(\|y\|-RK)_+^2/2}.
\]
The identities below give
$\|\partial_\lambda\mu_\lambda(y)\|\le K^2\|y\|+10RK^3$.
Thus the differentiated integrands, including
$\partial_\lambda(f_\lambda\|\mu_\lambda\|^2)$ and
$(\partial_\lambda f_\lambda)(1+\log f_\lambda)$, are dominated by
integrable polynomials times Gaussian tails. Spatial posterior derivatives
are bounded conditional central moments. Differentiation under the integral
is therefore justified. Integration by parts can first be performed with a
smooth compactly supported cutoff; the same bounds make its boundary terms
vanish as the cutoff radius tends to infinity.

Suppressing the parameter and argument where harmless, differentiating the
Gaussian density and the Bayes quotient gives
\begin{equation}\label{si-core-E4}
\partial_\lambda f=-\nabla\cdot(f\mu),\qquad
\nabla\log f=-y+\lambda\mu,\qquad
\partial_{y_i}\mu_j=\lambda C_{ij},
\end{equation}
\begin{equation}\label{si-core-E5}
\partial_\lambda\mu=Cy-\lambda(\kappa+2C\mu),\qquad
\sum_i\partial_{y_i}C_{ij}=\lambda\kappa_j.
\end{equation}
Indeed, for bounded $a(V)$,
$\partial_{y_i}\mathbb E[a(V)\mid y]
=\lambda\operatorname{Cov}(a(V),V_i\mid y)$, whereas
\[
\partial_\lambda\mu
=\operatorname{Cov}(V,y\cdot V-\lambda\|V\|^2\mid y).
\]
Expanding
$\operatorname{Cov}(V,\|V\|^2\mid y)=\kappa+2C\mu$
yields the first identity in \eqref{si-core-E5}; differentiating the
conditional covariance yields the third central moment and then its second
identity.

Let $g(\lambda)=\int f_\lambda\|\mu_\lambda\|^2$.
Equations~\eqref{si-core-E4}--\eqref{si-core-E5} imply
\begin{equation}\label{si-core-E6}
\begin{aligned}
g'(\lambda)
&=\int-\nabla\cdot(f\mu)\|\mu\|^2
 +2f\mu\cdot\partial_\lambda\mu\\
&=2\mathbb E[\mu^\top C Y_\lambda]
 -2\lambda\mathbb E[\mu^\top C\mu]
 -2\lambda\mathbb E[\mu\cdot\kappa].
\end{aligned}
\end{equation}
Integration by parts applied to the vector field $C\mu$ gives
\[
\mathbb E[\mu^\top C Y_\lambda]
=\lambda\mathbb E[\mu^\top C\mu]+\mathbb E[\nabla\cdot(C\mu)]
=\lambda\mathbb E[\mu^\top C\mu]
+\lambda\mathbb E\|C\|_F^2+\lambda\mathbb E[\mu\cdot\kappa].
\]
The two cross terms cancel in \eqref{si-core-E6}, leaving
\begin{equation}\label{si-core-E7}
g'(\lambda)=2\lambda\mathbb E\|C_\lambda(Y_\lambda)\|_F^2.
\end{equation}
For $H(\lambda)=-\int f_\lambda\log f_\lambda$, another integration by parts
and $\mathbb E[V\cdot Y_\lambda]=\lambda\mathbb E\|V\|^2$ give
\[
H'(\lambda)=-\mathbb E[\mu\cdot\nabla\log f]
=\lambda\big(\mathbb E\|V\|^2-g(\lambda)\big),\qquad
\frac{d}{d\lambda}D_{\mathrm{KL}}(\operatorname{Law}(Y_\lambda)\|\gamma)
=\lambda g(\lambda).
\]
For $s=\lambda^2>0$, write
$F(s)=D_{\mathrm{KL}}(\operatorname{Law}(\sqrt sV+G)\|\gamma)$. Then
\begin{equation}\label{si-core-E8}
F'(s)=\tfrac12g(\sqrt s),\qquad
F''(s)=\tfrac12\mathbb E\|C_{\sqrt s}(Y_{\sqrt s})\|_F^2.
\end{equation}
As $\lambda\downarrow0$, Bayes' formula gives
$\mu_\lambda(y)\to\mathbb EV=0$. Bounded posterior means and the uniform
density tail imply $g(\lambda)\to0$. Also \eqref{si-core-E1} gives
$0\le F(s)\le s\mathbb E\|V\|^2/2$, so $F(s)\to F(0)=0$.
Integrating the first identity in \eqref{si-core-E8} down to zero yields
$F(s)=\tfrac12\int_0^s g(\sqrt u)\,du$ and hence $F'(0+)=0$;
no division by $\lambda$ at zero is used.
Since $C$ is positive semidefinite,
\[
\|C\|_F^2\le(\operatorname{tr}C)^2
\le\big(\mathbb E[\|V\|^2\mid Y_\lambda]\big)^2
\le\mathbb E[\|V\|^4\mid Y_\lambda].
\]
Twice integrating \eqref{si-core-E8} therefore gives
$F(s)\le s^2\mathbb E\|V\|^4/4$. Set $s=q^{-1}$ and use invariance of
relative entropy under an invertible scaling to prove the bounded case.

For general centered $V\in L^4$, truncate and recenter:
\[
\widetilde V_n=V\mathbf1_{\{\|V\|\le n\}},\qquad
\mu_n=\mathbb E\widetilde V_n,\qquad V_n=\widetilde V_n-\mu_n.
\]
Each $V_n$ is bounded and centered, and
\begin{equation}\label{si-core-E9}
\|\mu_n\|\le\|\widetilde V_n-V\|_{L^4},\qquad
\|V_n-V\|_{L^4}\le2\|V\mathbf1_{\{\|V\|>n\}}\|_{L^4}\longrightarrow0.
\end{equation}
The reverse triangle inequality for $L^4$ norms gives
$\mathbb E\|V_n\|^4\to\mathbb E\|V\|^4$.
On a common space with independent $G$,
$V_n+\sqrt qG\to V+\sqrt qG$ in $L^4$, so their laws satisfy
$P_n\Rightarrow P$. The Gaussian reference $\gamma_q$ is fixed.
The variational formula
\[
D_{\mathrm{KL}}(P\|\gamma_q)
=\sup_{f\in C_b(\mathbb R^d)}
\left(\int f\,dP-\log\int e^f\,d\gamma_q\right)
\]
gives
\begin{equation}\label{si-core-E10}
D_{\mathrm{KL}}(P\|\gamma_q)
\le\liminf_nD_{\mathrm{KL}}(P_n\|\gamma_q)
\le\limsup_nD_{\mathrm{KL}}(P_n\|\gamma_q)
\le\lim_n\frac{\mathbb E\|V_n\|^4}{4q^2}
=\frac{\mathbb E\|V\|^4}{4q^2}.
\end{equation}
\end{proof}
The bounded-input derivatives are classical Gaussian-channel I--MMSE
identities. The first-derivative form is consistent with
\citep[Theorem~2, Eq.~(22), and Theorem~5, Eq.~(66)]{si-core-gsv2004};
the second-derivative identity is the identity specialization of
\citep[Corollary~1, Eq.~(45)]{si-core-pp2009}, with channel matrix and noise
covariance both equal to $I_d$. The truncation argument above extends
the bound to finite fourth moments.

\begin{sicorelemma}[Second-moment tail bound]\label{si-core-ENT2}
Let $W\in L^2$ satisfy $\mathbb EW=0$. Put
\[\Psi(w)=\|w\|^2\min(\|w\|^2,1).\]
Then
\begin{equation}\label{si-core-E11}
D_{\mathrm{KL}}(W+G\|G)\le4\mathbb E\Psi(W).
\end{equation}
\end{sicorelemma}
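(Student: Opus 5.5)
Split $W$ at the unit sphere, $W=W\mathbf 1_{\{\|W\|\le1\}}+W\mathbf 1_{\{\|W\|>1\}}$, recenter the bounded part so Lemma~\ref{si-core-ENT4} applies, and absorb the tail part with the convexity bound \eqref{si-core-E1}. We may assume $\mathbb E\Psi(W)<\infty$, which holds since $\Psi(w)\le\|w\|^2$ and $W\in L^2$.

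\textbf{Decomposition.} Put $A=W\mathbf 1_{\{\|W\|\le1\}}$ and $B=W\mathbf 1_{\{\|W\|>1\}}$, so $W=A+B$. Set $m=\mathbb EA=-\mathbb EB$ and $A_0=A-m$, which is bounded and centered. Then
\[
\mathbb E\|A_0\|^4\le 16\,\mathbb E\|A\|^4=16\,\mathbb E[\|W\|^4\mathbf 1_{\{\|W\|\le1\}}],
\]
and this is at most $16$ times the small-$\|W\|$ part of $\mathbb E\Psi(W)$. For the tail part, $\mathbb E\|B\|^2=\mathbb E[\|W\|^2\mathbf 1_{\{\|W\|>1\}}]$, which is exactly the large-$\|W\|$ part of $\mathbb E\Psi(W)$.

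\textbf{Combining the two parts.} I would avoid a chain rule for sums of non-independent variables and use a "split the Gaussian" device instead. Write $G=(G_1+G_2)/\sqrt2$ with $G_1,G_2$ independent standard Gaussians, so that
\[
W+G=\bigl(A_0+G_1/\sqrt2\bigr)+\bigl(B+m+G_2/\sqrt2\bigr).
\]
The difficulty is that $A_0$ and $B$ are dependent (they are functions of the same $W$). Conditioning on the event $\{\|W\|\le1\}$ keeps the bounded part centered only after recentering, and the conditional law is a mixture.

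The cleaner route is the convexity (mixture) bound over the decomposition $W+G=(A+G)+B$: condition on $B$.
\begin{itemize}
\item On $\{\|W\|>1\}$ we have $A=0$. The conditional law is $N(B,I)$, with KL equal to $\|B\|^2/2$.
\item On $\{\|W\|\le1\}$ we have $B=0$, and the conditional law is $\mathcal L(A+G\mid \|W\|\le1)$.
\end{itemize}
Joint convexity of KL gives
\[
D_{\mathrm{KL}}(W+G\,\|\,G)\le p\,D_{\mathrm{KL}}(\mathcal L(A+G\mid\|W\|\le1)\,\|\,\gamma)+\tfrac12\,\mathbb E\|B\|^2,
\qquad p=\mathbb P(\|W\|\le1).
\]

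\textbf{The conditional bounded piece.} Under the conditional law, $A$ has mean $m'=m/p$, bounded by $\mathbb E\|B\|/p$. Applying \eqref{si-core-E2} to the recentered variable and then Lemma~\ref{si-core-ENT4} with $q=1$ gives
\[
p\,D_{\mathrm{KL}}(\cdot\|\gamma)\le \tfrac14\,\mathbb E[\|A-m'\|^4\mathbf 1_{\{\|W\|\le1\}}]+\tfrac{\|m\|^2}{2p}.
\]
The two remaining terms are controlled as follows.
\begin{itemize}
\item $\|m\|^2=\|\mathbb E B\|^2\le \mathbb E\|B\|^2\,\mathbb P(\|W\|>1)$ by Cauchy--Schwarz.
\item $\|A-m'\|^4\le 8(\|A\|^4+\|m'\|^4)$.
\end{itemize}

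\textbf{Main obstacle.} The difficulty is controlling $1/p$ when $p$ is small. I would handle it by cases.
\begin{itemize}
\item If $p\ge1/2$, then $\|m\|^2/(2p)\le\mathbb E\|B\|^2$. Also $p\|m'\|^4=\|m\|^4/p^3\le 8\|m\|^4$, and $\|m\|\le\mathbb E\|B\|\le\mathbb E\|W\|^2\mathbf 1_{\{\|W\|>1\}}$. If this quantity is $\le1$, the quartic term is bounded by a constant times $\mathbb E\|B\|^2$. If it exceeds $1$, use the crude bound below.
\item If $p<1/2$, or in the leftover subcase above, then $\mathbb E\Psi(W)\ge\mathbb P(\|W\|>1)\ge 1/2$, or $\mathbb E\Psi(W)\ge1$. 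In that case fall back on \eqref{si-core-E1}, which gives $D_{\mathrm{KL}}\le\mathbb E\|W\|^2/2$, together with $\mathbb E[\|W\|^2\mathbf 1_{\{\|W\|\le1\}}]\le p\le1\le 2\,\mathbb E\Psi(W)$.
\end{itemize}

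Collecting constants should give the stated factor $4$. If it does not, the constants would have to be tuned, for example by splitting at a different radius or using the sharper bound $\|A-m'\|^4$ via $\|A\|\le1$.
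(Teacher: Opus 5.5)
\textbf{Verdict.} Your outline follows the paper's proof: split at $\|W\|=1$, use mixture convexity over the two events, bound the tail component by \eqref{si-core-E1}, recenter the head with \eqref{si-core-E2} and Lemma~\ref{si-core-ENT4}, control the recentering mean by Cauchy--Schwarz, and fall back on \eqref{si-core-E1} when the tail is heavy. The fallback cases are handled correctly. The gap is the constant $4$. You do not verify it (``should give''), and the estimates you chose do not produce it.

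\textbf{Where the constant is lost.} Write $A_4=\mathbb E[\|W\|^4;\|W\|\le1]$, $B_2=\mathbb E\|B\|^2$ and $\beta=\mathbb P(\|W\|>1)=1-p$. Your head term is bounded by $2A_4+2p\|m'\|^4$. You then control $p\|m'\|^4=\|m\|^4/p^3\le 8\|m\|^4$ using only $\|m\|\le B_2\le1$ and $\|m\|^2\le\beta B_2$. That gives $16\|m\|^4\le16\beta B_2\le 8B_2$. Adding $\|m\|^2/(2p)\le B_2/2$ and the tail $B_2/2$ yields about $2A_4+9B_2$, not $4(A_4+B_2)$. These constraints alone cannot do better. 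At $\beta=\tfrac12$, $B_2=1$, $\|m\|^2=\beta B_2$ and small $A_4$, your chain exceeds $4(A_4+B_2)$.

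\textbf{The fix.} Apply conditional Jensen on $\{\|W\|\le1\}$: $\|m'\|^4\le\mathbb E[\|A\|^4\mid\|W\|\le1]$. Hence
\[
\mathbb E\bigl[\|A-m'\|^4\mid\|W\|\le1\bigr]\le16\,\mathbb E\bigl[\|A\|^4\mid\|W\|\le1\bigr],
\]
so the head contribution is at most $4A_4$. This is exactly the inequality you already used for the unconditional $A_0$ in your first paragraph. For $p\ge\tfrac12$, the mean term satisfies $\|m\|^2/(2p)\le\beta B_2/(2p)\le B_2/2$, and the tail contributes $B_2/2$. Altogether
\[
D_{\mathrm{KL}}(W+G\|G)\le 4A_4+B_2\le 4\,\mathbb E\Psi(W).
\]
Your extra subcase $B_2\le1$ then becomes unnecessary; only $p<\tfrac12$ needs the crude bound, which you treat correctly. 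Alternatively, $\|m'\|\le1$ because $m'$ is a mean of vectors in the unit ball. This gives $2p\|m'\|^4\le2\|m\|^2/p\le4\beta B_2$ and a total of at most $2A_4+3B_2$.

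With either fix your argument is the paper's. The only remaining difference is inessential: the paper splits on $\mathbb E\|W\|^2<\tfrac12$ rather than on $p$.
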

\begin{proof}
Let $H=\{\|W\|\le1\}$, $J=\{\|W\|>1\}$,
$\alpha=\mathbb P(H)$, $\beta=\mathbb P(J)$,
$A=\mathbb E[\|W\|^4;H]$, and $B=\mathbb E[\|W\|^2;J]$.
Thus $\mathbb E\Psi(W)=A+B$ and $\beta\le B$.
Suppose first $0<\beta<1/2$. If $W_H,W_J$ have the two conditional laws,
with means $\mu_H,\mu_J$, then $\alpha>1/2$ and
\begin{equation}\label{si-core-E12}
\alpha\mu_H+\beta\mu_J=0,\quad
\|\mu_J\|^2\le B/\beta,\quad
\frac\alpha2\|\mu_H\|^2\le\frac{\beta B}{2\alpha}\le\frac B2.
\end{equation}
Conditional Jensen and $\|u-v\|^4\le8(\|u\|^4+\|v\|^4)$ give
\begin{equation}\label{si-core-E13}
\mathbb E\|W_H-\mu_H\|^4
\le16\mathbb E\|W_H\|^4=16A/\alpha.
\end{equation}
Convexity, translation, the fourth-moment lemma, and the coarse second-moment
bound now yield
\begin{equation}\label{si-core-E14}
\begin{aligned}
D_{\mathrm{KL}}(W+G\|G)
&\le\alpha D_{\mathrm{KL}}(W_H+G\|G)
 +\beta D_{\mathrm{KL}}(W_J+G\|G)\\
&=\alpha D_{\mathrm{KL}}(W_H-\mu_H+G\|G)
 +\frac\alpha2\|\mu_H\|^2+\beta D_{\mathrm{KL}}(W_J+G\|G)\\
&\le4A+\frac{\beta B}{2\alpha}+\frac B2\le4A+B.
\end{aligned}
\end{equation}
If $\beta=0$, $W$ itself is bounded and centered, and the bound is $A/4$.
If $\alpha=0$, the coarse bound is $B/2$.

Write $m_2=\mathbb E\|W\|^2$. If $m_2<1/2$, then $\beta\le m_2<1/2$,
so the preceding bounds give $D_{\mathrm{KL}}\le4(A+B)$.
If $m_2\ge1/2$, put $u=\|W\|^2$. Since
\[
\mathbb E[u;u>1/4]\ge m_2-1/4\ge m_2/2,\qquad
\mathbb E[u\min(u,1)]\ge\tfrac14\mathbb E[u;u>1/4]\ge m_2/8,
\]
\eqref{si-core-E1} gives the same conclusion.
For $m_2=0$, $W=0$ almost surely.
\end{proof}
For $2<p\le4$, the pointwise inequality $\Psi(w)\le\|w\|^p$ also gives
$D_{\mathrm{KL}}(W+G\|G)\le4\mathbb E\|W\|^p$.

\begin{sicorelemma}[Conditional centering of the entropy modulus]
\label{si-core-Phi}
For $\Phi_{\mathrm{ent}}(z)=\|z\|^4/(1+\|z\|^2)$,
\begin{equation}\label{si-core-E15}
\tfrac12\Psi(z)\le\Phi_{\mathrm{ent}}(z)\le\Psi(z)\le\|z\|^2,
\qquad \Phi_{\mathrm{ent}}(2z)\le16\Phi_{\mathrm{ent}}(z).
\end{equation}
The function is convex and
\begin{equation}\label{si-core-E16}
\Phi_{\mathrm{ent}}(u+v)
\le8\big(\Phi_{\mathrm{ent}}(u)+\Phi_{\mathrm{ent}}(v)\big).
\end{equation}
For $W\in L^2$ and a sub-$\sigma$-algebra $\mathcal A$,
\begin{equation}\label{si-core-E17}
\mathbb E\Phi_{\mathrm{ent}}(W-\mathbb E[W\mid\mathcal A])
\le16\mathbb E\Phi_{\mathrm{ent}}(W).
\end{equation}
If $X,W$ are Euclidean random variables, $G$ is independent of $(X,W)$,
and $V=W-\mathbb E[W\mid X]$, then
\begin{equation}\label{si-core-E18}
\begin{aligned}
\int D_{\mathrm{KL}}(\operatorname{Law}(V+\sqrt qG\mid X=x)\|\gamma_q)
 \,\operatorname{Law}(X)(dx)
&\le8\mathbb E\Phi_{\mathrm{ent}}(V/\sqrt q)\\
&\le128\mathbb E\Phi_{\mathrm{ent}}(W/\sqrt q).
\end{aligned}
\end{equation}
\end{sicorelemma}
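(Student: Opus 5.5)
The plan is to work with the scalar variable $r=\|z\|^2$, in which $\Phi_{\mathrm{ent}}(z)=r^2/(1+r)$ and $\Psi(z)=r\min(r,1)$; the sandwich \eqref{si-core-E15} should then reduce to elementary one-variable inequalities. For $r\le1$ we have $\Psi=r^2$, and $r^2/2\le r^2/(1+r)\le r^2$. For $r\ge1$ we have $\Psi=r$, and $r/2\le r^2/(1+r)\le r$ because $r\le 1+r\le 2r$. The inequality $\Psi\le r$ is immediate. For the doubling bound, $\Phi_{\mathrm{ent}}(2z)=16r^2/(1+4r)\le16r^2/(1+r)$.

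For convexity, write $\Phi_{\mathrm{ent}}(z)=\phi(\|z\|)$ with $\phi(s)=s^4/(1+s^2)=s^2-1+(1+s^2)^{-1}$. On $s\ge0$:
\[
\phi'(s)=2s\bigl(1-(1+s^2)^{-2}\bigr)\ge0,\qquad
\phi''(s)=2+\frac{6s^2-2}{(1+s^2)^3}\ge0,
\]
where the last inequality uses $(1+s^2)^3\ge1$. So $\phi$ is nondecreasing and convex on $[0,\infty)$. Composing it with the convex norm gives a convex function on $\mathbb R^d$.

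The quasi-triangle inequality \eqref{si-core-E16} then follows from doubling and convexity:
\[
\Phi_{\mathrm{ent}}(u+v)\le16\,\Phi_{\mathrm{ent}}\bigl(\tfrac{u+v}2\bigr)\le8\bigl(\Phi_{\mathrm{ent}}(u)+\Phi_{\mathrm{ent}}(v)\bigr).
\]
For \eqref{si-core-E17}, apply \eqref{si-core-E16} with $u=W$ and $v=-\mathbb E[W\mid\mathcal A]$; note that $\Phi_{\mathrm{ent}}$ is even. Conditional Jensen for the convex, nonnegative $\Phi_{\mathrm{ent}}$ gives $\Phi_{\mathrm{ent}}(\mathbb E[W\mid\mathcal A])\le\mathbb E[\Phi_{\mathrm{ent}}(W)\mid\mathcal A]$, which is legitimate since $W\in L^2$ and $\Phi_{\mathrm{ent}}(w)\le\|w\|^2$. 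Taking expectations yields the factor $16$.

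For \eqref{si-core-E18}, take a regular conditional law of $V$ given $X=x$. Because $V\in L^2$ and $\mathbb E[V\mid X]=0$, for $\operatorname{Law}(X)$-a.e. $x$ the conditional law of $V$ is centered with finite second moment. By scaling invariance,
\[
D_{\mathrm{KL}}(\operatorname{Law}(V+\sqrt qG\mid X=x)\|\gamma_q)=D_{\mathrm{KL}}(\operatorname{Law}(V/\sqrt q+G\mid X=x)\|\gamma).
\]
Lemma~\ref{si-core-ENT2} bounds this by $4\mathbb E[\Psi(V/\sqrt q)\mid X=x]$, and the lower half of \eqref{si-core-E15} turns that into $8\mathbb E[\Phi_{\mathrm{ent}}(V/\sqrt q)\mid X=x]$. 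Integrating over $x$ gives the first inequality. The second follows from \eqref{si-core-E17} applied to $W/\sqrt q$ with $\mathcal A=\sigma(X)$.

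The only delicate point I expect is measurability of $x\mapsto D_{\mathrm{KL}}(\cdot\|\gamma_q)$ for the conditional laws, so that the integral makes sense. I would handle it by noting three facts: the kernel $x\mapsto\operatorname{Law}(V+\sqrt qG\mid X=x)$ is a Borel probability kernel; relative entropy to the fixed $\gamma_q$ is a lower-semicontinuous, hence Borel, function of its first argument in the weak topology, by the variational formula used in Lemma~\ref{si-core-ENT4}; and the a.e.\ bound suffices since the exceptional null set does not affect the integral.
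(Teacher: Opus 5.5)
Your proof is correct and follows the paper's argument essentially step by step. You use the same doubling-plus-midpoint-convexity route to \eqref{si-core-E16} and the same conditional Jensen step for \eqref{si-core-E17}. For \eqref{si-core-E18} you likewise apply Lemma~\ref{si-core-ENT2} to the centered conditional laws, then use $\Psi\le2\Phi_{\mathrm{ent}}$ and the centering bound. The differences are minor technical choices: you get convexity by composing the nondecreasing convex profile $\phi$ with the norm, rather than from the radial and tangential Hessian eigenvalues $\phi''(r)$ and $\phi'(r)/r$, and you get measurability of the conditional entropies from weak lower semicontinuity of $P\mapsto D_{\mathrm{KL}}(P\|\gamma_q)$, rather than from the paper's jointly Borel density with $u\log u\ge-1/e$ and Tonelli.
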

\begin{proof}
The radial profile $\phi(r)=r^4/(1+r^2)$ satisfies
\[
\phi'(r)=\frac{2r^3(2+r^2)}{(1+r^2)^2}\ge0,\qquad
\phi''(r)=\frac{2r^2(6+3r^2+r^4)}{(1+r^2)^3}\ge0.
\]
For $r>0$ its radial and tangential Hessian eigenvalues are $\phi''(r)$ and
$\phi'(r)/r$; continuity deals with zero. Thus $\Phi_{\mathrm{ent}}$ is
convex. Equation~\eqref{si-core-E15} follows directly, and convexity at
$(2u+2v)/2$ proves \eqref{si-core-E16}.
For $m=\mathbb E[W\mid\mathcal A]$, conditional Jensen gives
\[
\mathbb E[\Phi_{\mathrm{ent}}(W-m)\mid\mathcal A]
\le8\mathbb E[\Phi_{\mathrm{ent}}(W)\mid\mathcal A]+8\Phi_{\mathrm{ent}}(m)
\le16\mathbb E[\Phi_{\mathrm{ent}}(W)\mid\mathcal A].
\]
All quantities are integrable because $m\in L^2$ and
$\Phi_{\mathrm{ent}}(z)\le\|z\|^2$. Averaging proves
\eqref{si-core-E17}, without requiring $\mathbb EW=0$.

Apply Lemma~\ref{si-core-ENT2} to each centered conditional law on the
full-measure set of finite conditional second moments, rescale by $q^{-1/2}$,
and use $\Psi\le2\Phi_{\mathrm{ent}}$ and the preceding conditional
centering bound to obtain \eqref{si-core-E18}.
The Gaussian convolution of a Borel probability kernel has a jointly Borel
density $p_x(y)$. With $\ell_x=p_x/\varphi_q^{\mathrm{var}}$, its entropy is
\[
\int\varphi_q^{\mathrm{var}}(y)\ell_x(y)\log\ell_x(y)\,dy.
\]
Since $u\log u\ge-1/e$, separate positive and negative parts to obtain
measurability in $x$; the nonnegative conditional entropies can then be
integrated by Tonelli. The fourth-moment lemma can likewise be applied
conditional-law by conditional-law. No uniform truncation rate in $x$ is
needed. Finally, splitting at $\|z\|=1$ shows
$\Phi_{\mathrm{ent}}(z)\le\|z\|^p$ for every $2\le p\le4$.
\end{proof}

\section{Borel comparison kernels and exact entropy decomposition}
\label{si-core-kernels}
\begin{sicoreproposition}[A marginal-preserving comparison kernel]
\label{si-core-kernel-preserve}
Under \eqref{si-core-K1}, finite Borel drift versions and normalized Borel
probability kernels $K_{t,h}$ exist for every $0\le t<t+h<1$ such that
$\rho_tK_{t,h}=\rho_{t+h}$ and their conditional mean is $x+hb_t(x)$.
The construction also applies at $t=0$ for a singular source.
\end{sicoreproposition}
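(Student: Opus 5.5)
The plan is to construct $b_t$ and $K_{t,h}$ directly from the joint law of $(X_t,D_t)$, using only integrability of $D_t$ and never of $X_1$.

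\textbf{Step 1: the residual drift.} Since $\|D_t\|\le\|U\|+c_t\|Z\|\in L^2$, the conditional expectation $\mathbb E[D_t\mid X_t]$ exists. It is represented by a Borel function. Redefining that function to be zero on the null set where it is infinite gives a finite Borel version $b_t$. At $t=0$ we have $X_0$ and $D_0=U$, so $b_0(x)=\mathbb E[U\mid X_0=x]$ with no density required.

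\textbf{Step 2: the kernel.} Disintegrate the joint law of $(X_t,D_t)$ on $\mathbb R^d\times\mathbb R^d$, which is possible since both spaces are Polish. This gives a Borel probability kernel $\lambda_t(x,dz)=\mathcal L(D_t\mid X_t=x)$. On the $\rho_t$-null set where $\int\|z\|\lambda_t(x,dz)=\infty$, replace $\lambda_t(x,\cdot)$ by $\delta_0$ and set $b_t=0$ there. This keeps measurability and makes $b_t(x)=\int z\,\lambda_t(x,dz)$ hold for every $x$. Then define
\[
K_{t,h}(x,dy)=\int N(x+hz,qI_d)(dy)\,\lambda_t(x,dz),\qquad q=2\varepsilon h(1-\eta),\quad \eta=h/(1-t).
\]
This is a normalized Borel kernel, because $(x,z)\mapsto N(x+hz,qI_d)(A)$ is Borel. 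Its mean is $x+hb_t(x)$ by Fubini.

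\textbf{Step 3: marginal preservation, the main obstacle.} We need $\rho_tK_{t,h}=\rho_{t+h}$. By construction, $\rho_tK_{t,h}$ is the law of $X_t+hD_t+\sqrt q\,G$ with $G$ independent of $(X_0,U,Z)$. Since $\rho_{t+h}$ depends only on single-time laws, it suffices to exhibit one realization. I would use the bridge representation \eqref{si-core-original-bridge} with $Z=Z_t$.

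\emph{Gaussian computation.} Write $\xi_t=\sqrt{2\varepsilon}(W_t-tW_1)$. We need
\[
\xi_t-hc_tZ_t+\sqrt q\,G \overset{d}{=}\xi_{t+h}
\]
jointly with $(X_0,U)$, which holds because $\xi$ is independent of the endpoints. Here $\xi_t-hc_tZ_t=\sqrt{2\varepsilon}\,\frac{1-t-h}{1-t}(W_t-tW_1)$. This equals the bridge conditional mean $\mathbb E[\xi_{t+h}\mid\xi_t]=\frac{1-t-h}{1-t}\xi_t$.

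\emph{Variance check.} The required residual variance is
\[
2\varepsilon\Big[(t+h)(1-t-h)-\tfrac{(1-t-h)^2t}{1-t}\Big]=2\varepsilon\,\tfrac{h(1-t-h)}{1-t}=2\varepsilon h(1-\eta)=q.
\]
So $X_t+hD_t+\sqrt q\,G\overset{d}{=}X_0+(t+h)U+\xi_{t+h}\sim\rho_{t+h}$.

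\emph{Conditional independence.} The subtle point is that the Gaussian $G$ must be independent of $(X_t,D_t)$ jointly with the endpoints. The bridge increment $\xi_{t+h}-\frac{1-t-h}{1-t}\xi_t$ is Gaussian and uncorrelated with $\xi_t$, hence independent of $(\xi_t,X_0,U)$. Conditioning on $X_t=x$ then gives exactly $K_{t,h}(x,\cdot)$ by the disintegration. At $t=0$ we have $c_0=0$ and $\xi_0=0$, so $X_0+hU+\sqrt{2\varepsilon h(1-h)}\,G\sim\rho_h$ directly, even for a singular source.
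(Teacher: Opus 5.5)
Your proposal is correct and follows the paper's proof. Both disintegrate $(X_t,D_t)$, replace the kernel by $\delta_0$ on a $\rho_t$-null set, and define the Gaussian-mixture kernel. Both verify preservation through the identity $(1-\eta)^2\sigma_t^2+q=2\varepsilon(t+h)(1-t-h)$, which is exactly your bridge variance check. The paper does this check directly in the single-time representation and uses the bridge only to identify $K_{t,h}$ with $\operatorname{Law}(\mathcal X_{t+h}\mid\mathcal X_t=x)$. The only substantive difference is where the kernel is modified. The paper uses the set where the conditional \emph{second} moment is infinite, which gives the everywhere-finite second moments needed later in Proposition~\ref{si-core-exact-entropy}. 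Your first-moment modification suffices for the statement itself.
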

\begin{proof}
We have $D_t\in L^2$ and
$\mathbb E\|D_t\|^2=M_2+c_t^2d$; the cross term vanishes by independence
and the centered Gaussian noise. Standard Borel disintegration gives a
regular conditional probability
$\lambda_t(x,dz)=\operatorname{Law}(D_t\mid X_t=x)$.
The function $j_t(x)=\int\|z\|^2\lambda_t(x,dz)$ is Borel and integrable
under $\rho_t$. Replace $\lambda_t$ by $\delta_0$ on the Borel null set
$\{j_t=\infty\}$, and call the resulting kernel $\overline\lambda_t$.
It remains a version of the conditional law and has finite second moment
at every $x$. Define
\begin{equation}\label{si-core-K2}
b_t(x)=\int z\,\overline\lambda_t(x,dz).
\end{equation}
Truncated integrals show that this function is finite and Borel. At $t=0$
the same construction gives a version of $\mathbb E[U\mid X_0=x]$.
Only finitely many deterministic grid-time choices are needed; no joint
continuity of the drift in time is assumed.

Put $\eta=h/(1-t)$, $v^{\mathrm{EM}}=2\varepsilon h$, and
$q=v^{\mathrm{EM}}(1-\eta)>0$. Set
\begin{equation}\label{si-core-K3}
K_{t,h}(x,dy)=\int N(x+hz,qI_d)(dy)\,\overline\lambda_t(x,dz).
\end{equation}
The Gaussian density is jointly Borel in $(x,z,y)$. Kernel integration
therefore defines a Borel probability kernel, with total mass one for every
$x$ and conditional mean $x+hb_t(x)$.
Integrating against $\rho_t$ recovers the joint law of $(X_t,D_t)$, and
\[
X_t+hD_t=X_0+(t+h)U+(\sigma_t-hc_t)Z
=X_0+(t+h)U+(1-\eta)\sigma_tZ.
\]
The identity
$(1-\eta)^2\sigma_t^2+q=2\varepsilon(t+h)(1-t-h)$ and a fresh independent
$G$ show that $X_t+hD_t+\sqrt qG$ has law $\rho_{t+h}$. Hence
\begin{equation}\label{si-core-K4}
\rho_tK_{t,h}=\rho_{t+h}.
\end{equation}
For the original bridge, Gaussian conditioning gives
$B_{t+h}=(1-\eta)B_t+\sqrt{h(1-\eta)}G$, where
$B_t=W_t-tW_1$ and $G$ is independent of $(B_t,X_0,X_1)$.
Thus \eqref{si-core-K3} is precisely
$\operatorname{Law}(\mathcal X_{t+h}\mid\mathcal X_t=x)$
for $\rho_t$-almost every $x$, also at $t=0$.
The identity $hD_t=\eta(X_1-X_t)$ concerns integrable increments and
does not require $X_1$ itself to be integrable.
\end{proof}

\paragraph{Null-set versions and true-source initialization.}
For $0<t<1$ the density
\[
p_t(x)=\mathbb E\varphi_{\sigma_t^2}^{\mathrm{var}}(x-X_0-tU)
\]
is everywhere positive and bounded by $(2\pi\sigma_t^2)^{-d/2}$.
Thus $\rho_t$ is equivalent to Lebesgue measure. After any positive Euler
step with a finite Borel drift, the density is likewise positive and finite:
\[
y\longmapsto\int\varphi_{2\varepsilon h}^{\mathrm{var}}
(y-x-h\widehat b(x))\,Q_{\mathrm{prev}}(dx).
\]
For $k\ge1$, the Euler state consequently avoids every $\rho_{t_k}$-null
set. At $k=0$, true-source initialization supplies the same property.
Induction then shows that two finite Borel exact-drift versions equal
$\rho_{t_k}$-almost everywhere at each grid time give the same Euler output
law. This argument does not require endpoint densities.

\begin{sicoreproposition}[Exact single-step and path entropy]
\label{si-core-exact-entropy}
For the kernel in \eqref{si-core-K3} and the actual kernel
$Q_{t,h}(x,\cdot)=N(x+h\widehat b(x),v^{\mathrm{EM}}I_d)$, let
$V_{t,h}=h(D_t-b_t(x))$ under the conditional law given $X_t=x$, and put
$P_x=\operatorname{Law}(V_{t,h}+\sqrt qG\mid X_t=x)$.
Then
\begin{equation}\label{si-core-K5}
\begin{aligned}
D_{\mathrm{KL}}(K_{t,h}(x,\cdot)\|Q_{t,h}(x,\cdot))
={}&D_{\mathrm{KL}}(P_x\|\gamma_q)
+\frac d2[-\eta-\log(1-\eta)]\\
&+\frac12\left(\frac1{v^{\mathrm{EM}}}-\frac1q\right)
 \mathbb E[\|V_{t,h}\|^2\mid X_t=x]\\
&+\frac h{4\varepsilon}\|b_t(x)-\widehat b(x)\|^2.
\end{aligned}
\end{equation}
With common initialization $\rho_0$, the marginal-preserving Markov law
$P_N^\star$ and the actual Euler path law $Q_N^{\mathrm{path}}$ satisfy
\begin{equation}\label{si-core-path-KL}
D_{\mathrm{KL}}(P_N^\star\|Q_N^{\mathrm{path}})
=\sum_{k=0}^{N-1}\int D_{\mathrm{KL}}(K_k(x,\cdot)\|Q_k(x,\cdot))
\,\rho_{t_k}(dx),
\end{equation}
\begin{equation}\label{si-core-K7}
D_{\mathrm{KL}}(\rho_T\|Q_N)
\le D_{\mathrm{KL}}(P_N^\star\|Q_N^{\mathrm{path}}).
\end{equation}
\end{sicoreproposition}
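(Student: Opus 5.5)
The single-step identity \eqref{si-core-K5} comes from a change of Gaussian reference; the path statements then follow from the chain rule for relative entropy and data processing.

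\textbf{Single step.} Fix $x$ with finite conditional second moment, which is every $x$ for the modified kernel $\overline\lambda_t$. Translating by $x+hb_t(x)$ leaves relative entropy unchanged. Under this shift $K_{t,h}(x,\cdot)$ becomes $P_x$, the law of the centered variable $V_{t,h}+\sqrt qG$. The Euler kernel $Q_{t,h}(x,\cdot)$ becomes $N(m,v^{\mathrm{EM}}I_d)$ with $m=h(\widehat b(x)-b_t(x))$.

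I would then change reference from $N(m,v^{\mathrm{EM}}I_d)$ to $\gamma_q$. The log-ratio of the two Gaussian densities at $y$ is
\[
\frac d2\log\frac{v^{\mathrm{EM}}}{q}+\frac{\|y-m\|^2}{2v^{\mathrm{EM}}}-\frac{\|y\|^2}{2q}.
\]
This is a quadratic polynomial, integrable under $P_x$ because $P_x$ has finite second moment. We also have $D_{\mathrm{KL}}(P_x\|\gamma_q)<\infty$ by \eqref{si-core-E1}. Therefore
\[
D_{\mathrm{KL}}(P_x\|N(m,v^{\mathrm{EM}}I_d))=D_{\mathrm{KL}}(P_x\|\gamma_q)+\int(\text{log-ratio})\,dP_x .
\]
To evaluate the integral, use $\mathbb E_{P_x}y=0$ and $\mathbb E_{P_x}\|y\|^2=s+dq$, where $s=\mathbb E[\|V_{t,h}\|^2\mid X_t=x]$. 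The result is
\[
\frac d2\log\frac1{1-\eta}+\frac{s+dq+\|m\|^2}{2v^{\mathrm{EM}}}-\frac{s+dq}{2q}.
\]
Since $q/v^{\mathrm{EM}}=1-\eta$, the constant terms give $\tfrac d2[-\eta-\log(1-\eta)]$. The terms in $s$ give $\tfrac12(1/v^{\mathrm{EM}}-1/q)s$. The drift term is $\|m\|^2/(4\varepsilon h)=\tfrac h{4\varepsilon}\|b_t(x)-\widehat b(x)\|^2$. This is exactly \eqref{si-core-K5}.

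\textbf{Path identity.} Both path laws share the initial coordinate law $\rho_0$, and each is built from Borel kernels. The chain rule for relative entropy of Markov chains on Polish spaces, via disintegration, gives
\[
D_{\mathrm{KL}}(P_N^\star\|Q_N^{\mathrm{path}})=\sum_k\int D_{\mathrm{KL}}(K_k(x,\cdot)\|Q_k(x,\cdot))\,\mu_k(dx),
\]
where $\mu_k$ is the $k$th marginal of $P_N^\star$. By \eqref{si-core-K4}, applied inductively from $\rho_0$, we get $\mu_k=\rho_{t_k}$, which gives \eqref{si-core-path-KL}. Projecting onto the last coordinate and applying data processing gives \eqref{si-core-K7}, because the terminal marginals are $\rho_T$ and $Q_N$.

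\textbf{Main obstacle: measurability.} The step needing care is the joint measurability of $x\mapsto D_{\mathrm{KL}}(K_k(x,\cdot)\|Q_k(x,\cdot))$, which the chain rule requires, together with the justification of the chain rule when some terms may be infinite. I would reuse the argument in the proof of Lemma~\ref{si-core-Phi}. The kernels have jointly Borel Gaussian-convolution densities, and both $K_k(x,\cdot)$ and $Q_k(x,\cdot)$ are equivalent to Lebesgue measure. Splitting $\ell\log\ell$ into positive and negative parts, using $u\log u\ge-1/e$, gives Borel measurability, and Tonelli then applies to the nonnegative integrands.

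A second subtlety is version dependence: the kernels differ across versions only on $\rho_{t_k}$-null sets. These sets carry no mass under $P_N^\star$, whose marginals are $\rho_{t_k}$, so the integrals in \eqref{si-core-path-KL} do not depend on the chosen version.
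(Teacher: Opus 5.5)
Your proposal is correct and follows the paper's proof essentially step for step. You translate by $x+hb_t(x)$, change the Gaussian reference from $N(h(\widehat b(x)-b_t(x)),v^{\mathrm{EM}}I_d)$ to $\gamma_q$ by integrating the quadratic log-ratio under the centered, finite-second-moment law $P_x$, apply the chain rule with marginals $\rho_{t_k}$ obtained from \eqref{si-core-K4}, and project to the final coordinate. The only cosmetic difference is in the path step: the paper writes the path Radon--Nikodym derivative directly as a product of positive one-step density ratios and checks that each summand is finite (by \eqref{si-core-E1}, $D_{t_k}\in L^2$, and \eqref{si-core-T0}), whereas you invoke the general disintegration chain rule.
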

\begin{proof}
The modified conditional kernel has finite second moment at every $x$.
The mixture bound gives
$D_{\mathrm{KL}}(P_x\|\gamma_q)
\le\mathbb E[\|V_{t,h}\|^2\mid X_t=x]/(2q)<\infty$.
The law $P_x$ has mean zero and finite second moment, with
\[
\mathbb E\|V_{t,h}+\sqrt qG\|^2
=\mathbb E\|V_{t,h}\|^2+dq
\]
under that conditional law. Put $e=h(b_t(x)-\widehat b(x))$.
After translation to the reference mean, the other Gaussian reference is
$N(-e,v^{\mathrm{EM}}I_d)$. Its log density ratio with $\gamma_q$ is
quadratic and integrable under $P_x$. Integrating it gives
\eqref{si-core-K5}, with no subtraction of infinite quantities.
Because $q<v^{\mathrm{EM}}$, the third term is nonpositive; the exact
learning coefficient is $1/(4\varepsilon)$.

Both path laws exist as finite products of Borel kernels. The reference
marginals are $\rho_{t_k}$ by induction using \eqref{si-core-K4}.
All transition densities are positive, so the path Radon--Nikodym derivative
is the product of one-step density ratios. Its logarithm integrates to
\eqref{si-core-path-KL}. Each integral is finite by \eqref{si-core-E1},
$D_{t_k}\in L^2$, and \eqref{si-core-T0}; thus the finite chain rule has
no hidden infinite subtraction. Projection onto the final state gives
\eqref{si-core-K7}.
\end{proof}

\begin{sicoreproposition}[Original-path Markov decomposition]
\label{si-core-markov-projection}
Under $M_2<\infty$ and \eqref{si-core-T0}, on every finite grid ending
at $T<1$,
\begin{equation}\label{si-core-path-pythagoras}
D_{\mathrm{KL}}(R_N\|Q_N^{\mathrm{path}})
=D_{\mathrm{KL}}(R_N\|P_N^\star)
 +D_{\mathrm{KL}}(P_N^\star\|Q_N^{\mathrm{path}}),
\end{equation}
and all terms are finite. Moreover,
\begin{equation}\label{si-core-original-upper}
D_{\mathrm{KL}}(R_N\|Q_N^{\mathrm{path}})
\le\sum_k c_d(\eta_k)+\frac1{4\varepsilon}
       \sum_kh_k\mathbb E\|V_{t_k}\|^2
       +\frac{\mathcal E}{4\varepsilon}.
\end{equation}
\end{sicoreproposition}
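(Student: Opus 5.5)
The plan is to compute everything by the finite chain rule for path laws, one time step at a time. All three path laws share the initial law $\rho_0$, which has no density in general. The natural reference for the original bridge $R_N$ is therefore the sequence of full-past conditional kernels $A_k(\mathcal X_{t_0:t_k},dy)=\operatorname{Law}(\mathcal X_{t_{k+1}}\mid \mathcal X_{t_0},\ldots,\mathcal X_{t_k})$. By Proposition~\ref{si-core-kernel-preserve}, the conditional average of $A_k$ given only $\mathcal X_{t_k}=x$ is $K_k(x,\cdot)$ for $\rho_{t_k}$-a.e.\ $x$. The $P_N^\star$ kernels are $K_k$, and the Euler kernels $Q_k$ are Gaussian with positive densities, so every kernel is absolutely continuous with respect to Lebesgue measure. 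The path Radon--Nikodym derivatives are then products of one-step ratios, with the initial coordinate contributing a factor one.

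For the identity \eqref{si-core-path-pythagoras}, I would write, for each $k$ and $R_N$-a.e.\ past,
\[
\log\frac{dA_k}{dQ_k}=\log\frac{dA_k}{dK_k}+\log\frac{dK_k}{dQ_k}.
\]
Integrating under $R_N$ gives the first term
$\mathbb E_{R_N}D_{\mathrm{KL}}(A_k\|K_k)$. Because $\log(dK_k/dQ_k)(x,y)$ depends only on $(\mathcal X_{t_k},\mathcal X_{t_{k+1}})$, and the $R_N$-law of that pair equals $\rho_{t_k}(dx)K_k(x,dy)$, the second integral equals $\int D_{\mathrm{KL}}(K_k(x)\|Q_k(x))\rho_{t_k}(dx)$. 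That quantity is the $k$-th summand of \eqref{si-core-path-KL}. Summing over $k$ and applying the chain rule to $R_N$ against $P_N^\star$ yields \eqref{si-core-path-pythagoras}.

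The delicate point is integrability, which is needed to split the logarithm without subtracting infinities. I would prove finiteness of the full-past term directly. Conditioning on $(X_0,X_1,\mathcal X_{t_0:t_k})$, the next state is $N(\mathcal X_{t_k}+\eta_k(X_1-\mathcal X_{t_k}),q_kI_d)$. This uses $B_{t+h}=(1-\eta)B_t+\sqrt{h(1-\eta)}G$ together with the Markov property of the Brownian bridge given its endpoints. Hence $A_k$ is a Gaussian mixture with centers $\mathcal X_{t_k}+hD$, where $D$ ranges over the residual given the past. The residual $V$ is centered at $b_{t_k}(\mathcal X_{t_k})$, so I plan to bound $D_{\mathrm{KL}}(A_k\|Q_k)$ rather than $D_{\mathrm{KL}}(A_k\|K_k)$.

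To do this, I would apply convexity \eqref{si-core-E1} to the mixture over the past-conditional residual, comparing each Gaussian component $N(x+hz,q_k)$ with $N(x+h\widehat b,2\varepsilon h_k)$. The exact Gaussian KL is
\[
\frac d2\bigl[-\eta-\log(1-\eta)\bigr]+\frac{h^2\|z-\widehat b\|^2}{2\cdot2\varepsilon h}.
\]
Averaging over $z$ under $R_N$ gives the conditional second moment of $D-\widehat b_k(\mathcal X_{t_k})$. This splits orthogonally as $\mathbb E\|V_{t_k}\|^2+\mathbb E\|b_{t_k}-\widehat b_k\|^2$, because $D-b$ is orthogonal to functions of $\mathcal X_{t_k}$. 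Multiplying by $h_k/(4\varepsilon)$ and summing over $k$ gives exactly \eqref{si-core-original-upper}, whose right side is finite by $M_2<\infty$ and \eqref{si-core-T0}.

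Finiteness then propagates through the decomposition. The second term of \eqref{si-core-path-pythagoras} is finite by Proposition~\ref{si-core-exact-entropy}, and the left side is finite by the upper bound just obtained. The log-ratios are then integrable, so the nonnegative first term is finite and the split is legitimate; equivalently, the left side being finite yields the identity directly.

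The main obstacle is the measure-theoretic justification of $\int A_k\,dR_N(\cdot\mid\mathcal X_{t_k})=K_k$ and of the log-ratio split with only the integrability available. I would handle the latter by applying the identity $\mathbb E\log(dA/dQ)=\mathbb E\log(dA/dK)+\mathbb E\log(dK/dQ)$ to nonnegative-part decompositions, or by noting that $\log(dK/dQ)$ is integrable under the pair law, which is precisely the finiteness of the corresponding summand of \eqref{si-core-path-KL}.
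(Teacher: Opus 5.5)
Your proposal is correct and follows essentially the paper's route. The bound \eqref{si-core-original-upper} comes from the endpoint-conditioned Gaussian bridge step: your per-step convexity over the past-conditional residual is the paper's endpoint augmentation plus data processing, and the orthogonal split into $\mathbb E\|V_{t_k}\|^2+\mathbb E\|b_{t_k}-\widehat b_k\|^2$ is the same. The identity \eqref{si-core-path-pythagoras} comes from the barycenter (compensation) chain rule with $K_k$, whose terms are made finite by that bound; the paper merely proves the bound first and the decomposition second, which is immaterial.
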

\begin{proof}
Augment both path laws by the same endpoint pair $(X_0,X_1)\sim\pi$;
the Euler transitions ignore $X_1$. Conditional on the endpoints, the
bridge transition has mean $\mathcal X_t+hD_t$ and variance $qI_d$.
Gaussian relative entropy and the finite chain rule give the augmented
entropy
\[
 \sum_kc_d(\eta_k)+\frac1{4\varepsilon}\sum_kh_k
    \mathbb E\|D_{t_k}-\widehat b_k(\mathcal X_{t_k})\|^2.
\]
Conditional orthogonality splits the last expectation into
$\mathbb E\|V_{t_k}\|^2+e_k^2$, where
$e_k^2=\int\|b_{t_k}-\widehat b_k\|^2\,d\rho_{t_k}$.
It is summable because
$\mathbb E\|V_t\|^2\le M_2+2\varepsilon dt/(1-t)$ and
$\mathcal E<\infty$. Data processing proves
\eqref{si-core-original-upper} and finiteness.

The adjacent kernels of $R_N$ are $K_k$, as proved above, and its
adjacent-pair laws agree with those of $P_N^\star$.
Let $A_k=\operatorname{Law}(\mathcal X_{t_{k+1}}
\mid\mathcal X_{t_0},\ldots,\mathcal X_{t_k})$.
Conditioning on the current coordinate, the relative-entropy chain rule
with barycenter $K_k$ gives
\[
 \mathbb E_{R_N}D_{\mathrm{KL}}(A_k\|Q_k)
 =\mathbb E_{R_N}D_{\mathrm{KL}}(A_k\|K_k)
  +\int D_{\mathrm{KL}}(K_k(x)\|Q_k(x))\rho_{t_k}(dx).
\]
For example, this follows by factoring the conditional joint density
of past and next state through the product of its marginals; the two
log ratios integrate to conditional mutual information and the marginal
relative entropy. This is an identity of nonnegative extended values.
Here the finite left sides established above make every term finite.
Sum over $k$ and apply the chain rule to obtain
\eqref{si-core-path-pythagoras}; the $k=0$ defect is zero.
\end{proof}

\subsection{Controlling entropy before conditional centering}
\label{si-uncentered-proof}
\begin{sicorelemma}[Centering without a fourth-moment constant]\label{si-bnd-lem:raw-four}
Let $W\in L^4(\mathbb R^d)$, let $A$ be a standard Borel random element, and
let $G$ be independent standard Gaussian. For every $q>0$,
\begin{equation}\label{si-bnd-eq:raw-four}
 \mathbb E D_{\mathrm{KL}}(\operatorname{Law}(W-\mathbb E[W\mid A]+\sqrt qG\mid A)\|N(0,qI_d))
 \le\frac{\mathbb E\|W\|^4}{4q^2}.
\end{equation}
In particular, the fourth moment on the right is of the uncentered input.
\end{sicorelemma}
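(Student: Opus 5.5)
The plan is to avoid the conditional-centering constant $16$ from Lemma~\ref{si-core-Phi} and instead reproduce the I--MMSE argument of Lemma~\ref{si-core-ENT4} conditional-law by conditional-law. The key point is that the second-derivative bound there depends only on a \emph{posterior} second moment, and that moment is unchanged if the input is translated by a constant.

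First, I would disintegrate. Since $A$ is standard Borel, there is a regular conditional law $\nu_a=\operatorname{Law}(W\mid A=a)$. For almost every $a$ this law has a finite fourth moment, with $\int\mathbb E_{\nu_a}\|W\|^4\,\operatorname{Law}(A)(da)=\mathbb E\|W\|^4$. Write $m_a$ for its mean, which agrees with $\mathbb E[W\mid A=a]$ almost surely. Measurability of $a\mapsto D_{\mathrm{KL}}(\cdot)$ and the use of Tonelli are handled exactly as at the end of the proof of Lemma~\ref{si-core-Phi}. It therefore suffices to show, for a single law $\nu$ on $\mathbb R^d$ with finite fourth moment and mean $m$, that
\[
D_{\mathrm{KL}}(\operatorname{Law}(W-m+\sqrt qG)\|\gamma_q)\le\frac{\mathbb E_\nu\|W\|^4}{4q^2}.
\]

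Second, I would prove this single-law bound by rerunning Lemma~\ref{si-core-ENT4} with $V=W-m$. That argument gives $F(0)=F'(0+)=0$ and $F''(s)=\tfrac12\mathbb E\|C_{\sqrt s}(Y_{\sqrt s})\|_F^2$, where $C_\lambda$ is the posterior covariance of $V$ given $Y_\lambda=\lambda V+G$. A posterior covariance is invariant under translating the input: $\operatorname{Cov}(V\mid Y_\lambda)=\operatorname{Cov}(W\mid Y_\lambda)$, because $V$ and $W$ differ by a constant and $Y_\lambda$ generates the same $\sigma$-algebra either way. Bounding the covariance by the conditional second moment about the origin then gives
\[
\|C\|_F^2\le(\operatorname{tr}C)^2\le(\mathbb E[\|W\|^2\mid Y_\lambda])^2\le\mathbb E[\|W\|^4\mid Y_\lambda].
\]
The middle step uses $\operatorname{tr}\operatorname{Cov}(W\mid\cdot)\le\mathbb E[\|W\|^2\mid\cdot]$. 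So $F''(s)\le\tfrac12\mathbb E\|W\|^4$, and integrating twice gives $F(s)\le s^2\mathbb E\|W\|^4/4$. Setting $s=1/q$ and rescaling proves the single-law bound. This is exactly where the uncentered fourth moment enters without any loss of constant.

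Third, I would extend from bounded to general inputs by truncation. The main obstacle is that the identities in Lemma~\ref{si-core-ENT4} were justified only for bounded centered inputs. I would truncate $W$, not $V$: set $\widetilde W_n=W\mathbf 1_{\{\|W\|\le n\}}$ and $V_n=\widetilde W_n-\mathbb E\widetilde W_n$. The bounded case then gives
\[
D_{\mathrm{KL}}(V_n+\sqrt qG\|\gamma_q)\le\frac{\mathbb E\|\widetilde W_n\|^4}{4q^2}\le\frac{\mathbb E\|W\|^4}{4q^2}.
\]
Moreover, $V_n\to W-m$ in $L^4$ by dominated convergence. Weak convergence of laws together with lower semicontinuity of relative entropy, as in \eqref{si-core-E10}, then passes the bound to the limit. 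Finally, integrating over $a$ gives \eqref{si-bnd-eq:raw-four}.
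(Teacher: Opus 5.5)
Your proposal is correct and follows the paper's proof essentially step for step. Like the paper, you use translation invariance of the posterior covariance in the bounded I--MMSE calculation of Lemma~\ref{si-core-ENT4}, truncate the uncentered $W$ before recentering and pass to the limit by lower semicontinuity, and apply the bound to almost every regular conditional law given $A$ with Tonelli; you simply disintegrate first rather than last. The only other difference is cosmetic: you bound $\|C\|_F^2\le(\operatorname{tr}C)^2\le(\mathbb E[\|W\|^2\mid Y])^2$, whereas the paper uses $0\preceq C_s\preceq M_s=\mathbb E[WW^\top\mid\cdot]$ and $\operatorname{tr}(C_s^2)\le\operatorname{tr}(M_s^2)$, and both give the same constant.
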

\begin{proof}
First let $W$ have a bounded, nonconditional law, write $m=\mathbb E W$, and put
\[
 F(s)=D_{\mathrm{KL}}(\operatorname{Law}(\sqrt s(W-m)+G)\|N(0,I_d)),\quad s\ge0.
\]
The bounded-input calculation in Lemma~\ref{si-core-ENT4} gives $F(0)=F'(0)=0$ and
\[
 F''(s)=\tfrac12\mathbb E\|C_s\|_F^2,\qquad
 C_s=\operatorname{Cov}(W\mid\sqrt sW+G).
\]
The deterministic shift by $\sqrt s m$ does not change the posterior
covariance. Pointwise in the observation, put
$M_s=\mathbb E[WW^\top\mid\sqrt sW+G]$. Then $0\preceq C_s\preceq M_s$,
and
\[
 \operatorname{tr}(C_s^2)\le\operatorname{tr}(M_s^2)
 \le\mathbb E[\|WW^\top\|_F^2\mid\sqrt sW+G]
 =\mathbb E[\|W\|^4\mid\sqrt sW+G].
\]
The first inequality is valid for positive semidefinite matrices:
$\operatorname{tr}(M^2)-\operatorname{tr}(C^2)=\operatorname{tr}((M-C)M)+\operatorname{tr}(C(M-C))\ge0$.
The second is conditional Jensen for the squared Hilbert--Schmidt norm.
Thus $F''(s)\le\mathbb E\|W\|^4/2$, and two integrations give
$F(s)\le s^2\mathbb E\|W\|^4/4$. Set $s=1/q$ and rescale.

For unbounded $W\in L^4$, use $W_n=W\mathbf1_{\{\|W\|\le n\}}$.
Then $W_n-\mathbb E W_n\to W-\mathbb E W$ in $L^2$, and hence in distribution after
adding $G$. Relative entropy is lower semicontinuous under this weak
convergence with a fixed Gaussian reference; also
$\mathbb E\|W_n\|^4\uparrow\mathbb E\|W\|^4$. This proves the nonconditional claim.
Apply it to almost every regular conditional law given $A$, whose fourth
moment is finite almost surely, and average by Tonelli. No moment of $A$
is used.
\end{proof}

\begin{sicorelemma}[Uncentered-input tail bound]\label{si-bnd-lem:raw-tail}
For any $W\in L^2(\mathbb R^d)$ and $m=\mathbb E W$, define
$\Psi(w)=\|w\|^2\min(\|w\|^2,1)$. Then
\begin{equation}\label{si-bnd-eq:raw-tail}
 D_{\mathrm{KL}}(\operatorname{Law}(W-m+G)\|N(0,I_d))\le 2\mathbb E\Psi(W).
\end{equation}
Consequently, with $m_A=\mathbb E[W\mid A]$ and
$\Phi_{\mathrm{ent}}(w)=\|w\|^4/(1+\|w\|^2)$,
\begin{equation}\label{si-bnd-eq:raw-phi}
 \mathbb E D_{\mathrm{KL}}(\operatorname{Law}(W-m_A+\sqrt qG\mid A)\|N(0,qI_d))
 \le2\mathbb E\Psi(W/\sqrt q)\le4\mathbb E\Phi_{\mathrm{ent}}(W/\sqrt q).
\end{equation}
\end{sicorelemma}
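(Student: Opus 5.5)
We follow the splitting argument of Lemma~\ref{si-core-ENT2}, with two changes. The bounded part is handled by Lemma~\ref{si-bnd-lem:raw-four}, whose constant $1/4$ involves the \emph{uncentered} fourth moment. The large part is handled by a variance bound that recentres at the mean of the small part rather than at zero.

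\textbf{Setup.} Put $H=\{\|W\|\le1\}$ and $J=\{\|W\|>1\}$, with $\alpha=\mathbb P(H)$ and $\beta=\mathbb P(J)$. Set $A=\mathbb E[\|W\|^4;H]$ and $B=\mathbb E[\|W\|^2;J]$, so that $\mathbb E\Psi(W)=A+B$ and $\beta\le B$. If $\beta=0$, then $W$ is bounded and Lemma~\ref{si-bnd-lem:raw-four} gives $A/4$ directly. If $\alpha=0$, the mixture bound \eqref{si-core-E1} applied to $W-m$ gives $\mathbb E\|W-m\|^2/2\le B/2$.

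\textbf{Main case $0<\alpha,\beta$.} Let $W_H,W_J$ have the conditional laws, with means $\mu_H,\mu_J$, so that $m=\alpha\mu_H+\beta\mu_J$. Convexity of relative entropy in its first argument gives
\[
D_{\mathrm{KL}}(W-m+G\|G)\le\alpha D_{\mathrm{KL}}(W_H-m+G\|G)+\beta D_{\mathrm{KL}}(W_J-m+G\|G).
\]

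For the $H$ term, write $W_H-m=(W_H-\mu_H)+(\mu_H-m)$. The translation identity \eqref{si-core-E2} splits off $\|\mu_H-m\|^2/2$. Lemma~\ref{si-bnd-lem:raw-four} then bounds the centred part by $\mathbb E\|W_H\|^4/4=A/(4\alpha)$. The centring is exactly what that lemma allows without the factor $16$ of \eqref{si-core-E13}.

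For the $J$ term, use the coarse bound \eqref{si-core-E1}, which gives $\mathbb E\|W_J-m\|^2/2$.

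\textbf{Combining the remainders.} The leftover is
\[
\tfrac12\big(\alpha\|\mu_H-m\|^2+\beta\,\mathbb E\|W_J-m\|^2\big)=\tfrac12\operatorname{Var}(\widetilde W),
\]
where $\widetilde W$ equals $\mu_H$ on $H$ and $W$ on $J$; its mean is $m$. This combination is the step we expect to need care: recentring naively at zero would leave the term $\alpha\|\mu_H\|^2$, which is not controlled by $\Psi$. The fix is to use the variance inequality with the centre $\mu_H$:
\[
\operatorname{Var}(\widetilde W)\le\mathbb E\|\widetilde W-\mu_H\|^2=\beta\,\mathbb E\|W_J-\mu_H\|^2\le2B+2\beta\|\mu_H\|^2\le4B,
\]
using $\|\mu_H\|\le1$ and $\beta\le B$. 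Altogether,
\[
D_{\mathrm{KL}}(W-m+G\|G)\le\frac A4+2B\le2\,\mathbb E\Psi(W),
\]
which is \eqref{si-bnd-eq:raw-tail}.

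\textbf{Conditional statement.} For \eqref{si-bnd-eq:raw-phi}, apply \eqref{si-bnd-eq:raw-tail} to almost every regular conditional law of $W/\sqrt q$ given $A$. Those laws have finite second moment almost surely. Relative entropy is invariant under the scaling by $\sqrt q$, which converts the reference $N(0,I_d)$ into $N(0,qI_d)$. Measurability in the conditioning variable and the Tonelli averaging follow exactly as in the proof of Lemma~\ref{si-core-Phi}. The last inequality is the pointwise bound $\Psi\le2\Phi_{\mathrm{ent}}$ from \eqref{si-core-E15}.
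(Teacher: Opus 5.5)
Your proposal is correct and is essentially the paper's proof. It uses the same head/tail split, Lemma~\ref{si-bnd-lem:raw-four} with the uncentered fourth moment on the head, the coarse bound \eqref{si-core-E1} on the tail, and reaches the same constant $A/4+2B$. The only difference is bookkeeping: your $\tfrac12\operatorname{Var}(\widetilde W)$ is literally the paper's leftover $\tfrac12(\alpha\|a\|^2+B_2-\|\alpha a+\beta b\|^2)$, which the paper bounds by expanding the square rather than by taking the variance about $\mu_H$.
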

\begin{proof}
Let $H=\{\|W\|\le1\}$, $J=H^c$, $\alpha=P(H)$, $\beta=P(J)$,
$A_4=\mathbb E[\|W\|^4;H]$, $B_2=\mathbb E[\|W\|^2;J]$.
Suppose first $0<\alpha,\beta<1$, and let $W_H,W_J$ have the conditional
laws with means $a,b$. The Gaussian reference translation identity and
mixture convexity, both with finite entropies, give
\begin{align*}
 D_{\mathrm{KL}}(\operatorname{Law}(W-m+G)\|\gamma)
 &\le\alpha D_{\mathrm{KL}}(\operatorname{Law}(W_H+G)\|\gamma)
      +\beta D_{\mathrm{KL}}(\operatorname{Law}(W_J+G)\|\gamma)-\tfrac12\|m\|^2\\
 &\le\tfrac14 A_4+\tfrac12\alpha\|a\|^2
      +\tfrac12 B_2-\tfrac12\|\alpha a+\beta b\|^2.
\end{align*}
Here the head uses Lemma~\ref{si-bnd-lem:raw-four} and the exact mean translation;
the tail uses the Gaussian mixture second-moment bound.
Since $\|a\|\le1$, $\beta\le B_2$, and
$\|b\|\le\sqrt{B_2/\beta}$,
\begin{align*}
 \alpha\|a\|^2-\|\alpha a+\beta b\|^2
 &=\alpha\beta\|a\|^2-2\alpha\beta a\cdot b-\beta^2\|b\|^2\\
 &\le\alpha\beta+2\alpha\sqrt{\beta B_2}\le3B_2.
\end{align*}
The claimed bound follows from $A_4/4+2B_2\le2(A_4+B_2)$.
If $\beta=0$, Lemma~\ref{si-bnd-lem:raw-four} gives $A_4/4$.
If $\alpha=0$, the coarse centered mixture bound gives at most $B_2/2$.
Scale by $q^{-1/2}$ and apply the argument to almost every conditional
law given $A$. Finally $\Psi\le2\Phi_{\mathrm{ent}}$ pointwise proves the last inequality.
\end{proof}

\section{Proof of the sampling theorem and explicit constants}
\label{si-core-T1-proof}
\subsection{The general-grid fourth-moment estimate}
Assume $\max_k\eta_k\le a_*<1$ and $M_4<\infty$. Then
\begin{equation}\label{si-core-T1-4}
\begin{aligned}
D_{\mathrm{KL}}(\rho_T\|Q_N)
\le{}&\frac{\mathcal E}{4\varepsilon}
+\frac{M_4}{2\varepsilon^2(1-a_*)^2}\sum_kh_k^2\\
&+\frac{2d(d+2)}{(1-a_*)^2}\sum_k\eta_k^2t_k^2
+\frac d{4(1-a_*)}\sum_k\eta_k^2.
\end{aligned}
\end{equation}
To prove it, set $A_{t,h}=hD_t$ and
$V_{t,h}=A_{t,h}-\mathbb E[A_{t,h}\mid X_t]=hV_t$ on the single-time space.
Apply the raw-input Lemma~\ref{si-bnd-lem:raw-four} conditionally on
$X_t$. The fourth-power sum bound gives
\[
\begin{aligned}
\mathbb E D_{\mathrm{KL}}(P_{X_t}\|\gamma_q)
&\le\frac{\mathbb E\|A_{t,h}\|^4}{4q^2}\\
&\le\frac{8h^4}{4q^2}\{M_4+c_t^4d(d+2)\}.
\end{aligned}
\]
Substituting $q=2\varepsilon h(1-\eta)$ and
$c_t^4=4\varepsilon^2t^2/(1-t)^2$ yields
\begin{equation}\label{si-core-T4-step}
\mathbb E D_{\mathrm{KL}}(P_{X_t}\|\gamma_q)
\le\frac{M_4h^2}{2\varepsilon^2(1-\eta)^2}
+\frac{2d(d+2)\eta^2t^2}{(1-\eta)^2}.
\end{equation}
Moreover,
\[
-\eta-\log(1-\eta)=\int_0^\eta\frac u{1-u}\,du
\le\frac{\eta^2}{2(1-a_*)}.
\]
Insert these bounds in \eqref{si-core-K5}, discard its nonpositive variance
term, and use \eqref{si-core-path-KL}--\eqref{si-core-K7} to obtain
\eqref{si-core-T1-4}.

\subsection{Conditional centering and geometric sums}
The conditional entropy and centering estimates imply
\begin{equation}\label{si-core-T-Phi}
\begin{aligned}
\mathbb E D_{\mathrm{KL}}(P_{X_t}\|\gamma_q)
&\le4\mathbb E\Phi_{\mathrm{ent}}(A_{t,h}/\sqrt q)\\
&\le32\left\{\mathbb E\Phi_{\mathrm{ent}}(hU/\sqrt q)
+\mathbb E\Phi_{\mathrm{ent}}(-hc_tZ/\sqrt q)\right\}.
\end{aligned}
\end{equation}
The first inequality is Lemma~\ref{si-bnd-lem:raw-tail}; the second uses $\Phi_{\mathrm{ent}}(u+v)\le8(\Phi_{\mathrm{ent}}(u)+\Phi_{\mathrm{ent}}(v))$. This calculation does not assert that
$V_{t,h}$ is independent of $X_t$; it is the new Gaussian $G$ that is independent.
For $2<p\le4$ put $\beta_p=p/2$. Use the $p$th moment only for the
displacement term and retain the fourth moment for the Gaussian term:
\[
\mathbb E\Phi_{\mathrm{ent}}(hU/\sqrt q)
\le\frac{M_ph^{\beta_p}}
{(2\varepsilon)^{\beta_p}(1-\eta)^{\beta_p}},
\]
\begin{equation}\label{si-core-TG-noise}
\mathbb E\Phi_{\mathrm{ent}}(-hc_tZ/\sqrt q)
\le\frac{h^4c_t^4}{q^2}\mathbb E\|Z\|^4
=\frac{\eta^2t^2}{(1-\eta)^2}d(d+2).
\end{equation}

\begin{sicorecorollary}[Displacement moments on an arbitrary grid]
\label{si-core-arbitrary-mp}
Under the setting and sampling protocol of Theorem~\ref{si-core-theorem-T1},
suppose $2<p\le4$ and $M_p<\infty$, and put $\beta_p=p/2$.
For every deterministic grid $0=t_0<\cdots<t_N=T<1$,
\begin{equation}\label{si-core-T1-p-grid}
\begin{aligned}
D_{\mathrm{KL}}(\rho_T\|Q_N)
\le{}&\frac{\mathcal E}{4\varepsilon}
 +\frac{32M_p}{(2\varepsilon)^{\beta_p}}
       \sum_k\frac{h_k^{\beta_p}}{(1-\eta_k)^{\beta_p}}\\
&+32d(d+2)\sum_k\frac{\eta_k^2t_k^2}{(1-\eta_k)^2}
 +\sum_k c_d(\eta_k),
\end{aligned}
\end{equation}
where $c_d(\eta)=\tfrac d2[-\eta-\log(1-\eta)]$.
Only $\eta_k<1$ is required. With $a_*=\max_k\eta_k<1$,
one may replace each denominator by the corresponding power of $1-a_*$
and bound the final sum by $d\sum_k\eta_k^2/[4(1-a_*)]$.
\end{sicorecorollary}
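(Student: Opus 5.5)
The plan follows the same three-step architecture as \eqref{si-core-T1-4}, with the tail estimate of Section~\ref{si-uncentered-proof} in place of the fourth-moment one. First, by Proposition~\ref{si-core-exact-entropy}, the terminal KL is bounded by the path KL \eqref{si-core-path-KL}. We then sum the one-step identity \eqref{si-core-K5} against $\rho_{t_k}$ and discard its nonpositive variance term. This bounds $D_{\mathrm{KL}}(\rho_T\|Q_N)$ by
\[
\frac{\mathcal E}{4\varepsilon}+\sum_k c_d(\eta_k)+\sum_k\mathbb E D_{\mathrm{KL}}(P_{X_{t_k}}\|\gamma_{q_k}).
\]
The learning coefficient is exactly $h/(4\varepsilon)$, which yields $\mathcal E/(4\varepsilon)$. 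All terms are finite, so no grid-specific assumption beyond $\eta_k<1$ is needed.

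Second, each conditional entropy term is handled by the chain \eqref{si-core-T-Phi}. Apply Lemma~\ref{si-bnd-lem:raw-tail} with $W=A_{t_k,h_k}=h_kD_{t_k}$, $A=X_{t_k}$ and variance $q_k$. This gives the bound $4\mathbb E\Phi_{\mathrm{ent}}(h_kD_{t_k}/\sqrt{q_k})$. Note that the centering is by the conditional mean given $X_{t_k}$, which is exactly $h_kV_{t_k}$, so no independence of $V$ from $X$ is used. Next split $D_{t_k}=U-c_{t_k}Z$ via \eqref{si-core-E16}, which costs the factor $8$ and produces the overall constant $32$.

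For the displacement part, use $\Phi_{\mathrm{ent}}(z)\le\|z\|^p$ for $2\le p\le4$ (end of Lemma~\ref{si-core-Phi}) with $q_k=2\varepsilon h_k(1-\eta_k)$. This gives
\[
M_p h_k^{p}/q_k^{p/2}=M_p h_k^{\beta_p}/\bigl((2\varepsilon)^{\beta_p}(1-\eta_k)^{\beta_p}\bigr).
\]
For the Gaussian part, use $\Phi_{\mathrm{ent}}(z)\le\|z\|^4$, $\mathbb E\|Z\|^4=d(d+2)$ and $c_t^4=4\varepsilon^2t^2/(1-t)^2$, giving \eqref{si-core-TG-noise}:
\[
h_k^4c_{t_k}^4d(d+2)/q_k^2=\eta_k^2t_k^2d(d+2)/(1-\eta_k)^2,
\]
since $h_k/(1-t_k)=\eta_k$. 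Summing over $k$ yields \eqref{si-core-T1-p-grid}.

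Third, for the simplified form with $a_*=\max_k\eta_k<1$, monotonicity of $(1-\eta)^{-\beta_p}$ replaces each denominator by the same power of $1-a_*$. The final sum follows from $-\eta-\log(1-\eta)=\int_0^\eta u/(1-u)\,du\le \eta^2/(2(1-a_*))$, so $c_d(\eta_k)\le d\eta_k^2/(4(1-a_*))$.

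The only delicate step is the measurability and averaging of the conditional entropies at the initial step $t_0=0$, where $\rho_0$ may be singular and $c_0=0$. I expect this to be harmless. Lemma~\ref{si-bnd-lem:raw-tail} is stated for an arbitrary standard Borel conditioning element $A$, and the kernel construction of Proposition~\ref{si-core-kernel-preserve} already covers $t=0$. So the main work is simply checking that the $p$-moment and fourth-moment substitutions are applied to the uncentered $U$ and $Z$ parts respectively, with no hidden use of $M_4$ for the displacement.
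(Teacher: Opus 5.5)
Your proposal is correct and follows the paper's own proof essentially step for step. Both apply the raw-input tail bound (Lemma~\ref{si-bnd-lem:raw-tail}) to $h_kD_{t_k}$ conditionally on $X_{t_k}$, split with \eqref{si-core-E16}, and bound the displacement part by the $p$th power and the Gaussian part by \eqref{si-core-TG-noise}; these go into \eqref{si-core-K5} with the nonpositive variance term discarded, followed by the path chain rule, terminal projection, and the integral bound on $-\eta-\log(1-\eta)$ for the $a_*$ envelope, and your computations of both grid sums, including $h_k^p/q_k^{p/2}=h_k^{\beta_p}/[(2\varepsilon)^{\beta_p}(1-\eta_k)^{\beta_p}]$, are correct.
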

\begin{proof}
Apply \eqref{si-core-T-Phi} at each grid point, use
$\Phi_{\mathrm{ent}}(z)\le\|z\|^p$ on the displacement term, and use
\eqref{si-core-TG-noise} on the Gaussian term.
Insert these estimates in \eqref{si-core-K5}, discard its nonpositive
residual-variance term, and apply the path chain rule and terminal
projection. All terms are finite by $M_p<\infty$ and
$\mathcal E<\infty$. Finally,
$-\eta-\log(1-\eta)=\int_0^\eta u/(1-u)\,du
\le\eta^2/[2(1-a_*)]$ gives the stated envelope.
\end{proof}

For the geometric grid \eqref{si-core-TG}, direct summation gives
\[
\sum_kh_k^{\beta_p}
=\frac{a_{\mathrm{geo}}^{\beta_p}(1-\delta^{\beta_p})}
 {1-(1-a_{\mathrm{geo}})^{\beta_p}}
\le a_{\mathrm{geo}}^{\beta_p-1},
\]
because $(1-a_{\mathrm{geo}})^{\beta_p}\le1-a_{\mathrm{geo}}$.
The remaining identities and bounds are
\begin{equation}\label{si-core-TS}
\begin{gathered}
\sum_kh_k^2=\frac{a_{\mathrm{geo}}(1-\delta^2)}{2-a_{\mathrm{geo}}},
\qquad \sum_kh_k=1-\delta\le1,\\
\sum_k\eta_k^2=Na_{\mathrm{geo}}^2\le a_{\mathrm{geo}}L_\delta,
\qquad a_{\mathrm{geo}}\le L_\delta/N.
\end{gathered}
\end{equation}
The last two inequalities use
$a_{\mathrm{geo}}\le-\log(1-a_{\mathrm{geo}})=L_\delta/N$.
Summing \eqref{si-core-T-Phi}--\eqref{si-core-TG-noise}, using $t_k^2\le1$,
and adding the nonnegative Gaussian variance penalty from
\eqref{si-core-K5} yields
\begin{equation}\label{si-core-T1-p-exact}
\begin{aligned}
D_{\mathrm{KL}}(\rho_T\|Q_N)
\le{}&\frac{\mathcal E}{4\varepsilon}
+\frac{32M_p}{(2\varepsilon)^{\beta_p}(1-a_{\mathrm{geo}})^{\beta_p}}
 a_{\mathrm{geo}}^{\beta_p-1}\\
&+\left[\frac{32d(d+2)}{(1-a_{\mathrm{geo}})^2}
 +\frac d{4(1-a_{\mathrm{geo}})}\right]a_{\mathrm{geo}}L_\delta.
\end{aligned}
\end{equation}
If $N\ge L_\delta/\log2$, then $a_{\mathrm{geo}}\le1/2$, and
\[
\frac{32}{2^{\beta_p}(1-a_{\mathrm{geo}})^{\beta_p}}\le32,
\qquad
\frac{32d(d+2)}{(1-a_{\mathrm{geo}})^2}+\frac d{4(1-a_{\mathrm{geo}})}
\le129d(d+2).
\]
Together with \eqref{si-core-TS}, these prove \eqref{si-core-T1-p-rate}.
At $p=4$, the raw fourth-moment estimate \eqref{si-core-T1-4}
gives the sharper bound
\begin{equation}\label{si-core-T1-4-rate}
D_{\mathrm{KL}}(\rho_T\|Q_N)
\le\frac{\mathcal E}{4\varepsilon}
+2\frac{M_4}{\varepsilon^2}\frac{L_\delta}{N}
+\left(8d(d+2)+\frac d2\right)\frac{L_\delta^2}{N}
\qquad (a_{\mathrm{geo}}\le1/2).
\end{equation}
Indeed, $(1-a_{\mathrm{geo}})^{-2}\le4$,
$\sum h_k^2\le a_{\mathrm{geo}}$, and
$\sum\eta_k^2t_k^2\le Na_{\mathrm{geo}}^2\le L_\delta^2/N$
give the coefficients $2$, $8d(d+2)$, and $d/2$ separately.

\subsection{The second-moment tail modulus and the original endpoint}
Write $C_a=2\varepsilon(1-a_{\mathrm{geo}})$ and $u=\|U\|^2$.
On the geometric grid,
\[
\Phi_{\mathrm{ent}}(h_kU/\sqrt{q_k})
=\frac{h_ku}{C_a}\frac{h_ku}{C_a+h_ku}.
\]
Since $h_k\le a_{\mathrm{geo}}$ and $s\mapsto su/(C_a+su)$ is increasing,
\[
\begin{aligned}
\sum_k\mathbb E\Phi_{\mathrm{ent}}(h_kU/\sqrt{q_k})
&\le\frac{\sum_kh_k}{C_a}
 \mathbb E\left[u\frac{a_{\mathrm{geo}}u}{C_a+a_{\mathrm{geo}}u}\right]\\
&\le\frac1{2\varepsilon(1-a_{\mathrm{geo}})}
 \mathbb E\left[u\frac{a_{\mathrm{geo}}u}
 {2\varepsilon(1-a_{\mathrm{geo}})+a_{\mathrm{geo}}u}\right].
\end{aligned}
\]
Multiplication by 32 in \eqref{si-core-T-Phi} gives the coefficient
$16/[\varepsilon(1-a_{\mathrm{geo}})]$. The Gaussian part is still bounded
by \eqref{si-core-TG-noise} and \eqref{si-core-TS}. Adding the same
nonnegative Gaussian variance penalty proves \eqref{si-core-T1-M2}.
All these estimates precede terminal projection: its right side also
bounds $D_{\mathrm{KL}}(P_N^\star\|Q_N^{\mathrm{path}})$.

For a fixed coupling and $a_{\mathrm{geo},N}\to0$,
\[
0\le u\frac{a_{\mathrm{geo},N}u}
{2\varepsilon(1-a_{\mathrm{geo},N})+a_{\mathrm{geo},N}u}\le u\in L^1,
\]
and the integrand converges pointwise to zero. Dominated convergence,
$a_{\mathrm{geo},N}L_{\delta_N}\to0$, and $\mathcal E_N\to0$ prove the
entropy limit. The domination is for that fixed law, not uniformly over its
second-moment ball.

If also $\delta_N\to0$, Pinsker's inequality gives, for every admissible
bounded Lipschitz test function $f$,
\[
\left|\mathbb E_{Q_N}f-\mathbb E_{\rho_{T_N}}f\right|
\le\sqrt{2D_{\mathrm{KL}}(\rho_{T_N}\|Q_N)}.
\]
Using the same endpoints and an independent Gaussian in the direct coupling,
\[
X_{T_N}-X_1=-\delta_NU+\sqrt{2\varepsilon T_N\delta_N}Z,
\qquad
\mathbb E\|X_{T_N}-X_1\|^2
=\delta_N^2M_2+2\varepsilon dT_N\delta_N.
\]
Cauchy--Schwarz and the Lipschitz bound give \eqref{si-core-TBL}.
For $L_{\delta_N}=N^\kappa$, $0<\kappa<1/2$, one has
$a_{\mathrm{geo},N}\le N^{\kappa-1}\to0$ and
$a_{\mathrm{geo},N}L_{\delta_N}\le N^{2\kappa-1}\to0$; the step-count
threshold is eventually satisfied. This proves every part of Theorem~\ref{si-core-theorem-T1}.

\section{Residual estimates and additional sampler guarantees}
Use the setting of Section~\ref{si-core-setting}, with the original
bridge \eqref{si-core-original-bridge} for joint observations and
$V_t=D_t-b_t(X_t)$. For an arbitrary grid
$0=t_0<\cdots<t_N=T<1$, write
\[
 r_k=1-t_k,\quad h_k=t_{k+1}-t_k,\quad \eta_k=h_k/r_k,\quad
 q_k=2\varepsilon h_k(1-\eta_k),\quad v_k=2\varepsilon h_k,
 \quad\delta=1-T.
\]
Then $0<\eta_k<1$ and $q_k>0$. Recall
$\Phi_{\mathrm{ent}}(z)=\|z\|^4/(1+\|z\|^2)$ and
$c_d(\eta)=d[-\eta-\log(1-\eta)]/2$.

\begin{sicorelemma}[Finite-energy Gaussian entropy tools]\label{si-if-lem:tools}
Let $A$ be a random element in a standard Borel space, $V\in L^2$ satisfy
$\mathbb E[V\mid A]=0$, and $G\sim N(0,I_d)$ be independent of $(A,V)$.
For $q>0$, the averaged conditional entropy of $V+\sqrt qG$ relative to
$N(0,qI_d)$ obeys the tail bound \eqref{si-core-E18} and the conditional,
averaged forms of \eqref{si-core-E1} and, if $V\in L^4$, \eqref{si-core-E3}.
The modulus splitting and centering estimates are
\eqref{si-core-E16}--\eqref{si-core-E17}, the latter for $W\in L^2$.
No moment of $A$ is used.
For $\xi\in L^2$, independent $G$, and $s\ge0$, the Gaussian-channel
identity is
\begin{equation}\label{si-if-eq:immse}
 I(\xi;\sqrt s\xi+G)=\frac12\int_0^s m(u)\,du,
 \quad m(u)=\mathbb E\|\xi-\mathbb E[\xi\mid\sqrt u\xi+G]\|^2,
\end{equation}
where $m$ is nonincreasing. It applies to almost every conditional law,
and can then be averaged by Tonelli.
\end{sicorelemma}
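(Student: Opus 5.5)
The statement to prove is Lemma~\ref{si-if-lem:tools}. It collects earlier single-law estimates and restates them for regular conditional laws given $A$, together with the Gaussian-channel I--MMSE identity. The plan is to reduce every conditional claim to an unconditional one, applied law-by-law, and then average.

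\textbf{Step 1: disintegration and measurability.} Since $A$ takes values in a standard Borel space and $V\in L^2$, there is a regular conditional law $\kappa(a,\cdot)=\operatorname{Law}(V\mid A=a)$. On a set of full $\operatorname{Law}(A)$-measure this kernel is centered with finite second moment, and with finite fourth moment if $V\in L^4$. Because $G$ is independent of $(A,V)$, the conditional law of $V+\sqrt qG$ given $A=a$ is $\kappa(a,\cdot)*\gamma_q$. This has a jointly Borel positive density. Using $u\log u\ge-1/e$ exactly as in the proof of Lemma~\ref{si-core-Phi}, the map $a\mapsto D_{\mathrm{KL}}(\kappa(a,\cdot)*\gamma_q\|\gamma_q)$ is Borel and nonnegative, so Tonelli allows averaging.

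\textbf{Step 2: the entropy bounds.} For each good $a$, apply \eqref{si-core-E1}, Lemma~\ref{si-core-ENT4} (when $V\in L^4$), and Lemma~\ref{si-core-ENT2} followed by $\Psi\le2\Phi_{\mathrm{ent}}$ and rescaling by $q^{-1/2}$. Then integrate over $a$. Because each step uses only the single conditional law, no moment of $A$ enters. The second inequality in \eqref{si-core-E18}, passing from $V$ to a raw $W$, is the averaged centering bound \eqref{si-core-E17}. The splitting and centering estimates themselves are just \eqref{si-core-E16}--\eqref{si-core-E17}, already proved for $W\in L^2$ and any sub-$\sigma$-algebra, so nothing new is needed there.

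\textbf{Step 3: the I--MMSE identity, where the main work lies.} First take $\xi$ bounded. The bounded-input calculus in the proof of Lemma~\ref{si-core-ENT4} gives
\[
\frac{d}{ds}D_{\mathrm{KL}}\bigl(\operatorname{Law}(\sqrt s\xi+G)\,\|\,\gamma\bigr)=\tfrac12\,\mathbb E\bigl\|\mathbb E[\xi\mid Y_s]\bigr\|^2 ,
\]
with the mean not necessarily zero after a deterministic shift. Combine this with
\[
I(\xi;\sqrt s\xi+G)=\tfrac{s}{2}\mathbb E\|\xi\|^2-D_{\mathrm{KL}}\bigl(\operatorname{Law}(\sqrt s\xi+G)\,\|\,\gamma\bigr).
\]
This holds because the conditional law is $N(\sqrt s\xi,I)$, and it follows from the chain rule with finite terms, by \eqref{si-core-E1}. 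The combination gives $\tfrac{d}{ds}I=\tfrac12 m(s)$ and $I(0)=0$. Monotonicity of $m$ follows because $\sqrt{u'}\xi+G$ is, in law, a degraded version of $\sqrt u\xi+G$ for $u'<u$ (add independent noise and rescale), and conditioning on less information cannot reduce the MMSE.

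To pass to $\xi\in L^2$, truncate to $\xi_n=\xi\mathbf 1_{\{\|\xi\|\le n\}}$ and argue on each side separately:
\begin{itemize}
\item $m_n(u)\to m(u)$ for every $u$, since the MMSE is $L^2$-continuous in the input.
\item The bound $m_n\le\mathbb E\|\xi_n\|^2\le\mathbb E\|\xi\|^2$ lets dominated convergence handle the integral.
\item For the information side, lower semicontinuity gives one inequality. The other inequality comes from writing $I=\tfrac s2\mathbb E\|\xi\|^2-D_{\mathrm{KL}}$ and using lower semicontinuity of $D_{\mathrm{KL}}$ under the $L^2$ convergence $\sqrt s\xi_n+G\to\sqrt s\xi+G$.
\end{itemize}
The two opposite semicontinuities yield equality. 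The conditional version then follows by applying this to almost every regular conditional law and averaging by Tonelli, as in Step~1.
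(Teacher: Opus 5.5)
Your Steps~1--2 coincide with the paper's proof. Like the paper, you apply \eqref{si-core-E1}, Lemmas~\ref{si-core-ENT4} and~\ref{si-core-ENT2}, and \eqref{si-core-E16}--\eqref{si-core-E17} to almost every regular conditional law given $A$, obtain measurability from $u\log u\ge-1/e$, and average by Tonelli.

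The difference is in \eqref{si-if-eq:immse}. The paper does not derive it; it invokes the finite-second-moment I--MMSE identity of Guo--Shamai--Verd\'u and Reeves--Pfister and averages it by conditional Tonelli. You instead build it from four ingredients:
\begin{itemize}
\item the paper's own bounded-input calculus (the first-derivative formula in the proof of Lemma~\ref{si-core-ENT4} indeed never uses $\mathbb EV=0$);
\item the identity $I=\tfrac s2\mathbb E\|\xi\|^2-D_{\mathrm{KL}}(\operatorname{Law}(\sqrt s\xi+G)\|\gamma)$;
\item a degradation argument for monotonicity of $m$;
\item truncation with two opposite semicontinuities.
\end{itemize}
This route is correct and makes the lemma self-contained, at the cost of a limiting argument that the citation avoids.

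The only step you assert without proof is $m_n(u)\to m(u)$. It is elementary here because $Y_n=\sqrt u\xi_n+G$ and $Y=\sqrt u\xi+G$ coincide on $\{\|\xi\|\le n\}$. Let $\psi_n(y)=\mathbb E[\xi_n\mid Y_n=y]$, which is bounded by $n$. Then $\mathbb E\|\psi_n(Y_n)-\psi_n(Y)\|^2\le4n^2\mathbb P(\|\xi\|>n)\le4\mathbb E[\|\xi\|^2\mathbf 1_{\{\|\xi\|>n\}}]$. Since $\psi_n(Y)$ is an estimator based on $Y$, this gives $\sqrt{m(u)}\le\|\xi-\psi_n(Y)\|_{L^2}\le\|\xi-\xi_n\|_{L^2}+\sqrt{m_n(u)}+o(1)$, so $\liminf_n m_n(u)\ge m(u)$. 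For the reverse inequality, evaluate the radius-$K$ clipping of $\mathbb E[\xi\mid Y=\cdot]$ at $Y_n$: the same coincidence argument gives $\limsup_n\sqrt{m_n(u)}$ at most its error at $Y$, and then let $K\to\infty$.
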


\begin{proof}
Apply the entropy inequalities to the regular conditional law given $A$.
It is centered and has finite second moment almost surely; the
measurability and Tonelli argument following \eqref{si-core-E18}
justifies averaging on this standard Borel conditioning space.
The Gaussian-channel identity is the finite-second-moment I--MMSE
identity \citep{si-core-gsv2004,si-mech-rp2025}; conditional Tonelli
justifies averaging it. No moment of the conditioning variable is needed.
\end{proof}
\section{Borel protocols and the source-conditioned joint law}
\label{si-if-section-source}
Let $S=X_0$, $\mu=\operatorname{Law}(S)$, and
$\nu_t=\operatorname{Law}(S,X_t)$. Define the source-conditioned drift and residual by
\[
 b_t^{\mathrm{sc}}(X_t,S)=\mathbb E[D_t\mid X_t,S],\qquad
 V_t^{\mathrm{sc}}=D_t-b_t^{\mathrm{sc}}(X_t,S).
\]
All source-conditioned functions below use the argument order $(x,s)$.
Choose finite Borel $\widehat b_k^{\mathrm{sc}}:\mathbb R^d\times\mathbb R^d\to\mathbb R^d$,
retain $S\sim\mu$, set $Y_0=S$, and use
\begin{equation}\label{si-if-eq:src-em}
 Y_{k+1}=Y_k+h_k\widehat b_k^{\mathrm{sc}}(Y_k,S)
                +\sqrt{2\varepsilon h_k}G_k.
\end{equation}
The sampling Gaussians are independent of $S$ and of one another. Write
$Q_N^{\mathrm{sc}}=\operatorname{Law}(S,Y_N)$ and assume
\begin{equation}\label{si-if-eq:src-error}
 \mathcal E_{\mathrm{sc}}=
 \sum_k h_k\mathbb E\|\widehat b_k^{\mathrm{sc}}(X_{t_k},S)
                         -b_{t_k}^{\mathrm{sc}}(X_{t_k},S)\|^2<\infty.
\end{equation}
This is an error under the true joint grid marginals $\nu_{t_k}$.

\begin{sicoreproposition}[Finite Borel construction and version invariance]
\label{si-if-prop:borel-src}
The source-conditioned drift has a finite jointly Borel version at each
grid time, including
$b_0^{\mathrm{sc}}(S,S)=\mathbb E[U\mid S]$ almost surely. There are normalized
Borel kernels on the augmented state $(s,x)$ that keep $s$ fixed and
transport $\nu_{t_k}$ to $\nu_{t_{k+1}}$. Any two finite Borel versions of the same drift
give the same sampling law in \eqref{si-if-eq:src-em}. The same conclusions
hold with the positive noise variance $q_k$ replacing $v_k$.
\end{sicoreproposition}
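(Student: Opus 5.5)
I will mirror the construction of Proposition~\ref{si-core-kernel-preserve}, now on the augmented state $(s,x)$ with the source coordinate frozen, and then repeat the null-set induction used there for version invariance.

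\textbf{Drift and kernel construction.} At a grid time $t$, the residual satisfies $D_t\in L^2$, with $\mathbb E\|D_t\|^2=M_2+c_t^2d$. Since $\mathbb R^d\times\mathbb R^d$ is standard Borel, the joint law of $((X_t,S),D_t)$ disintegrates. This gives a Borel probability kernel
\[
\lambda^{\mathrm{sc}}_t((x,s),dz)=\operatorname{Law}(D_t\mid X_t=x,S=s).
\]
The conditional second moment $j^{\mathrm{sc}}_t(x,s)=\int\|z\|^2\lambda^{\mathrm{sc}}_t((x,s),dz)$ is jointly Borel and $\nu_t$-integrable. On the Borel null set $\{j^{\mathrm{sc}}_t=\infty\}$, I replace the kernel by $\delta_0$ and set $b^{\mathrm{sc}}_t(x,s)=\int z\,\overline\lambda^{\mathrm{sc}}_t((x,s),dz)$. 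Truncated integrals show this drift is finite and jointly Borel.

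At $t=0$ we have $X_0=S$ and $D_0=U$. The law $\nu_0$ is therefore carried by the diagonal, and on the diagonal the construction is a version of $\mathbb E[U\mid S]$. This gives $b^{\mathrm{sc}}_0(S,S)=\mathbb E[U\mid S]$ almost surely. No density of $\mu$ is needed, because the disintegration is taken with respect to $\nu_0$ itself.

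The augmented kernel is
\[
K^{\mathrm{sc}}_k((s,x),\cdot)=\delta_s\otimes\int N(x+h_kz,q_kI_d)\,\overline\lambda^{\mathrm{sc}}_{t_k}((x,s),dz).
\]
It is a normalized Borel kernel by joint measurability of the Gaussian density in $(x,z,y)$.

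\textbf{Transport property.} Integrating $K^{\mathrm{sc}}_k$ against $\nu_{t_k}$ reproduces the joint law of $(S,X_{t_k},D_{t_k})$, followed by the map $X_{t_k}+h_kD_{t_k}+\sqrt{q_k}G$. By the computation in Proposition~\ref{si-core-kernel-preserve}, this map equals $X_0+t_{k+1}U+(1-\eta_k)\sigma_{t_k}Z+\sqrt{q_k}G$. That identity uses only the single-time representation and the variance identity $(1-\eta)^2\sigma_t^2+q=\sigma_{t+h}^2$, and it holds jointly with $S=X_0$. Hence $\nu_{t_k}K^{\mathrm{sc}}_k=\nu_{t_{k+1}}$.

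The same argument works with variance $v_k$ replaced by $q_k$. Alternatively, the Gaussian kernel $N(x+h_kb^{\mathrm{sc}},v_kI_d)$ is simply another Borel kernel, and that is all the later statements require.

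\textbf{Version invariance.} This is the step needing the most care; it is an induction on $k$ for the joint law of $(S,Y_k)$. The key point is that $\operatorname{Law}(S,Y_k)$ is absolutely continuous with respect to $\nu_{t_k}$, so two drift versions agreeing $\nu_{t_k}$-a.e. produce the same next law.

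At $k=0$, $(S,Y_0)=(S,S)\sim\nu_0$ exactly, so the two versions agree $\operatorname{Law}(S,Y_0)$-a.e. and the first step has the same law under both.

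For $k\ge1$, let $\mu(ds)\,p_t(x\mid s)\,dx$ be a disintegration of $\nu_t$, where
\[
p_t(x\mid s)=\int\varphi^{\mathrm{var}}_{\sigma_t^2}(x-s-tu)\,\operatorname{Law}(U\mid S=s)(du).
\]
This density is strictly positive for $\mu$-a.e.\ $s$. After at least one Euler step with positive variance, $\operatorname{Law}(Y_k\mid S=s)$ has a Lebesgue density for $\mu$-a.e.\ $s$. Since $\operatorname{Law}(S)=\mu$, it follows that $\operatorname{Law}(S,Y_k)\ll\nu_{t_k}$. Two versions agreeing $\nu_{t_k}$-a.e. therefore agree $\operatorname{Law}(S,Y_k)$-a.e., giving identical transitions. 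Induction over finitely many steps yields the same law of $(S,Y_N)$, for either noise variance.

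The main obstacle is the conditional-density positivity: it must be handled with a regular conditional law of $U$ given $S$ and Tonelli. I expect it to go through because the Gaussian factor is everywhere positive.
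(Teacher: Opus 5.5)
Your proposal is correct and follows essentially the same route as the paper: the same three steps appear there. These are the regular conditional law of $D_t$ given $(X_t,S)$ with the $\delta_0$ modification, the bridge identity $X_t+hD_t=S+(t+h)U+(1-\eta)\sigma_tZ$ for the transport property, and the null-set induction from the exact diagonal law at time zero using positive fiber densities; you use only the needed direction $\mathcal L(S,Y_k)\ll\mu\otimes\mathrm{Leb}\ll\nu_{t_k}$, whereas the paper states full equivalence. One small clarification: the statement's final clause concerns the sampler's noise variance, which your induction already covers because it uses only positivity, while the comparison kernel must keep variance $q_k$ for transport, so your aside about replacing $v_k$ by $q_k$ in the transport paragraph is unnecessary.
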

\begin{proof}
Take a regular conditional law $\lambda_t(s,x,dz)$ of $D_t$ given
$(S,X_t)$. Its second-moment integral is Borel and finite
$\nu_t$-almost everywhere because
$\mathbb E\|D_t\|^2=M_2+c_t^2d<\infty$. Replace it by $\delta_0$ where that
integral is infinite. The modified kernel $\bar\lambda_t$ is normalized,
Borel, and has finite second moment everywhere. The integral of $z$
defines the desired finite Borel drift. At $t=0$, $D_0=U$ and $X_0=S$,
so the asserted diagonal identity follows.

The comparison transition, with $s$ held fixed, is
\[
 K_{t,h}^{\mathrm{sc}}((s,x),d(s',y))=
 \delta_s(ds')\int N(x+hz,2\varepsilon h(1-h/(1-t))I_d)(dy)
                            \bar\lambda_t(s,x,dz).
\]
The identity $X_t+hD_t=S+(t+h)U+(1-h/(1-t))\sigma_tZ$, together with
\[
 (1-h/(1-t))^2\sigma_t^2+2\varepsilon h(1-h/(1-t))
   =2\varepsilon(t+h)(1-t-h),
\]
shows that $\nu_tK_{t,h}^{\mathrm{sc}}=\nu_{t+h}$. All these statements
condition only the square-integrable residual, even when $S\notin L^1$.

In displacement coordinates, write $\pi=\mu(ds)\pi_s(du)$.
The conditional second moment is finite for almost every source:
\[\int\|u\|^2\pi_s(du)<\infty\quad\text{for $\mu$-almost every $s$.}\]
For $t>0$ the conditional density of $X_t$ given $S=s$ is
\[
 p_t^s(x)=\int\varphi_{\sigma_t^2}(x-s-tu)\pi_s(du),
\]
which is everywhere positive and finite. Thus $\nu_t$ is equivalent to
$\mu\otimes\mathrm{Leb}$ for $t>0$. After one positive-variance step of
either proposed sampler, its augmented marginal is also equivalent to
$\mu\otimes\mathrm{Leb}$: each source fiber is a Gaussian convolution
of a probability measure with finite centers. At time zero the sampler
has the true diagonal law. Induction therefore shows that changes on
$\nu_{t_k}$-null sets cannot change any sampler law. This also explains
why an arbitrary off-diagonal definition at time zero is harmless.
\end{proof}

\begin{sicoretheorem}[Source-conditioned joint-law guarantee]\label{si-if-thm:src}
Set $s_k=t_k/(1-t_k)$,
$\Delta s_k=s_{k+1}-s_k=h_k/(r_kr_{k+1})$, and
\[
 m_0^{\mathrm{sc}}=\mathbb E\|U-\mathbb E[U\mid S]\|^2\le M_2.
\]
For every grid and the protocol \eqref{si-if-eq:src-em},
\begin{equation}\label{si-if-eq:src-arbitrary}
 {D_{\mathrm{KL}}}(\nu_T\|Q_N^{\mathrm{sc}})
 \le\frac{\mathcal E_{\mathrm{sc}}}{4\varepsilon}
       +\sum_k c_d(\eta_k)
       +\frac{\max_k\Delta s_k}{4\varepsilon}m_0^{\mathrm{sc}}.
\end{equation}
On a uniform grid $h=T/N$, for every $N\ge1$,
\begin{equation}\label{si-if-eq:src-uniform}
 {D_{\mathrm{KL}}}(\operatorname{Law}(X_0,X_T)\|\operatorname{Law}(S,Y_N))
 \le\frac{\mathcal E_{\mathrm{sc}}}{4\varepsilon}
 +h\left[\frac{dT}{4\delta}
         +\frac{M_2}{4\varepsilon\delta(\delta+h)}\right].
\end{equation}
The constants do not depend on endpoint moments or the location of the
source. The estimate controls the original source--time-$T$ coupling.
Projection also gives the same upper bound for ${D_{\mathrm{KL}}}(\rho_T\|\operatorname{Law}(Y_N))$.
\end{sicoretheorem}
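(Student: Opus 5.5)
The plan is to repeat the architecture of Proposition~\ref{si-core-exact-entropy} on the augmented state $(s,x)$, using the source-fixing kernels $K^{\mathrm{sc}}_{t,h}$ of Proposition~\ref{si-if-prop:borel-src}. The chain starts from the true diagonal law $\nu_0$. The path chain rule gives an exact sum of one-step relative entropies integrated against $\nu_{t_k}$, and projecting the path onto $(S,Y_N)$ gives the left side of \eqref{si-if-eq:src-arbitrary}. The one-step identity \eqref{si-core-K5} holds verbatim with $V_t$ replaced by $V^{\mathrm{sc}}_t$ and $b_t$ by $b^{\mathrm{sc}}_t$. It produces the learning term $h_k\|b^{\mathrm{sc}}-\widehat b^{\mathrm{sc}}\|^2/(4\varepsilon)$, whose sum is $\mathcal E_{\mathrm{sc}}/(4\varepsilon)$, and the variance term $c_d(\eta_k)$. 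What remains is the sum over $k$ of
\[
D_{\mathrm{KL}}(P_{(s,x)}\|\gamma_{q_k})-\frac{h_k^2\eta_k}{2q_k}\mathbb E[\|V^{\mathrm{sc}}_{t_k}\|^2\mid S,X_{t_k}],
\]
and everything hinges on this sum.

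\textbf{Step 1: reduce to a single Gaussian channel.} Conditionally on $S$, set $s=t/(1-t)$ and $Y_s=(X_t-S)/(1-t)=sU+\sqrt{2\varepsilon s}\,Z$. Since $c_t=\sqrt{2\varepsilon s}$, we have $D_t=(1+s)U-Y_s$. Hence
\[
V^{\mathrm{sc}}_t=(1+s)\big(U-\mathbb E[U\mid S,Y_s]\big),
\qquad
\mathbb E\|V^{\mathrm{sc}}_t\|^2=(1+s)^2m(s),
\]
where $m(s)=\mathbb E\|U-\mathbb E[U\mid S,Y_s]\|^2$ is the conditional MMSE of $U$ in the channel $\sqrt{s/(2\varepsilon)}\,U+Z$. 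This $m$ is nonincreasing with $m(0)=m_0^{\mathrm{sc}}$. Under the bridge, $Y_s$ given $S$ is the process $sU+\sqrt{2\varepsilon}W_s$, so the step $t_k\to t_{k+1}$ corresponds to the increment $\Delta s_k$ of a Brownian channel in the variable $s$.

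\textbf{Step 2: exact entropy of one step (the main obstacle).} A crude mixture bound on $D_{\mathrm{KL}}(P\|\gamma_q)$ only yields $\sum_k h_k(1+s_k)^2m(s_k)/(4\varepsilon)\le\sum_k\Delta s_k m(s_k)/(4\varepsilon)$. This left Riemann sum is not bounded by $\max_k\Delta s_k\,m_0^{\mathrm{sc}}$, so something sharper is needed. The approach I would try first is to express the one-step bridge law in the $s$-coordinate as the Gaussian channel increment from $s_k$ to $s_{k+1}$. Then the Gaussian part of $D_{\mathrm{KL}}(P\|\gamma_q)$, averaged over $(S,X_{t_k})$, should equal the excess of the Riemann term over the I--MMSE integral:
\[
\frac1{4\varepsilon}\Big(\Delta s_k\,m(s_k)-\int_{s_k}^{s_{k+1}}m(u)\,du\Big).
\]
Concretely, the quantity to compute is the conditional mutual information between $U$ and the new channel increment given $(S,Y_{s_k})$, using \eqref{si-if-eq:immse} conditionally on $(S,Y_{s_k})$. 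The negative term is then absorbed exactly by the scale factor $(1-\eta_k)$ in $q_k$; the identity $q_k=v_k(1-\eta_k)$ and $h_k(1+s_k)^2\le\Delta s_k$ should make the coefficients match. If exact equality proves delicate, an inequality in the same direction suffices.

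\textbf{Step 3: summation.} Because $m$ is nonincreasing,
\[
\sum_k\Big(\Delta s_k\,m(s_k)-\int_{s_k}^{s_{k+1}}m\Big)\le\sum_k\Delta s_k\big(m(s_k)-m(s_{k+1})\big)\le\max_k\Delta s_k\,m(0),
\]
where the last inequality is a telescoping sum. This gives \eqref{si-if-eq:src-arbitrary}.

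For \eqref{si-if-eq:src-uniform}, take a uniform grid with $h=T/N$. Then $\Delta s_k=h/(r_kr_{k+1})$ is maximal at $k=N-1$, where it equals $h/(\delta(\delta+h))$, and $m_0^{\mathrm{sc}}\le M_2$. For the variance terms, $\sum_k c_d(\eta_k)\le\frac d4\sum_k\eta_k^2/(1-\eta_k)=\frac d4\sum_k h_k^2/(r_kr_{k+1})\le\frac{dhT}{4\delta}$. Finiteness of every term follows from $\mathbb E\|D_t\|^2<\infty$ and \eqref{si-if-eq:src-error}. No moment of $S$ enters, since all conditioning is on the source through regular conditional laws and Tonelli, as in Lemma~\ref{si-if-lem:tools}. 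The final projection statement for $D_{\mathrm{KL}}(\rho_T\|\operatorname{Law}(Y_N))$ is data processing.
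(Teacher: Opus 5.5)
Your proposal is correct and follows essentially the paper's proof. The shared steps are the augmented path chain rule with the source-fixing kernels and the time change to the channel $sU+\sqrt{2\varepsilon}\,\widetilde W_s$. Next, the averaged mixture entropy is written as $h_k^2\mathbb E\|V^{\mathrm{sc}}_{t_k}\|^2/(2q_k)$ minus a conditional information, and the variance correction is absorbed. The argument ends with the monotone telescoping of $m$. The paper writes that information as $J(s_{k+1})-J(s_k)$, where $J(s)=I(U;sU+\sqrt{2\varepsilon}\,\widetilde W_s\mid S)$, using sufficiency and I--MMSE from $s=0$; this is equivalent to your conditional I--MMSE on the increment.

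Note that the exact averaged step value is $\frac{1}{4\varepsilon}\bigl[(1-\eta_k)\Delta s_k\,m(s_k)-\int_{s_k}^{s_{k+1}}m\bigr]$, not the expression with $\Delta s_k\,m(s_k)$. So your ``should equal'' is really the inequality from $h_k/r_k^2=(1-\eta_k)\Delta s_k\le\Delta s_k$ that you fall back on, which is exactly how the paper uses it.
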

\begin{proof}
The time change of the original bridge is, in finite-dimensional law,
\[
 \frac{\mathcal X_t}{1-t}=S+sX_1+\sqrt{2\varepsilon}\widetilde W_s.
\]
Given $S$, subtracting $(1+s)S$ leaves the observation
$sU+\sqrt{2\varepsilon}\widetilde W_s$. On almost every source fiber with a finite second moment, independent Gaussian
increment likelihoods depend on the observations only through their
total increment and total time. The latest observation, together with
$S$, is therefore sufficient for $U$. Let
\[
 m(s)=\mathbb E\|U-\mathbb E[U\mid S,sU+\sqrt{2\varepsilon}\widetilde W_s]\|^2,
 \quad J(s)=I(U;sU+\sqrt{2\varepsilon}\widetilde W_s\mid S).
\]
The definition at $s=0$ is literal: the channel output is zero,
$J(0)=0$, and $m(0)=m_0^{\mathrm{sc}}$. Equation \eqref{si-if-eq:immse},
applied source by source and averaged, gives
\[
 J(s_{k+1})-J(s_k)=\frac1{4\varepsilon}
     \int_{s_k}^{s_{k+1}}m(u)\,du
 \ge\frac{\Delta s_k}{4\varepsilon}m(s_{k+1}),
\]
with $m$ nonincreasing. No $I(X_1;S)$ is subtracted.

The exact Gaussian change-of-reference identity for the augmented
comparison kernel gives its averaged one-step KL to \eqref{si-if-eq:src-em} as
\begin{equation}\label{si-if-eq:src-exact-step}
 c_d(\eta_k)+\frac{h_k}{4\varepsilon r_k^2}m(s_k)
 -[J(s_{k+1})-J(s_k)]
 +\frac{h_k}{4\varepsilon}
       \mathbb E\|\widehat b_k^{\mathrm{sc}}-b_{t_k}^{\mathrm{sc}}\|^2.
\end{equation}
To verify its coefficients directly, the conditional centered endpoint
residual has mean squared norm $m(s_k)$ and equals $r_kV_{t_k}^{\mathrm{sc}}$.
The channel entropy is
$h_k^2\mathbb E\|V_{t_k}^{\mathrm{sc}}\|^2/(2q_k)$ minus its conditional
mutual information. Changing the Gaussian reference variance from
$q_k$ to $v_k$ adds $c_d(\eta_k)$ and
$\tfrac12(v_k^{-1}-q_k^{-1})h_k^2\mathbb E\|V_{t_k}^{\mathrm{sc}}\|^2$.
The two variance coefficients sum to $h_k/(4\varepsilon)$.
The mean mismatch has zero cross term and adds exactly
$h_k\mathbb E\|\widehat b_k^{\mathrm{sc}}-b_{t_k}^{\mathrm{sc}}\|^2/(4\varepsilon)$.
Sufficiency identifies the information term with the displayed increment
of $J$. All terms are finite by the second moment and learning assumption.

The augmented path chain rule integrates these expressions under
$\nu_{t_k}$, since the comparison kernels preserve these marginals.
Because $h_k/r_k^2=(1-\eta_k)\Delta s_k\le\Delta s_k$,
\[
 \sum_k\left[\frac{h_km(s_k)}{4\varepsilon r_k^2}
                  -(J(s_{k+1})-J(s_k))\right]
 \le\frac1{4\varepsilon}\sum_k\Delta s_k[m(s_k)-m(s_{k+1})]
 \le\frac{\max_k\Delta s_k}{4\varepsilon}m(0).
\]
Projection onto $(S,Y_N)$ proves \eqref{si-if-eq:src-arbitrary}. Equivalently,
the joint KL chain rule is
\[
 {D_{\mathrm{KL}}}(\nu_T\|Q_N^{\mathrm{sc}})
 =\int{D_{\mathrm{KL}}}(\operatorname{Law}(X_T\mid S=s)\|\operatorname{Law}(Y_N\mid S=s))\,\mu(ds).
\]
The joint law follows by averaging the conditional estimate, including its learning penalty.

For a uniform grid,
$\max_k\Delta s_k=h/[\delta(\delta+h)]$. Also
\[
 c_d(\eta)\le\frac{d\eta^2}{4(1-\eta)},\qquad
 \frac{\eta_k^2}{1-\eta_k}
 =h\left(\frac1{r_{k+1}}-\frac1{r_k}\right),
\]
so $\sum_kc_d(\eta_k)\le dTh/(4\delta)$, including $N=1$.
Use $m(0)\le M_2$ to obtain \eqref{si-if-eq:src-uniform}.
\end{proof}

\section{Arbitrary-grid residual bounds and the bridge-variance sampler}
\label{si-if-section-residual}
For finite Borel $\widehat b_k:\mathbb R^d\to\mathbb R^d$, let the current-state EM
sampler start at $Y_0\sim\rho_0$ and use covariance $v_kI_d$ and mean
$Y_k+h_k\widehat b_k(Y_k)$. Write
\[
 e_k^2=\mathbb E\|\widehat b_k(X_{t_k})-b_{t_k}(X_{t_k})\|^2,
 \quad \mathcal E=\sum_k h_ke_k^2,
\]
and assume $\mathcal E<\infty$. The comparison transition is the
conditional Gaussian mixture
\[
 K_k(x,dy)=\int N(x+h_kz,q_kI_d)(dy)
                         \operatorname{Law}(D_{t_k}\mid X_{t_k}=x)(dz),
\]
using the everywhere-finite-second-moment Borel modification described in
Proposition \ref{si-if-prop:borel-src}, with conditioning only on $X_{t_k}$.
It preserves $\rho_{t_k}K_k=\rho_{t_{k+1}}$. Set
\[
 R_k=\mathbb E{D_{\mathrm{KL}}}(\operatorname{Law}(h_kV_{t_k}+\sqrt{q_k}G\mid X_{t_k})\|N(0,q_kI_d)).
\]

\begin{sicoreproposition}[Residual-adaptive EM bound]\label{si-if-prop:residual}
Let $Q_N^{\mathrm{EM}}$ be the EM terminal law. For every grid,
\begin{align}
 {D_{\mathrm{KL}}}(\rho_T\|Q_N^{\mathrm{EM}})
 &\le\frac{\mathcal E}{4\varepsilon}+
 \sum_k\left[8\mathbb E\Phi_{\mathrm{ent}}\!\left(\frac{h_kV_{t_k}}{\sqrt{q_k}}\right)
      +c_d(\eta_k)-\frac{h_k^2\eta_k}{2q_k}\mathbb E\|V_{t_k}\|^2\right]
 \label{si-if-eq:residual-signed}\\
 &\le\frac{\mathcal E}{4\varepsilon}+\sum_kc_d(\eta_k)
       +8\sum_k\mathbb E\Phi_{\mathrm{ent}}\!\left(\frac{h_kV_{t_k}}{\sqrt{q_k}}\right).
 \label{si-if-eq:residual-plain}
\end{align}
For each step separately, the factor $8\mathbb E\Phi_{\mathrm{ent}}$ may be replaced by the
minimum of it, $h_k^2\mathbb E\|V_{t_k}\|^2/(2q_k)$, and
$h_k^4\mathbb E\|V_{t_k}\|^4/(4q_k^2)$ when the last moment is finite.
\end{sicoreproposition}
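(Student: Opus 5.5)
The plan is to start from the exact one-step identity \eqref{si-core-K5} of Proposition~\ref{si-core-exact-entropy}, apply it at each grid time $t_k$ with $h=h_k$, $q=q_k$, $v^{\mathrm{EM}}=v_k$, and average over $x\sim\rho_{t_k}$. The path chain rule \eqref{si-core-path-KL} together with terminal projection \eqref{si-core-K7} then gives
\[
D_{\mathrm{KL}}(\rho_T\|Q_N^{\mathrm{EM}})\le\sum_k\Bigl[R_k+c_d(\eta_k)+\tfrac12\bigl(v_k^{-1}-q_k^{-1}\bigr)h_k^2\mathbb E\|V_{t_k}\|^2\Bigr]+\frac{\mathcal E}{4\varepsilon}.
\]
Every term here is finite, by $D_{t_k}\in L^2$ and $\mathcal E<\infty$, so no infinite quantities are subtracted. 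Next we compute the variance coefficient. Since $v_k=q_k/(1-\eta_k)$, we have $v_k^{-1}-q_k^{-1}=-\eta_k/q_k$. The third term is therefore exactly $-h_k^2\eta_k\mathbb E\|V_{t_k}\|^2/(2q_k)$, which is the negative correction in \eqref{si-if-eq:residual-signed}.

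It remains to bound $R_k$. We apply the conditional tail bound \eqref{si-core-E18} of Lemma~\ref{si-core-Phi} (equivalently Lemma~\ref{si-if-lem:tools}) with $A=X_{t_k}$ and centered residual $h_kV_{t_k}$. Here $\mathbb E[V_{t_k}\mid X_{t_k}]=0$ holds by the definition of $b_{t_k}$ through the modified kernel, and $G$ is independent. Only the first inequality of \eqref{si-core-E18} is needed, which gives $R_k\le 8\mathbb E\Phi_{\mathrm{ent}}(h_kV_{t_k}/\sqrt{q_k})$. Substituting this proves \eqref{si-if-eq:residual-signed}. Dropping the nonpositive term $-h_k^2\eta_k\mathbb E\|V_{t_k}\|^2/(2q_k)$ then gives \eqref{si-if-eq:residual-plain}.

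For the per-step refinement, the only change is the bound used for $R_k$ at each step. The conditional mixture bound \eqref{si-core-E1}, applied to each conditional law and averaged by Tonelli, gives $R_k\le h_k^2\mathbb E\|V_{t_k}\|^2/(2q_k)$. When $\mathbb E\|V_{t_k}\|^4<\infty$, the conditional fourth-moment bound \eqref{si-core-E3} from Lemma~\ref{si-core-ENT4} gives $R_k\le h_k^4\mathbb E\|V_{t_k}\|^4/(4q_k^2)$. This applies to almost every centered conditional law, whose conditional fourth moment is finite almost surely. Since all three are valid upper bounds on the same $R_k$, their minimum is one as well. The remaining terms of the step are unchanged, so the substitution can be made step by step.

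The main obstacle is measurability and finiteness when averaging the conditional entropies. Specifically, one must check that $x\mapsto D_{\mathrm{KL}}(P_x\|\gamma_{q_k})$ is Borel and that the averaged identity holds without $\infty-\infty$. I would handle this with the argument following \eqref{si-core-E18}: use the jointly Borel Gaussian-convolved density, split positive and negative parts using $u\log u\ge-1/e$, and rely on the finiteness $\mathbb E\|V_{t_k}\|^2<\infty$ from the second-moment modification of the kernel.
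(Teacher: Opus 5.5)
Your proposal is correct and follows the paper's proof. Both use the exact averaged one-step identity from \eqref{si-core-K5} with $v_k^{-1}-q_k^{-1}=-\eta_k/q_k$, sum it by the path chain rule with terminal projection, and bound each $R_k$ by the averaged conditional tail, mixture, and fourth-moment estimates collected in Lemma~\ref{si-if-lem:tools}; your treatment of finiteness, measurability, and the per-step minimum just spells out what the paper's short proof leaves implicit.
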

\begin{proof}
The exact averaged one-step identity is
\[
 L_k^{\mathrm{EM}}=R_k+c_d(\eta_k)
       -\frac{h_k^2\eta_k}{2q_k}\mathbb E\|V_{t_k}\|^2
       +\frac{h_k}{4\varepsilon}e_k^2.
\]
Indeed $v_k^{-1}-q_k^{-1}=-\eta_k/q_k$; the proof by integrating the
quadratic log ratio of Gaussian densities is the same as in
\eqref{si-if-eq:src-exact-step}. Each $R_k$ is finite by
the conditional form of \eqref{si-core-E1}. Sum the exact identity by the
path chain rule, project onto the final coordinate, and apply
Lemma \ref{si-if-lem:tools}.
This proof never replaces $V_t$ by $D_t$ and imposes no independence
between $V_t$ and $X_t$.
\end{proof}

Define the bridge-variance Gaussian update by
\begin{equation}\label{si-if-eq:bp}
 Y_{k+1}=Y_k+h_k\widehat b_k(Y_k)+\sqrt{q_k}G_k,
 \qquad Y_0\sim\rho_0,
\end{equation}
and let $Q_N^{\mathrm{BP}}$ denote its terminal law. This Gaussian uses
the bridge innovation variance; it is generally not the full posterior
mixture $K_k$. Define its weighted learning error
\[
 \mathcal E_{\mathrm{BP}}=\sum_k\frac{h_k}{1-\eta_k}e_k^2.
\]
This is finite for a fixed finite grid whenever $\mathcal E<\infty$.

\begin{sicoreproposition}[Exact bridge-variance identity and bounds]
\label{si-if-prop:bp}
The averaged one-step entropy to \eqref{si-if-eq:bp} is exactly
\begin{equation}\label{si-if-eq:bp-exact}
 L_k^{\mathrm{BP}}=R_k+\frac{h_k}{4\varepsilon(1-\eta_k)}e_k^2
 =\frac{h_k^2}{2q_k}\mathbb E\|V_{t_k}\|^2-I_k
                   +\frac{h_k}{4\varepsilon(1-\eta_k)}e_k^2,
\end{equation}
where
$I_k=I(X_1;\mathcal X_{t_{k+1}}\mid\mathcal X_{t_k})<\infty$
is under the original two-time bridge law. Consequently
\begin{equation}\label{si-if-eq:bp-residual}
 {D_{\mathrm{KL}}}(\rho_T\|Q_N^{\mathrm{BP}})
 \le\frac{\mathcal E_{\mathrm{BP}}}{4\varepsilon}
       +8\sum_k\mathbb E\Phi_{\mathrm{ent}}(h_kV_{t_k}/\sqrt{q_k}).
\end{equation}
If $a_*:=\max_k\eta_k<1$ and $M_4<\infty$, then
\begin{equation}\label{si-if-eq:bp-four}
 {D_{\mathrm{KL}}}(\rho_T\|Q_N^{\mathrm{BP}})
 \le\frac{\mathcal E_{\mathrm{BP}}}{4\varepsilon}
 +\frac{M_4}{2\varepsilon^2(1-a_*)^2}\sum_kh_k^2
 +\frac{2d(d+2)}{(1-a_*)^2}\sum_k\eta_k^2t_k^2.
\end{equation}
If $V_{t_k}=0$ almost surely at every grid time and the drift is exact,
the sampler is marginally exact. In particular this holds when
$X_0=X_1=0$ almost surely.
\end{sicoreproposition}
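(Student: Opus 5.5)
The plan follows Proposition~\ref{si-if-prop:residual}: compute the one-step relative entropy from the comparison kernel $K_k$ to the BP Gaussian $N(x+h_k\widehat b_k(x),q_kI_d)$ exactly, sum it along the path, and project to the terminal state. Fix $x$ and write $P_x=\operatorname{Law}(h_kV_{t_k}+\sqrt{q_k}G\mid X_{t_k}=x)$, which has mean zero and finite second moment by the Borel modification of Proposition~\ref{si-core-kernel-preserve}. After translating by $x+h_kb_{t_k}(x)$, the BP reference becomes $N(-e,q_kI_d)$ with $e=h_k(b_{t_k}(x)-\widehat b_k(x))$. Since the reference variance now equals $q_k$, the variance-mismatch terms of \eqref{si-core-K5} do not appear. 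Identity \eqref{si-core-E2} gives exactly $D_{\mathrm{KL}}(P_x\|\gamma_{q_k})+\|e\|^2/(2q_k)$. Because $\|e\|^2/(2q_k)=h_k\|b-\widehat b\|^2/(4\varepsilon(1-\eta_k))$, averaging over $\rho_{t_k}$ yields the first equality in \eqref{si-if-eq:bp-exact}, with $R_k$ as defined.

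For the second form I would write $R_k$ as a mixture entropy. Conditionally on $X_{t_k}=x$ and the endpoints, the increment $h_kV_{t_k}+\sqrt{q_k}G$ is Gaussian $N(h_k(D_{t_k}-b_{t_k}(x)),q_k I_d)$. The mixture identity "entropy of mixture $=$ average component entropy $-$ mutual information between label and output" then gives
\[
R_k=\frac{h_k^2}{2q_k}\mathbb E\|V_{t_k}\|^2-I(D_{t_k};\mathcal X_{t_{k+1}}\mid\mathcal X_{t_k}),
\]
and both terms are finite by \eqref{si-core-E1}. It remains to identify this information with $I_k$. By Proposition~\ref{si-core-kernel-preserve}, given $\mathcal X_{t_k}$ and $(X_0,X_1)$, the next state is $\mathcal X_{t_k}+h_kD_{t_k}+\sqrt{q_k}G$ with $h_kD_{t_k}=\eta_k(X_1-\mathcal X_{t_k})$. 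So, conditionally on $\mathcal X_{t_k}$, $D_{t_k}$ is an invertible function of $X_1$, and the next state depends on the endpoints only through it. This makes $I(D;\cdot\mid\mathcal X_{t_k})=I(X_1;\cdot\mid\mathcal X_{t_k})$. I expect this step to be the main obstacle: $X_1$ may fail to be integrable, and the mutual information must be handled via density ratios rather than via differences of differential entropies. I would argue through the chain rule for KL on the Borel kernels, using the finiteness of the Gaussian-mixture bound.

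Summing $L_k^{\mathrm{BP}}$ under the marginals preserved by $\rho_{t_k}K_k=\rho_{t_{k+1}}$, and applying the path chain rule and terminal projection as in \eqref{si-core-path-KL}--\eqref{si-core-K7}, bounds $D_{\mathrm{KL}}(\rho_T\|Q_N^{\mathrm{BP}})$ by $\mathcal E_{\mathrm{BP}}/(4\varepsilon)+\sum_kR_k$. Then:
\begin{itemize}
\item Lemma~\ref{si-if-lem:tools}, via \eqref{si-core-E18}, gives $R_k\le8\mathbb E\Phi_{\mathrm{ent}}(h_kV_{t_k}/\sqrt{q_k})$, which proves \eqref{si-if-eq:bp-residual}.
\item For \eqref{si-if-eq:bp-four}, I would bound $R_k$ by the step estimate \eqref{si-core-T4-step} from Lemma~\ref{si-bnd-lem:raw-four}, together with $(1-\eta_k)^{-2}\le(1-a_*)^{-2}$. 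There is no $c_d$ term because the variances match.
\item If $V_{t_k}=0$ almost surely and the drift is exact, then $P_x=\gamma_{q_k}$ and $e=0$, so every one-step KL vanishes and the terminal law equals $\rho_T$. When $X_0=X_1=0$, the residual $D_t=-c_tZ$ is a function of $X_t=\sigma_tZ$, so $V_t=0$.
\end{itemize}
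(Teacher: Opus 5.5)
Your proposal is correct and follows the paper's proof essentially step by step. The first equality comes from the translation identity with matched variance $q_k$. The second comes from the finite mixture (Gaussian-channel) decomposition, with the bridge transition $\mathcal X_{t_{k+1}}=\mathcal X_{t_k}+h_kD_{t_k}+\sqrt{q_k}G$ identifying the conditional information with $I_k$. The bounds then follow by summing along the path, projecting to the terminal state, and applying \eqref{si-core-E18} and \eqref{si-core-T4-step}. The only cosmetic difference is in the marginal-exactness claim: the paper notes directly that $K_k$ equals the BP Gaussian and invokes $\rho_{t_k}K_k=\rho_{t_{k+1}}$, whereas you deduce it from vanishing path KL, which is equivalent.
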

\begin{proof}
After centering at $x+h_kb_{t_k}(x)$, both reference and convolution
noise have covariance $q_kI_d$. The centered convolution has mean zero;
translation of the reference mean thus adds exactly
$h_k^2\|\widehat b_k-b_{t_k}\|^2/(2q_k)$, proving the first equality.
The finite-energy joint entropy chain rule for the Gaussian channel
gives
\[
 R_k=\frac{h_k^2}{2q_k}\mathbb E\|V_{t_k}\|^2
           -I(V_{t_k};h_kV_{t_k}+\sqrt{q_k}G\mid X_{t_k}).
\]
The bridge transition identifies this information with $I_k$ by
conditional translations and nonzero rescalings. The identity is a
finite joint relative-entropy decomposition, so it subtracts no infinite
differential entropies. Summation and projection prove
\eqref{si-if-eq:bp-residual}. For \eqref{si-if-eq:bp-four}, sum the
already established bound \eqref{si-core-T4-step} on $R_k$, using
$\eta_k\le a_*<1$.
If the residual is zero, $K_k$ is precisely the Gaussian in
\eqref{si-if-eq:bp}, so marginal preservation proves exactness. For the
zero-endpoint bridge, $D_t=-X_t/(1-t)$ and hence $V_t=0$, including
at time zero.
\end{proof}

\begin{sicoreremark}[What the variance change does and does not remove]
Equation \eqref{si-if-eq:bp-exact} removes both the nonnegative $c_d(\eta_k)$
and the nonpositive variance correction in the EM identity. Thus the
variance change does not by itself order the two KL errors for general
couplings. It also changes the learning coefficient: using
$\mathcal E/(4\varepsilon)$ without the factors $(1-\eta_k)^{-1}$
would be incorrect. On a geometric grid the weighted learning term is
exactly $\mathcal E/[4\varepsilon(1-a)]$.
\end{sicoreremark}

\begin{sicorecorollary}[Source-conditioned bridge-variance sampler]
If \eqref{si-if-eq:bp} uses $\widehat b_k^{\mathrm{sc}}(Y_k,S)$ and retains
$S$, define $\mathcal E_{\mathrm{BP,sc}}$ by the joint version of the
weighted error above. Then
\[
 {D_{\mathrm{KL}}}(\nu_T\|\operatorname{Law}(S,Y_N))
 \le\frac{\mathcal E_{\mathrm{BP,sc}}}{4\varepsilon}
       +\frac{\max_k\Delta s_k}{4\varepsilon}m_0^{\mathrm{sc}}.
\]
In particular, on a uniform grid the last term is at most
$hM_2/[4\varepsilon\delta(\delta+h)]$.
\end{sicorecorollary}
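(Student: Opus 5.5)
The plan is to rerun the proof of Theorem~\ref{si-if-thm:src} on the augmented state $(S,Y_k)$, replacing the Euler variance $v_k$ by the bridge innovation variance $q_k$. I would start from Proposition~\ref{si-if-prop:borel-src}. It supplies finite Borel source-conditioned drifts and the augmented comparison kernels $K_k^{\mathrm{sc}}$ that hold $s$ fixed and satisfy $\nu_{t_k}K_k^{\mathrm{sc}}=\nu_{t_{k+1}}$. It also gives version invariance for the variance $q_k$. Since every transition density is positive on each source fiber, the augmented path chain rule expresses the path KL as a sum of averaged one-step KLs under $\nu_{t_k}$. Data processing onto $(S,Y_N)$ then bounds $D_{\mathrm{KL}}(\nu_T\|\operatorname{Law}(S,Y_N))$ by that sum.

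For the one-step term I would copy the first equality of Proposition~\ref{si-if-prop:bp}, now conditioning on $(X_{t_k},S)$. After centering at $x+h_kb_{t_k}^{\mathrm{sc}}(x,s)$, the comparison kernel and the BP Gaussian have the same noise covariance $q_kI_d$. The mean mismatch therefore adds exactly $h_k^2\|\widehat b_k^{\mathrm{sc}}-b_{t_k}^{\mathrm{sc}}\|^2/(2q_k)=h_k\|\cdot\|^2/(4\varepsilon(1-\eta_k))$, and no $c_d(\eta_k)$ or variance-correction term appears. By the Gaussian-channel chain rule, the remaining centered term equals
\[
 \frac{h_k^2}{2q_k}\mathbb E\|V_{t_k}^{\mathrm{sc}}\|^2
 -I\bigl(V^{\mathrm{sc}}_{t_k};h_kV^{\mathrm{sc}}_{t_k}+\sqrt{q_k}G\mid X_{t_k},S\bigr).
\]
As in the proof of Theorem~\ref{si-if-thm:src}, $r_kV^{\mathrm{sc}}_{t_k}$ is the centered endpoint residual, so the first piece is $h_km(s_k)/(4\varepsilon r_k^2(1-\eta_k))$. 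Because $h_k/(r_k^2(1-\eta_k))=h_k/(r_kr_{k+1})=\Delta s_k$, this equals $\Delta s_k m(s_k)/(4\varepsilon)$. Sufficiency of the latest observation together with $S$ identifies the information term with $J(s_{k+1})-J(s_k)$.

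I would then sum over $k$ and apply I--MMSE (Lemma~\ref{si-if-lem:tools}) together with the monotonicity of $m$, which give $J(s_{k+1})-J(s_k)\ge\Delta s_k m(s_{k+1})/(4\varepsilon)$. The residual terms then telescope:
\[
 \frac{1}{4\varepsilon}\sum_k\Delta s_k\bigl[m(s_k)-m(s_{k+1})\bigr]\le\frac{\max_k\Delta s_k}{4\varepsilon}m(0),
\]
with $m(0)=m_0^{\mathrm{sc}}$. The learning terms add up to $\mathcal E_{\mathrm{BP,sc}}/(4\varepsilon)$. On a uniform grid, $\max_k\Delta s_k=h/[\delta(\delta+h)]$ and $m_0^{\mathrm{sc}}\le M_2$, which gives the particular case.

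The main obstacle is the information identification. We need the conditional mutual information of the one-step Gaussian channel given $(X_{t_k},S)$ to equal exactly the increment of $J$. I would argue this by the affine time change $\mathcal X_t/(1-t)-(1+s)S=sU+\sqrt{2\varepsilon}\widetilde W_s$, using conditional translations and nonzero rescalings as in Proposition~\ref{si-if-prop:bp}. I would then use the chain rule $I(U;\text{obs}_{k+1},\text{obs}_k\mid S)=I(U;\text{obs}_k\mid S)+I(U;\text{obs}_{k+1}\mid\text{obs}_k,S)$ and sufficiency to collapse the left side to $J(s_{k+1})$. Every quantity is finite by $M_2<\infty$, so no infinite entropies are subtracted.
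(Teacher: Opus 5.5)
Your proposal is correct and follows the paper's own proof. Both use the source-conditioned version of the exact BP identity \eqref{si-if-eq:bp-exact}, identify $h_k^2\mathbb E\|V_{t_k}^{\mathrm{sc}}\|^2/(2q_k)=\Delta s_k m(s_k)/(4\varepsilon)$, subtract the $J$-increment from Theorem~\ref{si-if-thm:src}, and telescope the monotone MMSE error. Your computation $r_k^2(1-\eta_k)=r_kr_{k+1}$ and the chain-rule-plus-sufficiency collapse of the information term are precisely what the paper's two-line proof leaves implicit.
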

\begin{proof}
In the source-conditioned counterpart of \eqref{si-if-eq:bp-exact},
$h_k^2\mathbb E\|V_{t_k}^{\mathrm{sc}}\|^2/(2q_k)
=\Delta s_km(s_k)/(4\varepsilon)$. Subtract the same increment of
$J$ used in Theorem \ref{si-if-thm:src} and telescope its monotone error.
\end{proof}

\section{Geometric-grid bounds for the two Euler variances}
Put $L=\log(1/\delta)$, $a=1-\delta^{1/N}$, $t_k=1-(1-a)^k$,
and $h_k=a(1-a)^k$. Set $C_a=2\varepsilon(1-a)$ and $u=\|U\|^2$.

\begin{sicorecorollary}[Tail and moment bounds on a geometric grid]
\label{si-if-cor:geometric}
With $a_{\mathrm{geo}}=a$ and $L_\delta=L$, the standard EM tail bound is
\eqref{si-core-T1-M2}. Its $p$-moment bound is
\eqref{si-core-T1-p-exact} for $2<p\le4$ and $M_p<\infty$;
when $a\le1/2$, the simplified bounds are \eqref{si-core-T1-p-rate}
and, at $p=4$, \eqref{si-main-p4}.
For the bridge-variance sampler, \eqref{si-core-T1-M2} and
\eqref{si-core-T1-p-exact} hold with $\mathcal E$ replaced by
$\mathcal E_{\mathrm{BP}}$ and the $d/[4(1-a)]$ summand deleted.
The raw fourth-moment estimate gives the sharper specialization
\begin{equation}\label{si-if-eq:32-128}
 D_{\mathrm{KL}}(\rho_T\|Q_N^{\mathrm{BP}})
 \le\frac{\mathcal E_{\mathrm{BP}}}{4\varepsilon}
       +2\frac{M_4}{\varepsilon^2}\frac LN
       +8d(d+2)\frac{L^2}{N},\qquad a\le1/2.
\end{equation}
\end{sicorecorollary}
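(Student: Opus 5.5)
The statement to prove is Corollary~\ref{si-if-cor:geometric}. The EM part is bookkeeping: with $a_{\mathrm{geo}}=a$ and $L_\delta=L$, the bounds \eqref{si-core-T1-M2}, \eqref{si-core-T1-p-exact}, \eqref{si-core-T1-p-rate} and \eqref{si-main-p4} were already derived in Section~\ref{si-core-T1-proof} for exactly this grid. I would cite them directly and add no new estimate.

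For the bridge-variance sampler I would start from the exact identity \eqref{si-if-eq:bp-exact}. It reads $L_k^{\mathrm{BP}}=R_k+h_ke_k^2/[4\varepsilon(1-\eta_k)]$, and I would apply the path chain rule and terminal projection exactly as in Proposition~\ref{si-if-prop:bp}. Compared with the EM identity \eqref{si-core-K5}, the term $c_d(\eta_k)$ and the nonpositive variance correction are absent. The learning coefficient becomes $h_k/(1-\eta_k)$, which is why $\mathcal E_{\mathrm{BP}}$ replaces $\mathcal E$.

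Every bound on $R_k$ used for EM then transfers unchanged:
\begin{itemize}
\item the uncentered tail estimate \eqref{si-core-T-Phi} together with the geometric tail summation of Section~\ref{si-core-T1-proof}, for the $M_2$ modulus;
\item \eqref{si-core-TG-noise} and \eqref{si-core-TS}, for the Gaussian term;
\item the $M_p$ estimate, for \eqref{si-core-T1-p-exact}.
\end{itemize}
The only term in the EM bounds that came from $c_d$ was $\sum_k c_d(\eta_k)\le d\sum_k\eta_k^2/[4(1-a)]\le d\,aL/[4(1-a)]$. This is exactly the $d/[4(1-a)]$ summand, and it is deleted.

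For \eqref{si-if-eq:32-128} I would start from \eqref{si-if-eq:bp-four} with $a_*=a\le1/2$. Then $(1-a)^{-2}\le4$ and $\sum_kh_k^2\le a\le L/N$, which turn the $M_4$ term into $2M_4L/(\varepsilon^2N)$. Also $\sum_k\eta_k^2t_k^2\le Na^2\le L^2/N$, which gives $8d(d+2)L^2/N$.

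The one point needing care is whether $\mathcal E_{\mathrm{BP}}$ is the right learning quantity. On the geometric grid $\eta_k=a$ for every $k$, so $\mathcal E_{\mathrm{BP}}=\mathcal E/(1-a)$; the remark following Proposition~\ref{si-if-prop:bp} records this. One must not write $\mathcal E$ in its place. Beyond that, the argument is a reassembly of earlier estimates and I expect no real obstacle.
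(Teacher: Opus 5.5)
Your proposal is correct and follows the paper's proof essentially verbatim. You start from the exact identity \eqref{si-if-eq:bp-exact} and apply the uncentered-input estimate \eqref{si-core-T-Phi} and the identical geometric sums directly to $R_k$, so only the $c_d(a)$ penalty disappears and the learning term becomes $\mathcal E_{\mathrm{BP}}/(4\varepsilon)$; you then obtain \eqref{si-if-eq:32-128} from \eqref{si-if-eq:bp-four} using $(1-a)^{-2}\le4$, $\sum_k h_k^2\le a\le L/N$ and $Na^2\le L^2/N$, and your explicit tracing of the deleted $d/[4(1-a)]$ summand back to $\sum_k c_d(\eta_k)$ is a helpful detail the paper leaves implicit.
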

\begin{proof}
For the bridge-variance sampler, start from the exact identity
\eqref{si-if-eq:bp-exact} and apply the same uncentered-input estimate
\eqref{si-core-T-Phi} directly to $R_k$. The geometric sums are identical;
only the Gaussian variance penalty $c_d(a)$ is absent, and the learning
term is $\mathcal E_{\mathrm{BP}}/(4\varepsilon)$.
This proves both corresponding bounds. Applying instead
\eqref{si-if-eq:bp-four}, with $(1-a)^{-2}\le4$,
$\sum h_k^2\le a\le L/N$, and $Na^2\le L^2/N$, gives
\eqref{si-if-eq:32-128}.
\end{proof}

For a fixed coupling, the tail integrand is bounded by $u\in L^1$ and
converges pointwise to zero as $a\to0$. Thus \eqref{si-core-T1-M2} and its
bridge-variance counterpart give
convergence if $a_N\to0$, $a_NL_N\to0$, and the appropriate learning
error tends to zero. The bridge-variance sampler uses its weighted
learning error. This convergence is not uniform over the second-moment
ball.

\section{Dimension bounds for conditional product residuals}
\begin{sicoreproposition}[Conditional product residuals]\label{si-if-prop:tensor}
Suppose at each grid time the coordinates of $V_{t_k}$ are independent
conditionally on $X_{t_k}$. A sufficient condition is a product coupling
$\pi=\bigotimes_{j=1}^d\pi_j$ for the endpoint coordinate pairs.
Then the conclusions of Corollary \ref{si-if-cor:geometric} hold with every
$d(d+2)$ replaced by $3d$. In the displacement moment term one can use
$\sum_j\mathbb E|U_j|^p\le M_p$ in place of $M_p$.
In particular, standard EM obeys
\begin{equation}\label{si-if-eq:product-four}
 {D_{\mathrm{KL}}}(\rho_T\|Q_N^{\mathrm{EM}})
 \le\frac{\mathcal E}{4\varepsilon}
   +2\frac{\sum_j\mathbb E|U_j|^4}{\varepsilon^2}\frac LN
   +\frac{49d}{2}\frac{L^2}{N},\qquad a\le1/2.
\end{equation}
The learned drift need not be coordinatewise: its Euclidean learning
penalty remains exactly $\mathcal E/(4\varepsilon)$.
\end{sicoreproposition}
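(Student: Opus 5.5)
The plan is to redo the one-step analysis of Section~\ref{si-core-T1-proof} coordinate by coordinate. The only change is inside the entropy term $R_k$ of the exact identity \eqref{si-core-K5}: every other piece of that identity is untouched. These are the Gaussian variance penalty $c_d(\eta_k)$, the nonpositive variance correction, and the Euclidean mean-mismatch term $h_k\|b_{t_k}-\widehat b_k\|^2/(4\varepsilon)$. So the learning penalty stays exactly $\mathcal E/(4\varepsilon)$ for an arbitrary, non-coordinatewise $\widehat b_k$, and the same holds for the bridge-variance identity \eqref{si-if-eq:bp-exact}.

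\textbf{Sufficient condition.} First I check that a product coupling gives the hypothesis. If $\pi=\bigotimes_j\pi_j$ and $Z$ has independent coordinates, then the pairs $(X_{t,j},D_{t,j})$ are independent across $j$. Hence the regular conditional law of $D_t$ given $X_t=x$ can be chosen as $\bigotimes_j\operatorname{Law}(D_{t,j}\mid X_{t,j}=x_j)$. Consequently $b_{t,j}(x)$ depends only on $x_j$, and the coordinates of $V_t$ are conditionally independent given $X_t$. I would choose this product version in the Borel modification of Proposition~\ref{si-core-kernel-preserve}; by the null-set invariance paragraph after that proposition, this choice does not change the sampler law.

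\textbf{Tensorization.} Under the hypothesis, $P_x=\operatorname{Law}(h_kV_{t_k}+\sqrt{q_k}G\mid X_{t_k}=x)$ is a product of one-dimensional laws $P_x^{j}$, and the reference $\gamma_q$ is a product too. Relative entropy of products is additive, so
\[
D_{\mathrm{KL}}(P_x\|\gamma_q)=\sum_{j=1}^dD_{\mathrm{KL}}(P_x^{j}\|N(0,q)).
\]
For each $j$, the $j$th coordinate of $V_{t_k}$ is $W_j-\mathbb E[W_j\mid X_{t_k}]$ with $W_j=D_{t_k,j}$. I then apply the scalar ($d=1$) versions of the raw-input Lemmas~\ref{si-bnd-lem:raw-four} and~\ref{si-bnd-lem:raw-tail} with $A=X_{t_k}$ to the scalar $W=h_kD_{t_k,j}$. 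This gives, coordinate by coordinate, the analogues of \eqref{si-core-T4-step} and \eqref{si-core-T-Phi}.
\begin{itemize}
\item In the fourth-moment bound, $\mathbb E|h(U_j-c_tZ_j)|^4\le8h^4(\mathbb E|U_j|^4+3c_t^4)$, because $\mathbb EZ_j^4=3$. Summing over $j$ turns $M_4+c_t^4d(d+2)$ into $\sum_j\mathbb E|U_j|^4+3dc_t^4$.
\item In the $\Phi_{\mathrm{ent}}$ route, the scalar splitting \eqref{si-core-E16} and the bound $\Phi_{\mathrm{ent}}\le|z|^p$ give displacement terms summing to $\sum_j\mathbb E|U_j|^p\le M_p$. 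The Gaussian terms contribute $3\eta^2t^2/(1-\eta)^2$ per coordinate, i.e.\ \eqref{si-core-TG-noise} with $d(d+2)$ replaced by $3d$.
\end{itemize}
The inequality $\sum_j|U_j|^p\le\|U\|^p$ for $p\ge2$ justifies replacing $M_p$ by the coordinate sum.

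\textbf{Geometric sums and the main obstacle.} The geometric-grid summation in Section~\ref{si-core-T1-proof} and Corollary~\ref{si-if-cor:geometric} is then repeated verbatim with these constants. For \eqref{si-if-eq:product-four}, at $a\le1/2$ the Gaussian coefficient is $8\cdot3d+d/2=49d/2$. The step needing the most care is the measurability claim that a Tonelli-averaged sum of coordinate entropies equals the averaged joint entropy. For that I would rely on the conditional-law-by-conditional-law argument following \eqref{si-core-E18}, applied on the full-measure set where the chosen product version has finite second moments.
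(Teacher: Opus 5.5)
Your proposal is correct and follows the paper's proof. Like the paper, you tensorize the conditional centered convolution against the product Gaussian reference and apply the scalar raw-input entropy, centering and splitting estimates coordinatewise, so the Gaussian fourth moments sum to $3d$ (giving $8\cdot 3d+d/2=49d/2$); you leave the Euclidean mean-mismatch term untouched and get the sufficient condition from independence of the triples $(X_{0,j},U_j,Z_j)$.

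One step you leave implicit is written out in the paper and is needed for the $M_2$ tail conclusion. A verbatim repetition of the geometric sums produces the sum of coordinate tail moduli, so add that $s\mapsto s\cdot as/(C_a+as)$ is superadditive, i.e.\ $\sum_j u_j\frac{au_j}{C_a+au_j}\le\bigl(\sum_j u_j\bigr)\frac{a\sum_j u_j}{C_a+a\sum_j u_j}$ with $u_j=|U_j|^2$, so the original displacement modulus still bounds it. Also, for the bridge-variance sampler the untouched learning term is the weighted $\mathcal E_{\mathrm{BP}}/(4\varepsilon)$.
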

\begin{proof}
Under the stated conditional independence, the conditional centered
convolution law factors across coordinates. Relative entropy to its
product Gaussian reference is the sum of its scalar relative entropies.
Apply the scalar entropy, centering, and split estimates before summing
over $j$. The Gaussian fourth moments now sum to $3d$ rather than
$d(d+2)$; the mean-mismatch term is already a sum of squared coordinates
and introduces no independence requirement on the learned drift.
For the tail statement, writing $u_j=|U_j|^2$, use
\[
 \sum_j u_j\frac{au_j}{C_a+au_j}
 \le\left(\sum_j u_j\right)
                 \frac{a\sum_j u_j}{C_a+a\sum_j u_j}.
\]
Thus the original displacement tail modulus is an upper bound for the
sum of coordinate tail moduli. All subsequent geometric sums are
unchanged. A product endpoint coupling makes the triples
$(X_{0,j},U_j,Z_j)$ independent across coordinates; Bayes disintegration
then factors given the vector $X_t$, proving the sufficient condition.
\end{proof}

\begin{sicoreremark}[Why the unrestricted linear-$d$ claim remains open here]
Retaining $V_t$ can substantially improve a numerical bound but does not
itself tensorize the entropy. For a standard $d$-dimensional Gaussian
$Z$,
\[
 \lim_{\tau\downarrow0}\tau^{-4}\mathbb E\Phi_{\mathrm{ent}}(\tau Z)=d(d+2),\qquad
 {D_{\mathrm{KL}}}(N(0,(1+\tau^2)I_d)\|N(0,I_d))
      =\tfrac d2[\tau^2-\log(1+\tau^2)]\sim\tfrac d4\tau^4.
\]
Consequently no dimension-free constant can turn the radial fourth-moment
estimate itself into a linear-$d$ estimate, even on Gaussian inputs.
This is an obstruction to that particular bound, not a counterexample to
a linear-$d$ theorem for arbitrary endpoint couplings. The product
restriction above supplies the missing entropy factorization. Removing
it requires a different covariance or information argument; that step is
not claimed in this paper.
\end{sicoreremark}

\section{Independent Gaussian source: a distinct reverse-channel diffusion}
\begin{sicoreproposition}[Same marginals, different diffusion coefficients]
\label{si-if-prop:gaussian-source}
Suppose $X_0\sim N(0,s_0I_d)$ is independent of $X_1$, where $s_0\ge0$
and $M_2<\infty$. Then $X_1\in L^2$ and, for $0<t<1$,
\[
 \rho_t=\operatorname{Law}(tX_1+\sqrt{v_t}Z),\qquad
 v_t=s_0(1-t)^2+2\varepsilon t(1-t).
\]
On every interval $[\tau,T]\subset(0,1)$, reversing the Gaussian heat
channel with noise variance
\[
 \lambda_t=v_t/t^2=s_0((1-t)/t)^2+2\varepsilon(1-t)/t
\]
and rescaling its state by $t$ gives a process with marginals $\rho_t$,
scalar diffusion variance
\begin{equation}\label{si-if-eq:reverse-diffusion}
 a_t=-t^2\lambda'_t
     =2\varepsilon+2s_0(1-t)/t,
\end{equation}
and drift
\[
 \widetilde b_t(x)=x/t+a_t\nabla\log\rho_t(x).
\]
The current-state bridge drift in this paper is instead
\begin{equation}\label{si-if-eq:gaussian-source-drift}
 b_t(x)=x/t+[2\varepsilon+s_0(1-t)/t]\nabla\log\rho_t(x),
\end{equation}
with diffusion variance $2\varepsilon$. If $s_0>0$, the diffusion
coefficients differ at every $t<1$, and $a_t$ diverges as $t\downarrow0$.
Thus the reverse-channel construction is not an identification of the
constant-diffusion sampling protocol from time zero.
\end{sicoreproposition}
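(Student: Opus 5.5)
The proof has four steps: compute the marginals directly, compute the current-state bridge drift by Tweedie's formula, obtain the reverse-channel process by time-reversing a heat flow and rescaling, and compare the two coefficients.

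\emph{Marginals.} Since $X_0\sim N(0,s_0I_d)\in L^2$ and $U\in L^2$, the identity $X_1=X_0+U$ gives $X_1\in L^2$. Write
\[
X_t=tX_1+\bigl[(1-t)X_0+\sigma_tZ\bigr].
\]
The bracket is independent of $X_1$ and is distributed as $N(0,v_tI_d)$ with $v_t=s_0(1-t)^2+2\varepsilon t(1-t)$. This gives $\rho_t=\operatorname{Law}(tX_1+\sqrt{v_t}Z)$. Equivalently, $\rho_t$ is the image under $y\mapsto ty$ of $p_{\lambda_t}=\operatorname{Law}(X_1)*N(0,\lambda_tI_d)$ with $\lambda_t=v_t/t^2$. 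Hence $\rho_t$ is smooth and positive, with $\nabla\log\rho_t(x)=t^{-1}(\nabla\log p_{\lambda_t})(x/t)$.

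\emph{Bridge drift.} I would use the integrable identity from Proposition~\ref{si-core-kernel-preserve}, $D_t=(X_1-X_t)/(1-t)$. This is legitimate here because $X_1\in L^2$, so $b_t(x)=(\mathbb E[X_1\mid X_t=x]-x)/(1-t)$. Tweedie's formula for the additive Gaussian channel $X_t=tX_1+\sqrt{v_t}Z$ gives
\[
t\,\mathbb E[X_1\mid X_t=x]=x+v_t\nabla\log\rho_t(x).
\]
Substituting,
\[
b_t(x)=\frac{1}{1-t}\Bigl(\frac{x}{t}-x+\frac{v_t}{t}\nabla\log\rho_t(x)\Bigr)=\frac{x}{t}+\frac{v_t}{t(1-t)}\nabla\log\rho_t(x),
\]
and $v_t/(t(1-t))=2\varepsilon+s_0(1-t)/t$. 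This is \eqref{si-if-eq:gaussian-source-drift}. The diffusion variance is $2\varepsilon$ by the protocol.

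\emph{Reverse channel.} The densities $p_\lambda$ solve $\partial_\lambda p=\tfrac12\Delta p$. On $[\tau,T]$ the noise level $\lambda_t$ is smooth and strictly decreasing, because $\lambda'_t=-2s_0(1-t)/t^3-2\varepsilon/t^2<0$. The candidate reversed process is
\[
dY_t=-\lambda'_t\nabla\log p_{\lambda_t}(Y_t)\,dt+\sqrt{-\lambda'_t}\,dB_t,\qquad X_t=tY_t.
\]
By It\^o's formula, $X$ has drift $x/t-t^2\lambda'_t\nabla\log\rho_t(x)$ and diffusion variance $a_t=-t^2\lambda'_t=2\varepsilon+2s_0(1-t)/t$, which is \eqref{si-if-eq:reverse-diffusion}.

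To avoid general time-reversal theorems, I would verify the marginals directly. I would check that $\rho_t$ solves the Fokker--Planck equation
\[
\partial_t\rho=-\nabla\cdot(\widetilde b\rho)+\tfrac{a_t}{2}\Delta\rho .
\]
The score term then cancels half of the Laplacian, leaving a first-order equation. That equation matches $\partial_t\rho_t$ computed from the scaling-plus-heat representation. Uniqueness of the marginal flow from $\rho_\tau$ on $[\tau,T]$ holds because the score of a Gaussian convolution of an $L^2$ law is smooth with at most linear growth on compact time intervals. This well-posedness and uniqueness step is the main technical obstacle. I would first try a standard uniqueness result for Fokker--Planck equations with locally Lipschitz coefficients of linear growth. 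If that fails, I would instead cite reversal under the finite-entropy conditions of Haussmann--Pardoux, which hold here since $X_1\in L^2$.

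\emph{Comparison.} When $s_0>0$, $a_t-2\varepsilon=2s_0(1-t)/t>0$ for every $t<1$, and $a_t\to\infty$ as $t\downarrow0$. So the two diffusion coefficients differ at every $t<1$. The drifts also differ, since their score coefficients are $a_t$ and $2\varepsilon+s_0(1-t)/t$ respectively. The reverse-channel process therefore does not coincide with the constant-diffusion sampler started at $t=0$.
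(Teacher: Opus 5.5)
Your proposal is correct. The marginal representation, the Tweedie derivation of \eqref{si-if-eq:gaussian-source-drift} through $D_t=(X_1-X_t)/(1-t)$, the It\^o rescaling $X_t=tY_t$, and the final comparison all match the paper.

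The genuine difference is how you certify that the reversed, rescaled process has marginals $\rho_t$. The paper poses no SDE and needs no uniqueness theory. It takes $Y_t=X_1+B_{\lambda_t}$ itself, read in decreasing noise order, and writes its Bayes reverse kernel from noise $\lambda$ to $\lambda-d\lambda$. It then reads off the generator $(-\lambda'_t)\nabla\log p_{\lambda_t}\cdot\nabla+(-\lambda'_t)\Delta/2$ by expanding against test functions. The marginals, and indeed the whole reversed path law, are therefore correct by construction.

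You instead posit the SDE and verify the Fokker--Planck equation. This shifts the work to global well-posedness: you need a linear-growth bound on the score and uniqueness of probability solutions with unbounded coefficients. Both hold, in fact for any law of $X_1$, but your proposal asserts them rather than proving them. For the growth bound, the posterior mean is $\nabla\Lambda(y/\lambda)$ for the convex function $\Lambda(\theta)=\log\int e^{\theta\cdot x-\|x\|^2/(2\lambda)}\operatorname{Law}(X_1)(dx)$. This function lies between $-C(1+\|\theta\|)$ and $\lambda\|\theta\|^2/2$, so a convexity difference quotient of length $1+\|\theta\|$ gives $\|\nabla\Lambda(\theta)\|\le C(1+\|\theta\|)$, uniformly for $\lambda$ in compact subsets of $(0,\infty)$. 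For uniqueness, $\rho_t$ has finite second moments, so superposition plus pathwise uniqueness (or a Lyapunov uniqueness theorem with $V=1+\|x\|^2$) applies.

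Your route buys a standard strong solution and a purely PDE verification; the paper's buys identification of the reversal itself. Your Fokker--Planck argument shows only that the posited SDE has marginals $\rho_t$, not that it is the path-law reversal of the heat channel. That identification needs your Haussmann--Pardoux fallback or the paper's kernel computation, although marginals suffice for the proposition's conclusion.

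Finally, ``leaving a first-order equation'' is accurate only in continuity form, $\partial_t\rho_t=-\nabla\cdot\bigl((x/t+\tfrac{a_t}{2}\nabla\log\rho_t)\rho_t\bigr)$. As a PDE in $\rho$ it reads $\partial_t\rho_t=-\nabla\cdot((x/t)\rho_t)-\tfrac{a_t}{2}\Delta\rho_t$. This is exactly the paper's direct check, which the bridge drift with variance $2\varepsilon$ also satisfies.
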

\begin{proof}
Since $X_1=X_0+U$, the second moment assertion follows from the stated
assumptions. Independence lets the source Gaussian and the bridge
Gaussian combine to the variance $v_t$. Let $p_\lambda$ be the density
of $X_1+B_\lambda$, with $B$ standard Brownian motion independent of
$X_1$. It is positive and smooth for $\lambda>0$, and
$\rho_t(x)=t^{-d}p_{\lambda_t}(x/t)$.

The decreasing-time process $Y_t=X_1+B_{\lambda_t}$ is the time reversal
of this heat process on the finite variance interval
$[\lambda_T,\lambda_\tau]$. Its reverse transition density from heat
time $\lambda$ to $\lambda-d\lambda$ is exactly
\[
 y'\longmapsto
 \frac{p_{\lambda-d\lambda}(y')
             \varphi_{d\lambda}(y-y')}{p_\lambda(y)}.
\]
Expanding this density against a smooth compactly supported test function
shows that the local generator in increasing $t$ is
\[
 (-\lambda'_t)\nabla\log p_{\lambda_t}(y)\cdot\nabla
                    +(-\lambda'_t)\Delta/2.
\]
This calculation also constructs the reversed process directly from
the displayed transition kernels; no assertion at time zero is needed.
Applying the deterministic rescaling $x=ty$ gives drift
$x/t-t\lambda'_t\nabla\log p_{\lambda_t}(x/t)$ and variance
$-t^2\lambda'_t$. Since
$\nabla\log p_{\lambda_t}(x/t)=t\nabla\log\rho_t(x)$, these are the
claimed coefficients. Differentiating $\lambda_t$ gives
\eqref{si-if-eq:reverse-diffusion}.

Differentiating the Gaussian convolution density also gives
$t\mathbb E[X_1\mid X_t=x]=x+v_t\nabla\log\rho_t(x)$.
Using $D_t=(X_1-X_t)/(1-t)$ yields
\eqref{si-if-eq:gaussian-source-drift}. As a direct check, its
Fokker--Planck equation is
\[
 \partial_t\rho_t=-\nabla\!\cdot((x/t)\rho_t)
    -[\varepsilon+s_0(1-t)/t]\Delta\rho_t.
\]
The reverse-channel coefficients give the identical equation because
$a_t/2=\varepsilon+s_0(1-t)/t$. Equality of these marginals does not
identify the path laws or their Euler discretizations.
\end{proof}

\section{Conditional diffusion coordinates and discrete updates}
\label{si-diffusion-dictionary}
This appendix translates the source-conditioned Brownian bridge into
standard diffusion coordinates and compares its path with the implemented
EM and BP updates.

\subsection{The positive-time path transformation}
Set $C=X_0$, $X=X_1$, and use the joint bridge
\eqref{si-core-original-bridge}. For $0<t<1$,
\[
 R_t=X+\sqrt{2\varepsilon}\,(W_t/t-W_1),\qquad
 v_t=2\varepsilon(1-t)/t.
\]
The centered Gaussian process is independent of $(C,X)$. For $s\le t$,
its covariance is $2\varepsilon(1-t)/t=\min\{v_s,v_t\}$.
Its finite-dimensional laws therefore equal those of a Brownian motion
at noise time $v_t$, observed in decreasing order. Conditional on $C=c$,
$R_t$ has the Gaussian convolution of $\mathcal L(X\mid C=c)$ with
variance $v_t$. This is the conditional VE model. For
$u=\log(1+v)$, the normalization $Z=R/\sqrt{1+v}$ has the VP form
\[
 Z=e^{-u/2}X+\sqrt{1-e^{-u}}\,G.
\]
The normalized Gaussian covariance also agrees with the forward
Ornstein--Uhlenbeck process. These are the VE/VP conventions of
\citet{si-paper-song2021}; a finite DDPM schedule
\citep{si-paper-ho2020} chooses only finitely many such noise levels.
The bridge's exact start $t=0$ corresponds to $v=\infty$, so this
identity does not bound the initialization error of a finite-noise
diffusion starting from a prescribed Gaussian.

For every $T>0$, $(c,y)\mapsto(c,[y-(1-T)c]/T)$ is a Borel bijection
with Borel inverse. Relative entropy is invariant under this map.
Corollary~\ref{si-sc-main} thus bounds the joint KL of the transformed
stopped laws. Clipping the output gives a data-processing inequality
only for the correspondingly clipped target. It does not replace the
smoothed conditional target by the clean one.

\subsection{Transformed EM and BP updates}
For this subsection suppose the clean endpoint has a finite first moment,
as in the image experiments. Let $g_k(y,c)$ predict its conditional mean
and set $\widehat b_k^{\mathrm{sc}}(y,c)=(g_k(y,c)-y)/(1-t_k)$.
For $0<t_k<t_{k+1}<1$, write
\[
 r_k=\frac{Y_k-(1-t_k)c}{t_k},\quad
 \Delta v_k=v_{t_k}-v_{t_{k+1}}
       =\frac{2\varepsilon h_k}{t_kt_{k+1}},\quad
 \widehat s_k(r_k,c)=\frac{g_k(Y_k,c)-r_k}{v_{t_k}}.
\]
When $g_k$ is the true conditional mean, $\widehat s_k$ is the conditional
score of the Gaussian convolution. Direct substitution into the EM
update yields
\begin{equation}\label{si-diffusion-em-update}
 r_{k+1}=r_k+\Delta v_k\widehat s_k(r_k,c)
     +\sqrt{\Delta v_k\,t_k/t_{k+1}}\,G_k.
\end{equation}
Reverse VE Euler instead uses variance $\Delta v_k$. The extra factor
in \eqref{si-diffusion-em-update} is a finite-step effect of the affine
time-dependent transformation, so continuous path equivalence alone
does not identify these discrete algorithms.

With bridge variance $q_k=2\varepsilon h_k(1-t_{k+1})/(1-t_k)$,
the BP update becomes
\begin{equation}\label{si-diffusion-bp-update}
 r_{k+1}=r_k+\Delta v_k\widehat s_k(r_k,c)
    +\sqrt{\Delta v_k\,v_{t_{k+1}}/v_{t_k}}\,G_k.
\end{equation}
Given a clean endpoint $X=x$, the Gaussian reverse kernel from noise
$v$ to $v'<v$ has mean $r+(v-v')(x-r)/v$ and variance
$(v-v')v'/v$. Equation~\eqref{si-diffusion-bp-update} substitutes $g_k$
for $x$ in that mean. Even if $g_k$ is the exact conditional mean,
the true kernel generally mixes over $X\mid R_t,C$; its variance and
higher-order structure are not reproduced by that substitution.
Neither transformed identity applies to $k=0$, whose affine map is
singular. The original-coordinate first step is covered by the Borel
bridge-kernel argument in the main theorem and source corollary.

\subsection{I\texorpdfstring{$^2$}{2}SB and the restoration target}
In the paired Gaussian bridge with zero base drift of
\citet{si-paper-i2sb2023}, let
$A(u)$ be cumulative forward noise variance, $B=A(1)>0$, and orient
$u=0$ at the clean image and $u=1$ at the corrupted source. For a
strictly increasing clock, set $t=1-A(u)/B$ and $2\varepsilon=B$.
The conditional mean is then $tX+(1-t)C$ and conditional variance is
$Bt(1-t)$, exactly the paired Brownian bridge used here.
This kernel correspondence does not assert that a network receiving
only state and time meets the joint-law learning requirement: its error
must be measured against $b^{\mathrm{sc}}$ to invoke
Corollary~\ref{si-sc-main}. Architectural conditioning and the sampling
kernel must both be checked before applying the bound.

The raw affine correction $C+(Y_N-C)/T$ targets
$X+\sqrt{v_T}G$ conditionally on $C$, before output clipping. In the
image protocol $\varepsilon=0.02$, $T=0.9$, so $\sqrt{v_T}=1/15$.
This residual smoothing persists even for an exact stopped sampler.
Appendix~\ref{si-image-transfer-r007} bounds score transfer with the
target mismatch kept explicit; the output-map controls in
Appendix~\ref{si-application-controls-r009} compare corrections at
fixed forecasts and costs.

Finally, if $X\in[0,1]^d$, then for either conditioning set
$\mathcal H_t=\sigma(X_t)$ or $\sigma(X_t,C)$, the centered drift residual
is $(X-\mathbb E[X\mid\mathcal H_t])/(1-t)$. Its norm is at most
$\sqrt d/(1-T)$ on $[0,T]$. Heavy-tailed corruption can therefore
make the displacement fourth moment infinite while leaving these
conditional residuals bounded. This explains why the restoration
examples test weak displacement assumptions without realizing the
strict-$M_2$ residual obstruction of Theorem~\ref{si-fixed-separation}.

\section{Proof of the fixed-law path--terminal separation}
\label{si-fixed-proof}

We prove Theorem~\ref{si-fixed-separation}. Throughout this section,
$H$ has density $k(x)=1/[\pi(1+x^2)]$ and is independent of $U$, whose
density is
\begin{equation}\label{si-fix-density}
 p(u)=\frac{c_p}{(1+u^2)^{3/2}\{\log(e+u^2)\}^2}.
\end{equation}
The normalization constant is finite and positive. The tail comparison
$p(u)\asymp |u|^{-3}(\log|u|)^{-2}$ shows that
$\mathbb EU=0$, $M_2=\mathbb EU^2<\infty$, and
$\mathbb E|U|^r=\infty$ for every $r>2$.
Set $X_0=H$, $X_1=H+U$, and
$X_t=H+tU+\sqrt{v_t}Z$, where $v_t=2\varepsilon t(1-t)$.
These last variables represent individual marginals; the original bridge
uses Brownian-bridge noise, not a single common $Z$ at all times.
Both endpoints lack a finite absolute first moment: otherwise
$H=X_1-U$ and $\mathbb E|U|<\infty$ would contradict the Cauchy tail.
All conditional means below concern the displacement or a Gaussian
increment.

\subsection{Regularity from a signed second-moment flux}

Write $\phi_v$ for the centered Gaussian density of variance $v$ and put
\begin{equation}\label{si-fix-convolutions}
 g_t=k*\phi_{v_t},\qquad
 \rho_t(x)=\int p(u)g_t(x-tu)\,du,\qquad
 q_j(t,x)=\int u^jp(u)g_t(x-tu)\,du\quad(j=1,2),
\end{equation}
with $g_0=k$. The current drift is
\begin{equation}\label{si-fix-drift}
 b_t(x)=\frac{q_1(t,x)}{\rho_t(x)}
       +2\varepsilon t\frac{\rho_t'(x)}{\rho_t(x)},\qquad b_0=0.
\end{equation}
For every fixed derivative order, constants below are uniform over
$[0,T]\times\mathbb R$.

\begin{sicorelemma}\label{si-fix-regularity}
The density satisfies
$c(1+x^2)^{-1}\le\rho_t(x)\le C(1+x^2)^{-1}$.
The drift $b$ is bounded, has bounded spatial derivatives of every fixed
order, and is continuously differentiable in time with bounded
$\partial_t b$ (one-sided at the endpoints).
The diffusion with drift $b$, diffusion coefficient $\sqrt{2\varepsilon}$,
and initial density $k$ has marginals $\rho_t$.
\end{sicorelemma}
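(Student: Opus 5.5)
The plan is to derive every claim from uniform estimates on the Cauchy--Gaussian kernel $g_t$, then transfer them to $\rho_t$ and $q_j$ by splitting the $u$-integral into a near region and a far region.

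\textbf{Kernel estimates.} Since $v_t\le\varepsilon/2$ on $[0,T]$, convolving $k$ with $\phi_{v_t}$ preserves the Cauchy profile. For every fixed order $m$ and uniformly in $t\in[0,T]$,
\[
c(1+x^2)^{-1}\le g_t(x)\le C(1+x^2)^{-1},\qquad |\partial_x^m g_t(x)|\le C_m(1+x^2)^{-1-m/2}\le C_m' g_t(x).
\]
The same holds for $\partial_t g_t=\dot v_t\,\partial_x^2 g_t/2$, because $g_t$ solves the heat equation in the variance variable.

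\textbf{Density bounds.} For the lower bound on $\rho_t(x)=\mathbb E\,g_t(x-tU)$, restrict to $|U|\le1$. For the upper bound, split at $|tu|\le|x|/2$. On the near part, $g_t(x-tu)\le C(1+x^2)^{-1}$. On the far part, $\sup g_t\le C$ and $\mathbb P(|U|\ge |x|/(2T))\le CM_2/x^2$.

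\textbf{Spatial derivatives of the drift.} Because $|\partial_x^m g_t|\le Cg_t$, we get $|\partial_x^m\rho_t|\le C\rho_t$. Hence $\rho_t'/\rho_t$ and all its derivatives are bounded. For $q_1/\rho_t$ I would show
\[
|\partial_x^m q_1(t,x)|\le C(1+x^2)^{-1}.
\]
The near part uses $\mathbb E|U|<\infty$. The far part needs care as $t\downarrow0$. On $\{|tu|>|x|/2\}$ we have $|u|p(u)\le C|u|^{-2}\le Ct^2/x^2$, while $\int g_t(x-tu)\,du=1/t$. The product is $Ct/x^2$, which is uniform in $t$. The bound $b_0=0$ at $t=0$ follows from $\mathbb EU=0$ and independence of $H$ and $U$; continuity as $t\downarrow0$ comes from dominated convergence. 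Quotient rules together with the lower bound on $\rho_t$ then give bounded spatial derivatives of $b$ of every order.

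\textbf{Time derivative (the main obstacle).} Differentiating $q_1$ and $\rho_t$ in $t$ produces terms of the form $\int u^2p(u)\,\partial_x g_t(x-tu)\,du=\partial_xq_2$. This cannot be bounded absolutely, since $u^2p(u)\sim|u|^{-1}(\log|u|)^{-2}$ is not integrable against a kernel of mass $1/t$ with the required $x^{-2}$ decay. The remedy is to keep the sign, as announced in the main text.
\begin{itemize}
\item On the far region, write $\partial_xg_t(x-tu)=-t^{-1}\partial_u[g_t(x-tu)]$ and integrate by parts in $u$. The boundary terms at $|u|=|x|/(2t)$ are of order $|x|^{-1}(\log|x|)^{-2}\,g_t(x/2)/t$, which is acceptable. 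The bulk term involves $(u^2p)'(u)=O(|u|^{-2}(\log|u|)^{-2})$, and the previous $t$-bookkeeping gives $C/x^2$ uniformly.
\item For small $t$, say $t\le|x|^{-1}$, the far region starts at $|u|\gtrsim x^2$. There the absolute bound already suffices: the tail mass is $O((\log|x|)^{-1}x^{-2})$ against $\sup|g_t'|$.
\end{itemize}
This yields $|\partial_xq_2|\le C\rho_t$. The remaining pieces of $\partial_tq_1$ and $\partial_t\rho_t$, namely the $\partial_t g_t$ terms and the derivative of the factor $2\varepsilon t$, are handled by the earlier estimates. Continuity in $t$, including one-sided continuity at the endpoints, follows by dominated convergence with these majorants.

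\textbf{Marginals of the diffusion.} Differentiating the single-time representation against test functions, with $X_t+hD_t$ as in Proposition~\ref{si-core-kernel-preserve}, shows that $\rho_t$ is a weak solution of
\[
\partial_t\rho=-\partial_x(b\rho)+\varepsilon\,\partial_x^2\rho,\qquad \rho_0=k.
\]
Since $b$ is bounded and Lipschitz in $x$, the SDE $dY=b_t(Y)\,dt+\sqrt{2\varepsilon}\,dW$ has a unique strong solution. Its marginals solve the same Fokker--Planck equation. Uniqueness of bounded measure-valued weak solutions for bounded Lipschitz drift (via the superposition principle or a duality argument with the backward Kolmogorov equation) identifies the two families of marginals.
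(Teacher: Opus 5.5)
Your overall route is the paper's. It uses Cauchy-profile domination of $g_t$ and its derivatives, including the sharp decay $|g_t'(x)|\le C(1+|x|)^{-3}$, and the near/far split at $R=|x|/(2t)$. It then applies a signed integration by parts to the flux $\partial_xq_2$, the identities $\partial_t\rho_t=-q_1'+\kappa_t\rho_t''$ and $\partial_tq_1=-q_2'+\kappa_tq_1''$ with $\kappa_t=\varepsilon(1-2t)$, and uniqueness for the Fokker--Planck equation. The paper uses explicit identities and the mild-form bound $\|\eta_t\|_{\mathrm{TV}}\le K\int_0^t(t-s)^{-1/2}\|\eta_s\|_{\mathrm{TV}}\,ds$ with Gronwall. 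Obtaining the weak equation from the one-step kernel and citing duality for uniqueness instead is acceptable, because bounded coefficients and bounded test functions need no moment of $k$.

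The time-derivative step, however, rests on a false estimate.

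\emph{The boundary term has no leftover $1/t$.} In the integration by parts the boundary terms are $t^{-1}F(\pm R)\,g_t(x\mp tR)$, with $F(R)\le CR^{-1}$ and $R^{-1}=2t/|x|$. The $1/t$ cancels, and since $|x\mp tR|\ge|x|/2$, these terms are at most $C|x|^{-3}$ uniformly in $t\in(0,T]$. Your expression $|x|^{-1}(\log|x|)^{-2}g_t(x/2)/t$ carries a spurious $1/t$. That is why you introduced a separate regime $t\le|x|^{-1}$.

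\emph{The absolute bound in that regime fails.} On $\{|u|>R\}$ the weight is $u^2p(u)$, whose tail integral is $\asymp1/\log R$, not $O(x^{-2}/\log|x|)$. In fact no absolute bound can work there. On the $u$-interval $\{|x-tu|\le1\}$, which has length $2/t$, one has $u^2p(u)\asymp(t/|x|)\{\log(|x|/t)\}^{-2}$ and $\int_{-1}^1|g_t'|\ge c>0$. Hence $\int_{|u|>R}u^2p(u)|g_t'(x-tu)|\,du\gtrsim|x|^{-1}\{\log(|x|/t)\}^{-2}$. At $t=1/|x|$ this exceeds $\rho_t(x)\asymp x^{-2}$ by a factor of order $|x|/(\log|x|)^2$. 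This is exactly the cancellation the paper warns must be kept.

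\emph{The repair is to delete the small-$t$ bullet.} Three pieces already give $|\partial_xq_2|\le C\rho_t$ uniformly as $t\downarrow0$, as in the paper. The near part is at most $CM_2|x|^{-3}$. The corrected boundary terms are at most $C|x|^{-3}$. The bulk term is at most $t^{-1}\cdot CR^{-2}\cdot t^{-1}=4C/x^2$.
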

\begin{proof}
The elementary Cauchy shift inequalities
\[
 \frac{k(x)}{2(1+z^2)}\le k(x-z)\le2(1+z^2)k(x)
\]
and Jensen's inequality apply to both $z=\sqrt{v_t}Z$ and
$z=tU+\sqrt{v_t}Z$. Their uniformly bounded second moments give
$g_t(x)\asymp\rho_t(x)\asymp(1+x^2)^{-1}$.
Since $|k^{(j)}|\le C_jk$, positive convolution gives
$|g_t^{(j)}|\le C_jg_t$ and
$|\partial_x^j\rho_t|\le C_j\rho_t$.
Moreover $|k'(y)|\le C(1+|y|)^{-3}$ and
$1+|x|\le(1+|x-z|)(1+|z|)$ imply
\begin{equation}\label{si-fix-sharp-kernel}
 |g_t'(x)|\le C(1+|x|)^{-3}
       \mathbb E(1+|\sqrt{v_t}Z|)^3
 \le C'(1+|x|)^{-3}.
\end{equation}

For $|u|\ge1$, direct differentiation of \eqref{si-fix-density} gives
\begin{equation}\label{si-fix-tail-derivatives}
 |u|p(u)\le C|u|^{-2},\qquad
 F(u):=u^2p(u)\le C|u|^{-1},\qquad |F'(u)|\le C|u|^{-2}.
\end{equation}
For example,
$p'/p=-3u/(1+u^2)-4u/[(e+u^2)\log(e+u^2)]$.
To bound $\int |u|p(u)g_t(x-tu)\,du$, suppose $|x|>2$, $t>0$,
and split at $R=|x|/(2t)>1$.
The central integral is at most $C|x|^{-2}\mathbb E|U|$.
The tail is at most
$CR^{-2}\int g_t(x-tu)\,du=CR^{-2}/t\le C|x|^{-2}$.
At $t=0$ the integral is $\mathbb E|U|k(x)$; on compact $x$ use
bounded $g_t$ and the positive lower density bound. Thus, also using
kernel derivative domination,
\begin{equation}\label{si-fix-first}
 \int |u|p(u)g_t(x-tu)\,du\le C\rho_t(x),
 \qquad |\partial_x^jq_1(t,x)|\le C_j\rho_t(x).
\end{equation}

For $q_2'$ use the same split, but retain the sign of $g_t'$.
The central part is at most $CM_2|x|^{-3}$ by
\eqref{si-fix-sharp-kernel}. On each tail,
$g_t'(x-tu)=-t^{-1}\partial_u g_t(x-tu)$, so integration by parts gives
\[
 \left|\int_{|u|>R}F(u)g_t'(x-tu)\,du\right|
 \le\frac{|F(R)|g_t(x-tR)+|F(-R)|g_t(x+tR)}t
     +\frac1t\int_{|u|>R}|F'(u)|g_t(x-tu)\,du.
\]
The boundary terms at infinity vanish since $F(u)\to0$.
Both finite boundary arguments have magnitude at least $|x|/2$.
Consequently \eqref{si-fix-tail-derivatives} yields
\begin{equation}\label{si-fix-flux}
 |q_2'(t,x)|
 \le C\left(|x|^{-3}+\frac{R^{-1}|x|^{-2}}t
                         +\frac{R^{-2}}{t^2}\right)
 \le C'(|x|^{-3}+|x|^{-2})\le C''\rho_t(x).
\end{equation}
This is uniform as $t\downarrow0$; at zero use $q_2'=M_2k'$.
Compact $x$ is covered by bounded $g_t'$ and $M_2<\infty$.

The heat equation and domination by $1+U^2$ justify the jointly continuous
identities, with $\kappa_t=\varepsilon(1-2t)$,
\begin{equation}\label{si-fix-time}
 \partial_t\rho_t=-q_1'+\kappa_t\rho_t'',\qquad
 \partial_tq_1=-q_2'+\kappa_tq_1'',\qquad
 \partial_t\rho_t'=-q_1''+\kappa_t\rho_t'''.
\end{equation}
The preceding relative derivative bounds and the quotient rule applied
to \eqref{si-fix-drift} prove the coefficient assertions. In particular
the time derivative uses the signed flux \eqref{si-fix-flux}, not a
uniform conditional second-moment bound.
Equations \eqref{si-fix-drift}--\eqref{si-fix-time} give
$\partial_t\rho=-\partial_x(b\rho)+\varepsilon\rho''$ with initial law $k$.
Bounded globally Lipschitz drift gives a global diffusion from every
finite initial value, hence also from $k$.

Here is a uniqueness argument that requires no initial moments and will
also be used below. For the difference $\eta_t$ of two probability-measure
solutions with the same initial law and bounded drift $a$, the weak
equation, tested against the backward heat semigroup, gives
\begin{equation}\label{si-fix-mild}
 \eta_t=-\int_0^t\partial_x\phi_{2\varepsilon(t-s)}*(a_s\eta_s)\,ds .
\end{equation}
A cutoff at $s=t$ can be removed because
$\|\partial_x\phi_{2\varepsilon r}\|_1=(\pi\varepsilon r)^{-1/2}$.
Thus $v(t)=\|\eta_t\|_{\rm TV}\le K\int_0^t(t-s)^{-1/2}v(s)\,ds$.
Iterating once gives $v(t)\le\pi K^2\int_0^t v(s)\,ds$; $v\le2$
and Gronwall imply $v=0$. This identifies the diffusion marginals.
\end{proof}

\subsection{A moment-free marginal Euler estimate}

The Euler marginal-drift projection and signed conditional-increment
cancellation are established ingredients in second-order relative-entropy
analysis. Earlier direct antecedents are Lemmas~1, 2, and~4 and
Eq.~(18) of \citet{si-paper-mou2022}; see also Lemma~5.1 and
Section~5.2 of \citet{si-paper-li2026numericaldensity}.
Here we give the one-dimensional bounded-coefficient argument needed for
the opposite KL direction and a Cauchy initial law, which falls outside
the stated moment assumptions of Theorem~5.1 in
\citet{si-paper-li2026numericaldensity}.

\begin{sicoreproposition}\label{si-fix-marginal-euler}
In dimension one, let $b$ be bounded, uniformly Lipschitz in time, with
bounded spatial derivatives through order three. Suppose the common
initial density $p_0$ is positive and $C^2$, with
$|p_0'|\le R_{1,0}p_0$ and $|p_0''|\le R_{2,0}p_0$.
For all sufficiently fine partitions, exact-drift Euler satisfies
\begin{equation}\label{si-fix-terminal-estimate}
 D_{\mathrm{KL}}(\mathcal L(\widehat X_T)\|Q_N)
 \le C\sum_k h_k^3\le CT h_*^2,\qquad h_*=\max_kh_k ,
\end{equation}
where $\widehat X$ is the diffusion with drift $b$.
No moment of its initial or evolving laws is required.
\end{sicoreproposition}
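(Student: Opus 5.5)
We follow the marginal-projection route of \citet{si-paper-mou2022}. First we embed the Euler chain in continuous time. For $t\in[t_k,t_{k+1}]$ set
\[
Y_t=Y_{t_k}+(t-t_k)b_{t_k}(Y_{t_k})+\sqrt{2\varepsilon}(B_t-B_{t_k}),
\]
and let $q_t$ be its marginal. By Gaussian smoothing over one step, $q_t$ is positive and smooth for $t>t_0$. At $t=0$ we have $q_0=p_0$, and the hypotheses $|p_0'|\le R_{1,0}p_0$, $|p_0''|\le R_{2,0}p_0$ propagate. Conditioning on $Y_t=x$ shows that $q_t$ solves a Fokker--Planck equation with the projected drift
\[
\widetilde b_t(x)=\mathbb E[b_{t_k}(Y_{t_k})\mid Y_t=x].
\]
The exact law $p_t$ of $\widehat X_t$ solves the same equation with drift $b_t$. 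Differentiating $D_{\mathrm{KL}}(p_t\|q_t)$ in time and completing the square gives
\[
\frac{d}{dt}D_{\mathrm{KL}}(p_t\|q_t)\le\frac1{4\varepsilon}\int|b_t-\widetilde b_t|^2p_t\,dx .
\]
We justify the differentiation without moments in three steps. We mollify in space with a cutoff, using the two-sided relative derivative bounds $|\partial_x^jp_t|\le Cp_t$ and $|\partial_x^jq_t|\le Cq_t$. The second of these is proved inductively across steps from the bounded derivatives of $b$. We then integrate in time and remove the cutoff by dominated convergence. The uniqueness argument \eqref{si-fix-mild} identifies $p_t$ with the diffusion marginal, so no moment enters.

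The core estimate is $|b_t(x)-\widetilde b_t(x)|\le C(t-t_k)$ pointwise, uniformly in $x$. We split
\[
b_t(x)-\widetilde b_t(x)=[b_t(x)-b_{t_k}(x)]+\mathbb E[b_{t_k}(x)-b_{t_k}(Y_{t_k})\mid Y_t=x].
\]
The first bracket is $O(t-t_k)$ because $b$ is Lipschitz in time. For the second, Taylor expand $b_{t_k}$ about $x$ to second order, with bounded third-derivative remainder. This reduces the task to showing that the conditional moments $\mathbb E[(Y_t-Y_{t_k})^j\mid Y_t=x]$ are $O(t-t_k)$ for $j=1,2$ and that the third absolute moment is $O((t-t_k)^{3/2})$. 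The last two follow from the bounded drift and Gaussian increments, so the first-moment term is the crux. It is the signed cancellation: naively it is $O(\sqrt{t-t_k})$.

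To get the cancellation we write the backward conditional increment by Bayes. With $s=t-t_k$, $Y_t=y+sb(y)+\sqrt{2\varepsilon s}G$ given $Y_{t_k}=y$, and Gaussian integration by parts gives
\[
\mathbb E[Y_t-Y_{t_k}\mid Y_t=x]=s\,\mathbb E[b_{t_k}(Y_{t_k})\mid Y_t=x]+2\varepsilon s\,\partial_x\log q_t(x)+O(s)\text{-corrections}.
\]
The only danger is $\partial_x\log q_t$, which is bounded by the propagated bound $|q_t'|\le Cq_t$. That bound holds uniformly over steps provided $h_*$ is small, which is where "sufficiently fine partitions" enters, to control the Jacobian $1+s\,\partial_xb$. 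I expect this uniform propagation of relative derivative bounds for $q_t$ to be the main obstacle. I would prove it by writing $q_{t}$ as the pushforward of $q_{t_k}$ under $y\mapsto y+sb(y)$ convolved with $\phi_{2\varepsilon s}$ and tracking $\sup|q'/q|$ and $\sup|q''/q|$ with a Gronwall recursion whose per-step growth factor is $1+Ch_k$.

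Integrating the derivative bound over each interval gives
\[
D_{\mathrm{KL}}(p_T\|q_T)\le\frac{C}{4\varepsilon}\sum_k\int_{t_k}^{t_{k+1}}(t-t_k)^2\,dt\le C\sum_kh_k^3\le CTh_*^2 .
\]
Since $q_T=Q_N$ and the initial KL is zero, this is \eqref{si-fix-terminal-estimate}.
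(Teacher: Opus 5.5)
The core of your argument matches the paper; where you depart is the step that turns the drift estimate into a terminal KL bound.

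\textbf{Core estimate (same as the paper).} You propagate partition-uniform bounds $|q_t'|\le R_1^*q_t$ and $|q_t''|\le R_2^*q_t$ through the pushforward by $y\mapsto y+\tau b(y)$ (Jacobian in $[1/2,3/2]$ on fine grids) and through Gaussian convolution. The Tweedie identity for the signed increment and a Taylor expansion then give $|b_t-\widetilde b_t|\le K(t-t_k)$. Two small fixes are needed here:
\begin{itemize}
\item The identity is exact and carries a minus sign: $\mathbb E[Y_t-Y_{t_k}\mid Y_t=x]=s\,\widetilde b_t(x)-2\varepsilon s\,\partial_x\log q_t(x)$, with no further corrections.
\item The conditional second moment does not follow from bounded drift and Gaussian increments alone, because conditioning on $Y_t$ tilts the Gaussian. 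For $Z=Y_{t_k}+sb_{t_k}(Y_{t_k})$ it equals $\mathbb E[(Z-x)^2\mid Y_t=x]=2\varepsilon s+4\varepsilon^2s^2q_t''/q_t$, which needs the propagated second-order bound. With that, a first-order Taylor expansion with a $\|b''\|_\infty$ remainder suffices, so no third moment is needed.
\end{itemize}

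\textbf{Comparison step (different route).} You differentiate $D_{\mathrm{KL}}(p_t\|q_t)$ along two Fokker--Planck flows. The paper instead does the following:
\begin{itemize}
\item It shows that $\widetilde b_t=((a_\tau r)*\phi_{2\varepsilon\tau})/(r*\phi_{2\varepsilon\tau})$ is bounded and spatially Lipschitz.
\item It builds the Markov diffusion with that drift started from $p_0$, and identifies its marginals with $q_t$ by the moment-free uniqueness argument \eqref{si-fix-mild}.
\item It applies Girsanov (Novikov is automatic for bounded drifts) and then terminal data processing.
\end{itemize}
The paper's route needs nothing about the exact marginal $p_t$: the pointwise drift bound is integrated directly under the true law.

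Your route avoids constructing the mimicking SDE, but it leaves two obligations you have not discharged.
\begin{itemize}
\item You need the relative bounds $|\partial_x^jp_t|\le Cp_t$ for the exact diffusion marginal. These are not hypotheses of the proposition; in the application they come from Lemma~\ref{si-fix-regularity}. You would have to prove them, e.g.\ by passing the partition-uniform Euler bounds to the limit $h_*\to0$.
\item The cutoff cannot be removed by dominated convergence. The relative bounds only give $|\log(p_t/q_t)|\le C(1+|x|)$, and $p_t$ may have no first moment. Use instead the nonnegative integrand $p\log(p/q)-p+q$ with monotone convergence in the cutoff radius $R$. In the boundary terms, the factor $1/R$ from the cutoff derivative absorbs the linear growth, leaving $C\int_{|x|\ge R}(p_t+q_t)\,dx\to0$.
\end{itemize}
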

\begin{proof}
Let $B=\|b\|_\infty$ and $L_j=\|\partial_x^jb\|_\infty$.
Choose $h_0\le1$ with $h_0L_1\le1/2$.
If a positive density $q$ has relative derivative bounds $R_1,R_2$,
transport it by $F_\tau(y)=y+\tau b_s(y)$, $0\le\tau\le h_0$.
This is a global increasing diffeomorphism with
$J=F_\tau'\in[1/2,3/2]$. Its pushforward density $r$ obeys
\begin{align}
 \frac{r'}r(F_\tau(y))&=\frac{q'/q}{J}-\frac{J'}{J^2},\label{si-fix-transport-one}\\
 \frac{r''}r(F_\tau(y))&=\frac{q''/q}{J^2}
 -\frac{3(q'/q)J'}{J^3}-\frac{J''}{J^3}
 +\frac{3(J')^2}{J^4}.\label{si-fix-transport-two}
\end{align}
Since $J'=\tau b_s''$, $J''=\tau b_s'''$, these formulas imply
$1+R_1^{\rm new}+R_2^{\rm new}
\le(1+C_0\tau)(1+R_1+R_2)$.
Positive Gaussian convolution preserves both derivative dominations.
Iteration over complete cells, followed by any partial cell, therefore
gives constants $R_1^*,R_2^*$ independent of the partition such that the
continuous Euler interpolation density $q_t$ satisfies
\begin{equation}\label{si-fix-euler-derivatives}
 |q_t'|\le R_1^*q_t,\qquad |q_t''|\le R_2^*q_t .
\end{equation}

On a half-open cell starting at $s$, put $\tau=t-s$ and
$Z=F_\tau(Y_s)$, so $Y_t=Z+\sqrt{2\varepsilon\tau}\,G$.
Define the actual Euler marginal drift
$\widetilde b_t(x)=\mathbb E[b_s(Y_s)\mid Y_t=x]$.
Gaussian differentiation gives
\begin{align}
 \mathbb E[Z-x\mid Y_t=x]&=2\varepsilon\tau\,q_t'(x)/q_t(x),\label{si-fix-score-first}\\
 \mathbb E[(Z-x)^2\mid Y_t=x]&=2\varepsilon\tau+
        4\varepsilon^2\tau^2q_t''(x)/q_t(x).\label{si-fix-score-second}
\end{align}
These conditional moments exist even for a source without moments:
a polynomial times the Gaussian likelihood is bounded and the
normalizing density is positive. Since $Y_s=Z-\tau b_s(Y_s)$,
\[
 |\mathbb E[Y_s-x\mid Y_t=x]|\le\tau(2\varepsilon R_1^*+B),\qquad
 \mathbb E[(Y_s-x)^2\mid Y_t=x]
 \le4\varepsilon\tau+(8\varepsilon^2R_2^*+2B^2)\tau^2 .
\]
Taylor expansion of $b_s$ at $x$, using the signed first conditional
moment, and time Lipschitz continuity yield
\begin{equation}\label{si-fix-projected-error}
 |b_t(x)-\widetilde b_t(x)|\le K(t-s).
\end{equation}

To construct a Markov diffusion with these Euler marginals, let
$a_\tau=b_s\circ F_\tau^{-1}$ and use the transported density $r$ above:
\[
 \widetilde b_t=\frac{(a_\tau r)*\phi_{2\varepsilon\tau}}
                         {r*\phi_{2\varepsilon\tau}} .
\]
It is bounded by $B$. Equations \eqref{si-fix-transport-one}--%
\eqref{si-fix-euler-derivatives} give a uniform bound on $|r'|/r$,
while $|a_\tau'|\le2L_1$. Hence $|(a_\tau r)'|\le Cr$,
and convolution and the quotient rule show that $\widetilde b$ is
uniformly Lipschitz in space. Set $\widetilde b_s=b_s$ at each left
endpoint; the resulting drift is measurable in time, with only possible
grid-boundary jumps. Its bounded Lipschitz diffusion exists.
It\^o's formula for the Euler interpolation and conditional expectation
show that $q_t$ solves its weak forward equation.
The moment-free uniqueness argument \eqref{si-fix-mild} proves that this
diffusion has marginals $q_t$, in particular terminal law $Q_N$.

Both drifts are bounded, so their change-of-drift exponential satisfies
Novikov, also conditional on the common initial variable. Girsanov and
terminal data processing, evaluated under the true diffusion, give
\[
 D_{\mathrm{KL}}(\mathcal L(\widehat X_T)\|Q_N)
 \le\frac1{4\varepsilon}\int_0^T
       \mathbb E|b_t(\widehat X_t)-\widetilde b_t(\widehat X_t)|^2\,dt
 \le\frac{K^2}{12\varepsilon}\sum_kh_k^3 .
\]
The pointwise estimate \eqref{si-fix-projected-error} permits this use
under the true law although $\widetilde b$ was obtained from Euler.
\end{proof}

For \eqref{si-fix-density}, Lemma~\ref{si-fix-regularity} verifies all
coefficient hypotheses, and $|k'|\le k$, $|k''|\le2k$ verify the source
hypotheses. Thus \eqref{si-fix-terminal-estimate} applies to $\rho_T$.
For a coarse partition, direct bounded-drift Girsanov comparison with
the adapted continuous Euler interpolation gives
$D_{\mathrm{KL}}(\rho_T\|Q_N)\le B^2T/\varepsilon$.
Increasing the constant by $B^2T/(\varepsilon h_0^2)$ extends the
$Ch_*^2$ bound to all partitions.

\subsection{Tail energy and the signed comparison-path identity}

For a cell $(t,t+h)$ put
$\eta=h/(1-t)$, $q=2\varepsilon h(1-\eta)$,
$c_t^2=2\varepsilon t/(1-t)$, and
$V_t=U-c_tZ-b_t(X_t)$.
The centered comparison increment is $hV_t+\sqrt q\,G$.
Writing its averaged relative entropy to $N(0,q)$ as $R(t,h)$,
projection off the conditioning state gives
\begin{equation}\label{si-fix-mixture}
 R(t,h)\ge D_{\mathrm{KL}}(\mathcal L(S)\|N(0,q)),\quad
 S=hU+J-hb_t(X_t),\quad J=-hc_tZ+\sqrt q\,G .
\end{equation}
Here $J$ is centered Gaussian, independent of $U$, with
$\operatorname{Var}(J)=2\varepsilon h(1-h)>0$.
Independence from $b_t(X_t)$ is not needed.
For $0<h<1$, set
$r_h=\sqrt{32\varepsilon h\log(1/h)}$,
$L_h=\max\{2B,2r_h/h\}$ and
$F(z)=z^2\mathbf1_{\{z\ge r_h\}}/(4q)$.
On $\{U\ge L_h,J\ge0\}$, $S\ge hU/2\ge r_h$, so
\[
 \mathbb EF(S)\ge\frac{h^2}{32q}
             \mathbb E[U^2\mathbf1_{\{U\ge L_h\}}]
 \ge\frac h{64\varepsilon}
             \mathbb E[U^2\mathbf1_{\{U\ge L_h\}}].
\]
For $G_q\sim N(0,q)$ and $z_0=r_h/\sqrt q$,
\[
 \mathbb E e^{F(G_q)}
 \le1+\frac1{\sqrt{2\pi}}\int_{z_0}^{\infty}e^{-z^2/4}\,dz
 =1+\sqrt2\,\mathbb P(G\ge z_0/\sqrt2)
 \le1+\sqrt2h^4 .
\]
The last inequality uses $q\le2\varepsilon h$.
Apply the entropy variational inequality to $F\wedge m$ and let
$m\uparrow\infty$. Both limits are finite, by the displayed exponential
bound and $\mathbb ES^2<\infty$. We obtain
\begin{equation}\label{si-fix-tail-energy}
 R(t,h)\ge\frac h{64\varepsilon}
             \mathbb E[U^2\mathbf1_{\{U\ge L_h\}}]-\sqrt2h^4 .
\end{equation}

Let $\delta=1-T$, $a=1-\delta^{1/N}$ and
$h_k=a(1-t_k)$. Then $\delta a\le h_k\le a$.
Each $R(t,h)$ is finite since it is at most
$h^2\mathbb EV_t^2/(2q)$, and
$\mathbb EV_t^2\le M_2+2\varepsilon T/\delta=:C_V$.
Changing the Gaussian reference from variance $q_k$ to
$2\varepsilon h_k$ and applying the path chain rule gives the finite
signed identity
\begin{equation}\label{si-fix-signed-path}
 D_{\mathrm{KL}}(P_N^*\|Q_N^{\rm path})
 =\sum_k\left[
 R(t_k,h_k)+\frac{-\log(1-a)-a}{2}
 -\frac{ah_k^2}{2q_k}\mathbb EV_{t_k}^2\right].
\end{equation}
This is the scalar, exact-drift specialization of
\eqref{si-core-K5} and \eqref{si-core-path-KL}.

The map $h\mapsto\log(1/h)/h$ decreases on $(0,1)$.
Thus $L_{h_k}\le L_a^*$, where
\[
 L_a^*=\max\left\{2B,\,
  2\sqrt{\frac{32\varepsilon\log(1/(\delta a))}{\delta a}}\right\}.
\]
Sum \eqref{si-fix-tail-energy}, retain the negative term in
\eqref{si-fix-signed-path}, and discard only its nonnegative variance
correction. Since $\sum h_k=T$ and $\sum h_k^4\le Ta^3$, this gives
\begin{equation}\label{si-fix-path-bound}
 D_{\mathrm{KL}}(P_N^*\|Q_N^{\rm path})
 \ge\frac T{64\varepsilon}\mathbb E[U^2\mathbf1_{\{U\ge L_a^*\}}]
       -\sqrt2Ta^3-\frac{aTC_V}{4\varepsilon(1-a)} .
\end{equation}
The density \eqref{si-fix-density} satisfies
$\mathbb E[U^2\mathbf1_{\{U\ge R\}}]\ge c/\log R$ for all large $R$,
by integration of $1/[u(\log u)^2]$.
Also $\log L_a^*=O(\log(1/a))$. Hence
\eqref{si-fix-path-bound} is at least $c/\log(1/a)$ for small $a$.
Finally $U=V_t+c_tZ+b_t(X_t)$ proves
$\mathbb E|V_t|^r=\infty$ for every $r>2$ at each fixed $t\le T$,
while its second moment remains finite. The strict moment failure
therefore holds for the true-marginal averaged residual itself.

\subsection{Matching upper rate for the comparison path}
The derivation of \eqref{si-core-T1-M2} bounds the comparison-path
entropy before terminal projection. At exact drift and $d=1$ it gives
\begin{equation}\label{si-fix-path-upper}
\begin{aligned}
D_{\mathrm{KL}}(P_N^\star\|Q_N^{\mathrm{path}})
\le{}&\frac{16}{\varepsilon(1-a)}
\mathbb E\left[U^2\frac{aU^2}{C_a+aU^2}\right]\\
&+\left[\frac{96}{(1-a)^2}+\frac1{4(1-a)}\right]aL_\delta,
\qquad C_a=2\varepsilon(1-a).
\end{aligned}
\end{equation}
For large $R$, the density in \eqref{si-fix-density} gives
\[
 \mathbb E[U^2\mathbf1_{\{|U|>R\}}]\asymp\frac1{\log R},
 \qquad
 \mathbb E[U^4\mathbf1_{\{|U|\le R\}}]
       \le\frac{CR^2}{(\log R)^2}.
\]
The first estimate integrates $1/[u(\log u)^2]$; for the second,
split the integral of $u/(\log u)^2$ at $\sqrt R$.
Let $R_a=\sqrt{C_a/a}$. Splitting the expectation in
\eqref{si-fix-path-upper} at $R_a$ yields an upper bound
\[
 \frac a{C_a}\mathbb E[U^4\mathbf1_{\{|U|\le R_a\}}]
 +\mathbb E[U^2\mathbf1_{\{|U|>R_a\}}]
 =O\bigl((\log(1/a))^{-1}\bigr).
\]
It is also at least half the second term, so the tail modulus itself
has this order. Combining the upper estimate with
\eqref{si-fix-path-bound} and $a\asymp N^{-1}$ proves
$D_{\mathrm{KL}}(P_N^\star\|Q_N^{\mathrm{path}})
\asymp1/\log N$. This conclusion uses the path proof of the modulus,
not a reversal of terminal data processing.

\subsection{Nonvanishing original-bridge path entropy}
By Lemma~\ref{si-fix-regularity},
$B=\sup_{0\le t\le T,x\in\mathbb R}|b_t(x)|<\infty$.
Under the exact-drift Euler law,
\[
 Y_N-Y_0=\sum_kh_kb_{t_k}(Y_k)
             +\sum_k\sqrt{2\varepsilon h_k}G_k.
\]
The first sum is bounded in absolute value by $BT$ and the second is
$N(0,2\varepsilon T)$. Independence of these sums is not needed.
For the path event $A_r=\{x_N-x_0>r\}$ and $r>BT$,
\[
 Q_N^{\mathrm{path}}(A_r)\le
 g_r:=\mathbb P\{N(0,2\varepsilon T)>r-BT\}.
\]
Under the original bridge,
$\mathcal X_T-\mathcal X_0=TU+\sqrt{2\varepsilon T(1-T)}Z$
with $Z$ independent of $U$. Its probability $p_r=R_N(A_r)$ is
independent of $N$, lies in $(0,1)$, and for large $r$ satisfies
\[
 p_r\ge\tfrac12\mathbb P(U>r/T)
       \ge c r^{-2}(\log r)^{-2}.
\]
Choose one fixed large $r$ such that $g_r<p_r$. Binary data processing
and monotonicity of Bernoulli relative entropy in its second argument
below $p_r$ then give
\begin{equation}\label{si-fix-original-lower}
 D_{\mathrm{KL}}(R_N\|Q_N^{\mathrm{path}})
 \ge d_{\mathrm{Ber}}\bigl(p_r\|Q_N^{\mathrm{path}}(A_r)\bigr)
 \ge d_{\mathrm{Ber}}(p_r\|g_r)>0.
\end{equation}
This event uses the initial coordinate as well as the final one.

For the upper bound, Proposition~\ref{si-core-markov-projection} gives
\begin{equation}\label{si-fix-original-upper}
 D_{\mathrm{KL}}(R_N\|Q_N^{\mathrm{path}})
 \le Nc_1(a)+\frac T{4\varepsilon}
          \left(M_2+\frac{2\varepsilon T}{\delta}\right).
\end{equation}
Here $Nc_1(a)=(L_\delta-Na)/2\le L_\delta/2$.
Thus the original-path entropy is bounded above and bounded away
from zero, uniformly in $N$. The finite decomposition
\eqref{si-core-path-pythagoras} and the vanishing comparison-path
term also give $D_{\mathrm{KL}}(R_N\|P_N^\star)\asymp1$.
Together with the already proved terminal upper bound $Ca^2$, these
estimates complete Theorem~\ref{si-fixed-separation}. They establish
neither a positive limit of the original-path entropy nor a terminal
lower rate.

\subsection{A Student-displacement variant}
\label{si-fix-student}
Fix $r>0$, $2<\nu<4$, $\varepsilon>0$, and $0<T<1$. Replace
\eqref{si-fix-density} by the normalized density
\[
 p_{\nu,r}(u)=c_{\nu,r}(1+u^2/r^2)^{-(\nu+1)/2},
\]
and retain an independent standard Cauchy source $H$, with
$X_0=H$ and $X_1=H+U$. Put $\beta=\nu/2-1\in(0,1)$.
There are constants $c,C>0$ and $N_0\ge2$, depending only on these
fixed parameters, such that the exact-drift geometric-grid sampler obeys
\begin{equation}\label{si-fix-student-rates}
 D_{\mathrm{KL}}(P_N^\star\|Q_N^{\mathrm{path}})
 \ge \frac{c}{(N\log N)^\beta},\qquad
 D_{\mathrm{KL}}(\rho_T\|Q_N)\le \frac{C}{N^2},
 \qquad N\ge N_0.
\end{equation}
We give the estimates needed to apply the preceding argument, rather
than assuming the regularity or the residual moment bound.

\paragraph{Moments and coefficient regularity.}
The density is symmetric, has $p_{\nu,r}(u)\asymp |u|^{-\nu-1}$
for $|u|\ge1$, and therefore has mean zero, $M_2<\infty$, and a
finite absolute $p$th moment exactly when $p<\nu$ (for $p>0$).
All constants below may depend on $r$ and $\nu$; no uniformity in these
parameters is asserted. Differentiating the displayed density gives
\[
 \frac{p_{\nu,r}'(u)}{p_{\nu,r}(u)}
 =-\frac{(\nu+1)u}{r^2+u^2}.
\]
Consequently, for $|u|\ge1$, with $F(u)=u^2p_{\nu,r}(u)$,
\begin{equation}\label{si-fix-student-tail-estimates}
 |u|p_{\nu,r}(u)\le C|u|^{-\nu},\qquad
 |F(u)|\le C|u|^{1-\nu},\qquad
 |F'(u)|\le C|u|^{-\nu}.
\end{equation}
The Cauchy shift inequalities in Lemma~\ref{si-fix-regularity} require
only $M_2<\infty$, so they still give
$g_t(x)\asymp\rho_t(x)\asymp(1+x^2)^{-1}$ on $[0,T]$.
They also give $|\partial_x^j\rho_t|\le C_j\rho_t$ and
$|g_t'(x)|\le C(1+|x|)^{-3}$, exactly as in
\eqref{si-fix-sharp-kernel}.

For $|x|>2$ and $0<t\le T$, set $R=|x|/(2t)>1$.
On $|u|\le R$, the kernel argument has magnitude at least $|x|/2$;
on $|u|>R$, use \eqref{si-fix-student-tail-estimates} and
$\int g_t(x-tu)\,du=1/t$. This gives
\[
 \int |u|p_{\nu,r}(u)g_t(x-tu)\,du
 \le C|x|^{-2}\mathbb E|U|+\frac{CR^{-\nu}}t
 \le C'\bigl(|x|^{-2}+t^{\nu-1}|x|^{-\nu}\bigr)
 \le C''\rho_t(x).
\]
The same bound for $\partial_x^jq_1$ follows from kernel derivative
domination. For the signed flux $q_2'$, the central region contributes
at most $CM_2|x|^{-3}$. Integration by parts on each tail gives, with
vanishing boundary terms at infinity,
\begin{align*}
 \left|\int_{|u|>R}F(u)g_t'(x-tu)\,du\right|
 &\le \frac{|F(R)|g_t(x-tR)+|F(-R)|g_t(x+tR)}t
       +\frac1t\int_{|u|>R}|F'(u)|g_t(x-tu)\,du\\
 &\le C\left(t^{\nu-2}|x|^{-\nu-1}
                      +t^{\nu-2}|x|^{-\nu}\right).
\end{align*}
Since $\nu>2$ and $t\le T<1$, the sum of this bound and the central
contribution is at most $C\rho_t(x)$, uniformly as $t\downarrow0$.
At $t=0$, $q_1=0$ and $q_2'=M_2k'$; on compact $x$, bounded kernel
derivatives and $M_2<\infty$ give the same relative bounds.
Differentiation under the integral in \eqref{si-fix-time} is dominated
by a constant times $1+U^2$. The time identities, quotient-rule
estimates, and moment-free uniqueness argument of
Lemma~\ref{si-fix-regularity} now apply with these verified bounds.
In particular, $b$ is bounded, has bounded spatial derivatives through
order three, and has bounded continuous time derivative. Thus
Proposition~\ref{si-fix-marginal-euler} gives the terminal estimate
$Ch_*^2\le Ca^2$, where $a=1-\delta^{1/N}$ and $\delta=1-T$.

\paragraph{Residual finiteness and path lower bound.}
The residual is still
$V_t=U-c_tZ-b_t(X_t)$ with
$c_t^2=2\varepsilon t/(1-t)$. Since $b_t(X_t)$ is the conditional
mean of $U-c_tZ$, conditional orthogonality and independence of $U,Z$
give
\[
 \mathbb EV_t^2\le\mathbb E(U-c_tZ)^2
 =M_2+c_t^2\le M_2+2\varepsilon T/\delta=:C_V<\infty.
\]
Every one-step entropy and every term of the finite signed identity
\eqref{si-fix-signed-path} is therefore finite. Its tail-energy proof
uses only this bound, $B=\|b\|_\infty<\infty$, and independence of
$U$ from the Gaussian $J$, so it applies without change to this density.
Moreover, for all large $R$,
\[
 \mathbb E[U^2\mathbf1_{\{U\ge R\}}]
 \ge c\int_R^\infty u^{1-\nu}\,du
 =\frac{c}{\nu-2}R^{2-\nu}.
\]
For all sufficiently small $a$, the threshold $L_a^*$ in
\eqref{si-fix-path-bound} satisfies
$L_a^*\le C\sqrt{\log(1/a)/a}$ and tends to infinity.
Because $2-\nu=-2\beta<0$, the positive term of that bound is at least
$c a^\beta/[\log(1/a)]^\beta$. The two subtracted terms are $O(a^3)$
and $O(a)$, with fixed finite constants. Finally,
\[
 \frac{a}{a^\beta/[\log(1/a)]^\beta}
 =a^{1-\beta}[\log(1/a)]^\beta\longrightarrow0.
\]
They can therefore be absorbed into half the positive term for small
$a$. Since $a\asymp N^{-1}$ and $\log(1/a)\asymp\log N$, this proves
both claims in \eqref{si-fix-student-rates}. The distribution and its
scale are fixed before $N$ varies.

\section{The known-context conditional sampling interface}
\label{si-learning-proof}

This interface controls the joint context--output law of a frozen fitted
drift. It is also used for the image experiments. Let $C\in\mathbb R^q$, let the finite Borel map
$m:\mathbb R^q\to\mathbb R^d$ be fixed independently of the training
sample, and put $R=X_1-m(C)$, $M=\mathbb E\|R\|^2<\infty$.
All evaluation variables are independent of the entire training sample
and its Gaussian augmentations. Write $\mathcal D$ for this full training
information, $h=T/N$, $\delta=1-T$, and $S=T/\delta$.

\subsection{The conditional sampling interface}

The exact conditional drift, expressed without an absolute endpoint mean,
is
\begin{equation}\label{si-learn-drift}
 b_t(c,x)=
 \frac{\mathbb E[R\mid C=c,X_t=x]-(x-m(c))}{1-t}.
\end{equation}
At $t=0$, its value at the source is $\mathbb E[R\mid C=c]$.
For frozen learned functions $\widehat b_k$, let
\[
 \overline{\mathcal E}(\mathcal D)
 =\sum_{k=0}^{N-1}h\,\mathbb E\bigl[
   \|\widehat b_k(C,X_{t_k})-b_{t_k}(C,X_{t_k})\|^2\mid\mathcal D\bigr].
\]
Given $C=c$, the source is the known constant $m(c)$.
Apply Theorem~\ref{si-if-thm:src}, specifically
\eqref{si-if-eq:src-uniform}, in this fiber and then integrate in $c$.
Finite $M$ ensures a finite conditional second moment for almost every
fiber. The target and Euler conditional terminal laws have positive
densities, respectively from the bridge Gaussian and the last Euler
innovation. The entropy chain rule with their common context marginal
therefore gives
\begin{equation}\label{si-learn-interface}
 D_{\mathrm{KL}}\bigl(\mathcal L(C,X_T)\|
                  \mathcal L(C,Y_N\mid\mathcal D)\bigr)
 \le\frac{\overline{\mathcal E}(\mathcal D)}{4\varepsilon}
 +h\left[\frac{dT}{4\delta}
          +\frac{M}{4\varepsilon\delta(\delta+h)}\right].
\end{equation}
This application conditions on $C$, including when $m$ is noninjective.
At zero the state is exactly $m(c)$; after the first step, positivity of
the conditional densities makes true-marginal null modifications
irrelevant to the sampler.

\section{Learning from arbitrary endpoint pairs}
\label{si-general-learning-proof}

We prove Theorem~\ref{si-learning-thm} and the statistical lower bound
\eqref{si-general-learning-minimax-main}. Throughout the consistency
proof, $d$, $\varepsilon>0$, $0<T<1$, and the coupling $\pi$ are fixed.
Only $M_2=\mathbb E\|X_1-X_0\|^2<\infty$ is assumed.
All evaluation variables, the exact source draw, and the Euler
innovations are independent of the entire training information,
including its generated Gaussian variables.

\subsection{Estimator and uniform label control}

Choose deterministic nested finite Borel partitions $\mathcal P_j$ of
$\mathbb R^d$ generating its Borel sigma-field, with $K_j\to\infty$.
For example, take the atoms generated by the first $j$ members of a
countable generating family of rational half-spaces. At each fitted time
$t$, form iid rows
\[
 \left(X_{0,i}+tU_i+\sigma_tZ_{i,t},\,
       \operatorname{clip}_B(U_i-c_tZ_{i,t})\right),
 \qquad 1\le i\le n,
\]
where the Gaussian rows are independent of the endpoint pairs.
Average the labels within each cell, using zero when the cell is empty.
This defines $\widehat b_t$ jointly measurably in the training data and
the current state, with $\|\widehat b_t(x)\|\le B$ everywhere.
At zero the rows reduce to $(X_{0,i},\operatorname{clip}_B(U_i))$.
The endpoint sample can be shared between times; the fitted functions
need not be independent. Write $\mathcal D$ for the complete training
information for a given finite grid.

On a common endpoint/Gaussian space, let
\[
 \Lambda=\|U\|+c_T\|Z\|,\qquad
 C_*=M_2+c_T^2d,\qquad
 \tau_B=\mathbb E[\Lambda^2\mathbf1_{\{\Lambda>B\}}].
\]
Then $\Lambda\in L^2$, $\|D_t\|\le\Lambda$ for $0\le t\le T$, and
\begin{equation}\label{si-general-learning-tails}
\begin{gathered}
 \sup_{t\le T}\mathbb E\|D_t\|^2=C_*<\infty,\qquad
 \tau_B\longrightarrow0,\\
 \sup_{t\le T}\mathbb E
 \|D_t-\operatorname{clip}_B(D_t)\|^2\le\tau_B.
\end{gathered}
\end{equation}
The equality uses the independence and zero mean of $Z$, not endpoint
moments. Let $P_j^t$ be conditional projection onto the cell of
$\mathcal P_j$ containing $X_t$, and put
\[
 b_t^B(X_t)=\mathbb E[\operatorname{clip}_B(D_t)\mid X_t],\qquad
 A_j(t)=\|(I-P_j^t)b_t\|_{L^2(\rho_t)}^2.
\]
For every fixed $t$, nested conditional projections give
$A_j(t)\downarrow0$, and $0\le A_j(t)\le C_*$.

The histogram risk satisfies
\begin{equation}\label{si-general-learning-risk}
 \mathbb E_{\mathcal D}\|\widehat b_t-b_t\|_{L^2(\rho_t)}^2
 \le6\tau_B+4A_j(t)+\frac{6K_jB^2}{n+1}.
\end{equation}
Indeed, conditional contraction gives
$\|b_t-b_t^B\|_2^2\le\tau_B$, and orthogonality to the cell subspace
gives the decomposition
\[
 \|\widehat b_t-b_t^B\|_2^2
 =\|(I-P_j^t)b_t^B\|_2^2
  +\|\widehat b_t-P_j^tb_t^B\|_2^2.
\]
The first term is at most $2A_j(t)+2\tau_B$.
For a cell of probability $p$ and occupancy $H\sim\operatorname{Bin}(n,p)$,
both the conditional label variance and the squared conditional mean
are at most $B^2$. Its expected integrated estimation error is at most
\[
 pB^2\left[\mathbb E\frac{\mathbf1_{\{H>0\}}}{H}
                     +\mathbb P(H=0)\right]
 \le\frac{3B^2}{n+1}.
\]
Here $1/H\le2/(H+1)$ when $H>0$,
$p\mathbb E[1/(H+1)]=[1-(1-p)^{n+1}]/(n+1)$, and
$p(1-p)^n\le1/(n+1)$; a zero-mass cell contributes zero.
Sum over cells and use
$\|\widehat b_t-b_t\|_2^2
\le2\|\widehat b_t-b_t^B\|_2^2+2\tau_B$ to obtain
\eqref{si-general-learning-risk}, including at $t=0$.

\subsection{Uniform approximation away from zero}

For each $a\in(0,T)$ we show
\begin{equation}\label{si-general-learning-uniform}
 \sup_{a\le t\le T}A_j(t)\longrightarrow0.
\end{equation}
Use the common measurable space with coordinates $(x_0,u,x)$ and measures
\[
 \mu_t(dx_0,du,dx)=\pi_{X_0,U}(dx_0,du)
  \varphi_{\sigma_t^2}^{\mathrm{var}}(x-x_0-tu)\,dx.
\]
For $t>0$, the conditional Gaussian density is continuous in its mean
and positive variance in $L^1$. Integrating its total-variation
distance against $\pi_{X_0,U}$, with bound one, yields
$\|\mu_t-\mu_s\|_{\mathrm{TV}}\to0$ as $s\to t>0$.
Total variation here is the supremum over events.
This argument requires neither densities nor moments for the endpoints.

On this space define
\[
 H_t^B(x_0,u,x)=
 \operatorname{clip}_B\!\left(\frac{x_0+u-x}{1-t}\right).
\]
Under $\mu_t$ it equals $\operatorname{clip}_B(D_t)$.
For fixed $v$, the radius of $\operatorname{clip}_B(v/(1-t))$ is
$\min\{\|v\|/(1-t),B\}$, whose derivative where it exists is bounded
by $B/(1-T)$. Thus
\[
 \|H_t^B-H_s^B\|\le\frac{B|t-s|}{1-T}
\]
pointwise. For any Borel predictor $g(x)$ taking values in the closed
radius-$B$ ball, its squared loss is in $[0,4B^2]$. Changing first the
measure and then the label bounds the change in its risk by
\begin{equation}\label{si-general-learning-risk-continuity}
 4B^2\|\mu_t-\mu_s\|_{\mathrm{TV}}
 +\frac{4B^2|t-s|}{1-T},
\end{equation}
uniformly over $g$. Projection into that ball cannot increase a
clipped-label loss. Taking the infimum over all such predictors, or
over those constant on cells of $\mathcal P_j$, therefore gives two
continuous Bayes risks. Their difference is
$A_j^B(t)=\|(I-P_j^t)b_t^B\|_2^2$.

Both residual projections defining $A_j(t)$ and $A_j^B(t)$ have norm
at most $\sqrt{C_*}$, and their distance is at most $\sqrt{\tau_B}$.
Consequently
\[
 |A_j(t)-A_j^B(t)|\le2\sqrt{C_*\tau_B}
\]
uniformly in $j$ and $t$. Each $A_j$ is therefore continuous on
$[a,T]$. These nonnegative continuous functions decrease pointwise to
zero. For each $\eta>0$, the increasing open sets
$\{t:A_j(t)<\eta\}$ cover this compact interval; a finite subcover is
contained in one member. This proves
\eqref{si-general-learning-uniform}. No continuity at a singular
source time is asserted.

\subsection{The initial interval and growing grids}

Let $\Delta_n=\max_k h_k\to0$ for any deterministic sequence of finite
grids. The intervals whose left endpoints are below $a$ have total
length at most $a+\Delta_n$. Hence
\begin{equation}\label{si-general-learning-grid}
 \sum_kh_k A_{j_n}(t_k)
 \le C_*(a+\Delta_n)
       +T\sup_{a\le t\le T}A_{j_n}(t).
\end{equation}
For fixed $a>0$ the second term tends to zero, and then $a\downarrow0$
removes the first. Summing \eqref{si-general-learning-risk} gives
\begin{equation}\label{si-general-learning-energy}
 \mathbb E_{\mathcal D}\mathcal E_n
 \le6T\tau_{B_n}+4\sum_kh_k A_{j_n}(t_k)
      +\frac{6TK_{j_n}B_n^2}{n+1}\longrightarrow0.
\end{equation}
Each fixed-time regression, including time zero, is also consistent
by \eqref{si-general-learning-risk}. The weighted conclusion above
does not require $\sup_{0\le t\le T}A_j(t)\to0$.
Sharing endpoint data across time slices causes no difficulty: each
slice has iid rows, and expectation of a finite sum needs no
independence between the fits.

For a concrete law-independent tuning, take $B_n=n^{1/8}$ and the
largest $j$ with $K_j\le n^{1/4}$, using an initial fixed partition
for small $n$. Then $j_n\to\infty$ and
$K_{j_n}B_n^2/n\le n^{-1/2}$ eventually.

\subsection{Transfer to the actual Euler terminal law}

For every frozen training sample the drifts are finite Borel functions,
and
\[
 \mathcal E_n(\mathcal D_n)\le2T(B_n^2+C_*)<\infty.
\]
The learning hypothesis of Theorem~\ref{si-core-theorem-T1} thus holds
before averaging over training. Conditional on $\mathcal D_n$, fresh
exact initialization preserves $Y_0\sim\rho_0$ and the required
independence of the Euler innovations. Apply
\eqref{si-core-T1-M2} to these frozen drifts on the geometric grid,
with $a_n=1-(1-T)^{1/N_n}$ and $L_\delta=\log(1/(1-T))$.
Averaging gives the same right side, with $\mathcal E$ replaced by
$\mathbb E_{\mathcal D_n}\mathcal E_n$, as an upper bound on
$\mathbb E_{\mathcal D_n}D_{\mathrm{KL}}(\rho_T\|Q_{n,\mathcal D_n})$;
the remaining terms depend only on the fixed coupling and grid.
For any $N_n\to\infty$, $a_n\to0$ and $\Delta_n=a_n\to0$.
The tail integrand in \eqref{si-core-T1-M2} is bounded by $\|U\|^2$ and tends
pointwise to zero, so dominated convergence applies for this fixed
coupling. Equation~\eqref{si-general-learning-energy} and the last
term tending to zero prove Theorem~\ref{si-learning-thm}.
The divergence is to the output law conditional on training.
Convexity gives the same upper bound for the training-mixture output,
but does not identify these two divergences. An exact source draw can
be obtained from an independent endpoint pair by discarding its target;
replacing it by the empirical training-source law changes the protocol.

\subsection{No uniform sample rate for the unconditional target}

\begin{sicoreproposition}\label{si-general-learning-minimax}
Fix $\varepsilon>0$, $0<T<1$, and $r>0$ in dimension one. Let
$\Pi_r$ be the class of Borel endpoint couplings with $|X_1-X_0|=r$
almost surely. For every $n\ge1$ and every estimator of a probability
law from $n$ iid endpoint pairs,
\[
 \sup_{\pi\in\Pi_r}\mathbb E_\pi
 D_{\mathrm{KL}}(\rho_T^\pi\|\widehat Q_{\mathcal D_n})
 \ge J_r/4>0,
\]
even if the estimator is given the source law and unlimited independent
source-only draws. Here, with $\sigma^2=2\varepsilon T(1-T)$,
\[
 P_\pm=N(\pm Tr,\sigma^2),\quad
 \overline P=(P_++P_-)/2,\quad
 J_r=\tfrac12\sum_{s\in\{+,-\}}
       D_{\mathrm{KL}}(P_s\|\overline P).
\]
The estimator may use independent randomization and synthetic
perturbations of observed pairs, but no additional endpoint labels.
\end{sicoreproposition}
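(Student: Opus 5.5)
We will prove this with a Le Cam two-point (more precisely, a Bayesian mixing) argument. The adverse couplings are chosen so that the target $\rho_T^\pi$ is essentially $P_+$ or $P_-$, depending on a sign that can be read only from endpoint pairs. Source-only draws will carry no information about that sign.

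Fix $n$ and an integer $m\gg n^2$. Choose source points $x_1,\dots,x_m$ with spacing $\Delta$. Let the source law be uniform on these points, and for a sign vector $\theta\in\{\pm1\}^m$ set $X_1=X_0+\theta_i r$ when $X_0=x_i$. Every such $\pi_\theta$ lies in $\Pi_r$ and has the same source law, so source-only draws are uninformative about $\theta$. The target is the mixture
\[
\rho_T^{\theta}=\frac1m\sum_i N(x_i+T\theta_i r,\sigma^2).
\]

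We now place a uniform prior on $\theta$. With at most $n$ pairs observed, at most $n$ source atoms are revealed, and conditionally on the data the signs $\theta_i$ at unobserved atoms remain iid uniform. We take the spacing $\Delta\to\infty$ (permitted to depend on $n$, as the statement allows). The components are then asymptotically separated, and KL to a separated mixture decomposes atom by atom up to an error $o(1)$ as $\Delta\to\infty$. For any estimator $\widehat Q$, the Bayes risk $\mathbb E_\theta\mathbb E\,D_{\mathrm{KL}}(\rho_T^\theta\|\widehat Q)$ is then at least
\[
\frac1m\sum_{i\text{ unobserved}}\mathbb E_{\theta_i}\,D_{\mathrm{KL}}\bigl(P_{\theta_i}(\cdot-x_i)\,\big\|\,\widehat Q_i\bigr)-o(1),
\]
where $\widehat Q_i$ is the normalized restriction of $m\widehat Q$ to the Voronoi cell of $x_i$. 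The mass mismatch contributes a nonnegative term by the log-sum inequality, so it can be dropped.

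For an unobserved atom, $\widehat Q_i$ is independent of $\theta_i$. The quantity $\tfrac12\sum_s D_{\mathrm{KL}}(P_s\|Q)$ is minimized over $Q$ at the mixture $\overline P$, which gives the value $J_r$ (a compensation/golden-formula identity). Since a fraction at least $1-n/m\ge 1/2$ of the atoms are unobserved, the Bayes risk is at least $J_r/2-o(1)$. Choosing $\Delta$ large makes this at least $J_r/4$, and the supremum over $\pi$ dominates the Bayes risk. Finally, $J_r>0$ because $P_+\ne P_-$ when $r>0$.

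The main obstacle is making the cellwise decomposition rigorous under truncation. Gaussian tails leak across cells, and $\widehat Q$ is arbitrary. Our first attempt will be the data-processing inequality applied to the cell-index-plus-local-coordinate map. This yields the chain-rule lower bound $\mathbb E_i D_{\mathrm{KL}}(\rho^\theta(\cdot\mid i)\|\widehat Q(\cdot\mid i))$, and we will then replace the conditional target $\rho^\theta(\cdot\mid i)$ by the truncated $P_{\theta_i}$, with a leakage error controlled through lower semicontinuity and $\Delta\to\infty$.
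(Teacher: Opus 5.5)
Your construction (equally spaced source atoms, one unknown sign per atom, a uniform prior, spacing allowed to grow with $n$) is essentially the paper's. You extract the bound differently, though, and the step you flag as the main obstacle is not closed by the tool you name.

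\textbf{The gap.} Lower semicontinuity gives $\liminf_\Delta D_{\mathrm{KL}}(P_\Delta\|Q)\ge D_{\mathrm{KL}}(P\|Q)$ for a \emph{fixed} reference $Q$, or for a jointly convergent pair. Here the reference $\widehat Q(\cdot\mid C_i)$ comes from an arbitrary estimator applied to data whose law changes with $\Delta$, so you need a bound uniform in $Q$. An additive leakage bound $D_{\mathrm{KL}}(\rho^\theta(\cdot\mid C_i)\|Q)\ge D_{\mathrm{KL}}(A\|Q)-o(1)$ is false uniformly in $Q$ at every fixed $\Delta$: let $Q$ put vanishing mass where $A$ lives.

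\textbf{The repair.} A multiplicative bound does hold. If $M=(1-\epsilon)A+\epsilon B$, then for every $Q$
\[
D_{\mathrm{KL}}(M\|Q)\ge(1-\epsilon)D_{\mathrm{KL}}(A\|Q)-h_2(\epsilon).
\]
This follows from the identity $(1-\epsilon)D_{\mathrm{KL}}(A\|Q)+\epsilon D_{\mathrm{KL}}(B\|Q)=D_{\mathrm{KL}}(M\|Q)+(1-\epsilon)D_{\mathrm{KL}}(A\|M)+\epsilon D_{\mathrm{KL}}(B\|M)$. The last two terms form a mutual information with a binary label, so they are at most $h_2(\epsilon)$.
\begin{itemize}
\item Take $A$ to be the normalized own-atom Gaussian truncated to $C_i$, and $B$ the leakage from the other atoms. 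Your display needs the truncation anyway: against a cell-supported $\widehat Q_i$, the untruncated divergence is $+\infty$.
\item The weight $\rho^\theta(C_i)(1-\epsilon_i)$ is then the own-cell mass of atom $i$. It is at least $(1-\ell_\Delta)/m$ and depends only on $\theta_i$.
\item By concavity of $h_2$, $\sum_i\rho^\theta(C_i)h_2(\epsilon_i)\le h_2(\ell_\Delta)$, where $\ell_\Delta\to0$ bounds the escape probability of a component from its cell.
\end{itemize}
The Bayes risk is then at least $(1-n/m)(1-\ell_\Delta)J_r^{(\Delta)}-h_2(\ell_\Delta)$, where $J_r^{(\Delta)}$ is the value of $J_r$ for the truncated pair. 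Lower semicontinuity is now needed only for $J_r^{(\Delta)}\to J_r$, after $Q$ has been eliminated. With this repair your argument proves the proposition, and $m\ge 2n$ already suffices in place of $m\gg n^2$.

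\textbf{Comparison with the paper.} The paper never introduces cells. It replaces $\widehat Q$ by the posterior predictive law, so the Bayes risk becomes the conditional mutual information $I(\theta;Y\mid\mathcal D)$ of a fresh target draw. It then applies the chain rule with the fresh component index $J$: $I(\theta;Y\mid\mathcal D)\ge I(\theta;J,Y\mid\mathcal D)-H(J\mid Y,\mathcal D)$. The first term equals $p_{\rm unseen}J_r$ exactly, with $p_{\rm unseen}=(1-1/m)^n\ge1/2$ for $m=2n$. The second is at most $h_2(\eta_L)+\eta_L\log(m-1)$ by nearest-atom decoding.

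So the paper conditions on the true component index and pays for not observing it. You condition on the decoded index (the cell, a function of $Y$) and pay for its mismatch with the true index inside the first argument of the divergence. The paper's route avoids truncated laws and any perturbation bound against an arbitrary reference. Yours works directly with the estimator's output cell by cell, and its leakage penalty $h_2(\ell_\Delta)$ carries no $\log m$ factor.
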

\begin{proof}
Put $m=2n$, choose a spacing $L>2Tr$ to be specified, and let
$H_j=jL$ for $1\le j\le m$. For each fixed sign vector
$\theta\in\{-1,+1\}^m$, take $J$ uniform on these indices and set
\[
 X_0=H_J,\qquad X_1=H_J+r\theta_J.
\]
These couplings share the same known source law and belong to $\Pi_r$.
Their stopped targets are
\[
 P_\theta=\frac1m\sum_{j=1}^m
             N(H_j+Tr\theta_j,\sigma^2).
\]
Use independent fair signs as a prior for the proof. Conditional on
the observed training pairs, the sign at every unobserved source is
still a fair coin. All auxiliary randomization and source-only
observations are independent of these signs. Include that information
in $\mathcal D$ so that the estimator's output is fixed given
$\mathcal D$.

The posterior predictive law
$\overline P_{\mathcal D}=\mathbb E[P_\theta\mid\mathcal D]$
minimizes posterior expected forward KL, since
\[
\begin{aligned}
 \mathbb E[ D_{\mathrm{KL}}(P_\theta\|Q)\mid\mathcal D]
 &\ge
 \mathbb E[ D_{\mathrm{KL}}
       (P_\theta\|\overline P_{\mathcal D})\mid\mathcal D].
\end{aligned}
\]
When finite, the difference is
$D_{\mathrm{KL}}(\overline P_{\mathcal D}\|Q)$ by adding and
subtracting the mixture log density; an infinite left side preserves
the inequality. If $Y$ is a fresh target draw, the averaged minimum
risk is $I(\theta;Y\mid\mathcal D)$. Introduce its independent uniform
component index $J$. The chain rule gives
\[
 I(\theta;Y\mid\mathcal D)
 \ge I(\theta;J,Y\mid\mathcal D)-H(J\mid Y,\mathcal D).
\]
A fresh index is unseen with probability
$p_{\rm unseen}=(1-1/m)^n\ge1/2$. Seen indices contribute zero
conditional information; unseen indices contribute the binary
Gaussian-channel information $J_r$. Consequently the first term on
the right is exactly $p_{\rm unseen}J_r$.

Decode $J$ from $Y$ by the nearest $H_j$. An error requires
$|Tr\theta_J+\sigma Z|\ge L/2$, so its probability is at most
\[
 \eta_L=\mathbb P(|\sigma Z|\ge L/2-Tr)\longrightarrow0.
\]
For $\eta_L\le1-1/m$, the decoder's error indicator and the at most
$m-1$ wrong indices give
\[
 H(J\mid Y,\mathcal D)
 \le h_2(\eta_L)+\eta_L\log(m-1),
\]
where $h_2(p)=-p\log p-(1-p)\log(1-p)$.
Indeed the bound with the actual error probability follows by the
entropy chain rule and concavity of $h_2$; its right side is increasing
up to $1-1/m$. Choose a finite $L$, depending on $n$, making this
bound at most $J_r/4$. The prior-averaged risk of every estimator is
then at least $J_r/4$, and hence some deterministic sign vector has
at least that risk. Finally $J_r>0$ because $P_+\ne P_-$.
\end{proof}

This lower bound is for the actual unconditional target distribution
and arbitrary output laws; in particular, it covers learned Euler
samplers. Each example has finite endpoint support and bounded
displacement, but the class has no common source-diameter or endpoint
moment envelope. The adverse coupling may vary with $n$.
The result is compatible with fixed-law consistency and supplies no
lower bound in the discretization step count $N$.

\section{The nonlinear endpoint interaction}
\label{si-nonlinear-proof}

We establish the formulas and claims for \eqref{si-tilt-main}.
Fix $c,\varepsilon>0$,
$|\alpha|=\beta<1$, a nonzero $\lambda\in\mathbb R^d$,
and a measurable scalar $u$ with $|u|\le1$.
Under the base law $\mathbb P_0$, let $H$ have any Borel probability
law with $\mathbb E_0\|H\|^2<\infty$, and let
$X_1=cH+\sqrt{2\varepsilon c}\,G$ with independent standard Gaussian $G$.
The original interpolant uses the independent Brownian bridge between
these endpoints. Tilt its endpoint coupling by
$[1+\alpha u(H)\sin(\lambda^\top X_1)]/Z_\alpha$.
Since $1-\beta\le Z_\alpha\le1+\beta$, the tilt is positive and bounded
in both directions.

\subsection{Base transition, tilted marginals, and the true source}

Put $m_t=1+(c-1)t>0$. Conditional on $H$, the base bridge has mean
$m_tH$ and noise covariance
$\mathbb E_0[N_sN_t^\top]=2\varepsilon s m_t I_d$ for $s\le t$.
The solution of
\[
 dX_t=\frac{c-1}{m_t}X_t\,dt+\sqrt{2\varepsilon}\,dW_t,\qquad X_0=H,
\]
is $m_t[H+\sqrt{2\varepsilon}\int_0^t m_r^{-1}\,dW_r]$.
The identity $\int_0^s m_r^{-2}\,dr=s/m_s$ gives the same conditional
Gaussian covariance. Hence the base bridge has this Markov law, with
\begin{equation}\label{si-nl-base-kernel}
 k^0_{s,t}(x,\cdot)
 =N\left(\frac{m_t}{m_s}x,\,
          2\varepsilon(t-s)\frac{m_t}{m_s}I_d\right).
\end{equation}
In particular $X_1\mid X_t=x$ is Gaussian with mean
$\mu_t=cx/m_t$ and covariance $w_t I_d$,
$w_t=2\varepsilon c(1-t)/m_t$, independently of the base past.

For $t>0$, define the measure convolutions
\[
 p_t^0(x)=\int\phi_{2\varepsilon t m_t I_d}(x-m_th)\,P_H(dh),
 \qquad
 n_t(x)=\int u(h)\phi_{2\varepsilon t m_t I_d}(x-m_th)\,P_H(dh).
\]
Then $p_t^0>0$ and $A_t=n_t/p_t^0=\mathbb E_0[u(H)\mid X_t]$;
at zero set $A_0(h)=u(h)$ on the source support.
Gaussian characteristic functions and their derivatives give
\begin{align}
 B_t(x)&=\mathbb E_0[\sin(\lambda^\top X_1)\mid X_t=x]
       =e^{-\|\lambda\|^2w_t/2}\sin(\lambda^\top\mu_t),\label{si-nl-BC}\\
 C_t(x)&=\operatorname{Cov}_0(X_1,\sin(\lambda^\top X_1)\mid X_t=x)
       =w_t e^{-\|\lambda\|^2w_t/2}
                       \cos(\lambda^\top\mu_t)\lambda.\nonumber
\end{align}
Base past--future independence gives the marginal and drift
\begin{equation}\label{si-tilt-drift-main}
\begin{aligned}
 \rho_t(x)&=p_t^0(x)\{1+\alpha A_t(x)B_t(x)\}/Z_\alpha,\\
 b_t(x)&=\frac{c-1}{m_t}x+
 \frac{\alpha A_t(x)C_t(x)}{(1-t)\{1+\alpha A_t(x)B_t(x)\}}.
\end{aligned}
\end{equation}
Indeed the tilted endpoint mean is
\begin{equation}\label{si-nl-endpoint-mean}
 f_t(x)=\mu_t+\frac{\alpha A_t(x)C_t(x)}
                         {1+\alpha A_t(x)B_t(x)},
\end{equation}
and $b_t=(f_t-x)/(1-t)$. Endpoint conditional means are legitimate
here because this model has square-integrable endpoints.
At time zero the true source is the measure
\begin{equation}\label{si-nl-source}
 \rho_0(dh)=
 \frac{1+\alpha u(h)B_0(h)}{Z_\alpha}\,P_H(dh),
\end{equation}
also for an atomic or singular $P_H$.
The conditional tower normalizes this measure, without requiring a
density of $H$.
For scalar symmetric $H$ and $u(h)=\sin(\kappa h)$, the normalizer is
\[
 Z_\alpha=1+\frac\alpha2e^{-\varepsilon c\lambda^2}
  [\varphi_H(\kappa-\lambda c)-\varphi_H(\kappa+\lambda c)],
 \qquad \varphi_H(\xi)=\mathbb E_0e^{i\xi H}.
\]

The base conditional independence also gives the tilted two-time law
a factor $[1+\alpha A_s(X_s)B_t(X_t)]/Z_\alpha$.
Its adjacent conditional kernel is therefore
\begin{equation}\label{si-nl-kernel}
 K_{s,t}(x,dy)=k^0_{s,t}(x,dy)
   \frac{1+\alpha A_s(x)B_t(y)}
        {1+\alpha A_s(x)B_s(x)} .
\end{equation}
It is normalized since
$\mathbb E_0[B_t(X_t)\mid X_s]=B_s(X_s)$ and preserves
$\rho_s\mapsto\rho_t$. Its Markov product is the comparison chain
$P_N^\star$; the original tilted bridge need not be Markov.

\subsection{Weak moments, nonlinearity, and non-Markovness}

The two-sided tilt bounds preserve finiteness of every nonnegative
moment. Under $\mathbb P_0$ the displacement is
$U=(c-1)H+\sqrt{2\varepsilon c}\,G$ and
\[
 \mathbb E_0\|U\|^2=(c-1)^2\mathbb E_0\|H\|^2+2\varepsilon cd .
\]
For a scalar or shared-scale multivariate Student source with degrees
of freedom $\nu$, $\mathbb E_0\|H\|^r$ is finite exactly when $r<\nu$.
For $c\ne1$, the elementary moment inequality for sums, applied also to
$(c-1)H=U-\sqrt{2\varepsilon c}G$, proves the same threshold for $U$.
Thus $\nu=2.5$ has finite displacement second moment and infinite third
and fourth moments under the tilted law. Each current marginal has the
same high-moment divergence, since $m_t>0$, and is non-Gaussian.

Take $u(h)=\sin(c\lambda^\top h)$ and $\alpha\ne0$.
At zero the drift correction is
\[
 \frac{2\alpha\varepsilon c\,e^{-\varepsilon c\|\lambda\|^2}
       \sin(c\lambda^\top x)\cos(c\lambda^\top x)}
      {1+\alpha e^{-\varepsilon c\|\lambda\|^2}
                       \sin^2(c\lambda^\top x)}\,\lambda .
\]
Along the $\lambda$ direction this bounded, periodic correction vanishes
at some points but not others, so the drift is nonaffine.

For a Student source, the conditional density of $H$ given $X_t=x$,
$0<t<1$, is strictly positive everywhere, also after the positive tilt.
Given $H=h,X_t=x$, the endpoint mean equals
$\mu_t+\alpha u(h)C_t(x)/[1+\alpha u(h)B_t(x)]$.
As a function of $u(h)$ its derivative is
$\alpha C_t(x)/[1+\alpha u(h)B_t(x)]^2$.
The sine $u$ is nonconstant under this posterior and $C_t(x)\ne0$
off a countable collection of hyperplanes, a set of full current
probability. The conditional endpoint mean given $(H,X_t)$ consequently
differs from that given $X_t$ with positive probability.
Markovness would make these means equal. The original bridge is
therefore non-Markov for this specified Student/sine family.

The earlier analytic bound and its derivation are archived;
Appendix~\ref{si-direct-certificate-r007} gives the sharper direct
certificate used here.

\section{Randomly trained drift estimates}
\begin{sicorecorollary}[Conditional and mixture guarantees]
\label{si-core-training}
Fix $\pi$. Let $S$ be standard Borel training data and assume
$\widehat b_k(s,x)$ is jointly Borel and finite. The entire fresh sampling
random element $(Y_0,G_0,\ldots,G_{N-1})$ is independent of $S$, and its
components satisfy the independence protocol in \eqref{si-core-sampler}.
Let $Q_N^s$ be the output conditional on $S=s$, and define $\mathcal E(s)$
by \eqref{si-core-T0}. If $\mathcal E(S)<\infty$ almost surely, any applicable
deterministic discretization term $B_{\mathrm{disc}}$ in Theorem~\ref{si-core-theorem-T1} gives
\begin{equation}\label{si-core-TR1}
D_{\mathrm{KL}}(\rho_T\|Q_N^s)
\le\frac{\mathcal E(s)}{4\varepsilon}+B_{\mathrm{disc}}
\quad\text{for }P_S\text{-almost every }s.
\end{equation}
For the mixture output $\overline Q_N=\int Q_N^sP_S(ds)$,
\begin{align}
\mathbb E_S D_{\mathrm{KL}}(\rho_T\|Q_N^S)
&\le\frac{\mathbb E\mathcal E(S)}{4\varepsilon}+B_{\mathrm{disc}},
\label{si-core-TR2}\\
D_{\mathrm{KL}}(\rho_T\|\overline Q_N)
&\le\mathbb E_S D_{\mathrm{KL}}(\rho_T\|Q_N^S).
\label{si-core-TR3}
\end{align}
The expectation bound is finite if $\mathbb E\mathcal E(S)<\infty$.
If $0<\alpha<1$ and $\mathbb P(\mathcal E(S)\le e_\alpha)\ge1-\alpha$,
the conditional-output bound with $e_\alpha$ holds with that probability.
With only a finite mean error one can take
$e_\alpha=\mathbb E\mathcal E(S)/\alpha$.
\end{sicorecorollary}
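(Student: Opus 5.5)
The plan is to fix a training realization $s$, apply Theorem~\ref{si-core-theorem-T1} to the frozen drifts $\widehat b_k(s,\cdot)$, and then integrate over $P_S$. The only points needing care are that the conditional output law $Q_N^s$ is exactly the Euler law of the theorem for $P_S$-almost every $s$, and that the map $s\mapsto D_{\mathrm{KL}}(\rho_T\|Q_N^s)$ is measurable.

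\emph{Step 1: conditioning.} Since $\widehat b_k$ is jointly Borel and finite, each section $\widehat b_k(s,\cdot)$ is a finite Borel function. The sampling element $(Y_0,G_0,\ldots,G_{N-1})$ is independent of $S$, so a regular conditional law given $S=s$ is obtained by freezing $s$ in the recursion \eqref{si-core-sampler}. Under that conditional law, $Y_0\sim\rho_0$ and the $G_k$ are iid standard Gaussians independent of $Y_0$. Hence $Q_N^s$ is the output of the protocol of Theorem~\ref{si-core-theorem-T1} with drifts $\widehat b_k(s,\cdot)$. The learning error of these drifts, measured under the true marginals $\rho_{t_k}$, is $\mathcal E(s)$. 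By Tonelli it is Borel in $s$, and it is finite almost surely. Every discretization term $B_{\mathrm{disc}}$ in Theorem~\ref{si-core-theorem-T1}, whether \eqref{si-main-p4}, \eqref{si-core-T1-p-rate} or \eqref{si-core-T1-M2}, depends only on $\pi$, $d$, $\varepsilon$ and the grid, not on the drift. Applying the theorem for each $s$ with $\mathcal E(s)<\infty$ therefore gives \eqref{si-core-TR1}.

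\emph{Step 2: measurability and averaging.} I expect measurability to be the main, if minor, obstacle. The joint density of $(s,y)\mapsto Q_N^s$ with respect to Lebesgue measure is jointly Borel: it is an iterated Gaussian convolution of Borel kernels, positive because the last Euler innovation is nondegenerate. $\rho_T$ also has a positive density. Write the divergence as the integral of $\ell\log\ell$ against the density of $Q_N^s$, with $\ell=d\rho_T/dQ_N^s$. Splitting $u\log u\ge -1/e$ into positive and negative parts, exactly as after \eqref{si-core-E18}, shows that $s\mapsto D_{\mathrm{KL}}(\rho_T\|Q_N^s)$ is Borel with values in $[0,\infty]$. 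Integrating \eqref{si-core-TR1} against $P_S$ then gives \eqref{si-core-TR2}, including the case $\mathbb E\mathcal E(S)=\infty$, where the bound is trivial.

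\emph{Step 3: mixture and probability statements.} Relative entropy is convex in its second argument, so Jensen's inequality gives \eqref{si-core-TR3}. One way to see this is the joint chain rule: $D_{\mathrm{KL}}(P_S\otimes\rho_T\,\|\,P_S(ds)Q_N^s)$ equals the averaged conditional divergence, and data processing onto the output coordinate yields $D_{\mathrm{KL}}(\rho_T\|\overline Q_N)$. For the high-probability statement, on the event $\{\mathcal E(S)\le e_\alpha\}$ the bound \eqref{si-core-TR1} holds with $e_\alpha$ in place of $\mathcal E(s)$, because the right side is monotone in the error. This event has probability at least $1-\alpha$. When only the mean error is finite, Markov's inequality gives $\mathbb P(\mathcal E(S)>\mathbb E\mathcal E(S)/\alpha)\le\alpha$, so one can take $e_\alpha=\mathbb E\mathcal E(S)/\alpha$.
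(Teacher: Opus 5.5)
Your proposal is correct and follows the paper's own argument: you freeze the training sample so that, for almost every $s$, the conditional output is the Euler law of Theorem~\ref{si-core-theorem-T1}. You then integrate to get \eqref{si-core-TR2}, obtain \eqref{si-core-TR3} from the joint chain rule and projection to the output coordinate, and use Markov's inequality for the probability claims. Your explicit proof that $s\mapsto D_{\mathrm{KL}}(\rho_T\|Q_N^s)$ is measurable, via the jointly Borel positive density and the $u\log u\ge-1/e$ splitting, fills in a detail that the paper only asserts when it invokes the Borel conditional output kernel.
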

\begin{proof}
Joint Borel measurability makes the finite Euler recursion a Borel function
of the training data and fresh sampling vector, hence gives a Borel
conditional output kernel.
Independence of the entire fresh sampling vector fixes its conditional law,
so Theorem~\ref{si-core-theorem-T1} applies for almost every $s$, proving \eqref{si-core-TR1}.
Integration proves \eqref{si-core-TR2}. Compare the joint laws
$P_S(ds)\rho_T(dy)$ and $P_S(ds)Q_N^s(dy)$ and project to $y$;
the chain rule and data processing give \eqref{si-core-TR3}.
The probability claims follow from \eqref{si-core-TR1} and Markov's inequality.
\end{proof}
Conditional-output and mixture entropies remain distinct. The corollary
covers frozen, nonadaptive drifts and supplies no training sample complexity.

\section{Consequences of the sampling bound and two interface examples}
\label{si-new-consequences}

\begin{sicorecorollary}[A sufficient asymptotic allocation]
\label{si-new-budget}
Under the setting of Theorem~\ref{si-core-theorem-T1}, suppose that $M_p<\infty$ for some
$2<p\le4$. Put $r=p/2-1$, $A_p=32M_p/\varepsilon^{p/2}$,
and $B_d=129d(d+2)$. For a desired error $0<\zeta\le1$, choose
\[
 \delta=\min\left\{\frac12,\frac{\zeta}{2\sqrt{2M_2}},
                    \frac{\zeta^2}{16\varepsilon d}\right\},
 \qquad L=\log(1/\delta),
\]
omitting the middle constraint when $M_2=0$. If
\[
 \mathcal E\le\frac{\varepsilon\zeta^2}{6},\qquad
 N\ge\frac{L}{\log2},
\]
\[
 N\ge L\left(\frac{24A_p}{\zeta^2}\right)^{1/r},\qquad
 N\ge\frac{24B_dL^2}{\zeta^2},
\]
with $N$ an integer, then $d_{\mathrm{BL}}(Q_N,\rho_1)\le\zeta$.
\end{sicorecorollary}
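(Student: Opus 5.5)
The plan is to combine the bounded-Lipschitz estimate \eqref{si-core-TBL} with the $p$-moment rate \eqref{si-core-T1-p-rate} of Theorem~\ref{si-core-theorem-T1} at fixed $N$ and $T=1-\delta$. Both ingredients are proved for each grid, before any limit is taken, so they can be used directly. First, \eqref{si-core-TBL} gives
\[
 d_{\mathrm{BL}}(Q_N,\rho_1)\le\sqrt{2D_{\mathrm{KL}}(\rho_T\|Q_N)}
 +\sqrt{\delta^2M_2+2\varepsilon dT\delta}.
\]
It therefore suffices to make each of the two terms at most $\zeta/2$.

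The smoothing term is handled by the choice of $\delta$. The condition $\delta\le\zeta/(2\sqrt{2M_2})$ gives $\delta^2M_2\le\zeta^2/8$. The condition $\delta\le\zeta^2/(16\varepsilon d)$, together with $T\le1$, gives $2\varepsilon dT\delta\le\zeta^2/8$. Hence the second term is at most $\sqrt{\zeta^2/4}=\zeta/2$. When $M_2=0$ the first contribution vanishes, so omitting the middle constraint is harmless. The cap $\delta\le1/2$ ensures $L\ge\log2>0$, so that the step-count conditions are meaningful.

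For the KL term I aim for $D_{\mathrm{KL}}(\rho_T\|Q_N)\le\zeta^2/8$, which gives $\sqrt{2D_{\mathrm{KL}}}\le\zeta/2$. The condition $N\ge L/\log2$ is exactly the hypothesis of \eqref{si-core-T1-p-rate}, which then bounds the KL by a sum of three terms:
\[
 \frac{\mathcal E}{4\varepsilon}+A_p\left(\frac LN\right)^{r}+B_d\frac{L^2}N .
\]
Here I use $32M_p/\varepsilon^{p/2}=A_p$ and $p/2-1=r$. Each of the three terms is to be made at most $\zeta^2/24$, so that their sum is at most $\zeta^2/8$.
\begin{itemize}
\item The learning term: $\mathcal E\le\varepsilon\zeta^2/6$ gives $\mathcal E/(4\varepsilon)\le\zeta^2/24$.
\item The displacement term: $N\ge L(24A_p/\zeta^2)^{1/r}$ gives $(L/N)^r\le\zeta^2/(24A_p)$. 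This step uses $r>0$, which holds because $p>2$, and it is trivial if $M_p=0$.
\item The Gaussian term: $N\ge24B_dL^2/\zeta^2$ gives $B_dL^2/N\le\zeta^2/24$.
\end{itemize}

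There is no real obstacle; the argument is bookkeeping. The only points needing care are that the constant split $3\times\zeta^2/24=\zeta^2/8$ matches the factor $\sqrt2$ coming from Pinsker's inequality, and that \eqref{si-core-TBL} is applied at a single fixed $\delta$ rather than along a sequence.
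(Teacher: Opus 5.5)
Your proof is correct and follows the paper's argument. The paper likewise bounds each of the learning, displacement, and Gaussian terms of \eqref{si-core-T1-p-rate} by $\zeta^2/24$, giving $D_{\mathrm{KL}}(\rho_{1-\delta}\|Q_N)\le\zeta^2/8$, and uses the two constraints on $\delta$ to bound $\delta^2M_2$ and $2\varepsilon d(1-\delta)\delta$ each by $\zeta^2/8$ in \eqref{si-core-TBL}; it handles the degenerate cases $M_2=0$ and $M_p=0$ exactly as you do.
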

Since $\delta\le1/2$, the condition $N\ge L/\log2$ already implies
$N\ge1$. One may minimize over $p\in(2,4]$ with known finite moments, using the
corresponding $M_p$ for each $p$. A finite search need not attain the
continuous optimum.

The constants and inverse powers can make the sufficient step count
impractical; the numerical appendix quantifies this gap. For $p=4$
this allocation uses the coarser $p$-moment bound, rather than the
sharper fourth-moment constants in Theorem~\ref{si-core-theorem-T1}.

\subsection{Proof of Corollary~\ref{si-new-budget}}

\begin{proof}
The learning, displacement, and Gaussian terms of
\eqref{si-core-T1-p-rate} are each at most $\zeta^2/24$.
Thus $D_{\mathrm{KL}}(\rho_{1-\delta}\|Q_N)\le\zeta^2/8$,
and the first term of \eqref{si-core-TBL} is at most $\zeta/2$.
The choice of $\delta$ gives
$\delta^2M_2\le\zeta^2/8$ and
$2\varepsilon d(1-\delta)\delta\le\zeta^2/8$.
The stopping term is therefore at most $\zeta/2$ as well.
If $M_2=0$, its contribution is zero; if $M_p=0$, the term involving
$A_p$ vanishes. \end{proof}
One may replace $M_2$ in the choice of $\delta$ by
$M_p^{2/p}\ge M_2$. This budget uses Theorem~\ref{si-core-theorem-T1}; the data and computation
needed to attain its prescribed drift error remain unspecified.

For a simple asymptotic choice, let $\delta_N=N^{-r}$ for $N\ge2$.
Once the grid threshold holds, the same argument gives
\begin{align*}
d_{\mathrm{BL}}(Q_N,\rho_1)
\le{}&\left\{\frac{\mathcal E_N}{2\varepsilon}
 +2A_p\left(\frac{r\log N}{N}\right)^r
 +\frac{2B_dr^2(\log N)^2}{N}\right\}^{1/2}\\
&+\left\{M_2N^{-2r}+2\varepsilon dN^{-r}\right\}^{1/2}.
\end{align*}
No coupling-uniform rate under only a second moment follows by taking
$p\downarrow2$ in this formula.

\begin{sicoreproposition}[Nonintegrable endpoints with bounded displacement]
\label{si-new-common-shift}
Let $H$ have any probability law on $\mathbb R^d$ and fix
$u\in\mathbb R^d$. Set $X_0=H$ and $X_1=H+u$.
Then $M_p=\|u\|^p$ for every $p>0$, regardless of the endpoint moments.
For $0<t<1$, let $p_t$ be the positive smooth density of
$H+tu+\sigma_tZ$. A finite drift version is
\[
 b_t(x)=u+2\varepsilon t\nabla\log p_t(x),\qquad b_0(x)=u.
\]
The bounds of Theorem~\ref{si-core-theorem-T1} apply with constants independent of the law of $H$,
under its stated initialization and drift-error protocol.
\end{sicoreproposition}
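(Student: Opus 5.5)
The proof splits into three checks: compute the moments, identify the drift, and invoke Theorem~\ref{si-core-theorem-T1} with constants that depend only on $\|u\|$.

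\textbf{Moments.} The displacement is deterministic: $U=X_1-X_0=u$ almost surely. Hence $M_p=\mathbb E\|U\|^p=\|u\|^p$ for every $p>0$, and no moment of $H$ enters.

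\textbf{Drift at positive times.} Fix $0<t<1$. Then $X_t=H+tu+\sigma_tZ$, and $\rho_t$ has density
\[
p_t(x)=\mathbb E\,\varphi^{\mathrm{var}}_{\sigma_t^2}(x-H-tu).
\]
This density is positive and bounded, as in the null-set paragraph after Proposition~\ref{si-core-kernel-preserve}. Since $U=u$ is constant,
\[
b_t(X_t)=\mathbb E[u-c_tZ\mid X_t]=u-c_t\,\mathbb E[Z\mid X_t].
\]
Gaussian differentiation under the integral gives
\[
\nabla p_t(x)=-\sigma_t^{-2}\,\mathbb E\big[(x-H-tu)\,\varphi^{\mathrm{var}}_{\sigma_t^2}(x-H-tu)\big].
\]
Differentiation is legitimate for an arbitrary law of $H$: the map $y\mapsto\|y\|\varphi^{\mathrm{var}}_{\sigma_t^2}(y)$ is bounded, so dominated convergence applies locally in $x$. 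Bayes' formula then gives
\[
\mathbb E[\sigma_tZ\mid X_t=x]=-\sigma_t^2\nabla\log p_t(x).
\]
Dividing by $\sigma_t$,
\[
\mathbb E[Z\mid X_t=x]=-\sigma_t\nabla\log p_t(x).
\]
Finally $c_t\sigma_t=2\varepsilon t$, since
\[
c_t\sigma_t=\sqrt{\frac{2\varepsilon t}{1-t}}\sqrt{2\varepsilon t(1-t)}=2\varepsilon t.
\]
Substituting, $b_t(x)=u+2\varepsilon t\,\nabla\log p_t(x)$.

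This version is finite everywhere and Borel, in fact smooth. By the version-invariance paragraph it determines the same Euler law as the version from Proposition~\ref{si-core-kernel-preserve}. The step I expect to need the most care is justifying differentiation under the expectation without any moment of $H$, which the bounded-kernel domination above handles.

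\textbf{Drift at time zero.} By definition $b_0(X_0)=\mathbb E[U\mid X_0]=u$, so $b_0\equiv u$ is a valid finite version. This holds even when the law of $H$ is singular.

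\textbf{Applying the theorem.} The hypotheses of Theorem~\ref{si-core-theorem-T1} involve only the Borel coupling, $M_2<\infty$, the learning error $\mathcal E$ measured under $\rho_{t_k}$, and true-source initialization. Its constants depend only on $M_p$, $d$, $\varepsilon$, $\delta$ and $N$. All of these hypotheses hold here, and $M_2=\|u\|^2$ and $M_4=\|u\|^4$ are independent of the law of $H$. Each displayed bound therefore holds with $M_p=\|u\|^p$. In particular, \eqref{si-main-p4} gives
\[
\frac{\mathcal E}{4\varepsilon}+\frac{2\|u\|^4L_\delta}{\varepsilon^2N}+\Big(8d(d+2)+\frac d2\Big)\frac{L_\delta^2}{N}.
\]
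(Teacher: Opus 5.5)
Your proposal is correct and follows the paper's proof essentially step by step. The constant displacement gives $M_p=\|u\|^p$; bounded Gaussian-kernel derivatives justify differentiating $p_t$ under an arbitrary law of $H$; Bayes' formula gives $\mathbb E[Z\mid X_t=x]=-\sigma_t\nabla\log p_t(x)$, which together with $c_t\sigma_t=2\varepsilon t$ yields the drift, with $b_0=u$ at time zero; and substituting $M_p=\|u\|^p$ into Theorem~\ref{si-core-theorem-T1} gives constants independent of the law of $H$.
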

\begin{proof}
The displacement is the constant $u$. Gaussian convolution gives
$p_t(x)=\mathbb E\varphi_{\sigma_t^2}^{\mathrm{var}}(x-H-tu)>0$.
The Gaussian kernel and each of its spatial derivatives are bounded,
so differentiation under an arbitrary probability law of $H$ is valid.
Bayes' formula yields
\[
 \nabla\log p_t(x)=-\frac1{\sigma_t}\mathbb E[Z\mid X_t=x].
\]
The conditional numerator is finite at every $x$, since a polynomial
times the Gaussian density is bounded. As $c_t\sigma_t=2\varepsilon t$,
the residual definition of the drift gives the displayed formula.
At time zero it gives $\mathbb E[U\mid H]=u$ without a source density
or score assumption. Substitution of $M_p=\|u\|^p$ into Theorem~\ref{si-core-theorem-T1} proves
the claim about its constants.
\end{proof}
For example, a one-dimensional Cauchy $H$ gives two endpoints without
absolute first moments. Their coupling is supported on the line
$x_1=x_0+u$, yet the displacement has every moment. Thus displacement moments need not control endpoint moments.

\subsection{A strict second-moment example}
\begin{sicoreproposition}\label{si-new-strict-M2}
For $G\sim N(0,1)$, set $X_0=G$ and
$X_1=Ge^{G^2/4}/(1+G^2)$. Both endpoints have smooth positive densities,
$M_2<\infty$, and $M_p=\infty$ for every $p>2$.
\end{sicoreproposition}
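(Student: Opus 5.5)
The plan is to treat the three claims separately: densities of the endpoints, finiteness of $M_2$, and divergence of every higher displacement moment. Throughout, write $\psi(g)=ge^{g^2/4}/(1+g^2)$ so that $X_1=\psi(G)$ and $U=\psi(G)-G$.

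\emph{Densities.} $X_0=G$ is standard normal, so only $X_1$ needs work.
\begin{itemize}
\item I will show that $\psi$ is odd, smooth and strictly increasing. Its logarithmic derivative for $g>0$ is
\[
\frac{\psi'(g)}{\psi(g)}=\frac1g+\frac g2-\frac{2g}{1+g^2}.
\]
Multiplying by $g(1+g^2)$ gives $1+g^2+\tfrac12g^2(1+g^2)-2g^2=1-g^2+\tfrac12g^2+\tfrac12g^4$.
\item This equals $\tfrac12(g^2-\tfrac12)^2+\tfrac78>0$, so $\psi'>0$ for $g>0$. Also $\psi'(0)=1$.
\item Oddness then gives $\psi'>0$ everywhere.
\item $\psi(g)\to\pm\infty$ as $g\to\pm\infty$, so $\psi$ is a smooth diffeomorphism of $\mathbb R$.
\item By change of variables, $X_1$ has density $\varphi(\psi^{-1}(x))/\psi'(\psi^{-1}(x))$. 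This is smooth and positive because $\psi'>0$ and $\psi^{-1}$ is smooth.
\end{itemize}

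\emph{Moments.} First compute $\mathbb E\psi(G)^2$:
\[
\mathbb E\psi(G)^2=\int\frac{g^2e^{g^2/2}}{(1+g^2)^2}\cdot\frac{e^{-g^2/2}}{\sqrt{2\pi}}\,dg=\frac1{\sqrt{2\pi}}\int\frac{g^2}{(1+g^2)^2}\,dg<\infty,
\]
since the integrand decays like $g^{-2}$. Because $G\in L^2$ as well, $U\in L^2$ and $M_2<\infty$.

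For $p>2$, I will compare $|U|^p$ with $|\psi(G)|^p$.
\begin{itemize}
\item Since $|\psi(g)|/|g|\to\infty$, there is a threshold $g_0$ with $|\psi(g)-g|\ge\tfrac12|\psi(g)|$ for $|g|\ge g_0$.
\item Therefore
\[
\mathbb E|U|^p\ge2^{-p}\int_{|g|\ge g_0}|g|^p(1+g^2)^{-p}e^{(p-2)g^2/4}\,dg.
\]
\item The exponent $(p-2)g^2/4$ is positive and quadratic, so it dominates the polynomial factor and the integral diverges.
\item Hence $M_p=\infty$ for every $p>2$.
\end{itemize}

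\emph{Main obstacle.} The only delicate point is the monotonicity of $\psi$, which is what makes the density of $X_1$ positive and smooth. It reduces to the quartic positivity check above, which is elementary.
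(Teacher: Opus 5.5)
Your proof is correct and follows the paper's argument essentially step for step. You reach the same quartic $1-g^2/2+g^4/2=\tfrac12(g^2-\tfrac12)^2+\tfrac78$ for $\psi'>0$, via the logarithmic derivative rather than by differentiating directly; the same cancellation gives $\mathbb E\psi(G)^2=(2\pi)^{-1/2}\int g^2(1+g^2)^{-2}\,dg<\infty$; and the same bound $|\psi(g)-g|\ge\tfrac12|\psi(g)|$ for large $|g|$ gives divergence through $e^{(p-2)g^2/4}$. The only slip is that your lower bound for $\mathbb E|U|^p$ drops the normalizing factor $(2\pi)^{-1/2}$, which is harmless because the integral is infinite.
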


\begin{proof}
Differentiation gives
\[
 f'(x)=\frac{e^{x^2/4}(1-x^2/2+x^4/2)}{(1+x^2)^2}>0,
\]
since $1-y/2+y^2/2=(y-1/2)^2/2+7/8$ for $y=x^2$.
Thus $f$ is a smooth increasing bijection with positive derivative,
which gives the density assertion by change of variables. Moreover,
\[
 \mathbb E f(G)^2=\frac1{\sqrt{2\pi}}
 \int_{\mathbb R}\frac{x^2}{(1+x^2)^2}\,dx
 =\frac{\pi}{2\sqrt{2\pi}}<\infty.
\]
The inequality $|f(G)-G|^2\le2f(G)^2+2G^2$ proves $M_2<\infty$.
For any $p>2$, $|f(x)-x|\ge |f(x)|/2$ for all sufficiently
large $|x|$. The corresponding tail integrand is bounded below
by a positive constant times
\[
 \frac{|x|^p}{(1+x^2)^p}\exp\!\left(\frac{p-2}{4}x^2\right),
\]
whose integral diverges. This proves $M_p=\infty$.
\end{proof}
This example falls under the fixed-coupling second-moment convergence
statement, but under none of the $p>2$ rate statements. No numerical drift oracle or sampler is supplied for this example.

\paragraph{Current-state and source-conditioned posteriors.}
Let $B$ be uniform on $\{-1,1\}$, let $c>0$, and set
$X_0=-cB$, $X_1=B$. At $t_*=c/(1+c)$ the signal part of
the interpolant vanishes, so $X_{t_*}=\sigma_{t_*}Z$ is independent
of $B$. Consequently
\[
 \mathbb E[B\mid X_{t_*}]=0,\qquad
 \mathbb E[B\mid X_0,X_{t_*}]=B.
\]
The two mean-squared estimation errors are respectively one and zero.
These posteriors differ; other Gaussian-channel reductions may still apply.

\paragraph{A time change does not preserve the Euler update.}
When $X_0=0$, put $s=t/(1-t)$ and
$S_s=X_t/(1-t)$. Then $S_s$ has the Gaussian-channel form
$sX_1+\sqrt{2\varepsilon s}G$. Write
$\mu_s=\mathbb E[X_1\mid S_s]$, so
$b_t(X_t)=(\mu_s-X_t)/(1-t)$.
For $0<h<1-t$, an innovations Euler step of length
\[
 \Delta s=\frac{h}{(1-t)(1-t-h)}
\]
has mean $S_s+\Delta s\mu_s$ and Gaussian noise covariance
$2\varepsilon\Delta s I_d$. Multiplication by $1-t-h$
gives mean $X_t+hb_t(X_t)$, but covariance
\[
 2\varepsilon(1-t-h)^2\Delta s I_d
 =2\varepsilon h\left(1-\frac{h}{1-t}\right)I_d.
\]
Standard Euler in Brownian time instead uses $2\varepsilon h I_d$.
Thus the valid continuous-time change of variables produces a different
discrete algorithm.

\section{Scope relative to direct sampling antecedents}
\label{si-new-comparison}
Table~\ref{si-main-comparison} compares the main theorem branches; our theorem
does not cover nonlinear interpolants or full-horizon Wasserstein guarantees.
\citet[Definition~3.1, Theorem~3.1, and Corollary~3.1]{si-paper-couplings2024}
allow data-dependent endpoint couplings and describe their transport,
regression, and diffusion representations. For the Brownian choice,
their forward drift equals the drift used here on $0<t<1$, and their
continuous process starts at the source at time zero. Their stated
setting uses an endpoint joint density with finite second moments.
Our finite-step standard-EM guarantee holds for arbitrary Borel couplings
under weak displacement-moment assumptions, from the true source at time zero.

\label{si-new-comparison-table}
The distinctions below concern the stated branches and exact source
versions, rather than all possible extensions of those methods.

Appendix~\ref{si-new-consequences} shows how source conditioning changes
the posterior and time change alters the Euler update; other reductions
remain possible.

\subsection{Additional comparison details}
\citet[Assumptions~4.1--4.2, Theorem~4.3, and Proposition~5.1]{si-paper-sde2025}
analyze standard Euler--Maruyama under general endpoint couplings,
with an eighth displacement moment, true-marginal drift error, the same
KL direction, and an interior time interval.
Theorem 4.3 also retains
$D_{\mathrm{KL}}(\rho(t_0)\|\widehat\rho(t_0))$ for initialization;
this term is zero if the exact interior starting law is used.
Its drift observes $(t,x)$ and the assumptions impose no endpoint or
joint-density score condition. The scheduling result
inherits that moment assumption despite displaying a fourth-moment term.
Their nonlinear interpolant setting is broader; singular couplings and
displacement-based assumptions alone are shared features.

The Brownian DFM early-stopping guarantees~\citep[H1, H2, H4 and Theorems~2--3]{si-paper-dfm2026}
use standard EM, true-source initialization and true-marginal drift error,
requiring eighth endpoint moments and an integrable conditional source score.
H4 concerns the density and score of $X_0\mid X_1$, not an additional
source input to the drift. The sampler observes $(t,x)$ and begins from
the exact source, so there is no initialization KL contribution.
H4 permits some singular joint laws, but the coupling in
Proposition~\ref{si-new-strict-M2} fails both the endpoint-moment and
conditional-score requirements. Their full-horizon KL result instead uses
the joint-score assumption H3 and gives a stronger endpoint conclusion
than ours. Their $W_2$ branch additionally uses generated-state drift error,
weak log-concavity, and score-Jacobian
integrability~\citep[H5--H7 and Theorem~4]{si-paper-dfm2026}.

Further direct antecedents are the finite-second-moment Gaussian-channel
analysis and geometric schedules of
Reeves and Pfister~\citep[Theorems~1--2]{si-mech-rp2025}, and Wainwright's
denoising-MSE analysis of innovations Euler sampling
\citep[Theorem~1 and equations~(2), (6)]{si-paper-wainwright2026}.
The last theorem assumes a finite second signal moment and MSE differentiability, with an integrable
derivative in its preceding definition. These single-signal channels do not
automatically identify the current-state posterior of a correlated
two-endpoint bridge. Even for a deterministic source, the natural pullback
of innovations EM to Brownian time gives noise covariance
$2\varepsilon h_k(1-\eta_k)I$, rather than the
standard-EM covariance $2\varepsilon h_kI$. In Wainwright's terminology,
SI means stochastic innovations.

\paragraph{Point-source Brownian EM under a second moment.}
\citet[Definition~2.1 and Theorem~3.1]{si-paper-koike2026} take a
deterministic source and standard Brownian-bridge noise. For
$0<t_0\le1/2$, $t_N=1-\delta$, and
$h_i\le\kappa(1-t_{i+1})$, their Gaussian-initialized EM bound is
\[
 \epsilon_{\mathrm{score}}^2
 +\kappa d\log\frac{1-t_0}{\delta}
 +2\kappa\mathbb E\|X_1\|^2
 +\frac{t_0}{2}(d+\mathbb E\|X_1\|^2).
\]
The score error is integrated under the true intermediate laws. Scaling
their process by $\sqrt{2\varepsilon}$ and adding a deterministic
$x_0$ gives exactly our Brownian EM covariance, with target second
moment $M_2/(2\varepsilon)$. Thus no fourth moment is needed for this
point-source EM branch. Their Theorem~3.4 uses a fourth moment for a
different intrinsic-dimension result.

Conditioning on $X_0$ gives point-source channels and a route to
source-conditioned joint guarantees. Our corollary uses the prescribed
drift from time zero, whereas their Theorem~3.1 starts with a drift-free
Gaussian segment; the first step and error contracts must be matched
before identifying the bounds. The arbitrary correlated current-state
posterior is a different object. The general $M_4$ result is not an
improvement over Koike's result on deterministic-source instances.

\paragraph{Learning and sampling for augmented bridges.}
\citet[Assumptions~4.1--4.3, Theorems~4.1--4.4]{si-paper-kang2026}
study endpoint pairs supported on $[0,1]^d\times[0,1]^d$, with the
true augmented drift Lipschitz in both the source and the current state
on $[T_1,T_2]$. Their estimator is an empirical risk minimizer over a
constrained ReLU class, using $n$ independent endpoint pairs and $m$
time--noise draws. Their analysis includes approximation and statistical
error, fitted-drift stability, EM discretization, initialization and
stopping error. Equation~(3.6) initializes EM at $X_0$ at time $T_1>0$;
Theorem~4.4 reports convergence to the endpoint marginal as
$T_1\downarrow0$ and $T_2\uparrow1$, with its prescribed sample-size and
step-size scaling.

Although their displayed metric is marginal $W_2$, the common-source
coupling in Appendix~B also bounds conditional $W_2^2$, averaged over the
source, for the drift-estimation comparison. Conditioning alone therefore does
not establish a new learning-to-sampling result. Our weak-moment theorem
instead controls target-to-output KL for the current-state sampler from
time zero, without bounded endpoints or assumed drift Lipschitzness.
The source-conditioned result is a separate corollary.
Our proofs do not use their EM discretization rate.

\subsection{Gaussian diffusion results and source augmentation}
The published \citet[Theorem~2.2]{si-paper-chen2023} assumes a finite
second data moment and true-law score error. Its exponential-integrator
bound is $(d+M_2)e^{-T}+T\epsilon_0^2+d^2\Pi$, up to a universal
constant, with $\Pi=\sum h_k^2/\sigma_{t_{k-1}}^4$ and a stated step
restriction. Its EM variant adds $M_2\sum h_k^3$.
\citet[Appendix C, Theorem~2]{si-paper-benton2024} obtain score error plus
$\kappa^2dN+\kappa dT+\kappa m_2+(d+m_2)e^{-2T}$ for their reverse
Gaussian diffusion with admissible steps, where
$m_2=\mathbb E\|X_0\|^2<\infty$. These results
already allow nonsmooth target laws with a second moment.
Both Gaussian diffusion methods observe only time and current state and
initialize from a standard Gaussian. For Benton et al., Appendix C
bounds the initialization discrepancy by
$D_{\mathrm{KL}}(q_T\|N(0,I_d))\lesssim(d+m_2)e^{-2T}$ under a general
data second moment $m_2$. This is $O(de^{-2T})$ when $m_2=O(d)$.
This is not exact-source initialization. Neither result requires high
endpoint moments or globally Lipschitz scores.

For Brownian DFM, \citet[Theorem~3]{si-paper-dfm2024} require independent
endpoints, eighth endpoint moments, and an eighth-integrable source score
for their early-stopping branch; the explicit discretization terms include
$hd^4/\delta^4$ and a target-moment term with $\delta^{-8}$.
The later \citet[H1, H2, H4 and Theorems~2--3]{si-paper-dfm2026} replace
independence with a conditional-source score assumption and reduce the
explicit dimension power under normalized endpoint moments. Our
$d^2$ comparison uses $M_4=O(d^2)$ and does not compare unnormalized
classes or full-horizon guarantees.

\citet[Theorem~1 and Lemma~2]{si-mech-rp2025} directly supply the
comparison-path, conditional-mean error, and I--MMSE telescoping
architecture for a single Gaussian channel. The source-conditioned corollary applies that architecture after
disintegration. Source augmentation itself was
proposed by \citet{si-paper-abm2023} to retain coupling information.
The finite-step joint-law guarantee identifies its implication under
displacement moments and standard Brownian-time EM noise.

\subsection{Direct marginal relative-entropy analysis}
\label{si-terminal-antecedent-r009}
\citet[Lemmas~1--2, equation~(18), and Lemma~4]{si-paper-mou2022}
already use the projected Euler drift, an entropy derivative, and signed
conditional-increment cancellation to obtain second-order marginal KL
under their regularity assumptions. Their drift is time homogeneous
and their KL direction is Euler to exact. An extension to multiplicative
noise is given by
\citet[Assumptions~3.1--3.2, Theorem~5.1]{si-paper-li2026numericaldensity}, who prove a second-order Euler marginal KL bound. Their Lemma~5.1 and
Section~5.2 use the same conditional drift projection and retain signed
conditional increments. These are methodological antecedents, not new
ingredients of Proposition~\ref{si-fix-marginal-euler}.

\begin{center}
\begin{tabular}{>{\raggedright\arraybackslash}p{.17\textwidth}>{\raggedright\arraybackslash}p{.38\textwidth}>{\raggedright\arraybackslash}p{.37\textwidth}}
\toprule
Condition or output & Li et al., Theorem 5.1 & Proposition~\ref{si-fix-marginal-euler}\\
\midrule
KL direction & Euler to exact diffusion & Exact diffusion to Euler\\
Initial law & Sufficiently high finite moment; integrable log-density gradient and Hessian & No moment assumption; positive $C^2$ density with pointwise bounds on $|p_0'|/p_0$ and $|p_0''|/p_0$\\
Coefficients & General dimension, uniformly elliptic multiplicative noise; spatial Lipschitz and $C^3$ conditions with polynomial derivative/time controls & One dimension, additive noise, bounded drift and bounded spatial derivatives through order three, uniformly Lipschitz in time\\
Grid and bound & Sufficiently small constant step: $C(T)h^2$ & Sufficiently fine deterministic partitions: $C\sum_k h_k^3$\\
\bottomrule
\end{tabular}
\end{center}

The fixed Cauchy source fails their moment premise, and KL arguments
cannot simply be reversed. Our pointwise density-derivative propagation
allows the drift error to be integrated under the true law.
The additional model-specific work is the heavy-tail flux cancellation
and its combination with a slow comparison-path lower bound for the
same fixed coupling.

\begin{figure*}[t]
\centering\includegraphics[width=\textwidth]{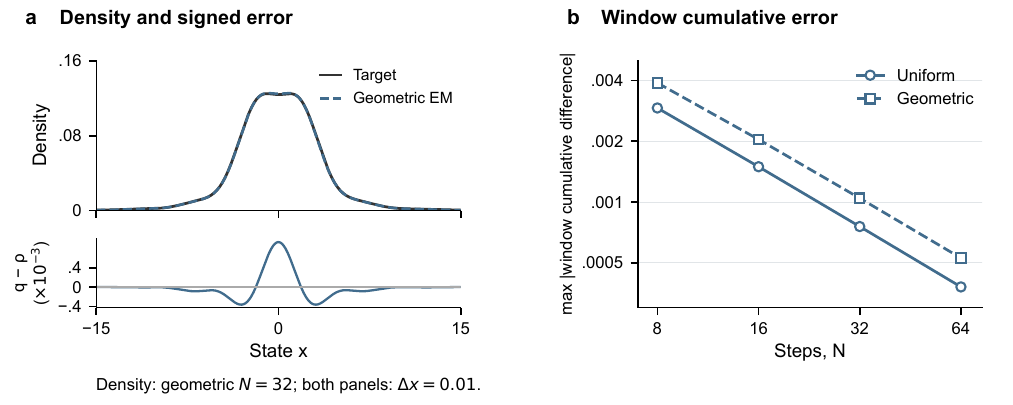}
\caption{Nonlinear Student interaction with finite $M_2$ and infinite
$M_3,M_4$. Density and signed-error profiles use geometric $N=32$;
the cumulative-window diagnostic uses the same fixed model and true tilted
initialization at $\Delta x=0.01$. Blue circles and solid lines denote
uniform grids, and blue squares and dashed lines geometric grids.
Points are evaluated values and segments guide the eye.
Panel (b) shows $\max_k|\Delta x\sum_{j\leq k}(q_j-\rho_j)|$, with both
indices restricted to the evaluation window $[-15,15]$; it is not a
full-space CDF error. The largest relative change of $\Psi_{15}$ from
$\Delta x=0.02$ to $0.01$ is $5.70\,10^{-4}$; this sensitivity check
does not certify full-space KL. Appendix~\ref{si-experiment-details} specifies
the computations; the separate uniform-grid certificate is summarized
in Figure~\ref{si-fig-separation}.}
\label{si-fig-nonlinear}
\end{figure*}

\section{Experimental protocols and evidence boundaries}
\label{si-experiment-details}

\subsection{One-dimensional density propagation}
The scalar experiment uses the exact model \eqref{si-tilt-main} with
$H$ standard Student with $\nu=2.5$, $c=2$, $\varepsilon=1$,
$\alpha=13/20$, $\lambda=3/5$, $u(h)=\sin(6h/5)$, and $T=4/5$.
The source is tilted by
$[1+\alpha u(h)B_0(h)]/Z_\alpha$.
The Student characteristic function normalizes this endpoint law.

The finite sweep comprises both uniform and geometric grids,
$N\in\{8,16,32,64\}$, and spatial steps
$\Delta x\in\{0.04,0.02,0.01\}$, for 24 propagation configurations.
Density transport uses a monotone inverse interpolation and Gaussian
convolution; the noise is cut at 12 standard deviations.
Densities are propagated on $[-60,60]$ without renormalization.
The target and posterior convolutions use 64-point Gauss--Hermite
quadrature in the main sweep. The terminal integrand
$\rho\log(\rho/q)-\rho+q$ is integrated on $[-15,15]$, whose true target
mass is approximately $0.9926009$.

\begin{center}
\begin{tabular}{rrrrr}
\toprule
$N$ & Uniform $\Psi_{15}$ & Geometric $\Psi_{15}$ &
Geometric path integral & Signed residual expression\\
\midrule
8  & $8.7546\,10^{-5}$ & $1.5202\,10^{-4}$ & $7.6915\,10^{-3}$ & $7.30139452$\\
16 & $2.3104\,10^{-5}$ & $4.2132\,10^{-5}$ & $3.8277\,10^{-3}$ & $4.41023933$\\
32 & $5.9349\,10^{-6}$ & $1.1081\,10^{-5}$ & $1.9073\,10^{-3}$ & $2.54030690$\\
64 & $1.5039\,10^{-6}$ & $2.8402\,10^{-6}$ & $9.5171\,10^{-4}$ & $1.39403841$\\
\bottomrule
\end{tabular}
\end{center}
The window values use $\Delta x=0.01$. For the path column and the
residual curve in Figure~\ref{si-fig-separation}, integration in the
current state uses Simpson quadrature on $[-60,60]$ with $\Delta x=0.04$;
conditional Gaussian integrals use 64-point Gauss--Hermite quadrature.
The residual expression retains the negative correction in
\eqref{si-if-eq:residual-signed}. All are binary64 finite-domain estimates, including the numerically
integrated comparison-path quantity.
The density and signed-error profiles in Figure~\ref{si-fig-nonlinear}
were retained at geometric $N=32$.

The window cumulative-error values (uniform/geometric), defined in
Figure~\ref{si-fig-nonlinear}, are respectively
$0.00292044/0.00388406$, $0.00149573/0.00204064$,
$0.000756787/0.00104524$, and $0.000380607/0.000528819$
for $N=8,16,32,64$. They are finite-window diagnostics, not full-space
CDF errors. The $N^{-1}$ and $N^{-2}$ figure guides show reference
orders; this smooth-residual experiment is distinct from the strict
second-moment law in Theorem~\ref{si-fixed-separation}.

The largest relative change from $\Delta x=0.02$ to $0.01$ is
$5.70\,10^{-4}$. Domain extensions to $[-90,90]$ at $N=8,64$
and selected quadrature-order comparisons check sensitivity.
At $\alpha=0$, comparison with the exact linear Gaussian-convolution
solution gives maximum density discrepancies $1.53\,10^{-8}$ and
$1.62\,10^{-7}$ at $N=8,64$.
Independent source-variable point checks at quadrature orders 128 and
256 agree below $10^{-15}$ at the three checked difficult points. These
local checks give no uniform quadrature remainder; the order-64 failures
and rechecks remain in the archived scalar audit record.

For the general-moment curve, the bound on the displacement moment is
\[
 \overline M_p=\frac{1+|\alpha|}{Z_\alpha}2^{p-1}
 \left[|c-1|^p\mathbb E|H|^p
             +(2\varepsilon c)^{p/2}\mathbb E|G|^p\right].
\]
For $p<\nu$, the moments used are
\[
 \mathbb E|H|^p=
 \frac{\nu^{p/2}\Gamma((p+1)/2)\Gamma((\nu-p)/2)}
      {\sqrt\pi\Gamma(\nu/2)},\qquad
 \mathbb E|G|^p=\frac{2^{p/2}\Gamma((p+1)/2)}{\sqrt\pi}.
\]
The positive bounded tilt gives the first prefactor; the other factor
comes from $|a+b|^p\le2^{p-1}(|a|^p+|b|^p)$.
At every $N$, binary64 evaluation minimizes \eqref{si-core-T1-p-rate}
over the fixed grid $p\in\{2.1,2.2,2.3,2.4,2.45\}$.
The selected $p=2.1$ has $\overline M_p=34.2826074$;
at $N=64$ the evaluated bound is $928.1961$.
This finite-grid evaluation neither optimizes continuously nor interval-encloses
the displayed floating-point values.

\subsection{Interval evaluation of the second-moment expression}
\label{si-m2-certificate-r013}
The exact-law evaluation of \eqref{si-core-T1-M2} at geometric $N=64$
has the certified enclosure
\begin{equation}\label{si-m2-interval-r013}
 57.80900768880 < B_{64} < 57.80900768885.
\end{equation}
It is a discretization-bound expression for the specified model with
zero learning error, not measured terminal KL. Other plotted budgets
use floating-point quadrature. The following reduction controls the
complete unbounded law rather than truncating its Student tail.

Let $H\sim t_{5/2}$ and $G\sim N(0,1)$ independently,
$U=H+2G$, $q=6/5$, $k=2q$, and
$w=1+(13/20)\sin(qH)\sin(qH+qG)$.
Set $a=1-5^{-1/64}$, $b=2(1-a)/a$, $s=\sqrt b$,
$Z=\mathbb Ew$, $M=\mathbb E[wU^2]$ and $C(t)=\mathbb E[w\cos(tU)]$.
Then
\[
 \frac{U^4}{b+U^2}=U^2-b+\frac{b^2}{b+U^2},\qquad
 \frac1{b+U^2}=\frac1s\int_0^\infty e^{-st}\cos(tU)\,dt.
\]
Bounded $w$ and exponential integrability in $t$ justify Fubini, so
\[
 \mathbb E_{\rm tilt}\frac{U^4}{b+U^2}
 =M/Z-b+\frac{bs}{Z}\int_0^\infty e^{-st}C(t)\,dt.
\]
Write $\phi(t)=\mathbb E\cos(tH)$, an even function. With
$g_\pm(t)=\exp[-(2t\pm q)^2/2]$, product-to-sum gives
\[
 C(t)=\phi(t)e^{-2t^2}+\frac{\alpha}{4}
 \{\phi(t)[g_+(t)+g_-(t)]-\phi(t+k)g_+(t)-\phi(t-k)g_-(t)\}.
\]
The Student normal/Gamma mixture yields
$\phi(t)=2^{1-r}z^rK_r(z)/\Gamma(r)$ for $t\ne0$, where
$r=5/4$, $z=\sqrt{5/2}|t|$, and $\phi(0)=1$.
In particular,
\[
\begin{aligned}
 Z&=1+\frac\alpha2 e^{-q^2/2}[1-\phi(k)],\\
 M&=9+\frac\alpha2 e^{-q^2/2}
 [9-4q^2+\phi''(k)-4(1-q^2)\phi(k)-4q\phi'(k)].
\end{aligned}
\]
For $\epsilon_*=2^{-20}$ and $A_*=5$, integrate on
$[\epsilon_*,k-\epsilon_*]$ and $[k+\epsilon_*,A_*]$ using 192-bit Arb.
The analytic continuation fixes the sign of $t-k$ separately on these
intervals and only evaluates the Bessel formula in the positive real
half-plane. Positivity of $w$ gives $|C(t)|\le Z$ and
$Z-Mt^2/2\le C(t)\le Z$. Thus the omitted pieces of the Laplace integral
are enclosed by
\[
\begin{array}{ll}
\bigl[0,\epsilon_*\bigr]:& Z(1-e^{-s\epsilon_*})/s+[-M\epsilon_*^3/6,0],\\
\bigl[k-\epsilon_*,k+\epsilon_*\bigr]:&[-2\epsilon_*Ze^{-s(k-\epsilon_*)},
                              2\epsilon_*Ze^{-s(k-\epsilon_*)}],\\
\bigl[A_*,\infty\bigr):&[-Ze^{-sA_*}/s,Ze^{-sA_*}/s].
\end{array}
\]
Outward arithmetic in the reduction and \eqref{si-core-T1-M2} gives
\eqref{si-m2-interval-r013}. The code, precision, domains and dependency
versions are supplied in the computational archive. An independent
algebra/implementation check and orthogonal numerical integration agree;
the enclosure relies on the stated analytic reduction and Arb's interval
implementation, not on agreement between floating-point calculations.

\subsection{Multidimensional regression}
For $d=2,8$, draw
$W\sim\operatorname{Gamma}(\nu/2,\text{rate}=\nu/2)$ and
$H\mid W\sim N(0,I_d/W)$, using the same $W$ in every coordinate.
This induces dependence: for instance the covariance between
$e^{-H_1^2}$ and $e^{-H_2^2}$ is
$\operatorname{Var}(\sqrt{W/(W+2)})>0$.
Set $\lambda=0.6(1,\ldots,1)/\sqrt d$ and $\kappa=2\lambda$.
The endpoint tilt and all other parameters are as in the scalar model.

Regression is trained under the base law. Draw a time index $J$ with
$\mathbb P(J=k)=h_k/T$, form
$X=m_{t_J}H+\sqrt{2\varepsilon t_Jm_{t_J}}Z$, and use label
$L=\sin(\kappa^\top H)$. Its conditional mean is $A_{t_J}(X)$.
There are 24,000 training, 30,000 validation, and 12,000 test observations
per dimension. The 216 bounded features combine fixed low-order features
and random Fourier features of the interaction projection, log radius,
and normalized time. Their random parameters are generated before every
data split. Six ridge penalties
$10^{-5},10^{-3},10^{-2},10^{-1},1,10$ are fitted on training data;
all predictions are clipped to $[-1,1]$.
Validation label loss selects $10^{-3}$ in both dimensions.

For a finite family of $M$ fitted candidates independent of $n_{\rm val}$
validation observations, squared label losses lie in $[0,4]$.
Hoeffding's inequality and a union bound imply, with probability
at least $1-\eta$,
\[
R(\widehat f)\le \min_{j\le M}R(f_j)
       +8\sqrt{\frac{\log(2M/\eta)}{2n_{\rm val}}},
\quad
R(f)=\sum_k\frac{h_k}{T}\mathbb E_0|f(t_k,X_{t_k})-A_{t_k}(X_{t_k})|^2.
\]
This follows by subtracting the common irreducible label variance.
The drift map in \eqref{si-tilt-drift-main} is Lipschitz in $A\in[-1,1]$
with constant
$L_A=2\varepsilon c|\alpha|\|\lambda\|/
[\min_{t\le T}m_t(1-|\alpha|)^2]$.
Since $\rho_t/p_t^0\le(1+|\alpha|)/Z_\alpha=:R_\alpha$,
the true-marginal learning energy is at most $TR_\alpha L_A^2R(f)$.
The approximation error $\min_jR(f_j)$ remains unknown; test estimates
do not certify it.

\begin{center}
\begin{tabular}{lrr}
\toprule
Independent-test estimate & $d=2$ & $d=8$\\
\midrule
Regression risk & $0.000891124$ & $0.000759432$\\
Descriptive standard error & $0.000067951$ & $0.000075304$\\
Drift-error energy & $0.000146403$ & $0.000157103$\\
Descriptive standard error & $0.000012304$ & $0.000029529$\\
\bottomrule
\end{tabular}
\end{center}
The regression oracle is a one-dimensional integral over the posterior
Gamma precision. At positive time, with $v=2\varepsilon tm_t$, its density
is proportional to
\[
w^{(\nu+d)/2-1}e^{-\nu w/2}(m_t^2+vw)^{-d/2}
\exp\!\left[-\frac{\|x\|^2w}{2(m_t^2+vw)}\right].
\]
The conditional mean of $\sin(\kappa^\top H)$ given $w,x$ is
\[
\exp\!\left[-\frac{\|\kappa\|^2v}{2(m_t^2+vw)}\right]
\sin\!\left(\frac{m_t\kappa^\top x}{m_t^2+vw}\right).
\]
Generalized Gauss--Laguerre quadrature evaluates the ratio, with a scaling
that resolves small posterior precisions at large $\|x\|$.
Orders 64 and 128 differ by at most $1.93\,10^{-11}$ on the predeclared
check points; eight independent adaptive integrals agree with order 128
within $9.44\,10^{-16}$. These checks do not supply a global oracle error.
At $t=0$, $A_0(x)=\sin(\kappa^\top x)$ is used exactly.

For sampling, rejection from the base source gives 4096 true tilted
initial particles per dimension. The oracle and learned samplers share
those particles and innovations for $N=32$ geometric EM steps.
An independent endpoint rejection sample followed by Brownian interpolation
gives 4096 target samples. The four observables are the cosine of the
interaction projection, a central-projection indicator, a radial Laplace
transform, and a bounded radial-tail statistic. Simultaneous bounded-loss
Hoeffding bands cover all target-versus-EM differences. Paired
learned-minus-oracle standard errors describe a different comparison
and are not used as confidence bounds for target error.
Replay reproduced the arrays and tables, without establishing asymptotic
or high-dimensional KL claims.

\subsection{Reproducibility and archived studies}
Source code, frozen inputs, numerical results and regeneration commands
are retained in an author-maintained computational archive. This version
does not link a public repository; the full computational archive is
not included in the manuscript's \LaTeX{} source package. The archive's
manifest distinguishes generated computations from saved-value checks;
it is not a claim to have retrained every image model. Necessary
mathematical arguments, primary tables and error decompositions are
included in this paper. The historical full matrices, optimizer traces
and illustrative image panels are retained as archival records rather
than a second reading supplement. Raw benchmark photographs are not
redistributed. Unrelated pilot results remain in the archive.

\section{Additional numerical certificates and image protocols}
\label{si-new-evidence-r007}

\subsection{A direct full-space certificate for the original scalar model}
\label{si-direct-certificate-r007}
This calculation concerns only the scalar model of
Appendix~\ref{si-nonlinear-proof} with
$\nu=5/2$, $c=2$, $\varepsilon=1$, $\alpha=13/20$,
$\lambda=3/5$, $u(h)=\sin(6h/5)$, $T=4/5$, the true tilted source,
and exact-current-drift physical EM with uniform $h=1/80$ and $N=64$.
Direct propagation of the actual chain gives a terminal bound while
accounting for spatial, quadrature and arithmetic errors.

\paragraph{Exact transformed chain.}
Let $\varphi_q$ denote the centered Gaussian density with variance $q$,
$H_s f=f*\varphi_s$, $\kappa=6/5$, $r=7/4$, and
$f(y)=c_\nu(1+y^2/\nu)^{-r}$, where
$c_\nu=\Gamma((\nu+1)/2)/(\sqrt{\nu\pi}\Gamma(\nu/2))$.
Under $y=x/(1+t)$, set
\[
 m=1+t,\quad s_t=2t/m,\quad
 P_t=H_{s_t}f,\quad N_t=H_{s_t}(f\sin(\kappa\,\cdot)),\quad
 \beta_t=\alpha e^{-(18/25)(1-t)/(1+t)}.
\]
The unnormalized target and transition displacement are
\begin{equation}\label{si-direct-kernel-r007}
\begin{split}
 p_t(y)&=P_t(y)+\beta_t\sin(\kappa y)N_t(y),\qquad
 \rho_t(y)=p_t(y)/Z,\quad Z=\int p_0,\\
 d_k(y)&=\frac{2\kappa\beta_{t_k}h\,N_{t_k}(y)\cos(\kappa y)}
 {m_k(m_k+h)p_{t_k}(y)},\qquad q_k=\frac{2h}{(m_k+h)^2},\\
 v_0&=p_0,\qquad
 v_{k+1}(y)=\int v_k(x)\varphi_{q_k}(y-x-d_k(x))\,dx.
\end{split}
\end{equation}
Thus $Q_k=v_k/Z$ is the transformed actual EM density. In particular,
$q_k$ is not the variance $2h/[m_k(m_k+h)]$ of the comparison transition.
The common invertible terminal scaling leaves KL unchanged. The finite
array below is never renormalized.

\paragraph{Local ratio inputs.}
The calculation uses the full-space one-step enclosures
\begin{equation}\label{si-direct-local-r007}
 \sup_y\left|\frac{(p_{t_k}K_k)(y)}{p_{t_{k+1}}(y)}-1\right|
 \le\zeta_k,\qquad
 \zeta_0=0.000531,\quad \zeta_k=0.01\ (1\le k\le63),
\end{equation}
where $K_k(x,dy)=\varphi_{q_k}(y-x-d_k(x))dy$.
For these full-line interval enclosures, setting
$a=\sqrt{q_k}g$, $b=\sqrt{s_{t_{k+1}}-s_{t_k}}g$,
$c=\sqrt{s_{t_k}}z$, $\theta=\kappa y$ and
$R(w)=f(y-w)/f(y)$, with independent standard Gaussian $g,z$, gives
\[
\begin{split}
 N&=\mathbb E_z[R(a+c)\sin(\theta-\kappa(a+c))],\\
 J&=\mathbb E_z[R(a+c)
       \{1+\beta_{t_k}\sin(\theta-\kappa a)
                       \sin(\theta-\kappa(a+c))\}],\\
 d&=2\kappa\beta_{t_k}\frac{h}{m_k(m_k+h)}
                    \frac{N\cos(\theta-\kappa a)}{J},\\
 D&=\mathbb E_G[R(\sqrt{s_{t_{k+1}}}G)
       \{1+\beta_{t_{k+1}}\sin\theta
                   \sin(\theta-\kappa\sqrt{s_{t_{k+1}}}G)\}].
\end{split}
\]
The ratio in \eqref{si-direct-local-r007} is exactly
$\mathbb E_g[J\exp(ad/q_k-d^2/(2q_k))]/D$.
Positive Gaussian panel masses enclose these expectations, with analytic
Gaussian tails and $J>0$ checked before division. Reflection covers
negative $y$. The positive half-line is covered by $y\in[0,20]$ and
the compact variables $u=1/y\in[0,1/20]$, $\theta\bmod\pi\in[0,\pi]$,
using
$R(w)=[(1+\nu u^2)/((1-uw)^2+\nu u^2)]^r$ in the latter region.
The interval trees, time assignments and analytic tail enclosures are
part of the computational source data. The first-step tree has 1313
terminal cells and a maximum enclosed ratio defect below
$0.000530057$; 16 positive-time slabs cover steps 1--63 with defect below
$0.01$. These enclosures and positivity give
$\ell_k\rho_{t_k}\le Q_k\le u_k\rho_{t_k}$, with
$\ell_k=\prod_{j<k}(1-\zeta_j)$ and $u_k=\prod_{j<k}(1+\zeta_j)$.

\paragraph{Relative quadrature and truncation.}
Use an outer mesh $\Delta x=0.01$ and analytic strip width $a_*=0.08$.
Put $L=r/\sqrt\nu$, $L_*=r/\sqrt{\nu-2a_*^2}$ and
\[
 e_t=e^{L_*a_*}-1+
       \beta_t\{e^{(L_*+2\kappa)a_*}-1\},\qquad
 b_t^*=1-\beta_t-e_t.
\]
Vertical integration of the Student logarithmic derivative shows
$|p_t(x+iv)-p_t(x)|\le e_tP_t(x)$ for $|v|\le a_*$.
Every one of the 65 values $b_t^*$ is positive, so there are no
denominator zeros in the strip. Consequently
\[
 |d_k(x)|\le D_k=\frac{2\kappa\beta_{t_k}h}{m_k(m_k+h)(1-\beta_{t_k})},
 \quad
 |d_k(x+iv)|\le D_k^*=
 \frac{2\kappa\beta_{t_k}h e^{L_*a_*}\cosh^2(\kappa a_*)}
 {m_k(m_k+h)b_{t_k}^*}.
\]
For an integrable analytic function with boundary integrals at most
$M$, the infinite-trapezoid error is at most
$2M/(e^{2\pi a_*/\Delta x}-1)$: shift each nonzero Fourier integral
to a strip boundary and sum the two geometric tails. Polynomial and
Gaussian decay justify the contour shifts here.

For the exact chain,
$|Q_k(x+iv)|\le e^{a_*^2/(2q_{k-1})}Q_k(x)$ when $k\ge1$;
at $k=0$ use
$e^{L_*a_*}(1+\beta_0\cosh^2(\kappa a_*))/(1-\beta_0)$.
A real density-ratio bound is
\[
 \frac{\rho_{t_k}(x)}{\rho_{t_{k+1}}(y)}
 \le C_k e^{L|x-y|},\quad
 C_k=\frac{2(1+\alpha)}{1-\alpha}
 e^{L^2s_{t_k}/2+L\sqrt{s_{t_{k+1}}}\sqrt{2/\pi}}.
\]
The remaining boundary integral is bounded using
\[
 J(D,q)=2\left\{\frac{e^{LD}-1}{L\sqrt{2\pi q}}
                      +e^{LD+L^2q/2}\right\}.
\]
Specifically, multiply $J(D_k^*,q_k)C_k
e^{(a_*+D_k^*)^2/(2q_k)}$ by $u_k$ and the preceding exact-chain
strip factor to obtain a relative bound for that integral. Applying the
same argument to a $\rho_{t_k}$ input supplies its row error.
For the exact infinite lattice recursion $W_k$, positivity yields
\[
 E_{k+1}\le(1+\zeta_k+e_{\rho,k})E_k+e_{Q,k},\qquad
 E_k=\sup_j\frac{|W_k(j\Delta x)-Q_k(j\Delta x)|}
                         {\rho_{t_k}(j\Delta x)},\quad E_0=0.
\]
The continuation is of the exact smoothed chain, not an interpolation
of lattice values.

Inner convolution uses mesh $0.04$, strip $1/2$, and shifts out to at
least $12\sqrt{s_t}$. Its trapezoid and tail bounds, relative to $P_t$,
are respectively
\[
 \frac{4(1-1/(4\nu))^{-r}\cosh(\kappa/2)
 e^{1/(8s_t)+L^2s_t/2+L\sqrt{s_t}\sqrt{2/\pi}}}
 {e^{\pi/0.04}-1},\qquad
 2e^{L^2s_t/2+L\sqrt{s_t}\sqrt{2/\pi}}
 \overline\Phi\!\left(\frac{B-Ls_t}{\sqrt{s_t}}\right).
\]
At $t=0$ the expressions are evaluated directly. The same positive
majorant bounds the signed $N_t$ error.
The outer chain is killed beyond $D=1040$ and integrated on $[-R,R]$
with $R=1024$. A lattice Gaussian has exponential moment at most
$e^{|\lambda|D_k+\lambda^2q_k/2}(1+\vartheta_k)$, where
$\vartheta_k=2\sum_{n\ge1}e^{-2\pi^2q_kn^2/\Delta x^2}$.
Optimizing $\lambda$ for the 16-unit gap, then summing over the initial
omission and 63 possible exits, bounds all exit-and-return paths.
The last transition is bounded pointwise; initial lattice mass is below
2 and $\prod_k(1+\vartheta_k)<2$, giving a conservative factor $4\cdot64$
times the Gaussian displacement bound. Kernel offsets beyond
$12\sqrt{q_k}+D_k$ have a separate Gaussian tail bound propagated by
the same positive-row recurrence. Neither truncation sets the true
Student tail to zero.

\paragraph{Arithmetic and conversion to KL.}
Arb at 128 bits supplies all scientific transcendental tables. Each
conversion has relative error at most $2^{-50}$ (absolute for sine and
cosine). Explicit binary64 products, divisions and positive sums use
the usual $\gamma_n=nu/(1-nu)$ bound, $u=2^{-53}$. Inner sums have at
most 601 terms. Gaussian weights are formed from degree-16 Taylor
polynomials with certified argument radii $0.1$ and $0.025$, followed
by at most 200 offset recurrences. Every step's complete log-relative
error is below $10^{-8}$, including the uniform $10^{-11}$ displacement
error. Initial rounding is below $10^{-12}$. This argument assumes
correctly rounded elementary IEEE754 arithmetic on normal values and
the correctness of Arb; it does not rely on floating-point transcendental
or FFT error estimates.

Let $F=(Q_{64}-\rho_T)^2/\rho_T$ and
$\theta_T=e_T/(1-\beta_T)<1$. Its strip boundary integral is at most
\[
 M_F=\frac{[e^{a_*^2/(2q_{63})}u_{64}+1+\theta_T]^2}{1-\theta_T}.
\]
Thus the whole-line trapezoid error is at most
$2M_F/(e^{2\pi a_*/\Delta x}-1)$.
The omitted lattice tail uses
$|Q_{64}/\rho_T-1|\le B_*:=\max(u_{64}-1,1-\ell_{64})$ and
\[
 \Delta x\sum_{|j\Delta x|>R}\rho_T(j\Delta x)
 \le\frac{2(1+\alpha)e^{L^2s_T/2}}{Z}
               \frac{2c_\nu\nu^r}{\nu R^\nu}.
\]
A positive initial lattice sum, after its rounding and trapezoid
corrections, gives $Z>1.1450770626785827$.
The final positive square sum includes its own arithmetic allowance.
For $z\ge\ell>0$,
\[
 -\log z+z-1\le
 c(\ell)(z-1)^2,\qquad
 c(\ell)=\frac{-\log\ell+\ell-1}{(1-\ell)^2}.
\]
Since the exact $Q_{64}$ and $\rho_T$ are normalized, the linear term
integrates to zero. Multiplication by $c(\ell_{64})$ bounds the
forward KL. The upward-rounded components are
\begin{center}
\begin{tabular}{lr}
\toprule
Component & Upper bound\\
\midrule
Retained-region contribution to KL & $2.251758636\,10^{-6}$\\
Entire omitted lattice-tail contribution to KL & $1.209969279\,10^{-7}$\\
Whole-line square-quadrature error before KL factor & $2.177900207\,10^{-20}$\\
Propagated relative quadrature error & $5.492947502\,10^{-18}$\\
Propagated relative kernel-cutoff error & $4.485007409\,10^{-29}$\\
Relative exit-and-return error on the retained region & $6.435854625\,10^{-43}$\\
Complete numerical relative-density allowance & $1.198554783\,10^{-6}$\\
\midrule
$D_{\mathrm{KL}}(\rho_{4/5}\|Q_{64})$ & $\mathbf{2.372756\,10^{-6}}$\\
\bottomrule
\end{tabular}
\end{center}
The numerical-density allowance is included in the retained-region
contribution, not added again as a KL term. Independent interval checks
verified strip margins, operation bounds and selected posterior values.
The certificate depends on the local-ratio enclosures and applies only
to the specified grid, exact drift and scalar model.
The code archive contains the propagation arrays, local integral
enclosures, and regeneration checks; these do not remove the
analytic or numerical-library assumptions above.
The analytic main theorems do not depend on this numerical certificate.

\FloatBarrier

\subsection{Risk validation for the frozen random-feature fits}
\label{si-rff-certificate-r007}
The training and sampling design is in Appendix~\ref{si-experiment-details}.
Here the model constants are exact: $\nu=5/2$, $\varepsilon=1$, $c=2$,
$\alpha=13/20$, $\lambda_j=3/(5\sqrt d)$, $\kappa=2\lambda$,
$T=4/5$ and $t_k=1-5^{-k/32}$. Each validation row has
\[
 W\sim\operatorname{Gamma}(5/4,\text{scale }4/5),\quad
 H=G_0/\sqrt W,\quad
 X=(1+t_J)H+\sqrt{2t_J(1+t_J)}G_1,\quad
 \mathbb P(J=k)=h_k/T.
\]
All Gaussian coordinates and rows are independent in the ideal model;
one $W$ is shared across coordinates within a row.

\paragraph{Exact fitted function.}
The selected stored coefficients and feature parameters are interpreted
as exact binary rationals. With their frozen $\kappa_{\rm fit}$ and
$T_{\rm fit}$, put
$z=\kappa_{\rm fit}^{\top}x/(1+t)$,
$r=\log(1+\|x\|^2/[d(1+t)^2])$ and $s=t/T_{\rm fit}$.
The first 24 features are
$\sin(qz)(1,s,s^2,e^{-r},e^{-2r},se^{-r})$ for
$q\in\{1/4,1/2,3/4,1\}$; the other 192 are
$\sin(w_j^zz)\cos(w_j^rr+w_j^ts+\phi_j)$.
The real linear combination is clipped to $[-1,1]$ to define $f$.
Stored parameters may differ from exact model constants; the
floating-point predictor need not evaluate this real function exactly.

\paragraph{A bounded loss dominating the energy.}
Write $m=1+t$, $v=2tm$, $A=\mathbb E[\sin(\kappa^\top H)\mid X,t]$ and
\[
 \eta=\exp\!\left[-\frac{\|\kappa\|^2v}{2(m^2+vW)}\right]
       \sin\!\left(\frac{m\kappa^\top X}{m^2+vW}\right).
\]
Gaussian conditioning gives $\mathbb E[\eta\mid X,t]=A$, including
at time zero. Put $\theta=\kappa^\top X/m$,
$\beta=\alpha e^{-2(9/25)(1-t)/m}$ and
\[
 w_+=\frac{(4/m)^2\beta^2(9/25)\cos^2\theta}
 {(1+\beta f\sin\theta)^2(1-\beta|\sin\theta|)},\qquad
 L=w_+(f-\eta)^2.
\]
Direct subtraction of the two drift fractions in
\eqref{si-tilt-drift-main}, followed by change to the tilted marginal,
cancels one true denominator. Using $Z\ge1$ and $|A|\le1$ bounds the
remaining weight by $w_+$. Conditional orthogonality gives
$\mathbb E[w_+(f-\eta)^2]\ge\mathbb E[w_+(f-A)^2]$, hence
$\mathcal E(f)\le T\mathbb E L$.
The weight is frozen before validation. Maximizing
$(1-y^2)/(1-\beta y)^3$ over $0\le y\le1$ uses
$y=3\beta/(1+\sqrt{1+3\beta^2})$. Together with $|f-\eta|\le2$,
this gives the finite-grid envelope
$0\le L\le L_{\max}=6681859232/10^9$.

Each of the two dimensions has exactly $n=100000$ fresh validation rows.
Arb encloses the complete source/bridge transform and every loss for the
real fitted function from exact interpretations of the realized binary
base draws. For loss intervals $[a_i,b_i]\subset[0,L_{\max}]$, set
$A_n=\sum a_i$, $B_n=\sum b_i$, $C_n=\sum b_i^2$. Exact rational sums give
\[
 \overline L\le B_n/n,\qquad
 s_n^2\le V_n:=\frac{C_n-A_n^2/n}{n-1}.
\]
The empirical Bernstein bound of \citet[Theorem 4]{si-paper-maurer2009}
therefore yields, for each dimension with failure probability $1/40$,
\begin{equation}\label{si-rff-eb-r007}
 \mathcal E(f)\le T\left[
 \frac{B_n}{n}+\sqrt{\frac{2V_n\log80}{n}}
                  +\frac{7L_{\max}\log80}{3(n-1)}\right].
\end{equation}
The two-event union bound has confidence at least $95\%$ conditional
on ideal iid validation independent of training and earlier selection.
This family excludes the pilot and earlier validation families.
Both fits and sample sizes were fixed before these streams.

\begin{center}
\begin{tabular}{rrrr}
\toprule
$d$ & $TB_n/n$ (approximate) & $V_n$ (approximate) & Upward energy bound\\
\midrule
2 & $0.0029636379271$ & $0.0002615447554$ & $0.003631324882$\\
8 & $0.0016855384072$ & $0.0001233838487$ & $0.002315295564$\\
\bottomrule
\end{tabular}
\end{center}
An independent higher-precision evaluation of all 200,000 losses
confirmed the endpoint sums and upward bounds; details are in the code archive.
The arithmetic enclosures concern saved base inputs. Distributional
errors of PCG64 and floating Gamma, normal and categorical generation
have not been bounded, so confidence remains conditional on ideal iid draws. Neither the floating deployment error nor
actual terminal KL is certified by \eqref{si-rff-eb-r007}.

\FloatBarrier
\subsection{From image couplings to conditional scores}
\label{si-image-transfer-r007}
Let $d\ge1$, $\nu>2$, $\tau>0$, $\varepsilon>0$ and $0<T<1$.
For $X\in[0,1]^d$, independently draw
$\eta_i=\tau\sqrt{(\nu-2)/\nu}\,V_i$ with iid $V_i\sim t_\nu$, and
set $C=X+\eta$. Additive Student image-noise models have been considered
previously \citep{si-paper-laus2019}. Our choice retains the unbounded
source and uses Gaussian Brownian-bridge perturbations:
\[
 \widetilde X_t=(1-t)C+tX+\sqrt{2\varepsilon t(1-t)}G,
 \qquad G\sim N(0,I_d)\text{ independently of }(C,X).
\]
The displacement $U=-\eta$ has $M_2=d\tau^2$ and
\[
 \mathbb E|\eta_i|^p=
 \frac{\tau^p(\nu-2)^{p/2}\Gamma((p+1)/2)\Gamma((\nu-p)/2)}
 {\sqrt\pi\Gamma(\nu/2)},\qquad 0<p<\nu.
\]
The Student density tail proves divergence at $p\ge\nu$.
Output clipping does not change this input-coupling fact.

\paragraph{What this image coupling tests.}
For $\nu\in(2,4]$, the displacement has $M_2<\infty$ and $M_4=\infty$,
so the weak-displacement theorem applies. Its conditional residual is
nevertheless bounded. Indeed, for either $\mathcal A_t=\sigma(\widetilde X_t)$
or $\mathcal A_t=\sigma(C,\widetilde X_t)$,
\[
 \left\|\frac{X-\mathbb E[X\mid\mathcal A_t]}{1-t}\right\|
 \le\frac{\sqrt d}{1-T},\qquad 0\le t\le T.
\]
The bounded clean target removes the strict-second-moment residual
obstruction studied in the separation example. These image tasks therefore
illustrate conditional modeling and its limitations; they do not test the
sharpness of the weak-moment discretization rates. In particular, a
Brownian-bridge score representation does not by itself imply a guarantee
for a trained DDPM or a practical restoration advantage.

\paragraph{Joint learning error, including a source-blind fit.}
Let $m_M(z)=\mathbb E[X\mid \widetilde X_t=z]$ and
$m_S(c,z)=\mathbb E[X\mid C=c,\widetilde X_t=z]$, and let $g_M,g_S$ be frozen
Borel predictors clipped to $[0,1]^d$. Both implemented drifts have the
form $(g-\widetilde X_t)/(1-t)$, with their respective inputs. For
$w_k=h_k/(1-t_k)^2$, orthogonal projection gives
\begin{equation}\label{si-image-joint-error-r007}
\begin{split}
 \mathcal E_S^M
 &=\sum_k w_k\mathbb E\|g_M(\widetilde X_{t_k})-m_S(C,\widetilde X_{t_k})\|^2
   =\mathcal E_M+I_{\rm input},\\
 I_{\rm input}
 &=\sum_k w_k\mathbb E\|m_S(C,\widetilde X_{t_k})-m_M(\widetilde X_{t_k})\|^2\ge0,\\
 \mathcal E_S^j&\le\mathcal E_{\rm label}^j
 :=\sum_k w_k\mathbb E\|g_j-X\|^2,\qquad j\in\{M,S\}.
\end{split}
\end{equation}
Here $I_{\rm input}$ is a squared-prediction loss, not mutual information.
The cross term vanishes because $\mathbb E[m_S\mid \widetilde X_t]=m_M$.
A source-blind implementation is therefore admissible in the joint
theorem as a function that ignores $C$, but $\mathcal E_M\to0$ alone
does not eliminate its joint learning error.

Since $\operatorname{Var}(U\mid C)=\operatorname{Var}(X\mid C)$,
$m_0^{\rm sc}\le\min(d\tau^2,d/4)$. Applying
Theorem~\ref{si-if-thm:src} and its bridge-variance corollary to either
implemented family, with the error in \eqref{si-image-joint-error-r007},
gives, on $t_k=1-r^k$, $r^N=\delta=1-T$, $a=1-r$,
\begin{equation}\label{si-image-joint-bound-r007}
\begin{split}
 K_{\rm EM}&\le\frac{\mathcal E_S^j}{4\varepsilon}
        +Nc_d(a)+\frac{a\min(d\tau^2,d/4)}{4\varepsilon\delta},\\
 K_{\rm BP}&\le\frac{\mathcal E_S^j}{4\varepsilon r}
                  +\frac{a\min(d\tau^2,d/4)}{4\varepsilon\delta},
\end{split}
\end{equation}
where $K=\mathbb E_C D_{\rm KL}(\mathcal L(\widetilde X_T\mid C)\|
\mathcal L(Y_N\mid C))$ and $c_d$ is the variance cost already defined
in the interface analysis.
At the image setting $d=64$, $\tau=0.2$, $\varepsilon=0.02$ and
$T=0.9$, we substitute the upper bound
$m_0^{\rm sc}\le d\tau^2=2.56$.
Even after setting the nonnegative learning term to zero, the displayed
sufficient formulas give the following values in nats:
\begin{center}
\begin{tabular}{rrrr}
\toprule
$N$ & Uniform EM & Geometric EM & Geometric BP\\
\midrule
4 & 253.93846 & 157.71319 & 140.05078\\
64 & 41.47705 & 12.61800 & 11.30828\\
\bottomrule
\end{tabular}
\end{center}
The uniform column evaluates Corollary~\ref{si-sc-main}; the geometric
columns evaluate \eqref{si-image-joint-bound-r007}. These are evaluations of
upper-bound formulas, not measured KL values or lower bounds on the
required budget. They give no useful absolute score guarantee at these
budgets. The archived multi-$N$ scores also include
learning, stopping and output-map errors; they do not isolate the KL
discretization rate.

For grid-specific independent validation, sample $J$ with probabilities
$w_k/W$, $W=\sum_kw_k=rT/\delta$. The bounded loss
$L=\|g_J-X\|^2/d$ gives the ideal-iid upper bound
\[
 \mathcal E_S^j\le dW\min\left\{1,\overline L+
          \sqrt{\frac{\log(1/\alpha)}{2n}}\right\}.
\]
The image experiment uses $n=50000$ for each tail/fit/grid and
$\alpha=0.05/72$. These 72 floating evaluations use the validation empirical clean law
with fresh corruption, time and bridge noise; they neither certify real
arithmetic nor cover the test-image or natural-image population. Their joint-KL upper estimates
of approximately 97--235 are uninformative for absolute score accuracy.

\paragraph{Score transfer and stopping noise.}
Condition on a frozen fit. Let $G_c$ be the clean conditional law and
$F_c$ the law of the reported forecast after its memberwise output map.
Scores use a fresh truth independent of forecast draws given $C=c$.
For average coordinate CRPS, its excess over the Bayes forecast is
\[
 \mathcal R=\mathbb E_C\frac1d\sum_{i=1}^d
       \int_0^1(F_{C,i}(z)-G_{C,i}(z))^2\,dz.
\]
Apply the same output map to the true stopped conditional law to define
$H_c$. Data processing and Pinsker bound the squared $L^2(C,i,z)$
distance from $F$ to $H$ by $K/2$. The $L^2$ triangle inequality and
a clipped coupling with $X$ then yield
\begin{equation}\label{si-image-score-r007}
 \mathcal R\le[\sqrt{K/2}+\sqrt{B_{\rm stop}}]^2,
 \quad
 B_{\rm stop}=\begin{cases}
 (1-T)\mathbb E|\eta_1|+\sqrt{4\varepsilon T(1-T)/\pi},
       &\operatorname{clip}(Y_N),\\
 \sqrt{2\varepsilon(1-T)/T}\sqrt{2/\pi},
       &\operatorname{clip}(C+(Y_N-C)/T).
 \end{cases}
\end{equation}
Indeed, $|H-G|^2\le|H-G|$ and the integrated absolute CDF difference
is bounded by the expected absolute difference of coupled bounded
variables. For the first map the target is
$\operatorname{clip}(X+(1-T)\eta+\sqrt{2\varepsilon T(1-T)}G)$;
for the second it is
$\operatorname{clip}(X+\sqrt{2\varepsilon(1-T)/T}G)$.
Clipping is a contraction and fixes $X$. Averaging these bounds proves
\eqref{si-image-score-r007} without pixel independence. A marginal KL
for $Y_N$ alone does not supply $K$, even for the fixed first map.

The completion experiment uses the same second-map argument with a
general observed context $C$ and measurable source $s(C)$: its residual
target is $R_T=T(X-s(C))+\sqrt{2\varepsilon T(1-T)}G$, and its reported
map is $r\mapsto\operatorname{clip}(s(C)+r/T)$.
Taking $K=\mathbb E_C D_{\rm KL}(\mathcal L(R_T\mid C)\|
Q_T^C)$ gives exactly the second stopping term in
\eqref{si-image-score-r007}. This remains conditional on a valid joint
KL bound for the specified context and frozen sampler.

The stopping term cannot generally be removed. For one deterministic
clean pixel $x\in(0,1)$, exact stopped sampling has $K=0$, but for
$\sigma=\sqrt{2\varepsilon(1-T)/T}$ the second map has regret
$\int_0^1[\Phi((z-x)/\sigma)-\boldsymbol1\{z\ge x\}]^2dz$.
Dividing by $\sigma$ and taking $\sigma\downarrow0$ gives
$(\sqrt2-1)/\sqrt\pi>0$. Thus finite-time exactness does not imply
clean-posterior exactness. Nor does the comparison of upper bounds in
\eqref{si-image-joint-bound-r007} imply empirical score ordering.

\subsection{Earlier image protocols and their outcomes}
\label{si-image-protocol-r007}
The earlier reused-BSDS denoising study uses 200 test originals, two
$8\times8$ patches per image, Student noise with
$\nu\in\{2.5,3,10\}$, $N\in\{4,8,16,64\}$ and three fixed fits.
Sources and states remain untruncated, with primary output
$\operatorname{clip}(Y_T)$. Persistent-source fits improve coordinate
CRPS in these configurations, but BP does not uniformly improve on EM,
and every stochastic ensemble mean has higher MSE than its matching
one-call POINT predictor. The empirical-90\%-coverage gate failed;
subsequent score/rank analyses are exploratory. The reproducibility
archive retains all 60 prescribed CRPS contrasts and the full protocols. Individual
95\% intervals condition on fixed fits and draws and resample original
images; they are not simultaneous or joint-recovery guarantees.

The separate DIV2K completion study uses all 100 public validation
images, four patches each, with 64 missing pixels and bounded
displacements \citep{si-data-div2k}. Its four assertions were fixed
after the BSDS pilot failure. At primary $N=4$, BP improves fair CRPS
over EM/ZERO by $3.24380\%/18.07941\%$, and improves pooled rank-TV/CvM
over EM by $0.0462048446/0.0026287184$; all four nominal
Bonferroni-adjusted bootstrap lower bounds are positive. The fixed rule
requires at least $1\%$ relative point gains for both CRPS comparisons
and positive bounds in all four tests; these are not minimum effects
certified by the intervals. ZERO nevertheless has lower mean MSE at every grid and in
every fit; $89.54\%$ of the observed EM--BP finite-mean MSE difference
at $N=4$ is the estimated Monte Carlo variance difference. Thus the
results do not establish improved exact forecast-mean risk. The archive
retains all four primary tests, secondary grids and fit-level contrasts.

\subsection{Image controls: information, dependence and computation}
\label{si-application-controls-r009}
The controls ask whether source retention or iterative sampling improves
conditional image forecasts. They do not estimate terminal KL.

\paragraph{Scores and sampling units.}
Write $U$ for the fair score of an iid random ensemble and $V$ for the
score of its issued empirical distribution
\citep{si-paper-gneiting2007scores,si-paper-ferro2014fair}. For $M=16$,
\[
 \widehat{\mathrm{ES}}_U=
 \frac{\sum_i\|y_i-x\|}{M\sqrt d}
 -\frac{\sum_{i<j}\|y_i-y_j\|}{M(M-1)\sqrt d};
 \qquad \widehat{\mathrm{ES}}_V\text{ replaces }M(M-1)\text{ by }M^2.
\]
CRPS applies the analogous formula coordinatewise. All resampling is
paired across methods after averaging crops and fixed fits within each
original image. Images, rather than patches, draws or fits, are the
resampling units; distinct benchmark images need not be independent
scenes. Intervals exclude retraining and fresh-ensemble uncertainty.

\paragraph{Urban100 confirmation.}
All chains start at $C$; the persistent slot receives $C$ (R), zero (D)
or an independent training-pool source (P). R/P share features,
frequencies, ridge penalty and training pairs; slot changes also change
input geometry. Three preselected fits (2711--2713) use 3200 BSDS training patches,
20 augmentations each, 512 random features, bandwidth 2 and ridge $10^{-5}$.
The 64,000 rows are not independent clean images. Four fixed $8\times 8$
crops from each of 100 author-hosted Urban100 HR images form a custom
denoising task, distinct from the original superresolution task.
Duplicate screening found no confirmed match to 600 prior BSDS/DIV2K
images; arbitrary crop duplicates or inaccessible past use were not
excluded. RR adds iid whole-vector residuals from a separate bank of
1600 validation residuals per fit, hence uses extra calibration data.
All six prespecified comparisons are in
Table~\ref{si-c027-primary-r009}. Success requires at least $1\%$
relative point gain, positive differences in all three fits, and a
positive lower percentile at $0.05/6$ from 20,000 image resamples.
This nominal bootstrap rule is not an exact coverage theorem or a
confidence claim that the true effect exceeds $1\%$.

\begin{table}[t]
\centering
\caption{Urban100: comparator minus R-BP. $L$ is the one-sided
six-comparison-adjusted absolute lower bound, multiplied by $10^3$.
Only the first two rows satisfy the full fixed rule.}
\label{si-c027-primary-r009}
\begin{tabular}{llrrc}
\toprule
Comparator & Score & Gain (\%) & $10^3 L$ & Pass\\
\midrule
P-BP & CRPS$_U$ & $+6.860$ & $+3.0528$ & yes\\
D-BP & CRPS$_U$ & $+6.044$ & $+2.6663$ & yes\\
R-EM & CRPS$_U$ & $-0.872$ & $-0.7538$ & no\\
RR & CRPS$_U$ & $-0.124$ & $-1.1119$ & no\\
Member shuffle & ES$_V$ & $+0.0837$ & $+0.0394$ & no\\
RR & ES$_U$ & $-0.7409$ & $-1.7435$ & no\\
\bottomrule
\end{tabular}
\end{table}

The shuffle independently permutes the 16 member indices at each pixel,
preserving its value multiset and CRPS. It uses ES$_V$ because the shuffled
members are dependent. Neither this finite-ensemble comparison nor the
adjacent-difference and block-mean scores establish posterior recovery;
the latter two scores also did not favor R-BP over RR.

\paragraph{Spatial and pixelwise prediction.}
From BSDS \citep{si-data-bsds}, 200 training originals supply eight
$32\times 32$ grayscale crops each. A hash split of the 100 validation
originals gives 50 calibration and 50 exploratory evaluation images,
with four crops each. Historical reuse prevents new-confirmation claims.
Five-layer spatial and pixelwise networks have 7553 and 7545 parameters;
the spatial receptive field is $21\times 21$. Source/state inputs remain
untruncated, an arctangent map feeds the networks, and outputs lie in
$[0,1]$. R retains the source, D zeros that slot, and DIRECT predicts the
clean endpoint at time zero. Two fixed fits use seeds 2811/2812.

All ten models use Adam with batch 16. A pre-evaluation calibration
criterion triggers the same extension from 2000 to 6000 updates for every
model; the final checkpoint is fixed rather than chosen between budgets.
Bridge training distributes labels across four times, whereas DIRECT
learns source-to-clean on every row. Equal update counts and near-equal
parameter counts therefore do not equalize optimization difficulty.
The final R/D fit at seed 2811 still improves its calibration loss by
$2.11\%/2.56\%$ over the last 1000 updates. The full optimizer, feature,
split and learning-rate specifications are in the reproducibility archive.

RR adds draws from 400 whole calibration residuals per fit and
architecture, with replacement and no recentering. FROZEN uses the
R predictor once, $h(C)$, and outputs
$Y_T=(1-T)C+Th(C)+\sqrt{2\varepsilon T(1-T)}G$.
The two budgets reuse the same source corruptions, path-noise arrays,
residual indices and frozen-control normals; $N=4$ is primary.
Table~\ref{si-c028-contrasts-r009} retains every primary comparison and
both architecture interactions. Intervals use 4000 image resamples.

\begin{table}[t]
\centering
\caption{Spatial study: all fixed contrasts, multiplied by $10^3$,
with descriptive 95\% original-image intervals. Positive values favor
the R sampler; interaction is
$(\mathrm{RR}-\mathrm{R})_{\rm spatial}-(\mathrm{RR}-\mathrm{R})_{\rm pixel}$.
Near parameter matching does not equalize optimization or identify a
causal architecture effect.}
\label{si-c028-contrasts-r009}
\begin{tabular}{lrr}
\toprule
ES$_U$ contrast & 2000 updates & 6000 updates\\
\midrule
RR $-$ R-EM & $-7.027\;[-7.871,-6.099]$ & $-6.755\;[-7.617,-5.814]$\\
RR $-$ R-BP & $-5.905\;[-6.569,-5.183]$ & $-5.238\;[-5.846,-4.586]$\\
D-EM $-$ R-EM & $+0.909\;[+0.698,+1.123]$ & $+0.757\;[+0.591,+0.929]$\\
D-BP $-$ R-BP & $+0.586\;[+0.376,+0.796]$ & $-0.229\;[-0.439,-0.015]$\\
EM interaction & $-5.429\;[-6.577,-4.275]$ & $-5.725\;[-6.794,-4.653]$\\
BP interaction & $-3.108\;[-4.516,-1.742]$ & $-2.788\;[-3.991,-1.628]$\\
\bottomrule
\end{tabular}
\end{table}

At 6000 updates, spatial RR/EM/BP ES is
$0.040605/0.047360/0.045842$, versus
$0.095273/0.096303/0.097723$ for pixelwise predictors. Spatial prediction
helps, but neither iterative method improves over its own RR baseline.
The BP source effect reverses sign between budgets and is negative in
both final fits. BP beats FROZEN by only $0.000187$ on average, with
opposing fit signs. Additional spatial-increment diagnostics remain
in the archived full protocol.

\paragraph{A finite-ensemble dependence intervention.}
To isolate pairing within the saved, memberwise-clipped forecasts at
6000 updates, we independently permute
the 16 member indices at each pixel, using 16 common permutation replicates
for all four spatial methods and both final fits. Every pixel's value
multiset, marginal CRPS and mean image is exactly preserved. We compare
ES$_V$ of the original and permuted empirical distributions. The permuted
members are dependent, and their empirical law is not the complete product
of the empirical marginals.
The effect is permuted-minus-original, averaged over 16 permutations,
four crops and two fixed fits within each of 50 original images. Pointwise 95\% intervals
use 20000 paired image resamples and are descriptive on this reused data.
\begin{center}
\begin{tabular}{lrrr}
\toprule
Method & Original ES$_V$ & Permuted ES$_V$ & $10^6$ effect [95\% interval]\\
\midrule
RR & 0.04318329 & 0.04348821 & $304.929\;[172.620,437.699]$\\
R-EM & 0.05132526 & 0.05133178 & $6.518\;[4.790,8.283]$\\
R-BP & 0.04882547 & 0.04883071 & $5.232\;[3.817,6.670]$\\
FROZEN & 0.04864262 & 0.04864238 & $-0.244\;[-0.952,0.434]$\\
\bottomrule
\end{tabular}
\end{center}
Both fits have the reported direction for each effect. Conditional
Monte Carlo standard errors across permutation replicates are at most
$1.13\times10^{-7}$. Original pairing helps RR much more than R-BP;
RR remains better after permutation as well. The BP-minus-RR gap changes
from $0.00564219$ to $0.00534249$, with intervals
$[0.00499194,0.00625425]$ and $[0.00476100,0.00589457]$.
This intervention measures sensitivity to finite empirical pairing,
not recovery of the unknown posterior dependence or an iterative advantage.

\paragraph{Output and cost controls.}
The fixed secondary map $\operatorname{clip}(C+(Y_T-C)/T)$ improves final
BP ES by $0.000840$ but worsens EM by $0.000249$. Corrected BP remains
worse than RR by $0.004397$ and corrected FROZEN by $0.000330$; both fits
agree. At 2000 updates some CRPS effects have opposite fit signs, so
the map is not uniformly beneficial. Even an exact untruncated stopped
bridge, after this affine correction, has covariance
$\operatorname{Cov}(X\mid C)+2\varepsilon(1-T)I/T$.
It is not a clean-posterior correction.

For 16 draws, $N=4/8$ costs 64/128 patch-equivalent predictor evaluations,
versus one for RR or FROZEN; RR additionally uses 400 calibration calls
per fit. These are batched call counts, not walltime speedups. Training,
optimizer replay, calibration and complete forecast costs, all secondary
cells and output-map scores remain in the archive. The $d=64$ sufficient
bounds in Appendix~\ref{si-image-transfer-r007} do not certify these
$d=1024$ CNNs: their population approximation and optimization errors
remain uncontrolled.

\section{Markov bridge mixtures and the second-moment rate}
\label{si-c036-markov}
The following Brownian residual criterion includes non-dominated endpoint
measures. Endpoint factorization for Markov reciprocal laws is classical;
\citet[Theorem~2.14]{si-paper-lrz2014} give the dominated version under
conditional domination. The rate follows from Theorem~\ref{si-if-thm:src}.

\begin{sicoreproposition}[One-time criterion for a Markov bridge mixture]
\label{si-c036-markov-criterion}
Let $S=X_0$, $U=X_1-S$, $\mathbb E\|U\|^2<\infty$, and let
$\mathcal X_t=(1-t)S+tX_1+\sqrt{2\varepsilon}(W_t-tW_1)$
be the original bridge mixture.
Define $b_t^{\rm sc}(\mathcal X_t,S)=
\mathbb E[D_t\mid\mathcal X_t,S]$, with the square-integrable residual
$D_t=(X_1-\mathcal X_t)/(1-t)$.
The following are equivalent:
the original process is Markov on $[0,1]$ with respect to its natural
filtration $\mathcal F_t^{\mathcal X}=\sigma(\mathcal X_r:0\le r\le t)$;
$b_t^{\rm sc}(\mathcal X_t,S)=b_t(\mathcal X_t)$ at every $0<t<1$;
the same equality holds at one deterministic $t_*\in(0,1)$;
and, writing $\pi(ds,dy)=\mu(ds)\lambda_s(dy)$,
\begin{equation}\label{si-c036-factorization}
 \lambda_s(dy)=\frac{k_1(s,y)g(dy)}{h_0(s)},\qquad
 h_0(s)=\int k_1(s,y)g(dy)\in(0,\infty)
 \quad\text{for $\mu$-a.e.\ $s$},
\end{equation}
for a nonzero locally finite positive measure $g$.
Here $k_t$ is the heat kernel with covariance $2\varepsilon tI$.
The conditional endpoint law $\lambda_s$ is the translate by $s$ of
the conditional displacement law $\pi_s$ in
Appendix~\ref{si-if-section-source}.
The measure $g$ may be singular or have infinite mass; neither endpoint
needs to be integrable.
\end{sicoreproposition}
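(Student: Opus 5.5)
The plan is to prove the cycle (d)$\Rightarrow$(a)$\Rightarrow$(b)$\Rightarrow$(c)$\Rightarrow$(d). One computation runs through every step: the conditional law of $X_1$ given the bridge history. Write $\pi_s$, $\lambda_s$ as in the statement and fix $0<t<1$. Given $(S,X_1)=(s,y)$, the finite-dimensional density of $(\mathcal X_{r})_{r\le t}$ is the Brownian path density from $s$, multiplied by $k_{1-t}(\mathcal X_t,y)/k_1(s,y)$ (the usual bridge $h$-transform). By Bayes' formula, the regular conditional law of $X_1$ given $\mathcal F_t^{\mathcal X}$ is therefore proportional to
\[
 \frac{k_{1-t}(\mathcal X_t,y)}{k_1(S,y)}\,\lambda_S(dy).
\]
This is a positive, finite normalization because $k_{1-t}/k_1$ is bounded by a Gaussian factor in $y$ times $e^{c\|y\|}$, and $\lambda_s$ has a finite second moment after translation, for $\mu$-a.e.\ $s$. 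Two consequences follow for every $t$: the pair $(S,\mathcal X_t)$ is sufficient for $X_1$ given the past, and $\mathbb E[D_t\mid\mathcal F_t^{\mathcal X}]=b_t^{\rm sc}(\mathcal X_t,S)$. This sufficiency identity is the same one used in the proof of Theorem~\ref{si-if-thm:src}.

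\emph{(d)$\Rightarrow$(a) and (a)$\Rightarrow$(b).} Substitute the factorization \eqref{si-c036-factorization} into the display. The factor $k_1(S,y)$ cancels, together with the constant $h_0(S)$, leaving $k_{1-t}(\mathcal X_t,y)\,g(dy)$ normalized. This law depends on the past only through $\mathcal X_t$. Given $X_1$ and $\mathcal X_t$, the future of the bridge is a Brownian bridge independent of the past, so the conditional law of the future given $\mathcal F_t^{\mathcal X}$ is a function of $\mathcal X_t$; this is the Markov property. The argument never uses a density of $g$, only its local finiteness and the finiteness of $h_t(x)=\int k_{1-t}(x,y)g(dy)$, which follows from $h_0<\infty$ by the semigroup property. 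Conversely, assume (a). Then $\mathbb E[D_t\mid\mathcal F_t^{\mathcal X}]=\mathbb E[D_t\mid\mathcal X_t]=b_t(\mathcal X_t)$. Since $\sigma(S,\mathcal X_t)\subset\mathcal F_t^{\mathcal X}$, the tower property with the sufficiency identity gives $b^{\rm sc}_t=b_t$ almost surely. (b)$\Rightarrow$(c) is trivial.

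\emph{(c)$\Rightarrow$(d).} I expect this to be the main obstacle, since one time must force the global factorization, including singular or infinite-mass $g$. Set $\nu_s(dy)=\lambda_s(dy)/k_1(s,y)$, a locally finite measure. Expand $k_{1-t_*}(x,y)$. Given $(S,\mathcal X_{t_*})=(s,x)$, the endpoint posterior is the exponential family
\[
 \propto e^{\theta\cdot y}\,\widetilde\nu_s(dy),\qquad
 \theta=\frac{x}{2\varepsilon(1-t_*)},\qquad
 \widetilde\nu_s(dy)=e^{-\|y\|^2/(4\varepsilon(1-t_*))}\nu_s(dy).
\]
Its mean is $\nabla_\theta\log\mathcal L_s(\theta)$, where $\mathcal L_s$ is the Laplace transform of $\widetilde\nu_s$. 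Since $D_{t_*}=(X_1-\mathcal X_{t_*})/(1-t_*)$, equality (c) says that $\nabla\log\mathcal L_s(\theta)=m(\theta)$ for $\mu\otimes\mathrm{Leb}$-a.e.\ $(s,x)$, with $m$ independent of $s$. Here I use that $\rho_{t_*}$ is equivalent to Lebesgue measure on each source fiber, as shown in Proposition~\ref{si-if-prop:borel-src}. Each $\mathcal L_s$ is finite on all of $\mathbb R^d$, because the Gaussian factor dominates the growth of $e^{\theta\cdot y}$. Hence $\mathcal L_s$ is real-analytic, and a.e.\ equality of gradients upgrades to equality everywhere. Fix a good reference source $s_0$. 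Then $\mathcal L_s=c(s)\mathcal L_{s_0}$ on $\mathbb R^d$ for $\mu$-a.e.\ $s$, and uniqueness of Laplace transforms gives $\widetilde\nu_s=c(s)\widetilde\nu_{s_0}$, hence $\nu_s=c(s)\nu_{s_0}$. Put $g=\nu_{s_0}$. Then $\lambda_s(dy)=c(s)\,k_1(s,y)\,g(dy)$, and normalization forces $c(s)=1/h_0(s)$ with $h_0(s)\in(0,\infty)$. The measure $g$ is nonzero and locally finite, and it may be singular or of infinite mass.

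Measurability details remain: a jointly Borel version of $(s,\theta)\mapsto\mathcal L_s(\theta)$ and a Fubini selection of a good $s_0$. I would handle these with the everywhere-finite Borel kernel modification of Proposition~\ref{si-if-prop:borel-src}.
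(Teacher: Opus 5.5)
Your proposal is correct and follows essentially the paper's route: the Bayes endpoint posterior $k_{1-t}(x,y)\lambda_s(dy)/k_1(s,y)$, a conditional-expectation argument for (a)$\Rightarrow$(b) (you condition $D_t$ directly, while the paper uses an interior increment $\mathcal X_u-\mathcal X_t$ with $u<1$), and, for (c)$\Rightarrow$(d), the same damping by $\exp[-\|y\|^2/(4\varepsilon(1-t_*))]$, proportional log-gradients, Laplace-transform injectivity, and the normalization $c(s)=1/h_0(s)$. The only real variation is in (d)$\Rightarrow$(a): you combine the $\mathcal X_t$-sufficiency of the endpoint posterior with the conditional Brownian-bridge Markov property, whereas the paper writes out the $h$-transform transitions $P_{t,u}$ and $P_{t,1}$. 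Both routes need the same routine monotone-class step from finite-dimensional pasts to $\mathcal F_t^{\mathcal X}$. Note also that everywhere finiteness of $h_t$ comes from the bounded ratio $k_{1-t}(x,\cdot)/k_1(s,\cdot)$; the semigroup identity alone gives it only almost everywhere.
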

\begin{proof}
Define $g_s(dy)=\lambda_s(dy)/k_1(s,y)$ and
$h_t^s(x)=\int k_{1-t}(x,y)g_s(dy)$.
The bridge identity
\[
 k_t(s,x)h_t^s(x)
 =\int k_{t(1-t)}((1-t)s+ty,x)\lambda_s(dy)
\]
shows that $h_t^s$ is finite and positive everywhere for $0<t<1$.
Bayes' formula gives the conditional endpoint law
\begin{equation}\label{si-r015-endpoint-posterior}
 \mathcal L(X_1\mid S=s,\mathcal X_t=x)(dy)
 =\frac{k_{1-t}(x,y)g_s(dy)}{h_t^s(x)}.
\end{equation}
For $0<t<u<1$, conditioning first on the endpoints and using the
Brownian-bridge mean yields
\begin{equation}\label{si-r015-bridge-mean}
 \mathbb E[\mathcal X_u-\mathcal X_t\mid S=s,\mathcal X_t=x]
 =\frac{u-t}{1-t}
   \frac{\int(y-x)k_{1-t}(x,y)g_s(dy)}{h_t^s(x)}.
\end{equation}
Differentiation is justified by the strictly negative quadratic
$-t\|y\|^2/[4\varepsilon(1-t)]$ in the kernel ratio; it gives a smooth
function and
$b_t^{\rm sc}(x,s)=2\varepsilon\nabla\log h_t^s(x)$
as a version under the true joint law.
The same damping makes the posterior integral in
\eqref{si-r015-bridge-mean} finite. Globally, the increment
$J=\mathcal X_u-\mathcal X_t$ is square-integrable by $M_2<\infty$,
even if the endpoints are not integrable. Thus
\[
 \mathbb E[J\mid S,\mathcal X_t]=(u-t)b_t^{\rm sc}(\mathcal X_t,S),
 \qquad
 \mathbb E[J\mid\mathcal X_t]=(u-t)b_t(\mathcal X_t).
\]
If the original process is Markov, truncation of $J$ equates these
conditional expectations. This proves drift equality at every interior
time, which in particular implies equality at one such time.

Conversely, suppose equality holds at $t_*\in(0,1)$.
Since $\nu_t$ is equivalent to $\mu\otimes{\rm Leb}$, equality of the
two drifts at $t_*$ makes these logarithmic gradients equal across a
$\mu$-full set of sources, first almost everywhere in $x$ and then
everywhere by continuity. Fix one source $s_0$ in that set. Connectedness
gives $h_{t_*}^s=c(s)h_{t_*}^{s_0}$, where the positive factor
$c(s)=h_{t_*}^s(0)/h_{t_*}^{s_0}(0)$ is Borel.

Gaussian convolution is injective on these measures. Indeed, multiplying
each $g_s$ by $\exp[-\|y\|^2/(4\varepsilon(1-t_*))]$ gives a finite
measure whose Laplace transform exists for every real argument.
Proportional heat transforms give proportional Laplace transforms;
analytic continuation to characteristic transforms and uniqueness of
finite measures then imply $g_s=c(s)g_{s_0}$. Normalization of $\lambda_s$
forces $c(s)=1/h_0(s)$, proving \eqref{si-c036-factorization}.

Finally set $h_t=k_{1-t}*g$. These functions are finite and positive for
$0<t<1$, as follows by choosing any source where the factorization
holds. The normalized transitions are
\[
 P_{t,u}(x,dz)=k_{u-t}(x,z)\frac{h_u(z)}{h_t(x)}\,dz,\qquad
 P_{t,1}(x,dy)=\frac{k_{1-t}(x,y)g(dy)}{h_t(x)}.
\]
At time zero these formulas are used on the $\mu$-full set where
$h_0$ is finite; arbitrary versions elsewhere do not matter.
Multiplying the conditional Brownian-bridge density by
\eqref{si-c036-factorization} cancels its endpoint denominator and gives
the product of these transitions. All finite-dimensional Markov
identities follow, including the possibly singular terminal transition.
Continuity of paths and monotone class extend them to the natural
filtration.
\end{proof}

\begin{sicorecorollary}[Current-state EM with a source defect]
\label{si-c036-source-defect}
For the finite Borel current-state drifts and true-source initialization
of the main sampler, set
\[
 \mathfrak D_{\pi,\mathsf t}
 =\sum_k h_k\mathbb E\|b_{t_k}^{\rm sc}(\mathcal X_{t_k},S)
                         -b_{t_k}(\mathcal X_{t_k})\|^2,\qquad
 m_0=\mathbb E\|U-\mathbb E[U\mid S]\|^2.
\]
Then
\[
 D_{\rm KL}(\nu_T\|\mathcal L(S,Y_N))
 \le\frac{\mathcal E+\mathfrak D_{\pi,\mathsf t}}{4\varepsilon}
       +\sum_kc_d(\eta_k)
       +\frac{m_0}{4\varepsilon}\max_k\Delta s_k.
\]
The same bound holds after terminal projection. In the Markov case the
defect vanishes, and on the uniform grid the bound is
\eqref{si-main-source-bound} with $\mathcal E_{\mathrm{sc}}=\mathcal E$
and $M_2$ replaced by $m_0$.
Still in the Markov case, on the geometric grid, with $L=\log(1/\delta)$ and
$a=1-\delta^{1/N}$, it is
\[
 \frac{\mathcal E}{4\varepsilon}
 +\frac d2(L-Na)+\frac{am_0}{4\varepsilon\delta}
 \le \frac{\mathcal E}{4\varepsilon}
 +\frac{dL^2+Lm_0/(\varepsilon\delta)}{4N}.
\]
\end{sicorecorollary}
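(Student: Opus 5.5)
The plan is to run the source-conditioned sampler of Theorem~\ref{si-if-thm:src} with the current-state drift $\widehat b_k$ fed in as a source-conditioned drift that simply ignores $S$. The three terms of the bound are then read off from \eqref{si-if-eq:src-arbitrary}. The sampler $Y_{k+1}=Y_k+h_k\widehat b_k(Y_k)+\sqrt{2\varepsilon h_k}G_k$ with $Y_0=S\sim\rho_0$ is exactly protocol \eqref{si-if-eq:src-em} for $\widehat b_k^{\rm sc}(x,s):=\widehat b_k(x)$, which is jointly Borel and finite. Hence $\mathcal L(S,Y_N)=Q_N^{\rm sc}$, and \eqref{si-if-eq:src-arbitrary} gives
\[
 D_{\rm KL}(\nu_T\|\mathcal L(S,Y_N))\le\frac{\mathcal E_{\rm sc}}{4\varepsilon}+\sum_kc_d(\eta_k)+\frac{\max_k\Delta s_k}{4\varepsilon}m_0^{\rm sc},
\]
with $m_0^{\rm sc}=m_0$. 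The time-$t_k$ marginal of $(\mathcal X_{t_k},S)$ under the original bridge equals $\nu_{t_k}$, so the defect may be written with either representation.

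Next compute $\mathcal E_{\rm sc}$ for this source-blind drift by conditional orthogonality, as in \eqref{si-main-source-defect}. Since $b_t(X_t)=\mathbb E[b_t^{\rm sc}(X_t,S)\mid X_t]$ by the tower property, the difference $b^{\rm sc}_{t_k}-b_{t_k}$ is orthogonal in $L^2$ to every $\sigma(X_{t_k})$-measurable square-integrable function. Two cases arise. If $\mathcal E<\infty$, then $\widehat b_k-b_{t_k}\in L^2$ and the cross term vanishes, giving $\mathcal E_{\rm sc}=\mathcal E+\mathfrak D_{\pi,\mathsf t}$. Finiteness of $\mathfrak D_{\pi,\mathsf t}$ follows from $\mathbb E\|b^{\rm sc}_t\|^2\le\mathbb E\|D_t\|^2=M_2+c_t^2d$. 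If $\mathcal E=\infty$ the bound is trivial. Terminal projection then follows from data processing.

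In the Markov case, Proposition~\ref{si-c036-markov-criterion} gives $b_t^{\rm sc}(\mathcal X_t,S)=b_t(\mathcal X_t)$ at every interior time. The defect terms with $t_k>0$ therefore vanish. For the $k=0$ term, $b_0^{\rm sc}(S,S)=\mathbb E[U\mid S]=b_0(S)$ by Proposition~\ref{si-if-prop:borel-src}, so that term vanishes too and $\mathfrak D_{\pi,\mathsf t}=0$. On the uniform grid, reuse the computation at the end of the proof of Theorem~\ref{si-if-thm:src}: $\max\Delta s_k=h/[\delta(\delta+h)]$ and $\sum c_d\le dTh/(4\delta)$. This yields \eqref{si-main-source-bound} with $m_0$ in place of $M_2$.

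On the geometric grid, $\eta_k=a$ for every $k$, so $\sum_kc_d(a)=\tfrac d2 N[-a-\log(1-a)]=\tfrac d2(L-Na)$, using $-N\log(1-a)=L$. Also $\Delta s_k=h_k/(r_kr_{k+1})=a/[(1-a)r_k]$, which is largest at $k=N-1$ where $r_{N-1}=\delta/(1-a)$; hence $\max_k\Delta s_k=a/\delta$. The final inequality is then elementary. First, $a\le L/N$ gives $am_0/(4\varepsilon\delta)\le Lm_0/(4\varepsilon\delta N)$. Second, $L-Na=N[-\log(1-a)-a]\le N\,\frac{a^2}{2(1-a)}$. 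The main obstacle is this last step, bounding $\tfrac d2(L-Na)$ by $dL^2/(4N)$: the crude estimate has a $(1-a)^{-1}$ factor. I expect to remove it by writing $x=L/N$ and $a=1-e^{-x}$, so that $L-Na=N(x-1+e^{-x})\le Nx^2/2=L^2/(2N)$ by $e^{-x}\le1-x+x^2/2$. Multiplying by $d/2$ gives exactly $dL^2/(4N)$, with no restriction on $N$.
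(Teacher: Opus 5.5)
Your proposal is correct and follows the paper's own proof. You feed the source-blind drift $\widehat b_k^{\rm sc}(x,s)=\widehat b_k(x)$ into Theorem~\ref{si-if-thm:src}, use the tower property and orthogonality to get $\mathcal E_{\rm sc}=\mathcal E+\mathfrak D_{\pi,\mathsf t}$, and on the geometric grid insert $\max_k\Delta s_k=a/\delta$ and $N[-a-\log(1-a)]\le L^2/(2N)$, while also spelling out details the paper leaves implicit: the vanishing $k=0$ defect via $b_0^{\rm sc}(S,S)=\mathbb E[U\mid S]=b_0(S)$, and the inequality $e^{-x}\le1-x+x^2/2$ behind the last bound.
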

\begin{proof}
The tower property gives
$\mathbb E[b_t^{\rm sc}\mid\mathcal X_t]=b_t$.
Orthogonality therefore makes the source-conditioned error of the
implementation $\widehat b_k^{\rm sc}(x,s)=\widehat b_k(x)$ exactly
$\mathcal E+\mathfrak D_{\pi,\mathsf t}$. Apply
Theorem~\ref{si-if-thm:src}. The geometric constants use
$\max\Delta s_k=a/\delta$ and
$N[-a-\log(1-a)]\le L^2/(2N)$.
\end{proof}

\begin{sicorelemma}[Source information on a finite grid]
\label{si-r015-source-information}
Under $M_2<\infty$, on any strictly increasing grid
$0=t_0<\cdots<t_N=T<1$, the original bridge and its adjacent-kernel
Markov projection satisfy
\begin{equation}\label{si-r015-source-information-identity}
 D_{\mathrm{KL}}(R_N\|P_N^\star)
 =\sum_{k=1}^{N-1}
 I_{R_N}(\mathcal X_{t_{k+1}};S\mid\mathcal X_{t_k})<\infty.
\end{equation}
For $N\ge2$, this quantity vanishes if and only if the original bridge
mixture is Markov on $[0,1]$. For $N=1$ it is always zero.
\end{sicorelemma}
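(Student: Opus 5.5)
Start from the chain rule for the original grid law $R_N$ against its Markov projection $P_N^\star$. The two laws share the initial marginal $\rho_0$ and every adjacent kernel $K_k$ (Proposition~\ref{si-core-markov-projection}). Writing $A_k$ for the conditional law of $\mathcal X_{t_{k+1}}$ given the whole observed past $(\mathcal X_{t_0},\ldots,\mathcal X_{t_k})$, the chain rule gives
\[
D_{\mathrm{KL}}(R_N\|P_N^\star)=\sum_{k=0}^{N-1}\mathbb E_{R_N}D_{\mathrm{KL}}(A_k\|K_k),
\]
and each summand is the conditional mutual information $I_{R_N}(\mathcal X_{t_{k+1}};\mathcal X_{t_0},\ldots,\mathcal X_{t_{k-1}}\mid\mathcal X_{t_k})$. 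For $k=0$ there is no past beyond $\mathcal X_{t_0}=S$, so that term is zero; in particular the whole sum vanishes when $N=1$. Finiteness of every term follows from \eqref{si-core-path-pythagoras} and \eqref{si-core-original-upper}, because all quantities are nonnegative and their total is bounded by a finite quantity.

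The key step is to replace the full past by the source $S=\mathcal X_{t_0}$. I would use the time-change representation from the proof of Theorem~\ref{si-if-thm:src}: given $S$, the observations $\mathcal X_{t_j}/(1-t_j)-(1+s_j)S$ form a Gaussian channel $s_jU+\sqrt{2\varepsilon}\widetilde W_{s_j}$ with independent increments. By sufficiency, the conditional law of $U$ given $(S,\mathcal X_{t_0},\ldots,\mathcal X_{t_k})$ depends only on $(S,\mathcal X_{t_k})$. The future increment of the bridge, given the endpoints, is independent of the past Brownian increments. Together these give the Markov-chain relation
\[
\mathcal X_{t_{k+1}}\;\perp\;(\mathcal X_{t_1},\ldots,\mathcal X_{t_{k-1}})\mid(S,\mathcal X_{t_k}).
\]
Applying the chain rule for mutual information then reduces $I(\mathcal X_{t_{k+1}};\text{past}\mid\mathcal X_{t_k})$ to $I(\mathcal X_{t_{k+1}};S\mid\mathcal X_{t_k})$. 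This yields \eqref{si-r015-source-information-identity}. The main technical care is carrying out this argument fiberwise in $S$ without endpoint moments. I would work with regular conditional laws on the $\mu$-full set of sources with finite conditional second moment, and use only the finite joint relative entropies already established, so that no differential entropies are ever subtracted.

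For the characterization with $N\ge2$: if the bridge is Markov, each conditional information vanishes, so the sum is zero. Conversely, suppose the sum is zero. Then the $k=1$ term gives $\mathcal X_{t_2}\perp S\mid\mathcal X_{t_1}$ with $0<t_1<t_2<1$. Taking conditional expectations of the square-integrable increment $\mathcal X_{t_2}-\mathcal X_{t_1}$, as in \eqref{si-r015-bridge-mean}, and truncating first if needed, gives $(t_2-t_1)b^{\rm sc}_{t_1}(\mathcal X_{t_1},S)=(t_2-t_1)b_{t_1}(\mathcal X_{t_1})$ almost surely. This is drift equality at the single deterministic time $t_1\in(0,1)$. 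Proposition~\ref{si-c036-markov-criterion} then gives Markovness of the original bridge on $[0,1]$.

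I expect the main obstacle to be the sufficiency and conditional-independence step in the second paragraph, that is, justifying rigorously that the past adds no information beyond $(S,\mathcal X_{t_k})$ when endpoints are singular or non-integrable. The converse direction is short once the one-time criterion is available.
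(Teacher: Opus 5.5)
Your proposal is correct and follows essentially the paper's argument. Both use the chain rule from the Markov-projection proposition, the conditional independence of $\mathcal X_{t_{k+1}}$ from the intermediate past given $(S,\mathcal X_{t_k})$, and the information chain rule. For the converse, both take the $k=1$ term to get drift equality at $t_1$ and then apply the one-time Markov criterion.

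The only difference is cosmetic. The paper gets the conditional independence by cancelling earlier-state factors in the conditional density-measure product, so the endpoint posterior given the history reduces to the one given $(S,\mathcal X_{t_k})$. You read the same sufficiency off the time-changed Gaussian channel. Your restriction to fibers with finite conditional second moment is harmless but unnecessary, since the posterior normalizer is finite on every source fiber.
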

\begin{proof}
Given $S=s$, the joint law of the terminal endpoint and intermediate
states at $0<t_1<\cdots<t_k<1$ has density-measure product
\[
 \lambda_s(dy)\,
 \frac{\prod_{j=0}^{k-1}k_{t_{j+1}-t_j}(x_j,x_{j+1})
             k_{1-t_k}(x_k,y)}{k_1(s,y)}
 \prod_{j=1}^k dx_j,\qquad x_0=s.
\]
After conditioning on the observed history, all factors involving
earlier states cancel from the posterior of $y$. Equation
\eqref{si-r015-endpoint-posterior} remains valid, so the next state is
conditionally independent of the intermediate past given
$(S,\mathcal X_{t_k})$. The information chain rule therefore replaces
the full past in the projection-loss formula of
Proposition~\ref{si-core-markov-projection} by $S$. Its finiteness
follows from that proposition with the exact drift.

If $N\ge2$ and the sum is zero, its $k=1$ term is zero. Conditional
independence then equates the conditional expectations of the
square-integrable increment $\mathcal X_{t_2}-\mathcal X_{t_1}$ given
$(S,\mathcal X_{t_1})$ and given $\mathcal X_{t_1}$. By
\eqref{si-r015-bridge-mean}, this is equality of the two drifts at
$t_1\in(0,1)$. Proposition~\ref{si-c036-markov-criterion} proves
Markovness. The converse follows directly from the Markov property.
For $N=1$, the original two-coordinate law already equals its Markov
projection.
\end{proof}

The information loss in \eqref{si-r015-source-information-identity} and
the mean-square defect $\mathfrak D_{\pi,\mathsf t}$ thus have the same
zero set when $N\ge2$. This does not identify their values or establish
a positive limiting loss for every non-Markov coupling as the grid is
refined. The order-one loss in Theorem~\ref{si-fixed-separation} is
proved for its stated fixed coupling.

\paragraph{Boundary cases.}
At $t=0$ drift equality is automatic: equiprobable $S\in\{-1,1\}$
with $X_1=S$ is non-Markov despite $M_2=0$. Zero grid defect therefore
implies Markovness for $N\ge2$, but not $N=1$.
Deterministic-source mixtures and pinned-terminal bridges ($g=\delta_y$)
are Markov; the latter have $M_2<\infty$ only if $S\in L^2$.
For nonintegrable, discrete endpoints, take independent
$S,U\in\mathbb Z$ with masses proportional to $(1+n^2)^{-1}$ and
$\exp[-m^2/(4\varepsilon)]$, respectively. Then $X_1=S+U$ has
the factorization with counting measure $g$ and $M_2<\infty$.
Gaussian-transform injectivity gives no stability result for small drift
gaps. True-source initialization, variance $2\varepsilon h_kI$, and
independent bridge evaluation of frozen fitted drifts remain required
(Theorem~\ref{si-if-thm:src}).

\subsection{Schr\"odinger bridges and exclusions}
Let $R_\mu$ be the probability law of Brownian motion with covariance
$2\varepsilon tI$, started from the prescribed source $\mu$.
If the endpoint-constrained problem has a finite-entropy feasible law,
its unique entropy minimizer is a Markov mixture of the reference
bridges \citep[Lemma~2.4 and Propositions~2.3,~2.10]{si-paper-leonard2013survey}.
Here Lemma~2.4 applies with $B=0$: the endpoint reference
$R_{\mu,01}$ is a probability measure and both endpoint integrals of
$B$ vanish. Finite path entropy supplies a finite-entropy endpoint
coupling by data processing.
Moreover, if $K=D_{\mathrm{KL}}(P\|R_\mu)<\infty$, the entropy
inequality applied to truncated $\theta\|X_1-X_0\|^2$ gives
\[
 \theta M_2\le K-\frac d2\log(1-4\varepsilon\theta),
 \qquad 0<\theta<(4\varepsilon)^{-1}.
\]
Indeed the reference displacement is $N(0,2\varepsilon I)$; monotone
convergence removes the truncation. Taking
$\theta=(8\varepsilon)^{-1}$ yields
\begin{equation}\label{si-c036-entropy-moment}
 M_2\le8\varepsilon\left(K+\frac d2\log2\right).
\end{equation}
Finite-objective bridges therefore satisfy $M_2<\infty$, even for
nonintegrable sources, and inherit the guarantee at $T<1$.

Markovness and $M_2<\infty$ do not imply finite entropy: a pinned terminal
is singular relative to the Brownian reference terminal law.
Equation~\eqref{si-c036-entropy-moment} uses the probability reference
$R_\mu$ and does not transfer unchanged to a sigma-finite Lebesgue-start
reference. The sampling constants require $T<1$.
In exact iterative Markovian fitting \citep{si-paper-shi2023dsbm},
reciprocal projections preserve reference bridges but need not be
Markov; Markov projections need not preserve reference bridges.
Applying this corollary to a finite or learned iterate requires both properties and the stated
sampler error contract; neither follows from the DSBM or DDPM label.

\paragraph{A finite-entropy example with nonintegrable endpoints.}
In dimension one take standard Cauchy $\mu$ and
$g(y)=1+\alpha\sin y$, $0<|\alpha|<1$. With
$h_t(x)=1+\alpha e^{-\varepsilon(1-t)}\sin x$, define
\[
 \frac{dP_*}{dR_\mu}=\frac{g(X_1)}{h_0(X_0)}.
\]
This ratio has positive finite upper and lower bounds and a bounded
logarithm. It preserves the initial law and has
$M_2\le2\varepsilon(1+|\alpha|)/(1-|\alpha|)$.
Both endpoint absolute first moments are infinite, since its terminal
density is comparable to the Gaussian convolution of a Cauchy law.
Its drift is $2\varepsilon\partial_x\log h_t(x)$.
For every competitor $Q$ with the same two marginals and finite
entropy, boundedness and endpoint separability of the log-ratio give
\[
 D_{\mathrm{KL}}(Q\|R_\mu)
 =D_{\mathrm{KL}}(Q\|P_*)+D_{\mathrm{KL}}(P_*\|R_\mu).
\]
Hence $P_*$ is the unique Schr\"odinger bridge for its induced marginals.
This classical bounded transform illustrates the corollary.

\subsection{Sharp pure-power exponents for the comparison path}
Fix $d=1$, $\varepsilon>0$, $T<1$, exact drift, and the geometric
grid. For $2<p<4$, the comparison-path estimate preceding terminal
projection is uniform of order $N^{-(p/2-1)}$ over $M_p\le1$.
No strictly larger pure-power exponent is uniform on this class.
Indeed, for any $\gamma>p/2-1$, choose
$p<\nu<\min\{4,2(\gamma+1)\}$ and the Student displacement from
Appendix~\ref{si-fix-student}. Since $\nu>p$, its $p$th absolute moment
is finite and positive; choose a fixed scale $r$ to make $M_p=1$.
With an independent Cauchy source, that appendix gives
$K_N\ge cN^{-\beta}/(\log N)^\beta$ for all sufficiently large $N$,
where $K_N=D_{\mathrm{KL}}(P_N^\star\|Q_N^{\mathrm{path}})$ and
$\beta=\nu/2-1<\gamma$. The same fixed law therefore satisfies
$N^\gamma K_N\to\infty$. For $p=4$, the pinned bridge
$X_0=X_1=0$ has $K_N=Nc_1(a)\sim\log^2(1/\delta)/(4N)$.
These path estimates give neither a matching pure-power example at
$2<p<4$ nor a terminal minimax lower bound.

\subsection{A bounded illustration of source information}
\label{si-r015-toy}
Let $S$ be equiprobable on $\{-1,1\}$ and set $X_0=X_1=S$.
Here $U=0$ and all displacement moments vanish, but the bridge mixture
is non-Markov. At $0<t<1$, write $v_t=2\varepsilon t(1-t)$.
The conditional source mean and the two drifts are
\[
 m_t(x)=\tanh(x/v_t),\qquad
 b_t^{\rm sc}(x,s)=\frac{s-x}{1-t},\qquad
 b_t(x)=\frac{m_t(x)-x}{1-t}.
\]
The common marginal is
$\rho_t=\tfrac12N(1,v_t)+\tfrac12N(-1,v_t)$.
Given the retained source, an exact grid step has law
$N((1-\eta_k)x+\eta_ks,q_k)$.
For the Markov projection, the sign is instead drawn at each step
from its posterior given the current state, with
$\Pr(S=1\mid X_t=x)=(1+m_t(x))/2$.
These formulas determine both simulations without a fitted drift.

Figure~\ref{si-fig-source-information}(b) uses $\varepsilon=1$,
$T=0.9$, geometric $N=32$, and 6000 paths for each law with seed
20261027. It displays the first 16 paths from each source sign,
using the same indices in both panels; selection is independent of
the outcomes. Paths ending across zero from their source are thickened.
The measured crossing fractions are $0.883\%$ for the original bridge
and $19.767\%$ for its projection. The former can also be checked
analytically as $\Phi(-1/\sqrt{v_T})=0.921\%$.
The gray densities are the analytic common marginal, not fitted
densities or evidence from a finite-sample equality test.

For the entropy sweep in Figure~\ref{si-fig-toy-kl}, use
$\varepsilon=0.5$ and the same $T$, with $N=2,4,\ldots,256$.
We evaluate the exact Gaussian-mixture transition entropies and sum
under the true grid marginals. Adaptive outer quadrature and inner
Gauss--Hermite integration at orders 64 and 128 agree to below
$10^{-12}$ in the displayed totals. A separate grid-based calculation
at $N=2,16$ agrees within $10^{-9}$ in KL; implementation and replay
records are in the code archive. These floating-point checks are not
interval certificates.
The projection loss increases from $0.041436$ to $0.100944$, while
the comparison-path KL decreases from $0.477672$ to $0.005134$.
The information identity gives their sum as the original-path KL.
The horizontal reference evaluates
\[
 \frac1{4\varepsilon}\int_0^T
 \mathbb E\lvert b_t^{\rm sc}(\mathcal X_t,S)-b_t(\mathcal X_t)\rvert^2dt
 =0.10139945\quad\text{(numerical quadrature)}.
\]
This bounded example distinguishes non-Markov information loss from
the heavy tails responsible for the logarithmic comparison-path rate
in Theorem~\ref{si-fixed-separation}. The sweep does not prove a
continuous-time entropy limit for arbitrary non-Markov couplings.

\begin{figure}[t]
\centering
\includegraphics[width=.63\textwidth]{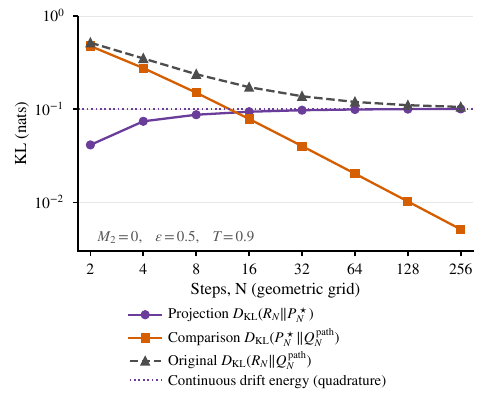}
\caption{Source-information loss without heavy tails. The bounded
coupling has $M_2=0$. Numerical projection and comparison-path losses
add to the original-path KL; the dotted horizontal line is the
separately evaluated continuous drift energy.
All points are numerical quadrature, with segments guiding the eye.}
\label{si-fig-toy-kl}
\end{figure}

\end{document}